\documentclass[10pt,twocolumn,letterpaper]{article}

\usepackage{cvpr}
\usepackage[accsupp]{axessibility}
\usepackage{xcolor}
\usepackage{bm}
\usepackage{amsmath}
\usepackage{amsfonts}
\usepackage{amssymb}
\usepackage{mathrsfs}
\usepackage{amsthm}
\usepackage{graphicx}
\usepackage{multirow}
\usepackage{makecell}
\usepackage{tikz}
\usepackage{pgfplots}
\pgfplotsset{compat=1.18}
\usetikzlibrary{fit,positioning}
\definecolor{myPurple}{RGB}{128,0,128}

\newcommand{\vb}{\mathbf{v}}
\newcommand{\wb}{\mathbf{w}}
\newcommand{\xb}{\mathbf{x}}

\newcommand{\zb}{\mathbf{z}}

\newcommand{\Bb}{\mathbf{B}}

\newcommand{\Fb}{\mathbf{F}}
\newcommand{\Gb}{\mathbf{G}}

\newcommand{\Mb}{\mathbf{M}}

\newcommand{\Ob}{\mathbf{O}}

\newcommand{\Xb}{\mathbf{X}}

\newcommand{\cC}{\mathcal{C}}

\newcommand{\cG}{\mathcal{G}}

\newcommand{\cO}{\mathcal{O}}

\newcommand{\cU}{\mathcal{U}}

\newcommand{\cX}{\mathcal{X}}

\newcommand{\norm}[1]{\left\lVert#1\right\rVert}

\newcommand{\argmax}{\mathop{\mathrm{argmax}}}

\newtheorem{theorem}{Theorem}
\newtheorem{definition}{Definition}

\newtheorem*{remark}{Remark}

\allowdisplaybreaks
\makeatletter

\definecolor{cvprblue}{rgb}{0.21,0.49,0.74}
\usepackage[pagebackref,breaklinks,colorlinks,allcolors=cvprblue]{hyperref}

\def\paperID{} % *** Enter the Paper ID here
\def\confName{CVPR}
\def\confYear{2026}

\title{
	Reparameterized Tensor Ring Functional Decomposition for Multi-Dimensional Data Recovery
}

\author{
Yangyang Xu\textsuperscript{1} \quad
Junbo Ke\textsuperscript{1} \quad
You-Wei Wen\textsuperscript{1} \quad
Chao Wang\textsuperscript{2,}\thanks{Corresponding author: \href{mailto:wangc6@sustech.edu.cn}{wangc6@sustech.edu.cn}.} \\
\textsuperscript{1}Key Laboratory of Computing
and Stochastic Mathematics (Ministry of Education),\\ School of Mathematics and Statistics, Hunan Normal University\\
\textsuperscript{2}Department of Statistics and Data Science, Southern University of Science and Technology\\
}

\begin{document}
\maketitle

\begin{abstract}
	Tensor Ring (TR) decomposition is a powerful tool for high-order data modeling, but is inherently restricted to discrete forms defined on fixed meshgrids. In this work, we propose a TR functional decomposition for both meshgrid and non-meshgrid data, where factors are parameterized by Implicit Neural Representations (INRs). However, optimizing this continuous framework to capture fine-scale details is intrinsically difficult. Through a frequency-domain analysis, we demonstrate that the spectral structure of TR factors determines the frequency composition of the reconstructed tensor and limits the high-frequency modeling capacity. To mitigate this, we propose a reparameterized TR functional decomposition, in which each TR factor is a structured combination of a learnable latent tensor and a fixed basis. This reparameterization is theoretically shown to improve the training dynamics of TR factor learning. We further derive a principled initialization scheme for the fixed basis and prove the Lipschitz continuity of our proposed model.
	Extensive experiments on image inpainting, denoising, super-resolution, and point cloud recovery demonstrate that our method achieves consistently superior performance over existing approaches. Code is available at \url{https://github.com/YangyangXu2002/RepTRFD}.
\end{abstract}

\section{Introduction}

Low-rank tensor representations have become a fundamental tool for modeling multi-dimensional data across a wide range of computer vision and signal processing tasks, such as image and video processing~\cite{lu2021transforms,luo2022hlrtf}, remote sensing~\cite{karami2012compression,chen2019hyperspectral}, medical imaging~\cite{lv2021discriminant}, and data compression~\cite{kolda2009tensor,cichocki2013tensor}. By exploiting the underlying low-dimensional structures, classical tensor decompositions such as CANDECOMP/PARAFAC (CP)~\cite{carroll1970analysis,kiers2000towards}, Tucker~\cite{tucker1966some,de2000introduction}, Tensor Train (TT)~\cite{oseledets2011tensor}, and Tensor Ring (TR)~\cite{zhao2016tensor} provide compact representations and facilitate recovery from incomplete or corrupted data.

\begin{figure}[tbp]
	\centering
	\includegraphics[width=\linewidth]{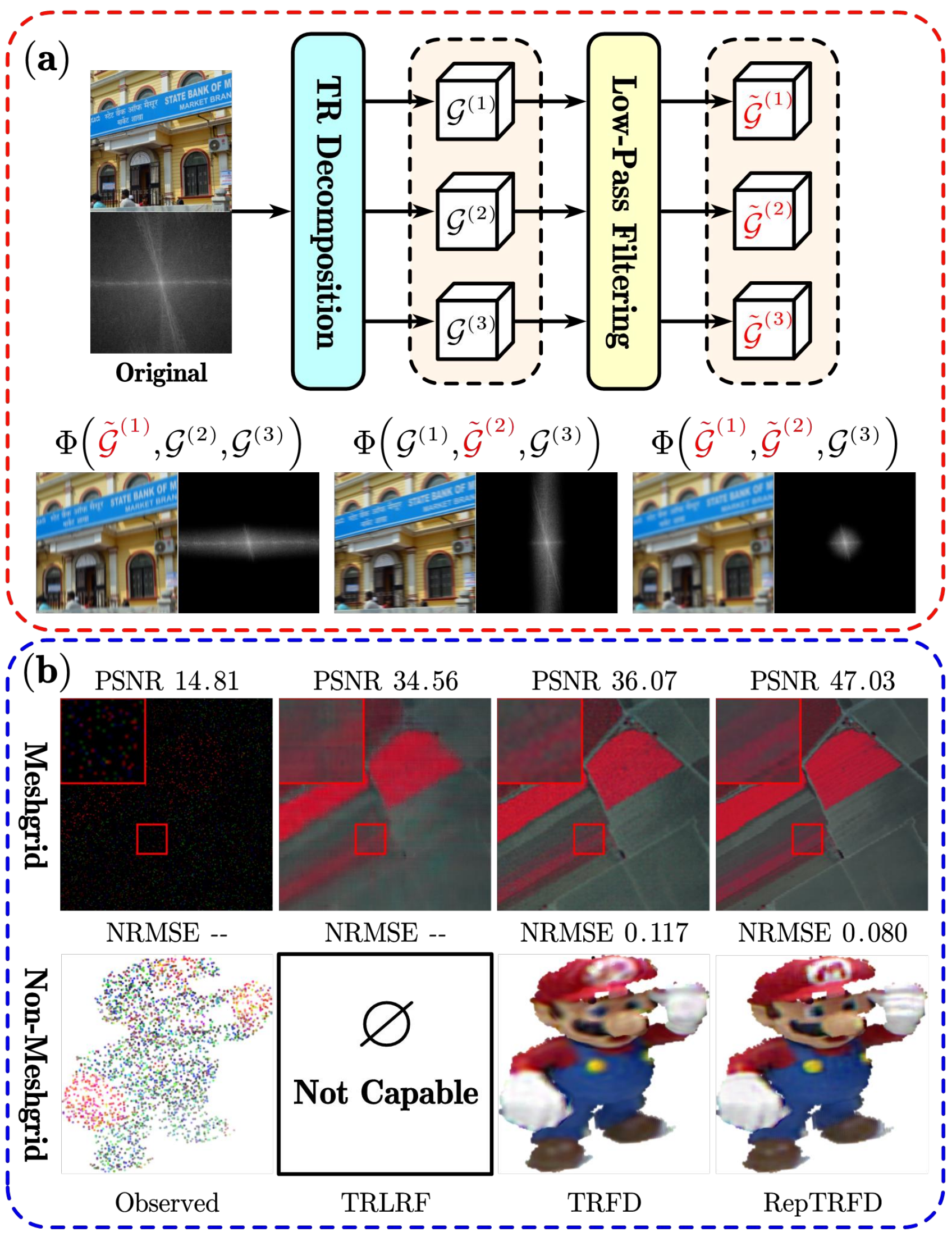}
	\vspace{-1.5em}
	\caption{(a) Frequency analysis of TR factors. Low-pass filtering $\{\cG^{(k)}\}_{k=1}^3$ along mode-2 yields the low-frequency counterparts $\{\tilde{\cG}^{(k)}\}_{k=1}^3$. Reconstructing via the contraction $\Phi(\cdot)$ using these factors shows noticeable attenuation along the corresponding modes. (b) Qualitative comparison of TRLRF~\cite{yuan2019tensor}, TRFD, and RepTRFD for meshgrid and non-meshgrid data recovery. Compared with TRLRF, our methods (TRFD and RepTRFD)  effectively handle non-meshgrid data and demonstrate marked improvements over the baseline.}
	\label{fig:introduction}
	\vspace{-1em}
\end{figure}

Among these, TR decomposition has been widely studied for its ability to efficiently represent high-order tensors with compact structures~\cite{wang2017efficient,yuan2019tensor,qiu2022noisy,liu2025block}. However, traditional TR formulations are inherently discrete and assume that data are defined on fixed grids, which limits their applicability to continuous signals and resolution-independent modeling. To address the limitation in discrete tensor decomposition, a general framework known as low-rank tensor functional representation has been proposed~\cite{luo2023low}. In this paradigm, tensor factors are modeled as Implicit Neural Representations (INRs)~\cite{sitzmann2020implicit} that map continuous coordinates to latent factor values, effectively placing tensor decomposition in a continuous functional space. While this functional framework has been successfully extended to various decomposition formats~\cite{wang2024functional,vemuri2026f,li2025deep} and improved with continuous regularizations~\cite{luo2024revisiting,luo2025neurtv}, extending TR decomposition to this continuous setting remains unexplored. 

Inspired by existing functional tensor methods, we propose a TR Functional Decomposition (TRFD) for multidimensional image recovery. It is worth noting that establishing this decomposition is non-trivial due to the intrinsic properties of TR factors, and direct application of existing functional tensor methods often produces reconstructions dominated by low-frequency components. 
To further understand this limitation, we analyze TR decomposition from a frequency perspective. As illustrated in Fig.~\ref{fig:introduction}, replacing the original TR factors $\{\cG^{(k)}\}_{k=1}^3$ with their low-pass filtered versions $\{\tilde{\cG}^{(k)}\}_{k=1}^3$ yields reconstructions that exhibit noticeable attenuation along the corresponding modes. This finding reveals that the frequency characteristics of TR factors are directly inherited by the reconstructed tensor. 

However, due to the inherent spectral bias of INRs~\cite{rahaman2019spectral}, learning TR factors that capture appropriate high-frequency components remains highly challenging. Distinct from existing approaches, we take a novel perspective by revisiting the factor learning process through the lens of training dynamics.
Specifically, we seek to transform the parameter space of the TR factors, so that the optimization dynamics can more effectively explore high-frequency directions. To achieve this, we propose Reparameterized TRFD (RepTRFD), in which each TR factor is expressed as a structured combination of a learnable latent tensor and a fixed basis. This reparameterization is theoretically shown to enhance the training dynamics, allowing the TR structure to fully exploit both meshgrid and continuous data. As illustrated in Fig.~\ref{fig:introduction}, it significantly improves the reconstruction of high-frequency details. By incorporating this, we offer a novel perspective for advancing the study of tensor functional representations.
In summary, our main contributions are as follows:
\begin{itemize}
	\item We extend the TR decomposition to the continuous domain and provide a frequency-domain analysis of the challenge in learning TR factors that capture high-frequency components.
	\item We propose a reparameterization strategy for TR factors, where each factor is expressed as a structured combination of a learnable tensor and a fixed basis, which facilitates the learning of high-frequency details.
	\item We theoretically show that this reparameterization enhances the training dynamics, and further derive a principled initialization scheme while providing the Lipschitz continuity guarantees. Extensive experiments demonstrate the superiority of our approach.
\end{itemize}

% To overcome these limitations, resent studies represented by LRTFR~\cite{luo2023low} have explored implicit neural representations (INRs) as a powerful alternative for modeling tensor factors. Such methods reformulate tensor decomposition in a continuous functional space, where each tensor entry is modeled as the output of a coordinate-based neural network.
% This allows tensors to be represented beyond discrete meshgrids, enabling continuous and high-resolution reconstruction from arbitrary coordinates.
% Furthermore, regularization terms originally defined on discrete data, such as total variation (TV) or non-local regularizers, can also be naturally extended to the continuous domain.

\section{Related Work}
\noindent
{\bf Implicit Neural Representations. }INRs have recently emerged as a powerful paradigm for representing continuous signals by mapping coordinates to values~\cite{sitzmann2020implicit}.
Subsequent works have sought to enhance the representational power and efficiency of INRs through improved activation functions~\cite{ramasinghe2022beyond,saragadam2023wire,liu2024finer}, encoding schemes~\cite{tancik2020fourier,mildenhall2021nerf}, and training dynamics~\cite{cai2024batch,shi2024improved}. Despite their expressive power, learning global mappings over entire signals still exhibits limited computational efficiency, and the lack of structural priors often limits performance in inverse problems.

\noindent
{\bf Tensor Functional Representations. }Recent works have extended low-rank tensor representations to the continuous domain. Unlike INRs that map coordinates to the entire signal, these methods learn coordinate-to-factor mappings, enabling continuous modeling with lower computational cost and inherent low-rank priors. Representative approaches include the low-rank tensor function representation (LRTFR)~\cite{luo2023low} based on Tucker decomposition, a functional transform-based factorization~\cite{wang2024functional} that improves expressiveness, a parameter-efficient deep rank-one factorization (DRO-TFF)~\cite{li2025deep} that reduces storage and computation overhead, and unified functional extensions of CP and TT decompositions~\cite{vemuri2026f}. Moreover, these functional tensor formulations can be synergistically combined with continuous regularization methods, such as neural total variation (NeurTV)~\cite{luo2025neurtv} and continuous representation-based nonlocal (CRNL)~\cite{luo2024revisiting} priors, to further enhance reconstruction quality and structural consistency.

\noindent
{\bf Reparameterization and Training Dynamics. }Reparameterization techniques have been widely adopted in deep learning to improve training dynamics. Early work by Salimans et al.~\cite{salimans2016weight} first stabilized optimization by decoupling the magnitude and direction of weights through weight normalization. Following this, Ding et al.~\cite{ding2021repvgg} further extended it to structural reparameterization with parallel branches. Mostafa~\cite{mostafa2019parameter} explored another perspective by decoupling weight optimization from sparse connectivity learning.
More recently, Shi et al.~\cite{shi2024improved} factorized network weights into a learnable matrix and a fixed Fourier basis, enhancing learning behavior in INRs.
In addition, several works have explored alternative approaches to improving the training dynamics of INRs and achieved promising results~\cite{cai2024batch,li2025learning,shi2025inductive}. In this work, we take the first step toward analyzing low-rank tensor functional representations through training dynamics, highlighting the potential of reparameterization for more effective factor learning.

\section{Methodology}

\subsection{Notations and Preliminaries}

To enhance readability, we denote scalars by lowercase letters (e.g., $x$), vectors by bold lowercase (e.g., $\xb$), matrices by bold uppercase (e.g., $\Xb$), and tensors by calligraphic letters (e.g., $\cX$). Given a $d$-th order tensor $\cX \in \mathbb{R}^{n_1 \times n_2\times \cdots \times n_d}$, its $\mathbf{v}$-th entry is written as $\cX_{\mathbf{v}}$ where $\mathbf{v}=[v_1,v_2,\dots,v_d]^\mathrm{T}$. We also define the mode-$k$ slice at index $v_k$ as $\cX_{:v_k:}\in \mathbb{R}^{n_1 \times \cdots \times n_{k-1} \times n_{k+1} \times \cdots \times n_d}$. In addition, $\|\cX\|_\infty$  is the infinity norm of $\cX$ by calculating the maximum magnitude of the tensor's entries.  
% For example, we use $\cG_{:v:}$ to denote the lateral slice of a third-order tensor $\cG$  at index $v$. 
We next introduce two fundamental definitions used throughout this paper.

\begin{definition}[Tensor Ring Decomposition~\cite{zhao2016tensor}]
	Consider a $d$-th order tensor $\cX \in \mathbb{R}^{n_1 \times n_2\times \cdots \times n_d}$, TR decomposition represents $\cX$ using third-order cores $\cG^{(k)} \in \mathbb{R}^{r_{k} \times n_k \times r_{k+1}}$ for $k=1,2,\dots,d$, with the circular constraint $r_{d+1}=r_1$.
	The tuple $[r_1, r_2, \dots, r_d]$ is referred to as the TR rank, which controls the representation capacity.
	Each element of $\cX$ is given by
	\begin{equation}
		% $
		\cX_{\vb}
		= \operatorname{trace}\left(
		\cG^{(1)}_{:v_1:}
		\cG^{(2)}_{:v_2:}
		\cdots
		\cG^{(d)}_{:v_d:}
		\right),
		% $
	\end{equation}
	where $\operatorname{trace}(\cdot)$ denotes the matrix trace.
	For simplicity, the decomposition is presented as 
	\begin{equation}
		\cX = \Phi(\cG^{(1)}, \cG^{(2)}, \dots, \cG^{(d)}),
	\end{equation}
	where $\Phi(\cdot)$ denotes the TR contraction operation.
\end{definition}

\begin{definition}[Mode-$k$ Discrete Fourier Transform~\cite{kolda2009tensor,lu2019tensor}]\label{def:dft}
	Given a $d$-th order tensor $\cX \in \mathbb{R}^{n_1 \times n_2 \times \cdots \times n_d}$, its mode-$k$ Discrete Fourier Transform (DFT) is defined as
	\begin{equation}
		\mathscr{F}_k[\cX]=\cX \times_k \Fb_{k},
	\end{equation}
	where $\Fb_{k} \in \mathbb{C}^{n_k \times n_k}$ is the DFT matrix, and $\times_k$ is the $k$-mode product between tensor and matrix. 
\end{definition}

\subsection{Tensor Ring Functional Decomposition}

Traditional TR decomposition assumes that each factor $\cG^{(k)}$ is a discrete tensor defined on a fixed grid. To extend the discrete TR model into a continuous representation, we reinterpret each TR factor as a functional tensor parameterized by INRs. Since the TR factors are inherently linked through cyclic contractions, we introduce a shared frequency embedding to enhance their consistency across modes.

\begin{figure}[htbp]
	\centering
	\includegraphics[width=\linewidth]{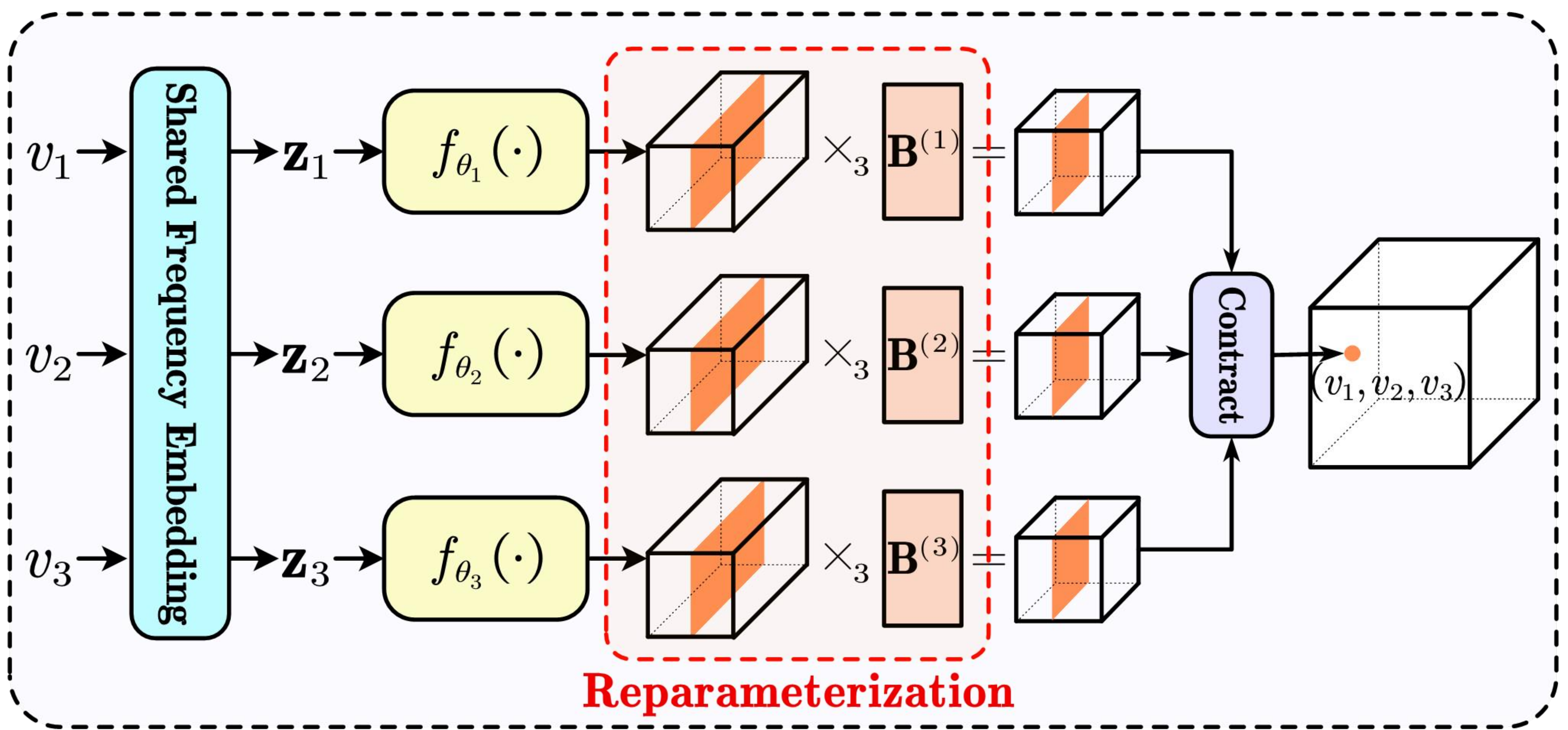}
	\vspace{-1.3em}
	\caption{Illustration of proposed TRFD and RepTRFD. TRFD directly learns a continuous mapping from coordinates to TR factors. In contrast, RepTRFD reparameterizes each TR factor as a structured combination of a learnable latent tensor and a fixed basis, which improves the training dynamics and facilitates more effective learning of high-frequency components.}
	\label{fig:RepTR_shared}
	\vspace{-1em}
\end{figure}

As illustrated in Fig.~\ref{fig:RepTR_shared}, given a coordinate vector $\vb = [v_1, v_2, \dots, v_d]^\mathrm{T}$, the shared embedding for the $k$-th dimension is implemented by a single sinusoidal layer:
\begin{equation}\label{eq:freq_vec}
	\zb_k = \sin(\omega_0 (\wb v_k + \mathbf{b}))\in \mathbb{R}^{h}, \quad k=1,\dots,d,
\end{equation}
where $\omega_0$ is a frequency scaling parameter, $\wb\in \mathbb{R}^{h}$ and $\mathbf{b}\in\mathbb{R}^h$ are learnable weight and bias, and $h$ denotes the hidden dimension of the sinusoidal layer. To generate each TR factor, we employ a separate branch network for each mode. Each branch is a multi-layer perceptron (MLP) that maps the corresponding embedding $\zb_k$ to a vector of length $r_k r_{k+1}$, which is then reshaped into the desired TR slice:
\begin{equation}\label{eq:factor}
	\cG^{(k)}_{:v_k:} = f_{\theta_k}(\zb_k) = g_{\theta_k}(v_k)\in \mathbb{R}^{r_k \times r_{k+1}}, v_k\in \{1,\dots, n_k\},
\end{equation}
where $f_{\theta_k}(\cdot)$ denotes the MLP associated with the $k$-th tensor mode, producing the mode-specific factor slice from the shared embedding, and $g_{\theta_k}(\cdot)$ is the function composed with MLP and shared frequency embedding. This approach naturally extends the TR decomposition to a continuous representation, which we refer to as the Tensor Ring Functional Decomposition (TRFD).  
% Nevertheless, representing TR factors solely through the above framework falls short of achieving optimal performance in practice. This motivates us to further explore the potential of TR structures.

\subsection{Reparameterization for Factor Learning}

% Although the TR functional decomposition offers a flexible approach to modeling multi-dimensional data, its practical performance is often limited. To understand this limitation, it is necessary to examine the learning dynamics of its underlying components. Since each TR factor is produced by an INR, the entire model inevitably inherits the well-known spectral bias of such networks, which tend to capture low-frequency components first while struggling with high-frequency details. Consequently, the ability of individual TR factors to represent high-frequency components may be restricted, which in turn could influence the spectral characteristics of the reconstructed tensor along the corresponding modes. This observation motivates a careful study of how the frequency content of each factor propagates through the cyclic contractions of the TR decomposition.
% Although the TR functional decomposition provides a flexible framework for modeling multi-dimensional data, its practical performance is often limited. Each TR factor is generated by an INR with inherent spectral bias, which favors low-frequency components but struggles to capture high-frequency details.  As a result, the high-frequency representation of individual factors may be restricted, affecting the spectral characteristics of the reconstructed tensor. This motivates a closer study of how the frequency content of each factor propagates through the cyclic contractions in the TR decomposition.

\noindent
% \subsubsection{
	{\bf Frequency Analysis of TR Factors. }While the TR functional decomposition provides a flexible and compact framework for representing high-dimensional data, its practical effectiveness is often constrained by the inherent spectral bias of the INRs used to parameterize the factors. INRs tend to prioritize low-frequency components while underrepresenting high-frequency details, leading to limited expressiveness within the individual TR factors. Crucially, this spectral limitation does not remain confined to the factor level, because it propagates through the cyclic contractions of the TR structure and influences the frequency characteristics of the reconstructed tensor $\cX$. To better understand this phenomenon, we formally analyze how the frequency content of the TR factors propagates through the reconstruction process. The following theorem presents the corresponding characterization.
	
	% \tb{While the TR functional decomposition offers a flexible framework, its practical performance is often constrained by the inherent spectral bias of the INRs used to parameterize the factors. INRs naturally favor low-frequency components and struggle to capture high-frequency details. This bias restricts the high-frequency representation within the individual TR factors $\{\cG^{(k)}\}$.}
	
	% \tb{Crucially, this limitation propagates to the final output. Due to the cyclic contractions in the TR decomposition, the spectral characteristics of each factor directly influence the corresponding dimension of the reconstructed tensor $\cX$. We formalize this insight by analyzing how frequency content propagates through the reconstruction process. }
	
	% Although the TR functional decomposition provides a flexible framework for modeling multi-dimensional data, its practical performance is often limited. Each TR factor is generated by an INR with inherent spectral bias, which favors low-frequency components but struggles to capture high-frequency details.  As a result, the high-frequency representation of individual factors may be restricted, affecting the spectral characteristics of the reconstructed tensor. This motivates a closer study of how the frequency content of each factor propagates through the cyclic contractions in the TR decomposition.
	% To formalize this insight, we analyze how the frequency content of TR factors influences the reconstructed tensor. The following theorem characterizes the propagation of spectral components through the TR reconstruction process.
	\begin{theorem}
		\label{thm:spectral_preservation}
		Let $\cX = \Phi(\cG^{(1)}, \dots, \cG^{(d)})$ be a TR decomposition. 
		Suppose that the mode-2 frequency components of $\cG^{(k)}$ beyond a threshold $\Omega_k$ are negligible for all $k$, i.e.,
		\begin{equation*}
			\|\big(\mathscr{F}_2[\cG^{(k)}]\big)_{:\omega_k:}\|_\infty \le \epsilon, 
			\ \forall |\omega_k| > \Omega_k, k= 1, 2, \dots, d,
		\end{equation*}
		where $\epsilon > 0$ is a small constant. Then the reconstructed tensor $\cX$ also exhibits attenuated high-frequency content along mode $k$:
		\begin{equation*}
			\|\big(\mathscr{F}_k[\cX]\big)_{:\omega_k:}\|_\infty 
			\le c_{k} \, \epsilon, \ \forall |\omega_k| > \Omega_k,  k= 1, 2,\dots, d,
		\end{equation*}
		where $c_{k}$ is a constant depending only on the magnitudes of the remaining cores $\{\cG^{(j)}\}_{j\ne k}$.
	\end{theorem}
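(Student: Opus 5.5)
The natural route is to use the linearity of the TR contraction in each single core. Fix a mode $k$. Each entry of $\cX$ can be written, by cyclic invariance of the trace, as
\begin{equation*}
\cX_{\vb} = \operatorname{trace}\left(\cG^{(k)}_{:v_k:}\,\Mb_{\setminus k}\right), \qquad \Mb_{\setminus k} = \cG^{(k+1)}_{:v_{k+1}:}\cdots \cG^{(d)}_{:v_d:}\cG^{(1)}_{:v_1:}\cdots \cG^{(k-1)}_{:v_{k-1}:}\in\mathbb{R}^{r_{k+1}\times r_k}.
\end{equation*}
The matrix $\Mb_{\setminus k}$ depends only on the indices $v_j$ with $j\neq k$. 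Hence, for fixed remaining indices, $\cX_{\vb}$ is a fixed linear functional of the slice $\cG^{(k)}_{:v_k:}$, namely $\sum_{a,b}(\cG^{(k)})_{a v_k b}(\Mb_{\setminus k})_{ba}$.

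The key step is that the mode-$k$ DFT of $\cX$ acts only on $v_k$, while the mode-2 DFT of $\cG^{(k)}$ acts only on its middle index. Applying $\Fb_k$ along $v_k$ therefore commutes with this linear functional. This gives
\begin{equation*}
\big(\mathscr{F}_k[\cX]\big)_{v_1\cdots\omega_k\cdots v_d} = \operatorname{trace}\left(\big(\mathscr{F}_2[\cG^{(k)}]\big)_{:\omega_k:}\,\Mb_{\setminus k}\right).
\end{equation*}
I would verify this by writing out $\sum_{v_k}(\Fb_k)_{\omega_k v_k}\cX_{\vb}$ and exchanging the finite sums. Since $\Fb_k$ is the same matrix used in both transforms (size $n_k$), the frequency index $\omega_k$ in the two spectra is the same. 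This identity is the whole content of the theorem: the spectrum of $\cX$ along mode $k$ is the spectrum of $\cG^{(k)}$ along its mode 2, contracted with the other cores.

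Next, bound the trace entrywise. For $|\omega_k|>\Omega_k$, the hypothesis gives $|\mathscr{F}_2[\cG^{(k)}]_{a\omega_k b}|\le\epsilon$, so
\begin{equation*}
\left|\big(\mathscr{F}_k[\cX]\big)_{v_1\cdots\omega_k\cdots v_d}\right|\le \epsilon\sum_{a,b}|(\Mb_{\setminus k})_{ba}|.
\end{equation*}
We then bound $\max_{a,b}|(\Mb_{\setminus k})_{ba}|$ by repeated use of $|(\mathbf{A}\mathbf{B})_{ij}|\le m\,\|\mathbf{A}\|_\infty\|\mathbf{B}\|_\infty$, where $m$ is the inner dimension. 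This yields
\begin{equation*}
\sum_{a,b}|(\Mb_{\setminus k})_{ba}| \le r_k r_{k+1}\Big(\prod_{j\ne k,\,k+1} r_j\Big)\prod_{j\ne k}\|\cG^{(j)}\|_\infty =: c_k.
\end{equation*}
Taking the maximum over all $v_j$ with $j\ne k$ then gives the claimed $\infty$-norm bound on the slice. The constant $c_k$ depends only on the magnitudes of the other cores (and the fixed TR ranks, which I would note as part of the structural constants).

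I expect no real obstacle; the only delicate point is the bookkeeping in the commutation step. Complex entries are harmless because the argument uses only absolute values. The one thing to state explicitly is that the frequency indexing convention, for instance centered indices so that $|\omega_k|$ makes sense, is the same for $\Fb_k$ applied to $\cX$ and to $\cG^{(k)}$.
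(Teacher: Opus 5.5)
Your proposal is correct and follows the paper's proof essentially step for step. You isolate $\cG^{(k)}_{:v_k:}$ via the cyclic trace, commute the mode-$k$ DFT past the resulting linear functional to get $\sum_{p,q}[\Mb_{(\neg k)}]_{q,p}\,(\mathscr{F}_2[\cG^{(k)}])_{p,\omega_k,q}$, and apply the triangle inequality. The only difference is that the paper simply defines $c_k := \max_{\{v_j\}_{j\ne k}}\sum_{p,q}|[\Mb_{(\neg k)}]_{q,p}|$, whereas you go further and bound this quantity explicitly, and more crudely, in terms of the TR ranks and the entrywise maxima $\|\cG^{(j)}\|_\infty$ for $j\ne k$; that extra bound is valid.
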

	
	All proofs of the theoretical results presented in this section are provided in the supplementary material. Theorem~\ref{thm:spectral_preservation} indicates that the spectral content of each TR factor directly affects the corresponding dimension of the reconstructed tensor. Consequently, achieving accurate reconstruction of high-frequency tensors is contingent on the TR factors themselves containing sufficiently high-frequency components. However, due to the inherent spectral bias of standard INRs, learning such factors is challenging. This motivates the design of strategies that facilitate the acquisition of high-frequency information.
	
	% The proof of Theorem~\ref{thm:spectral_preservation} is provided in supplementary material. This theorem indicates that the spectral content of each TR factor directly affects the corresponding dimension of the reconstructed tensor. In particular, if a TR factor is dominated by low-frequency components, the reconstruction will inevitably suffer from high-frequency loss along the corresponding tensor dimension. Consequently, achieving accurate reconstruction of high-frequency tensors requires the TR factors themselves to contain appropriate high-frequency components. Due to the inherent spectral bias of standard INRs, however, learning such factors may be challenging, motivating the design of strategies that facilitate the acquisition of high-frequency information.
	
	% \tb{The proof of Theorem~\ref{thm:spectral_preservation} is provided in the supplementary material. This theorem establishes that high-frequency attenuation in a TR factor $\cG^{(k)}$ (specifically in its mode-2) directly results in a corresponding high-frequency loss in the $k$-th mode of the reconstructed tensor $\cX$. Consequently, achieving accurate reconstruction of high-frequency tensors is contingent on the TR factors themselves containing sufficient high-frequency components. The inherent spectral bias of standard INRs renders this challenging, thus motivating the design of strategies that facilitate the acquisition of high-frequency information.}
	
	\noindent
	% \subsubsection{
		{\bf Reparameterized Tensor Ring. } 
		% \tb{Unlike existing approaches, such a formulation improves the conditioning of the optimization}
		% To address this challenge, we take a novel perspective: rather than solely focusing on network architectures or loss design, we turn to the  
		% % directly address the optimization difficulty in
		% factor learning by leveraging the structural decomposition of TR factors. 
		To address this challenge, 
		% we aim to represent the TR factors from the perspective of parameter space and training dynamics. This provides a novel viewpoint that has not been explored in existing studies on tensor functional representations. Our goal is to 
		we aim to transform the parameter space of the TR factors to improve the training dynamics, thereby facilitating more efficient learning of the TR factors. Specifically, by introducing a fixed basis $\Bb^{(k)}\in \mathbb{R}^{r_{k+1}\times R_{k+1}}$, we reparameterize each TR factor as a combination of a learnable latent tensor and this basis, which is expressed as
		\begin{equation}
			\cG^{(k)} = \cC^{(k)} \times_3 \Bb^{(k)}, \quad k = 1, \ldots, d. 
		\end{equation}
		This decomposition separates the learnable component from the fixed basis, allowing each TR factor to be expressed as a structured combination of latent elements. Such a formulation improves the conditioning of the optimization and facilitates more effective learning of high-frequency components, as shown in the following theorem.
		% This decomposition separates the learnable component from the fixed basis, allowing each TR factor to be expressed as a structured combination of latent elements. Such a formulation improves the conditioning of the optimization and facilitates more effective learning of high-frequency components. The following theorem formalizes how this reparameterization enhances the training dynamics by amplifying gradients associated with high-frequency modes. 
		\begin{theorem}\label{thm:rep}
			Consider a TR factor $\cG$ reparameterized as $\cG = \cC \times_3 \Bb$, where $\cC \in \mathbb{R}^{r_1 \times n \times R_2}$ is a trainable tensor and $\Bb \in \mathbb{R}^{r_2 \times R_2}$ is a fixed
			% for some 
			basis. Let $\mathsf{L}(\omega)$ be the loss associated with frequency $\omega$. For any $\omega_{\rm high} > \omega_{\rm low} > 0$, given any $\epsilon \geq 0$, and fixed indices $p, q$ with $1 \le p \le r_1$ and $1 \le q \le n$, 
			% we have 
			there exists a matrix $\Bb$ such that for all $s=1,\dots,R_2$,
			\begin{equation*}
				\resizebox{\linewidth}{!}{$\displaystyle\left|
					\tfrac{\partial \mathsf{L}(\omega_{\rm high})}{\partial \cC_{p q s}}
					\Big/
					\tfrac{\partial \mathsf{L}(\omega_{\rm low})}{\partial \cC_{p q s}}
					\right|\ge
					\max_{j=1,\dots,r_2}
					\left|
					\tfrac{\partial \mathsf{L}(\omega_{\rm high}))}{\partial \cG_{p q j}}
					\Big/
					\tfrac{\partial \mathsf{L}(\omega_{\rm low})}{\partial \cG_{p q j}}
					\right|
					- \epsilon.$}
			\end{equation*}
		\end{theorem}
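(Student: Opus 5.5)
The starting point is the chain rule for the reparameterization $\cG = \cC \times_3 \Bb$. Entrywise this reads $\cG_{pqj} = \sum_{s=1}^{R_2} \Bb_{js}\, \cC_{pqs}$, so for any loss,
\begin{equation*}
\frac{\partial \mathsf{L}(\omega)}{\partial \cC_{pqs}} = \sum_{j=1}^{r_2} \Bb_{js}\, \frac{\partial \mathsf{L}(\omega)}{\partial \cG_{pqj}} .
\end{equation*}
With $p,q$ fixed, write $\mathbf{a}=(a_j)_{j=1}^{r_2}$ with $a_j = \partial \mathsf{L}(\omega_{\rm high})/\partial \cG_{pqj}$, and $\mathbf{l}=(l_j)_{j=1}^{r_2}$ with $l_j = \partial \mathsf{L}(\omega_{\rm low})/\partial \cG_{pqj}$. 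The ratio we need to control is then
\begin{equation*}
\rho_s = \frac{\mathbf{b}_s^{\mathrm T}\mathbf{a}}{\mathbf{b}_s^{\mathrm T}\mathbf{l}},
\end{equation*}
where $\mathbf{b}_s$ is the $s$-th column of $\Bb$. The target is $M=\max_j |a_j/l_j|$, and it suffices to choose columns $\mathbf{b}_s$ with $|\rho_s|\ge M-\epsilon$ for every $s$. The freedom to pick $\Bb$ as a function of the gradients at $(p,q)$ is exactly what the existential quantifier in the statement allows.

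Let $j^\ast$ attain the maximum, and assume first that $l_{j^\ast}\neq 0$. The simplest construction takes every column to be $\mathbf{b}_s = \mathbf{e}_{j^\ast}$, the $j^\ast$-th standard basis vector. Then $\rho_s = a_{j^\ast}/l_{j^\ast}$ and $|\rho_s| = M \ge M-\epsilon$ exactly.

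If one insists that $\Bb$ be a genuine basis, for instance of full rank, I would perturb this choice to $\mathbf{b}_s = \mathbf{e}_{j^\ast} + \delta\,\mathbf{u}_s$ with generic directions $\mathbf{u}_s$. The map $\mathbf{b}\mapsto \mathbf{b}^{\mathrm T}\mathbf{a}/\mathbf{b}^{\mathrm T}\mathbf{l}$ is continuous near $\mathbf{e}_{j^\ast}$, because the denominator there equals $l_{j^\ast}\neq 0$. Hence for $\delta$ small enough each $|\rho_s|\ge M-\epsilon$, and a generic small perturbation also gives full rank. This is the only place where the slack $\epsilon$ is used.

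The degenerate cases need separate treatment. If some $l_j=0$ while $a_j\neq 0$, then $M=\infty$. In that case I would choose $\mathbf{b}_s$ orthogonal to $\mathbf{l}$ but not to $\mathbf{a}$, for example $\mathbf{e}_j$, so that $\rho_s$ is also infinite, using the same convention. If $\mathbf{l}=0$ entirely, the same convention applies. If the ratios are of the form $0/0$, those indices are excluded from the maximum under the standard convention.

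The main obstacle is not analytic; it is pinning down the precise meaning of the statement. That means fixing the conventions for vanishing denominators, and checking that $\Bb$ is allowed to depend on $(p,q)$ and on the current gradients, which the phrase ``there exists a matrix $\Bb$'' permits. Once these interpretive points are settled, the proof reduces to the chain-rule identity above together with the column-selection argument and a continuity perturbation.
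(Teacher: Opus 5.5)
Your proposal is correct and follows the paper's proof. Both arguments use the chain rule $\partial\mathsf{L}/\partial\cC_{pqs}=\sum_t \Bb_{ts}\,\partial\mathsf{L}/\partial\cG_{pqt}$ and concentrate every column of $\Bb$ on the maximizing index $\tau$, with $\Bb_{\tau s}=1$. The difference is that the paper allows off-row entries $|\Bb_{ts}|\le\alpha$ for an explicit $\alpha$ obtained from the triangle inequality, whereas you set them to zero and then perturb by continuity; your treatment of vanishing denominators also goes slightly beyond the paper, which tacitly assumes them nonzero.
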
 
		Theorem~\ref{thm:rep} demonstrates that our reparameterization amplifies the gradient response to high-frequency components relative to low-frequency ones. This implies that the optimization becomes more responsive to fine-scale variations, effectively improving the convergence behavior when learning TR factors with rich spectral content. To realize these benefits in practice, we need to specify how to determine the latent tensor $\cC^{(k)}$ and fixed basis $\Bb^{(k)}$.

		% For each mode $k$, the latent tensor $\cC^{(k)}$ is generated slice by slice through a neural mapping. Specifically, for 
		Similar to TRFD, we represent the latent tensor $\cC^{(k)}$ as generated slice by slice through a neural mapping. Specifically, given a coordinate $v_k$, the corresponding slice $\cC^{(k)}_{:v_k:}$ is produced by applying a shared frequency embedding and an  MLP as:
		% to the coordinate embedding $\zb_j$ defined in Eq.~\eqref{eq:freq_vec}:
		\begin{equation}
			\cC^{(k)}_{:v_k:} = g_{\theta_k}(v_k) \in \mathbb{R}^{r_k \times R_{k+1}}, v_k\in\{1,\dots, n_k\}. 
		\end{equation}
		The full latent tensor $\cC^{(k)}\in \mathbb{R}^{r_k\times n_k \times R_{k+1}}$ is formed by stacking all slices along the $k$-th mode, where $R_{k+1}=\beta r_{k+1}$ with $\beta\ge 1$. For brevity, the overall mapping from the coordinate indices $\vb_k=[1,\dots,n_k]\in \mathbb{R}^{n_k}$ to the latent tensor is denoted as
		\begin{equation}
			\cC^{(k)} = g_{\phi_k}(\vb_k) \in \mathbb{R}^{r_k \times n_k \times R_{k+1}},
		\end{equation}
		emphasizing that the latent tensor is generated through a learnable neural mapping.

		% On the other hand, the choice of the fixed basis $\Bb^{(k)}$ is also critical for ensuring stable training.
		While Theorem~\ref{thm:rep} guarantees the existence of a basis that can amplify high-frequency gradients, an arbitrary choice may lead to poor scaling or variance inconsistency during training. To this end, we adopt a Xavier-style initialization~\cite{glorot2010understanding} for the entries of $\Bb^{(k)}$, as formalized in the following theorem.

		\begin{theorem}\label{thm:basis_init}
			% Let $\Bb^{(k)} \in \mathbb{R}^{r_{k+1} \times R_{k+1}}$ be the fixed basis used in the reparameterized TR factor. 
			Suppose the entries of $\Bb^{(k)}$ in the reparameterized TR factor  are sampled independently from a uniform distribution as 
			% , the entries should be drawn from
			\begin{equation*}
				\Bb^{(k)}_{ij} \sim \mathcal{U}\left(-\sqrt{\tfrac{6}{r_{k+1}+R_{k+1}}}, \ \sqrt{\tfrac{6}{r_{k+1}+R_{k+1}}}\right), \forall i,j,k, 
			\end{equation*}
			then the variances in both the forward and backward passes are preserved.  
		\end{theorem}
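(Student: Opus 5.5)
The plan is to follow the standard Glorot--Bengio variance argument, applied to the linear map $\cC^{(k)} \mapsto \cG^{(k)} = \cC^{(k)} \times_3 \Bb^{(k)}$. Fix a slice index $v_k$ and write the slice relation entrywise:
\begin{equation*}
\cG^{(k)}_{p v_k i} = \sum_{j=1}^{R_{k+1}} \Bb^{(k)}_{ij}\, \cC^{(k)}_{p v_k j}, \qquad i = 1, \dots, r_{k+1}.
\end{equation*}
In the forward pass this is a linear layer with fan-in $R_{k+1}$ and fan-out $r_{k+1}$. In the backward pass the gradient is
\begin{equation*}
\frac{\partial \mathsf{L}}{\partial \cC^{(k)}_{p v_k j}} = \sum_{i=1}^{r_{k+1}} \Bb^{(k)}_{ij}\, \frac{\partial \mathsf{L}}{\partial \cG^{(k)}_{p v_k i}},
\end{equation*}
which is a linear layer with fan-in $r_{k+1}$.

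The argument relies on the standard assumptions:
\begin{itemize}
\item the entries of $\Bb^{(k)}$ are i.i.d., zero mean, and independent of $\cC^{(k)}$ and of the upstream gradients;
\item the entries $\cC^{(k)}_{p v_k j}$ are i.i.d. and zero mean (reasonable since they come from sinusoidal outputs);
\item the upstream gradients are i.i.d. and zero mean.
\end{itemize}

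The first step is to compute the variances. By independence and zero mean, the cross terms vanish, so
\begin{equation*}
\operatorname{Var}(\cG_{p v_k i}) = R_{k+1}\operatorname{Var}(\Bb_{ij})\operatorname{Var}(\cC_{p v_k j}),
\qquad
\operatorname{Var}\!\left(\frac{\partial \mathsf{L}}{\partial \cC_{p v_k j}}\right) = r_{k+1}\operatorname{Var}(\Bb_{ij})\operatorname{Var}\!\left(\frac{\partial \mathsf{L}}{\partial \cG_{p v_k i}}\right).
\end{equation*}
Forward preservation therefore requires $R_{k+1}\operatorname{Var}(\Bb_{ij})=1$, and backward preservation requires $r_{k+1}\operatorname{Var}(\Bb_{ij})=1$.

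The second step addresses the fact that both conditions cannot hold at once when $\beta>1$. Following Xavier, I will take the harmonic compromise $\operatorname{Var}(\Bb_{ij}) = 2/(r_{k+1}+R_{k+1})$. This value balances the two requirements and satisfies both exactly when $R_{k+1}=r_{k+1}$. For $\beta \ge 1$ the forward and backward gains then become $2\beta/(1+\beta)$ and $2/(1+\beta)$, both lying in $[2/(1+\beta)\cdot 1,\,2)$. I will state this as the precise sense in which "preserved" holds, namely preservation in the averaged Glorot sense.

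The third step is to note that a uniform distribution $\mathcal{U}(-a,a)$ has variance $a^2/3$. Setting $a^2/3 = 2/(r_{k+1}+R_{k+1})$ gives $a = \sqrt{6/(r_{k+1}+R_{k+1})}$, which is exactly the stated distribution. Because the same computation holds independently for every $k$ and every slice, the claim follows for all $i,j,k$.

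The main obstacle is making "preserved" rigorous, because exact simultaneous preservation fails when $R_{k+1}\neq r_{k+1}$. My first attempt is to phrase the result as minimizing the sum of squared log-gains, or as matching the average of the fan-in and fan-out conditions. A second subtlety is independence: $\cC^{(k)}$ is produced by the MLP while $\Bb^{(k)}$ is fixed and random, so I will condition on $\cC^{(k)}$ and use the independence of $\Bb^{(k)}$ from it. For the backward pass I will assume the downstream TR contraction gradients are uncorrelated with $\Bb^{(k)}$, treating this as a mean-field assumption.
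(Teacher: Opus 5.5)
Your proposal follows the paper's proof essentially step for step: forward fan-in $R_{k+1}$, backward fan-in $r_{k+1}$, Glorot's compromise $\sigma_{\Bb}^2 = 2/(r_{k+1}+R_{k+1})$, and the variance $a^2/3$ of $\mathcal{U}(-a,a)$. You are right that the backward count is $r_{k+1}$ (the paper's text writes $R_{i+1}$ there, evidently a typo, since its harmonic-mean step producing $2/(r_{k+1}+R_{k+1})$ needs the bound $1/r_{k+1}$), and right that ``preserved'' holds only in this averaged sense when $\beta>1$. One caution: do not formalize the compromise as minimizing the sum of squared log-gains, because that gives the geometric mean $\sigma_{\Bb}^2 = 1/\sqrt{r_{k+1}R_{k+1}}$ rather than the stated constant; use the averaged condition $\tfrac12 (r_{k+1}+R_{k+1})\,\sigma_{\Bb}^2 = 1$ instead.
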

		
		% \begin{figure}[htbp]
			% \centering
			% \renewcommand{\arraystretch}{0.5}
			% \setlength\tabcolsep{1pt}
			% \resizebox{\linewidth}{!}{
				% \begin{tabular}{cc}
					% \includegraphics[height=40mm]{figures/PSNR_comparison.png} & \includegraphics[height=40mm]{figures/Spectral_Centroid_comparison.png}
					% \end{tabular}}
			% \caption{.}
			% \label{fig:rep_vs_worep}
			% \end{figure}
		
		To summarize the above developments,  we refer to the proposed framework as the Reparameterized Tensor Ring Functional Decomposition (RepTRFD), with its overall architecture illustrated in Fig.~\ref{fig:RepTR_shared}.
		While the previous analyses focus on gradient dynamics and initialization, it is also essential to understand the overall stability of the reparameterized formulation as a functional mapping. To ensure that the reparameterized formulation remains well-behaved and avoids excessive sensitivity to input perturbations, we establish its Lipschitz continuity property in the following theorem.

		\begin{theorem}\label{thm:lipschitz}
			Let $g_{\phi}(\vb): \mathbb{R}^d \to \mathbb{R}$ be defined as 
			$$g_{\phi}(\vb) = \Phi(g_{\phi_1}(v_1)\times_3 \Bb^{(1)}, \dots, g_{\phi_d}(v_d)\times_3 \Bb^{(d)}),$$
			where $\vb = [v_1, \dots, v_d]^{\mathrm{T}} \in \mathbb{R}^d$ denotes the coordinate vector, and each $\Bb^{(k)} \in \mathbb{R}^{r_{k+1} \times R_{k+1}}$ is a fixed basis.
			Assume for each mode $k = 1, \dots, d$, the following conditions hold:
			\begin{itemize}
				\item $g_{\phi_k}(\cdot)$ is an $L_k$-layer MLP with activation function $\sigma(\cdot)$ that is $\kappa$-Lipschitz continuous;
				\item the spectral norm of each weight matrix in $g_{\phi_k}(\cdot)$ is bounded by $\eta$;
				\item the output is bounded: $\sup_{v_k} \|g_{\phi_k}(v_k)\|_F \le C_k < \infty$.
			\end{itemize}
			Then, $g_{\phi}(\vb)$ is globally Lipschitz continuous, i.e.,
			\begin{equation*}
				|g_{\phi}(\vb) - g_{\phi}(\vb')| \le \delta \|\vb - \vb'\|_2,
			\end{equation*}
			where the global Lipschitz constant $\delta = \sqrt{\sum_{k=1}^d \delta_k^2}$, and $\delta_k = (\kappa \eta)^{L_k} \cdot \|\mathbf{B}_k\|_2 \cdot \left( \prod_{j \ne k} C_j \right)$.
		\end{theorem}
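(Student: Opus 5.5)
The plan is a telescoping argument: change one coordinate at a time, bound each one-coordinate change by a per-mode constant $\delta_k$, and then combine the per-mode bounds with Cauchy--Schwarz. Write $G_k(v_k) := g_{\phi_k}(v_k)\times_3 \Bb^{(k)} \in \mathbb{R}^{r_k\times r_{k+1}}$ for the $k$-th TR slice, so that
\begin{equation*}
g_\phi(\vb)=\operatorname{trace}\big(G_1(v_1)G_2(v_2)\cdots G_d(v_d)\big).
\end{equation*}
Introduce the hybrid points $\vb^{(k)} = [v'_1,\dots,v'_k,v_{k+1},\dots,v_d]^{\mathrm{T}}$, with $\vb^{(0)}=\vb$ and $\vb^{(d)}=\vb'$. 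The triangle inequality then gives
\begin{equation*}
|g_\phi(\vb)-g_\phi(\vb')|\le \sum_{k=1}^d |g_\phi(\vb^{(k-1)})-g_\phi(\vb^{(k)})|.
\end{equation*}
Consecutive hybrids differ only in the $k$-th factor.

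For the $k$-th term, multilinearity of the trace gives the difference as $\operatorname{trace}(A_k\,(G_k(v_k)-G_k(v'_k))\,D_k)$. Here $A_k$ and $D_k$ are the products of the remaining slices, evaluated at the appropriate mixed coordinates. I would bound this by cyclicity and the inequality $|\operatorname{trace}(XY)|\le \|X\|_F\|Y\|_F$:
\begin{equation*}
|\operatorname{trace}(D_kA_k\,\Delta_k)| \le \Big(\prod_{j\neq k}\|G_j\|_F\Big)\|\Delta_k\|_F,
\end{equation*}
where $\Delta_k = G_k(v_k)-G_k(v'_k)$. This uses submultiplicativity of the Frobenius norm.

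Next I bound each factor. Since $\times_3\Bb^{(k)}$ right-multiplies each slice by $\Bb^{(k)\mathrm{T}}$, we have $\|\Delta_k\|_F\le \|\Bb^{(k)}\|_2\,\|g_{\phi_k}(v_k)-g_{\phi_k}(v'_k)\|_F$. The standard layerwise argument for an $L_k$-layer MLP (spectral norms at most $\eta$, $\kappa$-Lipschitz activation) gives the Lipschitz constant $(\kappa\eta)^{L_k}$. The shared sinusoidal embedding is one of these layers, or is absorbed with its weight norm into the count. So $\|\Delta_k\|_F\le (\kappa\eta)^{L_k}\|\Bb^{(k)}\|_2\,|v_k-v'_k|$. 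For the other factors I use $\|G_j\|_F\le \|\Bb^{(j)}\|_2 C_j$.

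This is the main obstacle. The stated $\delta_k$ contains $\prod_{j\ne k}C_j$ but not $\prod_{j\neq k}\|\Bb^{(j)}\|_2$. To match it exactly I would either read $C_j$ as the bound on the reparameterized slice $\|G_j\|_F$, or absorb the basis norms into $C_j$ (Xavier-initialized bases have operator norm $O(1)$). I would state this interpretation explicitly. Under it, the $k$-th telescoping term is at most $\delta_k|v_k-v'_k|$.

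Finally, Cauchy--Schwarz gives
\begin{equation*}
\sum_k \delta_k|v_k-v'_k|\le \Big(\sum_k\delta_k^2\Big)^{1/2}\|\vb-\vb'\|_2 = \delta\|\vb-\vb'\|_2,
\end{equation*}
which is the claimed global Lipschitz bound.
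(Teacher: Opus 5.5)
Your proposal is correct and takes the same route as the paper. Both proofs get a per-mode bound $(\kappa\eta)^{L_k}\|\Bb^{(k)}\|_2$ times the norms of the remaining slices, from the layerwise MLP estimate and $|\operatorname{trace}(XY)|\le\|X\|_F\|Y\|_F$, and then combine modes with the triangle inequality and Cauchy--Schwarz.

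Both of your extra precautions are warranted. The paper's proof silently uses $\|\cG^{(j)}_{:v_j:}\|_F\le C_j$ for the reparameterized slices, which is exactly your reinterpretation of $C_j$; under the literal hypothesis the factors $\|\Bb^{(j)}\|_2$, $j\neq k$, cannot in general be dropped. The paper also aggregates through points $\vb'_k$ that each differ from $\vb$ in a single coordinate, and that inequality fails in general. Your hybrid points repair it correctly, because the per-mode bound holds uniformly in the other coordinates.
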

		
		% \tb{In summary, the reparameterization enhances the training dynamics, provides a practical basis for initialization for variance consistency, and ensures that the overall mapping is globally Lipschitz continuous. These properties jointly enable stable and effective learning of continuous TR factors with rich spectral content.
			% }
		% \subsection{Multi-Dimensional Data Recovery Models}
		
		% In this section,
		Given the RepTRFD representation $g_{\phi}(\{\vb_k\}_{k=1}^d) = \Phi(g_{\phi_1}(\vb_1)\times_3 \Bb^{(1)}, \dots, g_{\phi_d}(\vb_d)\times_3 \Bb^{(d)})$, we introduce a general formulation on the loss designs used for various low-level vision and 3D recovery tasks. Each task is formulated as an optimization problem consisting of a data fidelity term $\mathsf{L}_{\text{data}}$ and an optional regularization term $\mathsf{L}_{\text{reg}}$, expressed as:
		\begin{equation}
			\min_{\phi} \mathsf{L}_{\text{data}}(g_{\phi}(\{\vb_k\}_{k=1}^d);\cO) + \mathsf{L}_{\text{reg}}(g_{\phi}(\{\vb_k\}_{k=1}^d)),
		\end{equation}
		where $\vb$ denotes the coordinate input, 
		$\phi$ includes all the learnable parameters, and $\cO$ denotes the given observations. Depending on the specific recovery task, the regularization term can be omitted or designed to impose appropriate priors on the spatial or spectral structure~\cite{aggarwal2016hyperspectral}.  For more details, please refer to the supplementary materials. 

		\section{Experimental Results}
		
		In this section, we evaluate the proposed method on four representative tasks, including inpainting, denoising, super-resolution, and point cloud recovery.
		Training is performed using the Adam optimizer~\cite{kingma2014adam}, and all experiments are run on a workstation equipped with an Intel Core i9-12900K CPU, 62 GB RAM, and an NVIDIA RTX 4090 GPU.
		
		\subsection{Parameter Setting}
		
		The main parameters of our model include the TR rank $[r_1,\dots, r_d]$, the latent tensor expansion factor $\beta$, and the sinusoidal frequency $\omega_0$. For all modes $k=1,\dots,d$, we set $r_k \in \{16,20\}$ and $R_k=\beta r_k$ with $\beta \in \{3,5,10\}$. The frequency parameter $\omega_0 $ is set to 90, 120, or 240, and each branch has one or two layers with 256 hidden units. The model is optimized with a fixed learning rate of $3\times10^{-4}$. Input coordinates are normalized to the range $[-1,1]$.
		% To further enhance local smoothness and structural consistency, total variation (TV) and spatial-spectral total variation (SSTV) regularizations~\cite{aggarwal2016hyperspectral} are incorporated. 
		Detailed parameter settings for each task are provided in the supplementary material.
		% The main parameters of our model include the TR rank, network depth, and the frequency parameter $\omega_0$ of the sinusoidal activation.
		% For the denoising experiments, the TR rank is set to $[16,16,16]$, while for all other tasks it is fixed at $[20,20,20]$.
		% The frequency parameter $\omega_0$ is set to 90 for color images and videos, and 120 for all other datasets. Each dimension-specific branch network contains one or two linear layers depending on the task.
		% Across all experiments, the learning rate is fixed at $3\times10^{-4}$, and the number of hidden units is 256. The regularization parameters are tuned for each task as follows.
		% For inpainting, we set $(\gamma_1, \gamma_2) = (5\times10^{-5}, 5\times10^{-4})$ for color images and videos, and $(\gamma_1, \gamma_2) = (5\times10^{-6}, 5\times10^{-5})$ for MSI and HSI data.
		% For denoising, $(\gamma_1, \gamma_2)$ are both set to $1\times10^{-4}$ for MSI and $1\times10^{-5}$ for HSI.
		% For super-resolution, the parameter is fixed as $\gamma_1 = 5\times10^{-5}$.
		
		\subsection{Comparisons with State-of-the-Arts}
		
		\noindent
		{\bf Image and Video Inpainting Results. }We first evaluate our method on the inpainting task under various sampling ratios (SRs) using color images, multispectral images (MSIs), hyperspectral images (HSIs), and videos.  
		Datasets are taken from USC-SIPI\footnote{\url{https://sipi.usc.edu/database/}}, CAVE\footnote{\url{https://cave.cs.columbia.edu/repository/Multispectral}}, Remote Sensing Scenes\footnote{\url{https://www.ehu.eus/ccwintco/index.php/Hyperspectral_Remote_Sensing_Scenes}}, and YUV sequences\footnote{\url{http://trace.eas.asu.edu/yuv/}}.  
		All images are resized or cropped for consistent input sizes, and 100 consecutive frames are used for video evaluation. We compare our approach with representative tensor completion models such as TRLRF~\cite{yuan2019tensor} and FCTN~\cite{zheng2021fully}, as well as recent neural tensor representations including HLRTF~\cite{luo2022hlrtf}, LRTFR~\cite{luo2023low}, DRO-TFF~\cite{li2025deep}, and NeurTV~\cite{luo2025neurtv}. All methods are tested under identical sampling masks, and performance is assessed using peak signal-to-noise ratio (PSNR) and structural similarity index (SSIM) \cite{wang2004image}.
		
		\begin{figure*}[tb]
			\renewcommand{\arraystretch}{0.5}
			\setlength\tabcolsep{0.5pt}
			\centering
			\resizebox{\linewidth}{!}{
				\begin{tabular}{ccccccccc}
					\scriptsize{PSNR/SSIM} & \scriptsize{21.50/0.529} & \scriptsize{22.08/0.600} & \scriptsize{26.77/0.811} & \scriptsize{27.21/0.822} & \scriptsize{27.64/0.853} &  \scriptsize{28.21/0.905} & \scriptsize{\textbf{30.45}/\textbf{0.932}} & \scriptsize{\textbf{Inf/1.000}} \\
					\includegraphics[width=18.5mm]{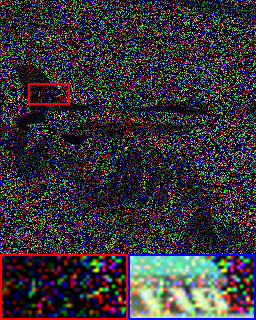} & \includegraphics[width=18.5mm]{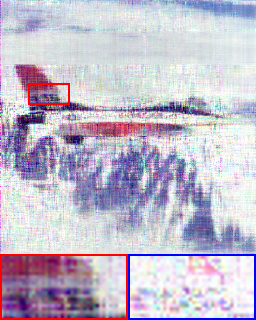} &
					\includegraphics[width=18.5mm]{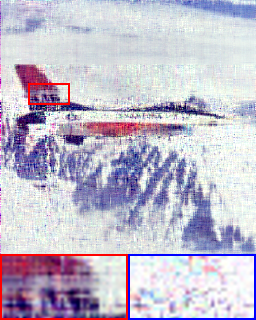} &
					\includegraphics[width=18.5mm]{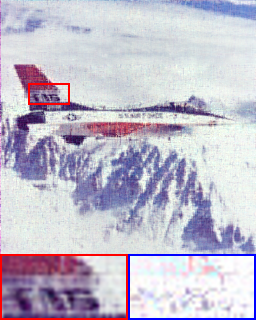} &
					\includegraphics[width=18.5mm]{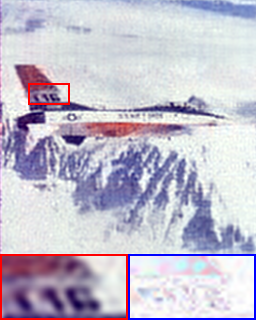} &
					\includegraphics[width=18.5mm]{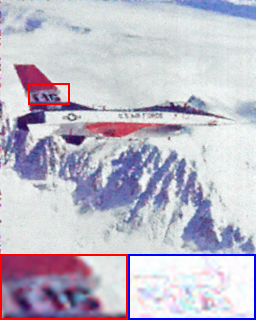} &
					\includegraphics[width=18.5mm]{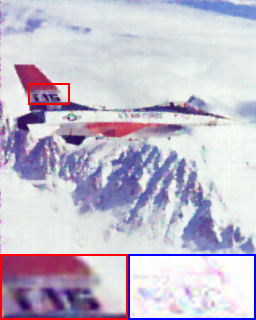} &
					\includegraphics[width=18.5mm]{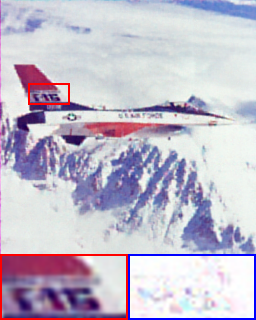} &
					\includegraphics[width=18.5mm]{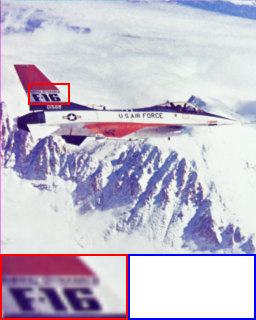} \\
					\scriptsize{PSNR/SSIM} & \scriptsize{26.76/0.812} & \scriptsize{27.74/0.837} & \scriptsize{29.64/0.914} & \scriptsize{29.62/0.893} & \scriptsize{30.66/0.937} &  \scriptsize{30.05/0.902} & \scriptsize{\textbf{32.55}/\textbf{0.957}} & \scriptsize{\textbf{Inf/1.000}} \\
					\includegraphics[width=18.5mm]{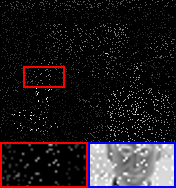} & \includegraphics[width=18.5mm]{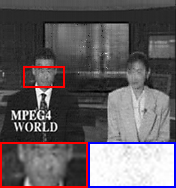} &
					\includegraphics[width=18.5mm]{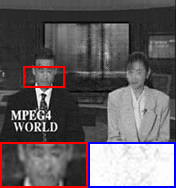} &
					\includegraphics[width=18.5mm]{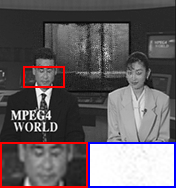} &
					\includegraphics[width=18.5mm]{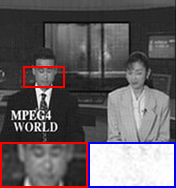} &
					\includegraphics[width=18.5mm]{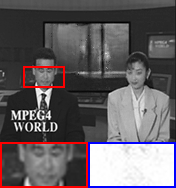} &
					\includegraphics[width=18.5mm]{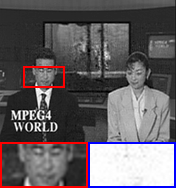} &
					\includegraphics[width=18.5mm]{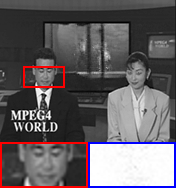} &
					\includegraphics[width=18.5mm]{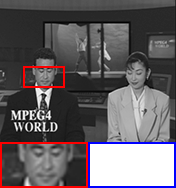} \\
					\scriptsize{Observed} & \scriptsize TRLRF & \scriptsize FCTN & \scriptsize HLRTF & \scriptsize LRTFR & \scriptsize DRO-TFF & \scriptsize NeurTV & \scriptsize Ours & \scriptsize{Ground truth}
			\end{tabular}}
			\vspace{-0.7em}
			\caption{Visual inpainting results on color image \textit{Airplane} ($\text{SR}=0.2$) and video \textit{News} ($\text{SR}=0.1$).}
			\label{fig:inpainting}
			\vspace{-1em}
		\end{figure*}
		
		\begin{table}[tbp]
			\centering
			\caption{Quantitative inpainting results on all datasets.}
			\vspace{-1em}
			\small
			\setlength{\tabcolsep}{3pt}
			\resizebox{\linewidth}{!}{
				\begin{tabular}{clccccccc}
					\toprule
					\multirow{2}{*}{Data} & \multirow{2}{*}{Method} & \multicolumn{2}{c}{$\text{SR}=0.1$} & \multicolumn{2}{c}{$\text{SR}=0.2$} & \multicolumn{2}{c}{$\text{SR}=0.3$} & \multirow{2}{*}{Time (s)} \\
					\cmidrule(lr){3-4} \cmidrule(lr){5-6} \cmidrule(lr){7-8}
					& & PSNR & SSIM & PSNR & SSIM & PSNR & SSIM &  \\
					\midrule
					\multirow{7}{*}{\makecell[c]{\textit{Airplane}\\
							\textit{House}\\
							\textit{Pepper}\\
							\textit{Sailboat}\\
							$(256\times256\times3)$}} & 
					TRLRF & 17.52 & 0.326 & 21.57 & 0.546 & 24.85 & 0.698 & 5.12 \\
					& FCTN & 17.38 & 0.359 & 21.23 & 0.566 & 24.67 & 0.710 & 5.44 \\
					& HLRTF & 20.89 & 0.544 & 25.27 & 0.754 & 28.02 & 0.841 & 3.43 \\
					& LRTFR & 21.38 & 0.587 & 25.86 & 0.770 & 28.74 & 0.856 & 6.03 \\
					& DRO-TFF & 23.22 & 0.734 & 27.52 & 0.848 & 30.04 & 0.899 & 13.25 \\
					& NeurTV & \underline{24.16} & \underline{0.797} & \underline{27.81} & \underline{0.884} & \underline{30.28} & \underline{0.924} & 31.57 \\ 
					& Ours & \textbf{25.70} & \textbf{0.841} & \textbf{29.37} & \textbf{0.915} & \textbf{32.01} & \textbf{0.944} & 13.11 \\
					\midrule
					% \multicolumn{9}{c}{\normalsize MSI, HSI, and video datasets} \\
					\midrule
					\multirow{2}{*}{Data} & \multirow{2}{*}{Method} & \multicolumn{2}{c}{$\text{SR}=0.05$} & \multicolumn{2}{c}{$\text{SR}=0.1$} & \multicolumn{2}{c}{$\text{SR}=0.15$} & \multirow{2}{*}{Time (s)} \\
					\cmidrule(lr){3-4} \cmidrule(lr){5-6} \cmidrule(lr){7-8}
					& & PSNR & SSIM & PSNR & SSIM & PSNR & SSIM &  \\
					\midrule
					\multirow{7}{*}{\makecell[c]{\textit{Toys}\\ \textit{Flowers}\\$(256\times 256\times 31)$}} & 
					TRLRF & 28.95 & 0.748 & 33.40 & 0.869 & 36.17 & 0.922 & 20.16 \\
					& FCTN & 31.71 & 0.838 & 36.77 & 0.929 & 39.33 & 0.956 & 15.34 \\
					& HLRTF & 34.89 & 0.937 & 40.86 & 0.980 & 43.48 & 0.988 & 4.12 \\
					& LRTFR & 36.38 & 0.952 & 40.71 & 0.980 & 42.48 & 0.984 & 7.04 \\
					& DRO-TFF & \underline{38.45} & \underline{0.982} & \underline{42.28} & \underline{0.992} & \underline{45.00} & \underline{0.994} & 14.57 \\
					& NeurTV & 37.49 & 0.964 & 41.66 & 0.984 & 43.72 & 0.988 & 33.56 \\ 
					& Ours & \textbf{39.34} & \textbf{0.984} & \textbf{44.66} & \textbf{0.993} & \textbf{47.74} & \textbf{0.995} & 14.49 \\
					\hline
					\multirow{7}{*}{\makecell[c]{\textit{Washington DC}\\$(256\times 256\times 191)$\\ \textit{Botswana}\\$(256\times 256\times 145)$}} & 
					TRLRF & 29.64 & 0.776 & 32.24 & 0.899 & 34.72 & 0.938 & 98.45 \\
					& FCTN & 32.42 & 0.905 & 34.94 & 0.942 & 36.30 & 0.956 & 57.55 \\
					& HLRTF & 37.36 & 0.970 & 40.14 & 0.984 & 42.09 & 0.988 & 18.36 \\
					& LRTFR & 36.62 & 0.962 & 39.11 & 0.976 & 42.56 & 0.986 & 17.63 \\
					& DRO-TFF & \underline{38.59} & \underline{0.972} & 41.23 & 0.984 & 42.11 & 0.986 & 56.58 \\
					& NeurTV & 37.79 & \underline{0.972} & \underline{41.99} & \underline{0.987} & \underline{43.33} & \underline{0.989} & 45.51 \\ 
					& Ours & \textbf{40.75} & \textbf{0.984} & \textbf{44.22} & \textbf{0.990} & \textbf{45.83} & \textbf{0.992} & 56.45 \\
					\hline
					\multirow{7}{*}{\makecell[c]{\textit{News}\\ \textit{Carphone} \\$(144\times176\times100)$}} & 
					TRLRF & 24.59 & 0.716 & 26.58 & 0.790 & 27.95 & 0.837 & 21.30 \\
					& FCTN & 25.63 & 0.752 & 27.41 & 0.815 & 28.73 & 0.854 & 16.88 \\
					& HLRTF & 26.16 & 0.787 & 28.88 & 0.865 & 30.45 & 0.898 & 4.70 \\
					& LRTFR & 26.48 & 0.798 & 28.70 & 0.865 & 29.95 & 0.889 & 6.80 \\
					& DRO-TFF & \underline{28.79} & \underline{0.892} & \underline{30.13} & \underline{0.915} & \underline{31.48} & \underline{0.932} & 11.83 \\
					& NeurTV & 27.48 & 0.818 & 29.44 & 0.875 & 30.55 & 0.907 & 16.01 \\ 
					& Ours & \textbf{29.77} & \textbf{0.917} & \textbf{31.64} & \textbf{0.939} & \textbf{32.85} & \textbf{0.951} & 15.59\\
					\bottomrule
				\end{tabular}
			}
			\label{tab:inpainting}
			\vspace{-1.5em}
		\end{table}
		
		Fig.~\ref{fig:inpainting} shows the inpainting results on the color image \textit{Airplane} and the video \textit{News} at different SRs. For clearer comparison, zoomed-in regions and their corresponding residual maps are provided. The proposed method recovers sharper details and finer textures than existing approaches, achieving around 2 dB higher PSNR than the best competing method. Table~\ref{tab:inpainting} reports the quantitative inpainting results on all datasets. Our method consistently attains the highest PSNR and SSIM for all data types and SRs, while keeping computational cost at a moderate level.
		
		\noindent
		{\bf Multispectral Image Denoising Results.}
		We further evaluate the proposed method on denoising tasks with additive Gaussian noise of varying standard deviations (SDs). Six representative baselines are considered for comparison: LRTDTV~\cite{wang2017hyperspectral}, DeepTensor~\cite{saragadam2024deeptensor}, HLRTF~\cite{luo2022hlrtf}, LRTFR~\cite{luo2023low}, DRO-TFF~\cite{li2025deep}, and NeurTV~\cite{luo2025neurtv}. Fig.~\ref{fig:denoising} presents visual comparisons on the MSI \textit{Face} and HSI \textit{Washington DC} datasets under Gaussian noise with $\text{SD}=0.2$, while quantitative results are summarized in Table~\ref{tab:denoising_avg}. The proposed approach consistently attains the highest PSNR and SSIM, outperforming the strongest baseline by approximately 1~dB on average across all noise levels.
		
		\begin{figure*}[tb]
			\renewcommand{\arraystretch}{0.5}
			\setlength\tabcolsep{0.5pt}
			\centering
			\resizebox{\linewidth}{!}{
				\begin{tabular}{ccccccccc}
					\scriptsize{PSNR/SSIM} & \scriptsize{32.26/0.701} & \scriptsize{34.06/0.859} & \scriptsize{34.67/0.835} & \scriptsize{35.13/0.858} & \scriptsize{36.37/0.920} &  \scriptsize{36.78/0.927} & \scriptsize{\textbf{37.89}/\textbf{0.945}} & \scriptsize{\textbf{Inf/1.000}} \\
					\includegraphics[width=18.5mm]{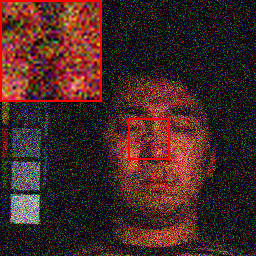} & \includegraphics[width=18.5mm]{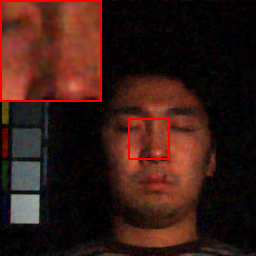} &
					\includegraphics[width=18.5mm]{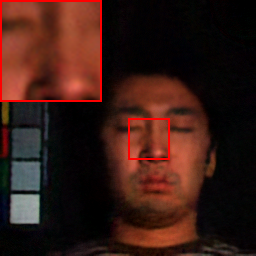} &
					\includegraphics[width=18.5mm]{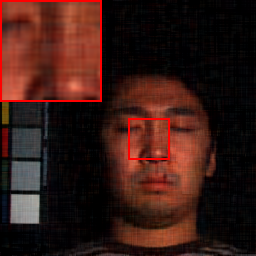} &
					\includegraphics[width=18.5mm]{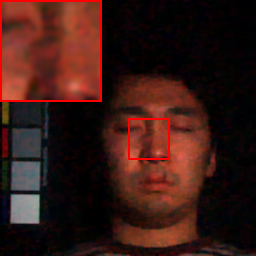} &
					\includegraphics[width=18.5mm]{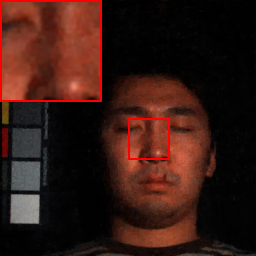} &
					\includegraphics[width=18.5mm]{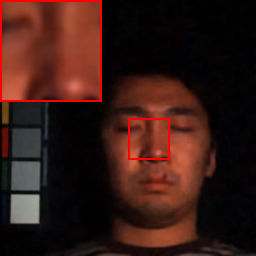} &
					\includegraphics[width=18.5mm]{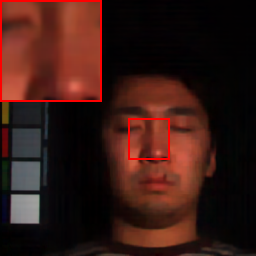} &
					\includegraphics[width=18.5mm]{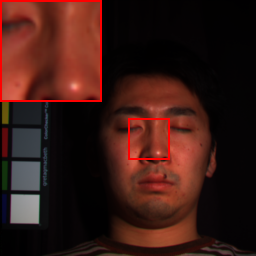} \\
					\scriptsize{PSNR/SSIM} & \scriptsize{31.71/0.869} & \scriptsize{32.32/0.891} & \scriptsize{32.75/0.913} & \scriptsize{33.02/0.914} & \scriptsize{34.26/0.919} &  \scriptsize{33.86/0.910} & \scriptsize{\textbf{35.08}/\textbf{0.945}} & \scriptsize{\textbf{Inf/1.000}} \\
					\includegraphics[width=18.5mm]{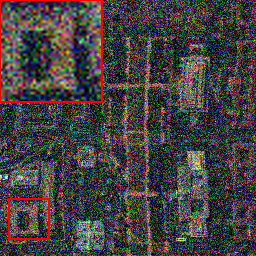} & \includegraphics[width=18.5mm]{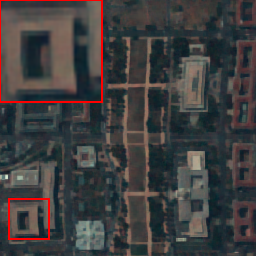} &
					\includegraphics[width=18.5mm]{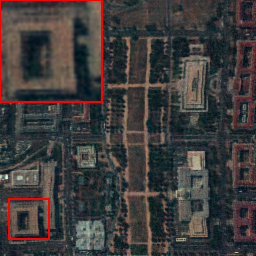} &
					\includegraphics[width=18.5mm]{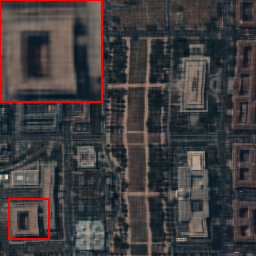} &
					\includegraphics[width=18.5mm]{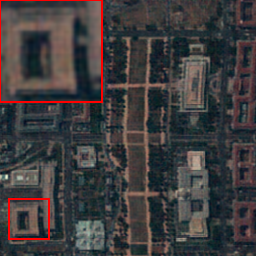} &
					\includegraphics[width=18.5mm]{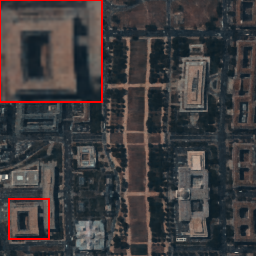} &
					\includegraphics[width=18.5mm]{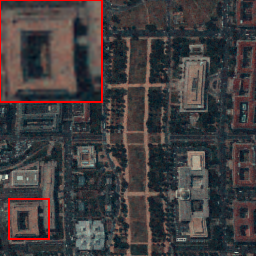} &
					\includegraphics[width=18.5mm]{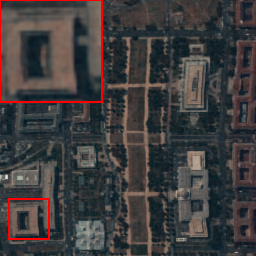} &
					\includegraphics[width=18.5mm]{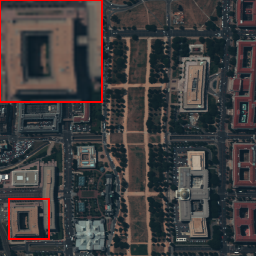} \\
					\scriptsize{Observed} & \scriptsize LRTDTV & \scriptsize DeepTensor & \scriptsize HLRTF & \scriptsize LRTFR & \scriptsize DRO-TFF & \scriptsize NeurTV & \scriptsize Ours & \scriptsize{Ground truth}
			\end{tabular}}
			\vspace{-0.7em}
			\caption{Visual denoising results on MSI \textit{Face} and HSI \textit{Washington DC} ($\text{SD}=0.2$).}
			\label{fig:denoising}
		\end{figure*}
		
		\begin{table}[tb]
			\centering
			\caption{Average quantitative results for MSI and HSI denoising under different Gaussian noise levels.}
			\vspace{-1em}
			\small
			\setlength{\tabcolsep}{3pt}
			\resizebox{\linewidth}{!}{
				\begin{tabular}{clccccccc}
					\toprule
					\multirow{2}{*}{Data} & \multirow{2}{*}{Method} & \multicolumn{2}{c}{$\text{SD}=0.1$} & \multicolumn{2}{c}{$\text{SD}=0.2$} & \multicolumn{2}{c}{$\text{SD}=0.3$} & \multirow{2}{*}{Time (s)} \\
					\cmidrule(lr){3-4} \cmidrule(lr){5-6} \cmidrule(lr){7-8}
					& & PSNR & SSIM & PSNR & SSIM & PSNR & SSIM &  \\
					\midrule
					\multirow{7}{*}{\makecell[c]{\textit{Toys}\\ \textit{Face}\\$(256\times 256\times 31)$}} 
					& LRTDTV & 35.85 & 0.895 & 31.27 & 0.718 & 28.45 & 0.554 & 5.97 \\
					& DeepTensor & 36.23 & 0.914 & 32.48 & 0.833 & 30.25 & 0.702 & 47.93 \\
					& HLRTF & 37.09 & 0.917 & 33.28 & 0.828 & 30.88 & 0.726 & 4.55 \\
					& LRTFR & 37.55 & 0.935 & 33.75 & 0.864 & 31.80 & 0.819 & 5.86 \\
					& DRO-TFF & \underline{38.67} & 0.959 & 34.74 & 0.911 & 32.91 & 0.881 & 7.61 \\
					& NeurTV & 38.61 & \underline{0.960} & \underline{34.98} & \underline{0.915} & \underline{33.37} & \underline{0.883} & 49.20 \\
					& Ours & \textbf{39.02} & \textbf{0.964} & \textbf{35.91} & \textbf{0.933} & \textbf{34.08} & \textbf{0.905} & 8.79 \\
					\hline
					\multirow{7}{*}{\makecell[c]{\textit{Washington DC}\\$(256\times 256\times 191)$\\ \textit{Salinas}\\$(217\times 217\times 224)$}} 
					& LRTDTV & 37.23 & 0.941 & 34.27 & 0.895 & 32.56 & 0.857 & 21.13 \\
					& DeepTensor & 38.13 & 0.954 & 34.69 & 0.906 & 32.81 & 0.856 & 53.67 \\
					& HLRTF & 38.23 & 0.958 & 35.12 & 0.921 & 33.45 & 0.893 & 17.21 \\
					& LRTFR & 38.14 & 0.958 & 35.50 & 0.928 & 33.76 & 0.901 & 16.32 \\
					& DRO-TFF & 38.52 & 0.954 & 36.35 & \underline{0.934} & \underline{34.95} & \underline{0.911} & 34.23 \\
					& NeurTV & \underline{39.36} & \underline{0.963} & \underline{36.75} & 0.932 & 34.78 & 0.899 & 259.53 \\
					& Ours & \textbf{40.37} & \textbf{0.972} & \textbf{37.63} & \textbf{0.950} & \textbf{35.98} & \textbf{0.928} & 35.76\\
					\bottomrule
				\end{tabular}
			}
			\label{tab:denoising_avg}
			\vspace{-1em}
		\end{table}
		
		\noindent
		{\bf Image Super Resolution Results. }We further evaluate our method on single-image super-resolution using samples from the DIV2K dataset~\cite{agustsson2017ntire}, where high-resolution images are cropped to $768\times768$. 
		% High-resolution images are cropped to $768\times768$, and lower-resolution inputs are generated with various scaling factors. 
		The proposed method is compared with several INR-based baselines, including PEMLP~\cite{mildenhall2021nerf}, SIREN~\cite{sitzmann2020implicit}, Gauss~\cite{ramasinghe2022beyond}, WIRE~\cite{saragadam2023wire}, FINER~\cite{liu2024finer}, and LRTFR~\cite{luo2023low}.
		Fig.~\ref{fig:super_resolution} presents visual results on the \textit{Lion} and \textit{Parrot} images at $\times4$ scaling, where our method recovers sharper edges and finer textures, effectively mitigating over-smoothing and aliasing artifacts. Quantitatively, it surpasses the strongest baseline by approximately 1~dB in PSNR on average. Detailed results on \textit{Lion} under different scaling factors are reported in Table~\ref{tab:super_resolution}. Overall, by leveraging a low-rank functional representation, the proposed model consistently achieves higher PSNR and SSIM across all scales while being substantially faster than INR-based baselines.
		
		\begin{figure*}[tb]
			\renewcommand{\arraystretch}{0.5}
			\setlength\tabcolsep{0.5pt}
			\centering
			\resizebox{\linewidth}{!}{
				\begin{tabular}{cccccccc}
					\scriptsize{PSNR 27.84} & \scriptsize{PSNR 29.48} & \scriptsize{PSNR 28.23} & \scriptsize{PSNR 30.21} & \scriptsize{PSNR 29.84} &  \scriptsize{PSNR 28.10} & \scriptsize{PSNR \textbf{31.01}} & \scriptsize{PSNR \textbf{Inf}} \\
					\includegraphics[width=18.5mm]{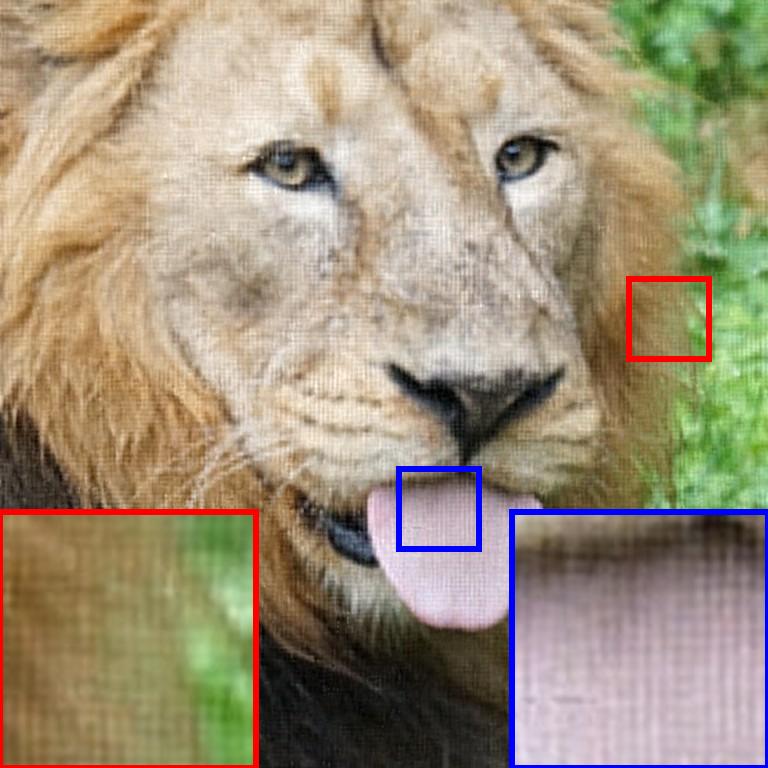} &
					\includegraphics[width=18.5mm]{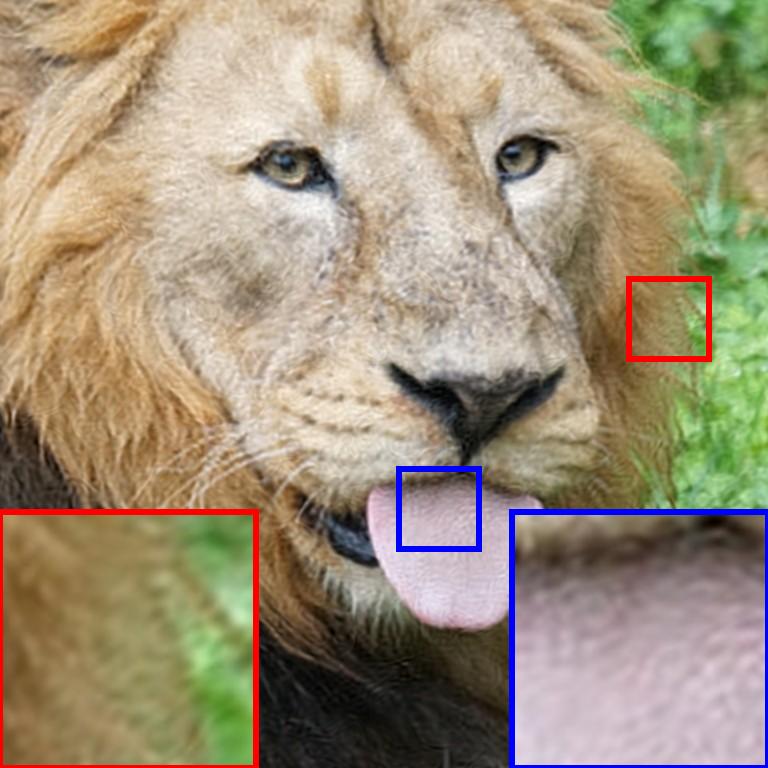} &
					\includegraphics[width=18.5mm]{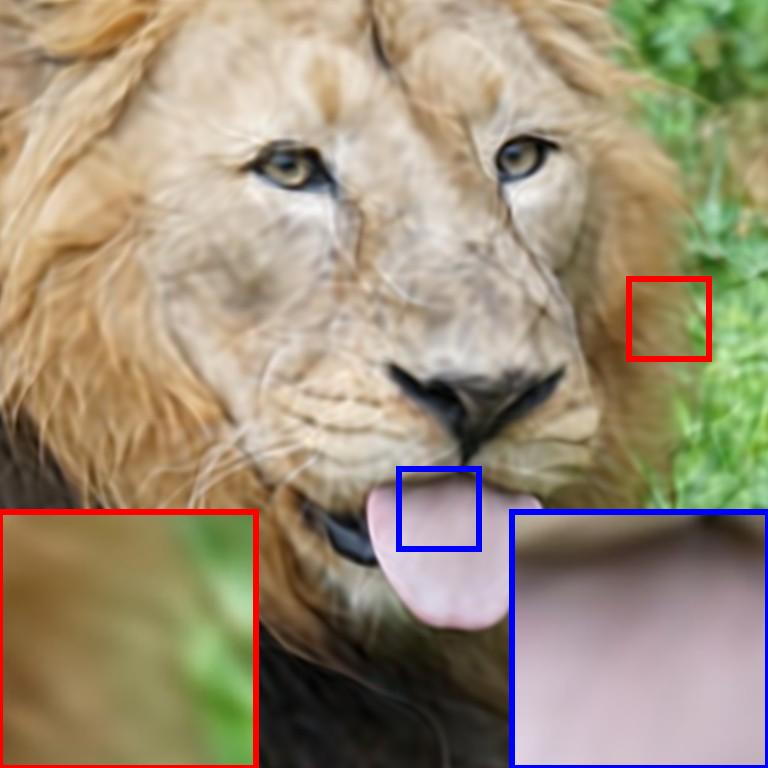} &
					\includegraphics[width=18.5mm]{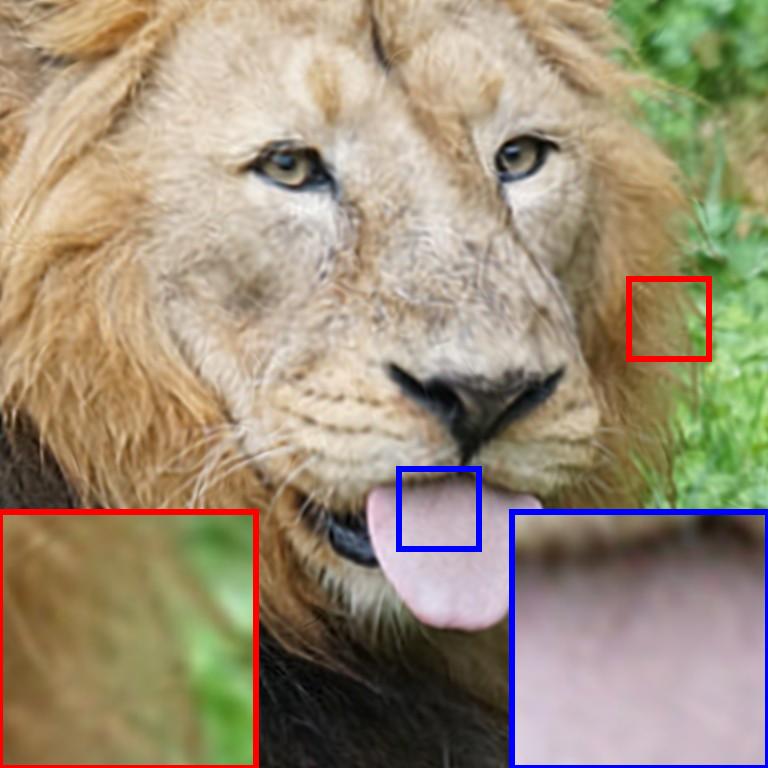} &
					\includegraphics[width=18.5mm]{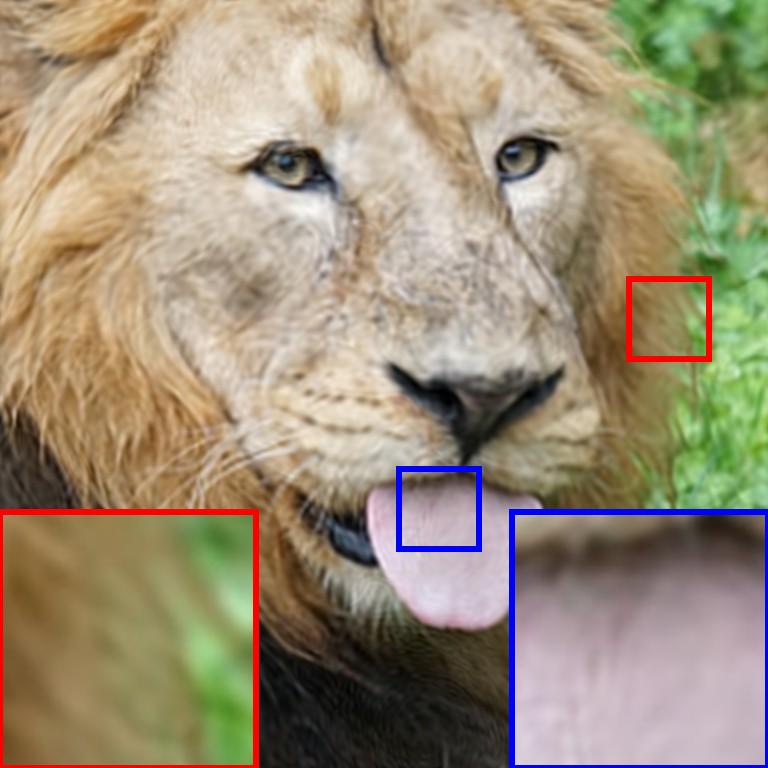} &
					\includegraphics[width=18.5mm]{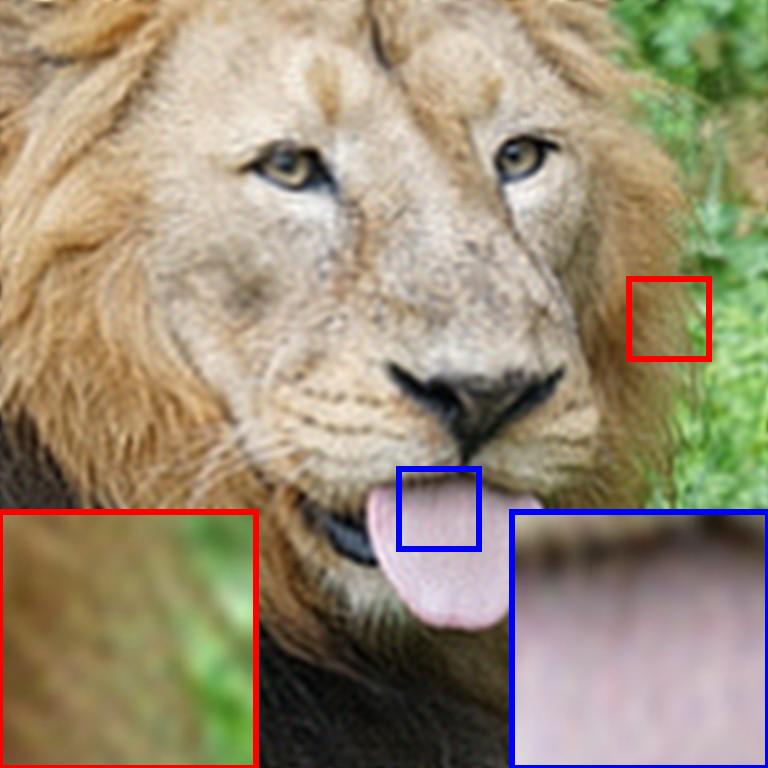} &
					\includegraphics[width=18.5mm]{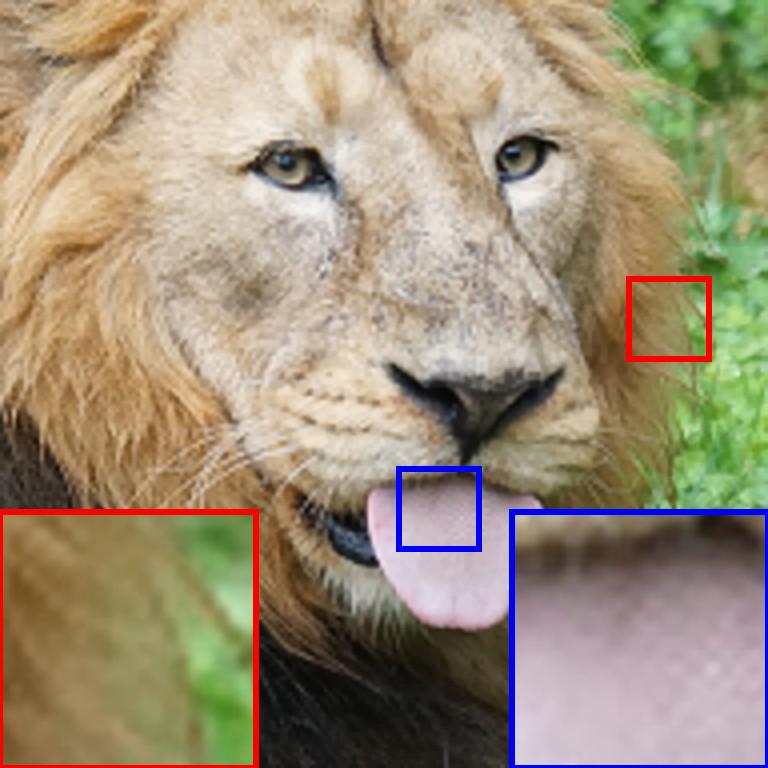} &
					\includegraphics[width=18.5mm]{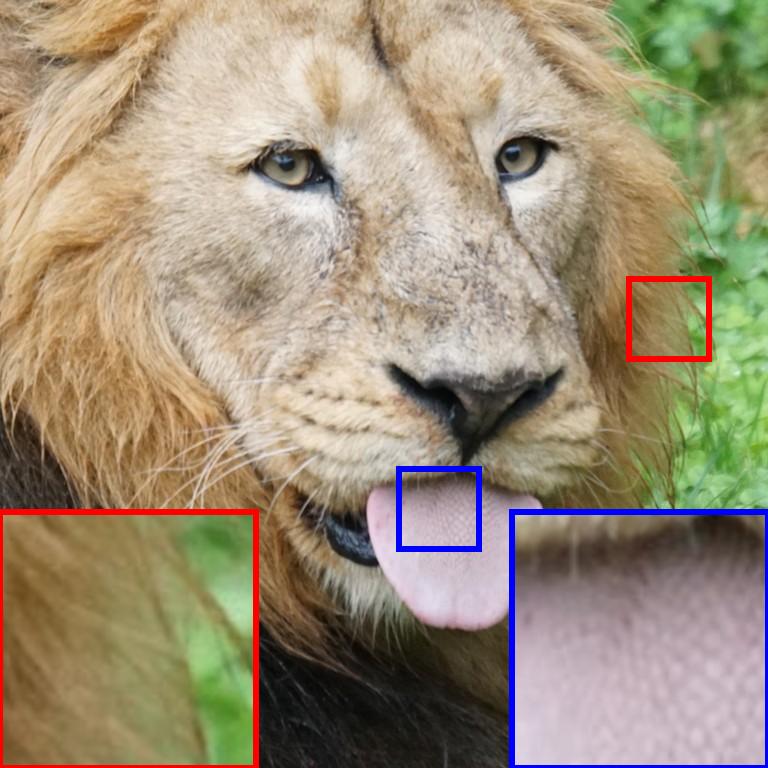} \\
					\scriptsize{PSNR 27.11} & \scriptsize{PSNR 28.16} & \scriptsize{PSNR 27.39} & \scriptsize{PSNR 29.27} & \scriptsize{PSNR 29.47} &  \scriptsize{PSNR 27.81} & \scriptsize{PSNR \textbf{30.39}} & \scriptsize{PSNR \textbf{Inf}} \\
					\includegraphics[width=18.5mm]{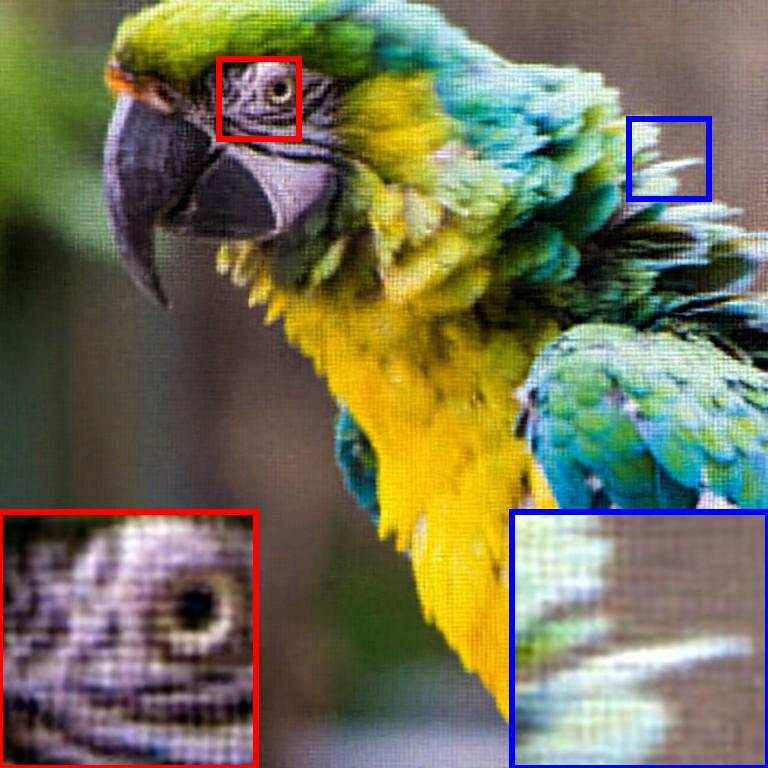} &
					\includegraphics[width=18.5mm]{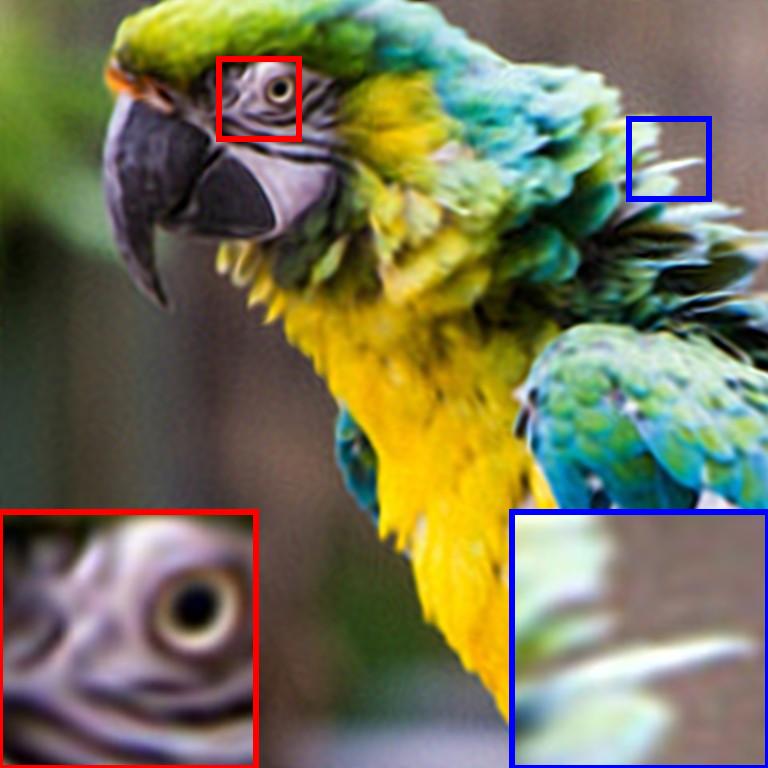} &
					\includegraphics[width=18.5mm]{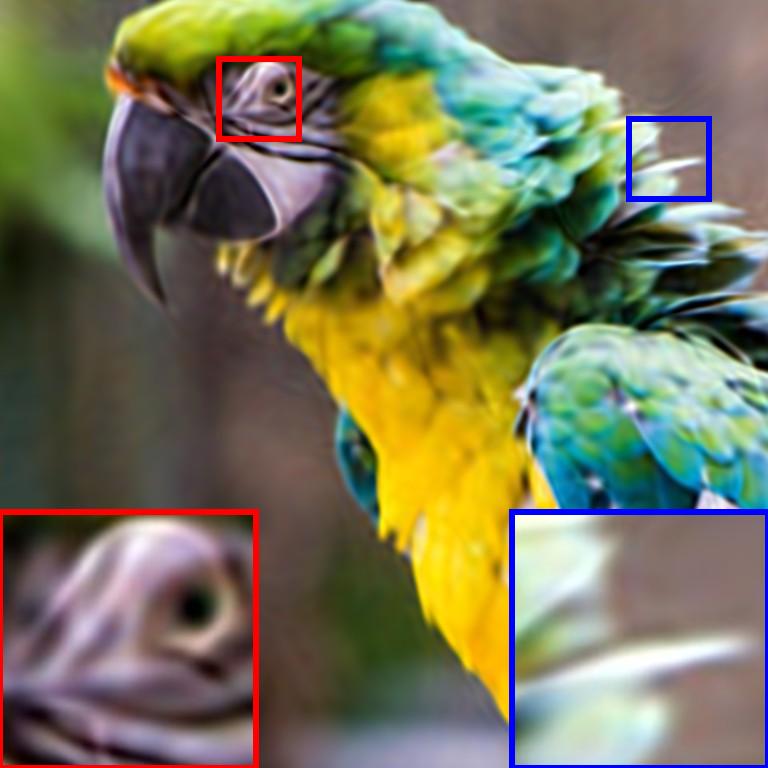} &
					\includegraphics[width=18.5mm]{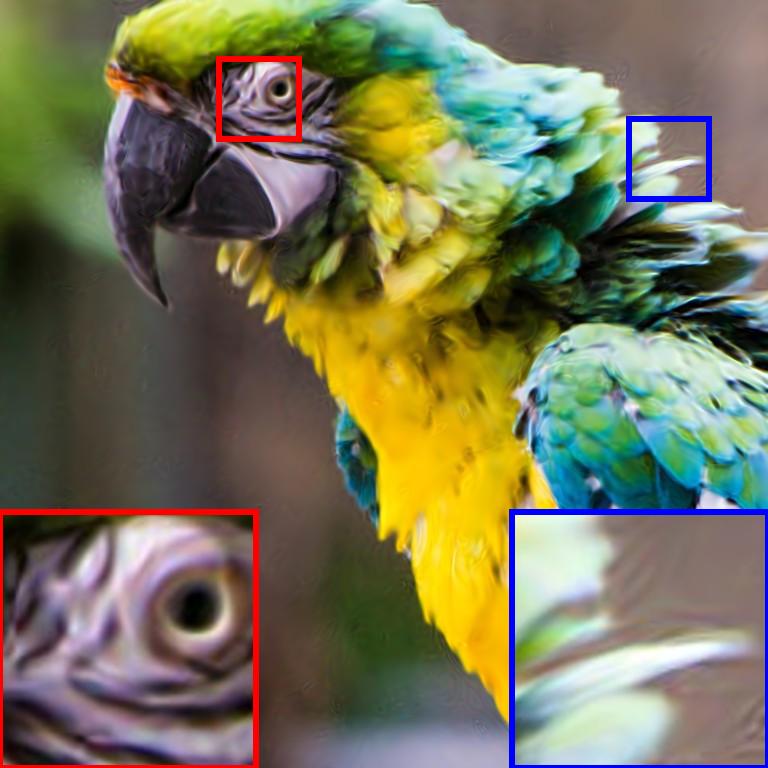} &
					\includegraphics[width=18.5mm]{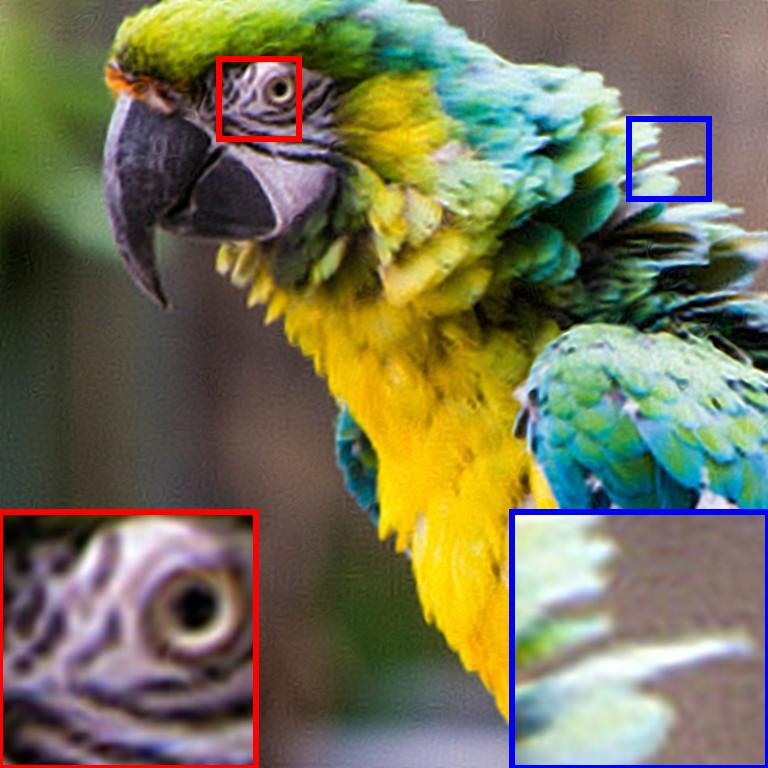} &
					\includegraphics[width=18.5mm]{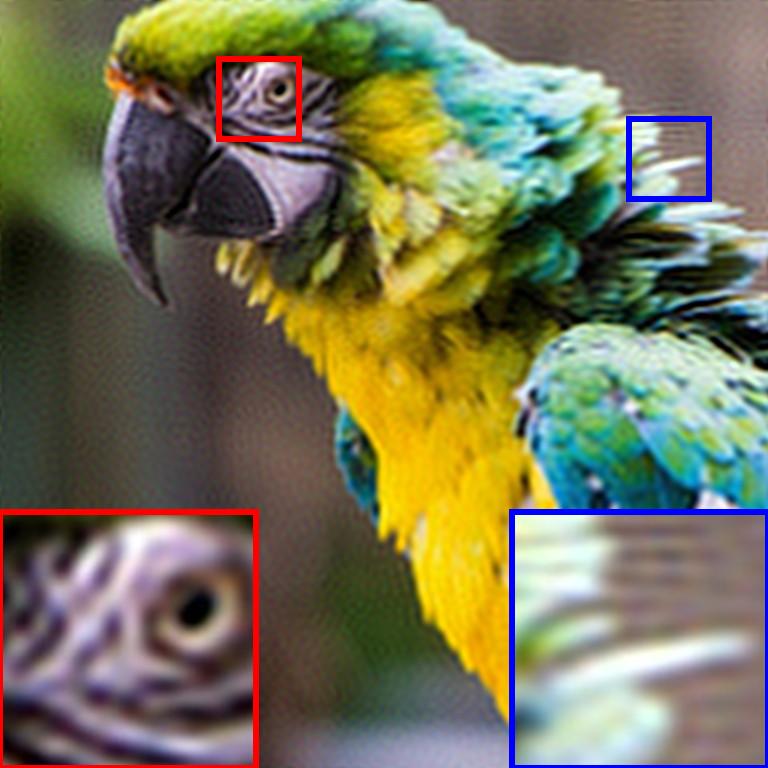} &
					\includegraphics[width=18.5mm]{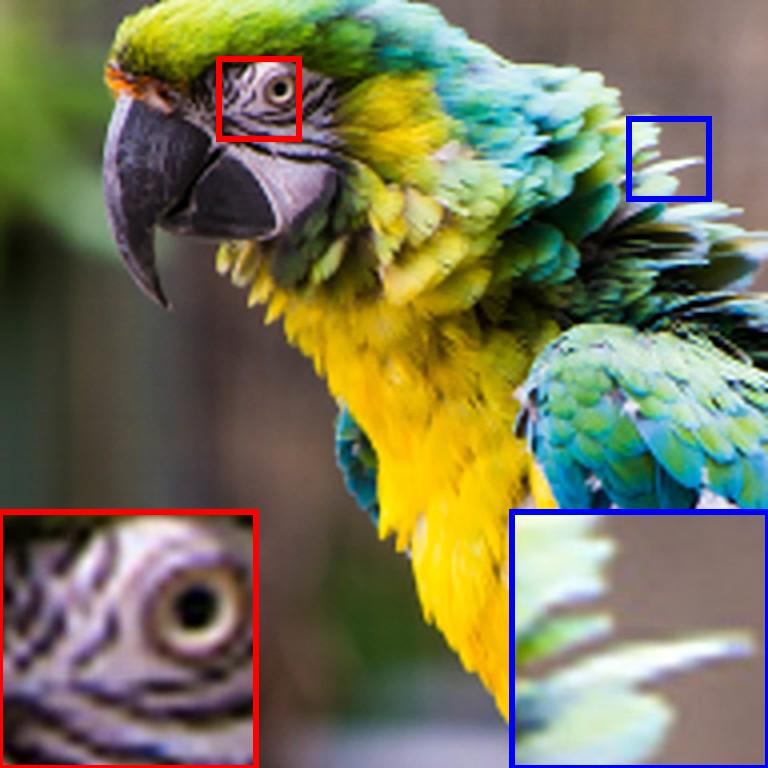} &
					\includegraphics[width=18.5mm]{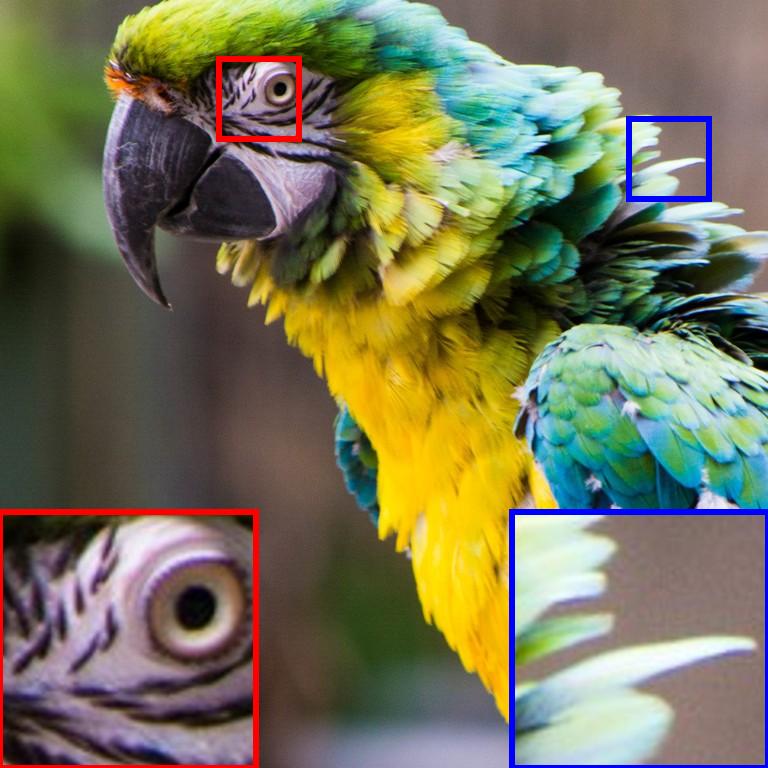} \\
					\scriptsize PEMLP & \scriptsize SIREN & \scriptsize Gauss & \scriptsize WIRE & \scriptsize FINER & \scriptsize LRTFR & \scriptsize Ours & \scriptsize{Ground truth}
			\end{tabular}}
			\vspace{-0.7em}
			\caption{Visual super-resolution results at $\times4$ scaling on the \textit{Lion} and \textit{Parrot} images.}
			\label{fig:super_resolution}
			\vspace{-1em}
		\end{figure*}
		
		\begin{table}[tb]
			\centering
			\caption{Super-resolution results on \textit{Lion} at different scales.}
			\vspace{-1em}
			\small
			\setlength{\tabcolsep}{6pt}
			\resizebox{\linewidth}{!}{
				\begin{tabular}{lccccccc}
					\toprule
					\multirow{2}{*}{Method} & \multicolumn{2}{c}{$\times 3$} & \multicolumn{2}{c}{$\times 4$} & \multicolumn{2}{c}{$\times 6$} & \multirow{2}{*}{Time (s)} \\
					\cmidrule(lr){2-3} \cmidrule(lr){4-5} \cmidrule(lr){6-7}
					& PSNR & SSIM & PSNR & SSIM & PSNR & SSIM &  \\
					\midrule
					PEMLP & 29.26 & 0.740 & 27.84 & 0.676 & 24.28 & 0.439 & 90.15 \\
					SIREN & 31.69 & 0.866 & 29.48 & 0.779 & 27.16 & 0.634 & 125.38 \\
					Gauss & 30.27 & 0.790 & 28.23 & 0.683 & 26.73 & 0.605 & 186.92 \\
					WIRE & \underline{32.43} & \underline{0.869} & \underline{30.21} & \underline{0.788} & \underline{27.88} & \underline{0.683} & 423.27 \\
					FINER & 32.09 & 0.858 & 29.84 & 0.778 & 27.61 & 0.666 & 165.08 \\
					LRTFR & 30.11 & 0.805 & 28.10 & 0.701 & 25.27 & 0.577 & 10.26 \\
					Ours & \textbf{33.41} & \textbf{0.907} & \textbf{31.01} & \textbf{0.836} & \textbf{28.34} & \textbf{0.711} & 12.65 \\
					\bottomrule
				\end{tabular}
			}
			\label{tab:super_resolution}
			\vspace{-1em}
		\end{table}
		
		\noindent
		{\bf Point Cloud Recovery Results.}
		This task aims to learn a continuous mapping for reconstructing complete point clouds from sparsely observed samples, where discrete tensor representations often fail to generalize.
		Following the setup in~\cite{luo2024revisiting,luo2025neurtv}, we employ the SHOT dataset~\cite{salti2014shot} and evaluate performance under different SRs.
		The performance is measured by normalized root mean square error (NRMSE).
		Fig.~\ref{fig:point_cloud} presents visual comparisons on the \textit{Duck} ($\text{SR}=0.15$) and \textit{Mario} ($\text{SR}=0.2$) point clouds, where the proposed method yields the most faithful reconstructions, accurately recovering fine geometric structures and smooth surfaces.
		Quantitative results summarized in Table~\ref{tab:point_cloud} further confirm that our approach consistently achieves the lowest reconstruction error across all datasets under $\text{SR}=0.2$, demonstrating its strong capability in continuous 3D signal recovery.
		
		\begin{figure*}[tb]
			\renewcommand{\arraystretch}{1}
			\setlength\tabcolsep{0.5pt}
			\centering
			\resizebox{\linewidth}{!}{
				\begin{tabular}{ccccccccc}
					\scriptsize{NRMSE} & \scriptsize{NRMSE 0.073} & \scriptsize{NRMSE 0.071} & \scriptsize{NRMSE 0.072} & \scriptsize{NRMSE 0.072} & \scriptsize{NRMSE 0.070} &  \scriptsize{NRMSE 0.071} & \scriptsize{NRMSE \textbf{0.061}} & \scriptsize{NRMSE \textbf{0.000}} \\
					\includegraphics[width=18.5mm]{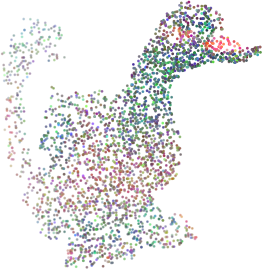} & \includegraphics[width=18.5mm]{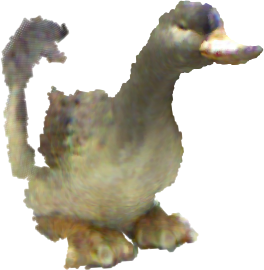} &
					\includegraphics[width=18.5mm]{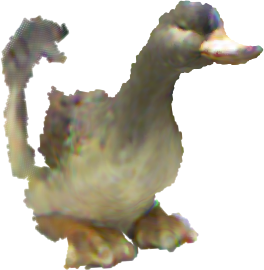} &
					\includegraphics[width=18.5mm]{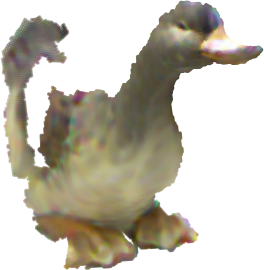} &
					\includegraphics[width=18.5mm]{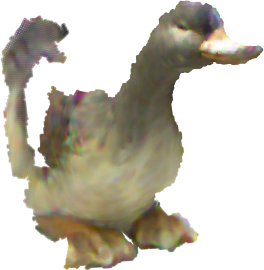} &
					\includegraphics[width=18.5mm]{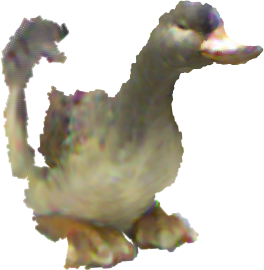} &
					\includegraphics[width=18.5mm]{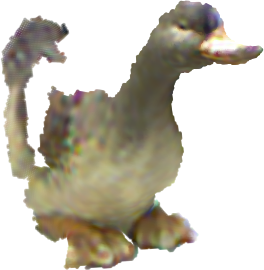} &
					\includegraphics[width=18.5mm]{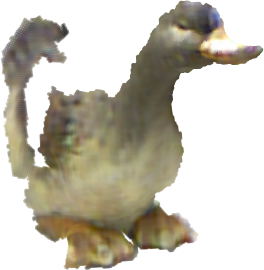} &
					\includegraphics[width=18.5mm]{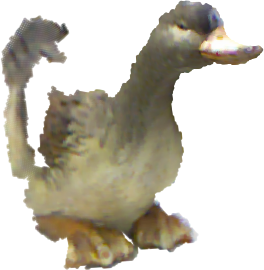} \\
					\scriptsize{NRMSE} & \scriptsize{NRMSE 0.091} & \scriptsize{NRMSE 0.089} & \scriptsize{NRMSE 0.092} & \scriptsize{NRMSE 0.086} & \scriptsize{NRMSE 0.088} &  \scriptsize{NRMSE 0.095} & \scriptsize{NRMSE \textbf{0.080}} & \scriptsize{NRMSE \textbf{0.000}} \\
					\includegraphics[width=18.5mm]{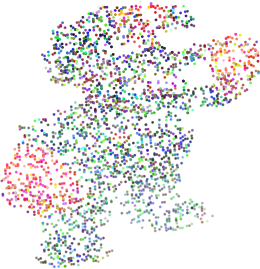} & \includegraphics[width=18.5mm]{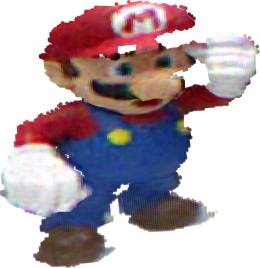} &
					\includegraphics[width=18.5mm]{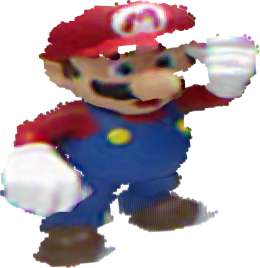} &
					\includegraphics[width=18.5mm]{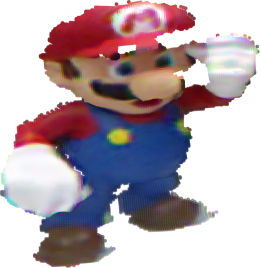} &
					\includegraphics[width=18.5mm]{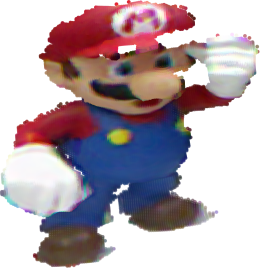} &
					\includegraphics[width=18.5mm]{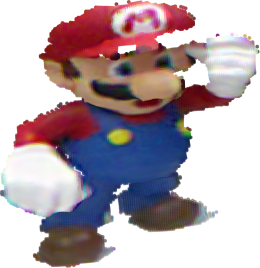} &
					\includegraphics[width=18.5mm]{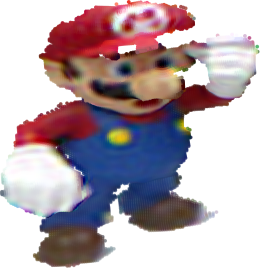} &
					\includegraphics[width=18.5mm]{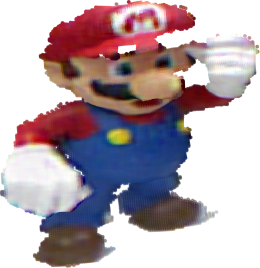} &
					\includegraphics[width=18.5mm]{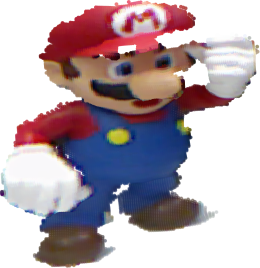} \\
					\scriptsize{Observed} & \scriptsize PEMLP & \scriptsize SIREN & \scriptsize Gauss & \scriptsize WIRE & \scriptsize FINER & \scriptsize LRTFR & \scriptsize Ours & \scriptsize{Ground truth}
			\end{tabular}}
			\vspace{-0.7em}
			\caption{Visual comparisons on point cloud recovery for \textit{Duck} ($\text{SR} = 0.15$) and \textit{Mario} ($\text{SR} = 0.2$).}
			\label{fig:point_cloud}
			\vspace{-0.7em}
		\end{figure*}
		
		\begin{table}[tb]
			\centering
			\caption{NRMSE results for point cloud recovery on the SHOT dataset under $\text{SR} = 0.2$.}
			\vspace{-1em}
			\small
			\setlength{\tabcolsep}{7pt}
			\resizebox{\linewidth}{!}{  
				\begin{tabular}{lcccccc}
					\toprule
					Method & \textit{Doll} & \textit{Duck} & \textit{Frog} & \textit{Mario} & \textit{PeterRabbit} & \textit{Squirrel}  \\
					\midrule
					PEMLP  & 0.112 & 0.064 & 0.058 & 0.091 & 0.081 & 0.084 \\
					SIREN  & 0.111 & 0.062 & 0.057 & 0.089 & 0.074 & 0.085 \\
					Gauss  & 0.123 & 0.064 & 0.060 & 0.092 & 0.078 & 0.085 \\
					WIRE   & \underline{0.106} & 0.060 & \underline{0.053} & \underline{0.086} & \textbf{0.068} & \underline{0.083} \\
					FINER  & 0.110 & \underline{0.059} & 0.054 & 0.088 & \underline{0.073} & \underline{0.083} \\
					LRTFR  & 0.114 & 0.061 & 0.061 & 0.095 & 0.082 & 0.094 \\
					Ours   & \textbf{0.093} & \textbf{0.053} & \textbf{0.050} & \textbf{0.080} & \textbf{0.068} & \textbf{0.073} \\
					\bottomrule
				\end{tabular}
			}
			\label{tab:point_cloud}
			\vspace{-1.5em}
		\end{table}

		\subsection{Discussions}
		
		To further investigate the contribution of each component within our framework, we perform a series of ablation studies under controlled conditions. All experiments are conducted with identical parameter settings and training protocols to ensure a fair comparison.
		
		\noindent
		{\bf Effect of Reparameterization.}
		We evaluate the effect of the proposed reparameterization on inpainting tasks using the color image \textit{Airplane} with $\text{SR}=0.3$ and the MSI \textit{Toy} with $\text{SR}=0.1$. Comparisons are made between models without reparameterization (w/o Rep.) and with reparameterization (w/ Rep.). For the reparameterized models, the latent dimension is set as $R_k = \beta r_k$, where $r_k$ is the original TR rank and $\beta$ takes values 1, 3, 5, and 10. Fig.~\ref{fig:ablation_rep} shows the PSNR curves, indicating that reparameterization enhances reconstruction quality and stabilizes training, with larger $\beta$ leading to faster convergence. Table~\ref{tab:ablation_rep} reports quantitative results across several datasets in terms of PSNR and runtime, confirming that the proposed reparameterization substantially enhances reconstruction accuracy with only a marginal increase in computational cost.
		
		\begin{figure}[tbp]
			\centering
			\resizebox{\linewidth}{!}{
				\begin{tabular}{cc}
					\includegraphics[width=0.5\linewidth]{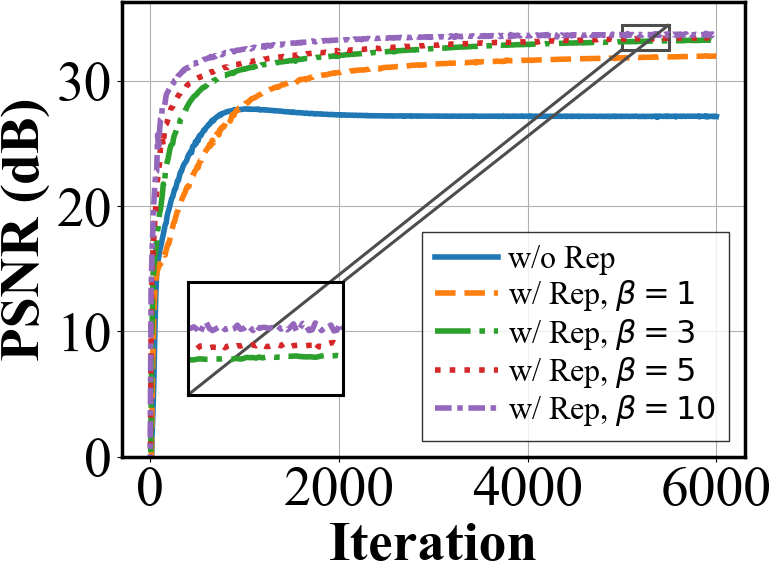} & \includegraphics[width=0.5\linewidth]{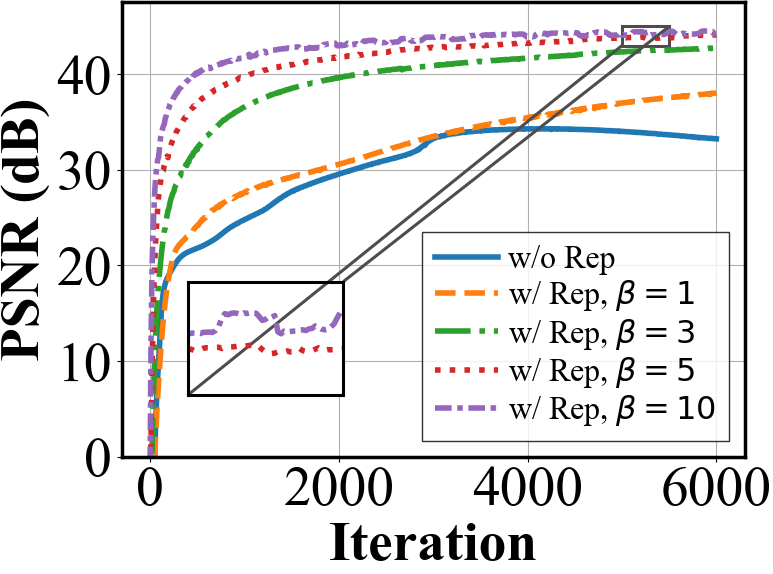} 
			\end{tabular}}
			\vspace{-1em}
			\caption{PSNR curves on the color image \textit{Airplane} ($\text{SR}=0.3$) (left) and MSI \textit{Toy} ($\text{SR}=0.1$) (right), comparing models without reparameterization (w/o Rep.), with reparameterization (w/ Rep.), and different latent sizes $R_k = \beta r_k$ for $\beta = 1, 3, 5, 10$.}
			\label{fig:ablation_rep}
			% \vspace{-0.5em}
		\end{figure}
		
		\begin{table}[tbp]
			\centering
			\caption{Effect of reparameterization on PSNR and runtime for different datasets with $\text{SR}=0.2$.}
			\vspace{-1em}
			\small
			\setlength{\tabcolsep}{3pt}
			\resizebox{\linewidth}{!}{
				\begin{tabular}{lcc|cc|cc|cc}
					\toprule
					\multirow{2}{*}{Metric} & \multicolumn{2}{c}{\textit{Airplane}} & \multicolumn{2}{c}{\textit{Toy}} & \multicolumn{2}{c}{\textit{Botswana}} & \multicolumn{2}{c}{\textit{News}} \\
					\cmidrule(lr){2-3} \cmidrule(lr){4-5} \cmidrule(lr){6-7} \cmidrule(lr){8-9}
					& w/o Rep. & w/ Rep. & w/o Rep. & w/ Rep.& w/o Rep. & w/ Rep. & w/o Rep. & w/ Rep. \\
					\midrule
					PSNR & 27.41 & \textbf{30.45} & 29.41 & \textbf{48.67} & 29.21 & \textbf{45.27} & 26.48 & \textbf{34.90} \\
					Time (s) & 12.10 & 13.11 & 13.23 & 14.49 & 40.81 & 42.26 & 14.15 & 15.59 \\
					\bottomrule
				\end{tabular}
			}
			\label{tab:ablation_rep}
			\vspace{-0.5em}
		\end{table}
		
		\begin{table}[tbp]
			\centering
			\caption{PSNR results for HSI inpainting with different basis initialization scale at $\text{SR}=0.2$.}
			\vspace{-1em}
			% \vspace{-3mm}
			\small
			\setlength{\tabcolsep}{6pt}
			\resizebox{\linewidth}{!}{
				\begin{tabular}{lcccccc}
					\toprule
					\multirow{2}{*}{Data} & \multicolumn{6}{c}{Initialization Scale $a$} \\
					\cmidrule(lr){2-7}
					& 0.01 & 0.05 & 0.165 (ours) & 0.3 & 0.5 & 1 \\
					\midrule
					\textit{Botswana} & 33.86 & 43.74 & \textbf{45.27} & 44.81 & 43.15 & 38.37 \\
					\textit{Washington DC} & 33.14 & 46.08 & \textbf{47.96} & 47.44 & 45.90 & 38.20 \\
					\bottomrule
				\end{tabular}
				% \begin{tabular}{lccccc}
					% \toprule
					% Data & $a=0.01$ & $a=0.05$ & $a=0.165$ (ours) & $a=0.5$ & $a=1$  \\
					% \midrule
					% \textit{Botswana} & 39.62 & 43.53 & \textbf{45.27} & 43.15 & 36.30 \\
					% \textit{Washington DC} & 39.88 & 45.68 & \textbf{47.96} & 45.90 & 40.26 \\
					% \bottomrule
					% \end{tabular}
			}
			\label{tab:ablation_basis}
			\vspace{-1em}
		\end{table}
		
		% \noindent
		% {\bf Effect of Basis Initialization. }To test the impact of the basis initialization range on reconstruction quality, we conduct HSI inpainting experiments on the \textit{Washington DC} and \textit{Botswana} datasets with $\text{SR}=0.2$. According to Theorem~\ref{thm:basis_init}, the entries of the fixed basis $\Bb^{(k)}$ are sampled independently from a uniform distribution $\cU(-a,a)$, where the theoretically derived value of $a$ is $\sqrt{6/(r_{k+1}+R_{k+1})}\approx 0.165$. To evaluate sensitivity, we vary $a\in\{0.01, 0.05,0.3, 0.5,1\}$. As shown in Table~\ref{tab:ablation_basis}, extremely small or large values lead to poor reconstructions, while the proposed scheme achieves the best performance, highlighting the importance of proper basis initialization.
		\noindent
		{\bf Sensitivity of Basis Initialization. }To evaluate the sensitivity of basis initialization, we conduct HSI inpainting experiments on the \textit{Washington DC} and \textit{Botswana} datasets with $\text{SR}=0.2$. According to Theorem~\ref{thm:basis_init}, the entries of the fixed basis $\Bb^{(k)}$ are sampled independently from a uniform distribution $\cU(-a,a)$, where the theoretically derived value of $a$ is $\sqrt{6/(r_{k+1}+R_{k+1})}\approx 0.165$. As shown in Table~\ref{tab:ablation_basis}, extremely small or large values lead to poor reconstructions, while the proposed scheme achieves the best performance, highlighting the importance of proper basis initialization.
		
		\noindent
		{\bf Effect of Shared Frequency Embedding.}
		We evaluate the impact of the shared frequency embedding on denoising tasks using the MSIs \textit{Toy} and \textit{Face} with $\text{SD}=0.2$. In the baseline, each coordinate is processed independently, whereas in our proposed design, all coordinates are passed through a shared frequency embedding network. Fig.~\ref{fig:ablation_share} presents the PSNR curves over iterations, showing that the shared embedding produces more stable training and mitigates overfitting, resulting in more robust reconstructions.
		
		\begin{figure}[tbp]
			\centering
			\resizebox{\linewidth}{!}{
				\begin{tabular}{cc}
					\includegraphics[width=0.5\linewidth]{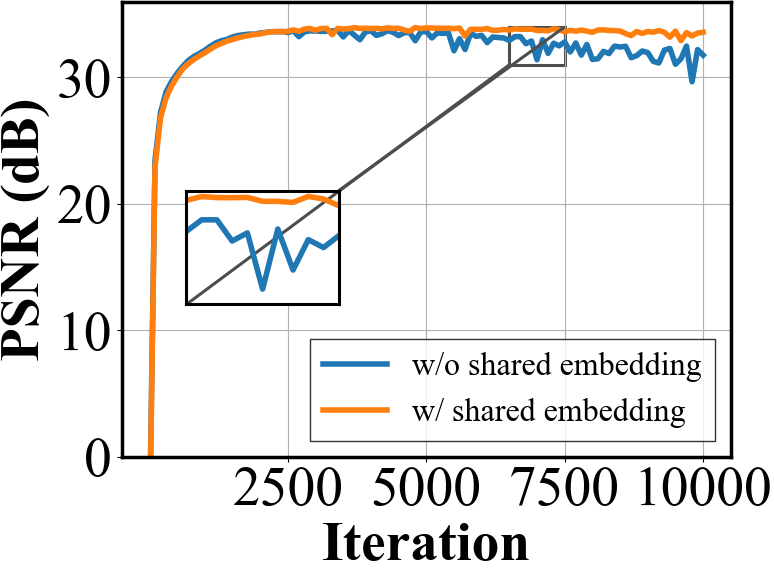} & \includegraphics[width=0.5\linewidth]{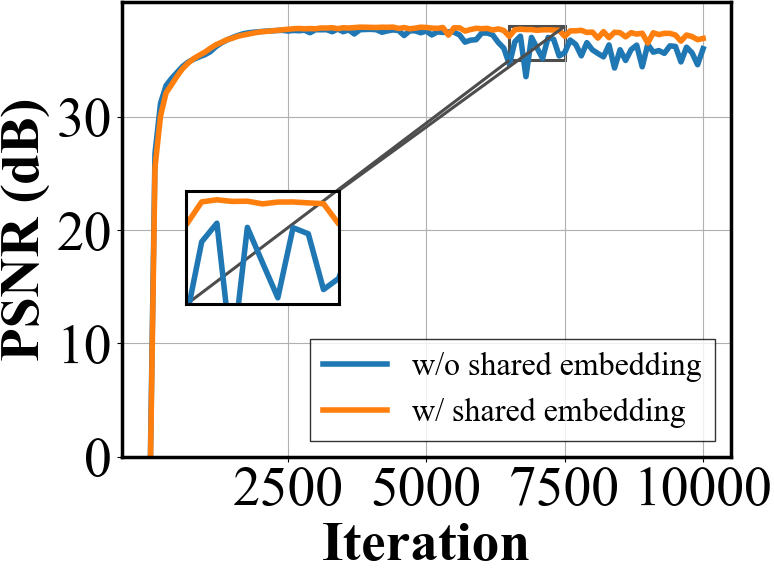} 
			\end{tabular}}
			\vspace{-1em}
			\caption{Effect of shared frequency embedding on denoising performance for the MSIs \textit{Toy} (left) and \textit{Face} (right) with $\text{SD}=0.2$.}
			\label{fig:ablation_share}
			\vspace{-1em}
		\end{figure}
		
		\noindent{\bf Model Complexity. }We vary $\beta_k$ to obtain models with different parameter counts and compare them with LRTFR under matched complexity. 
		For LRTFR, the MLP width is adjusted to ensure comparable parameters and FLOPs. Fig.~\ref{fig:psnr_flops_params} reports PSNR versus model complexity, showing that RepTRFD consistently outperforms LRTFR under similar parameter and FLOP budgets.
		% Fig.~\ref{fig:psnr_flops_params}(b) further shows the training loss curves of TRFD, LRTFR, and RepTRFD under matched parameter settings. 
		% RepTRFD converges faster and reaches a lower loss, demonstrating improved optimization behavior.
		
		\begin{figure}[tbp]
			\centering
			\resizebox{\linewidth}{!}{
				\begin{tabular}{c}
					\includegraphics[width=\linewidth]{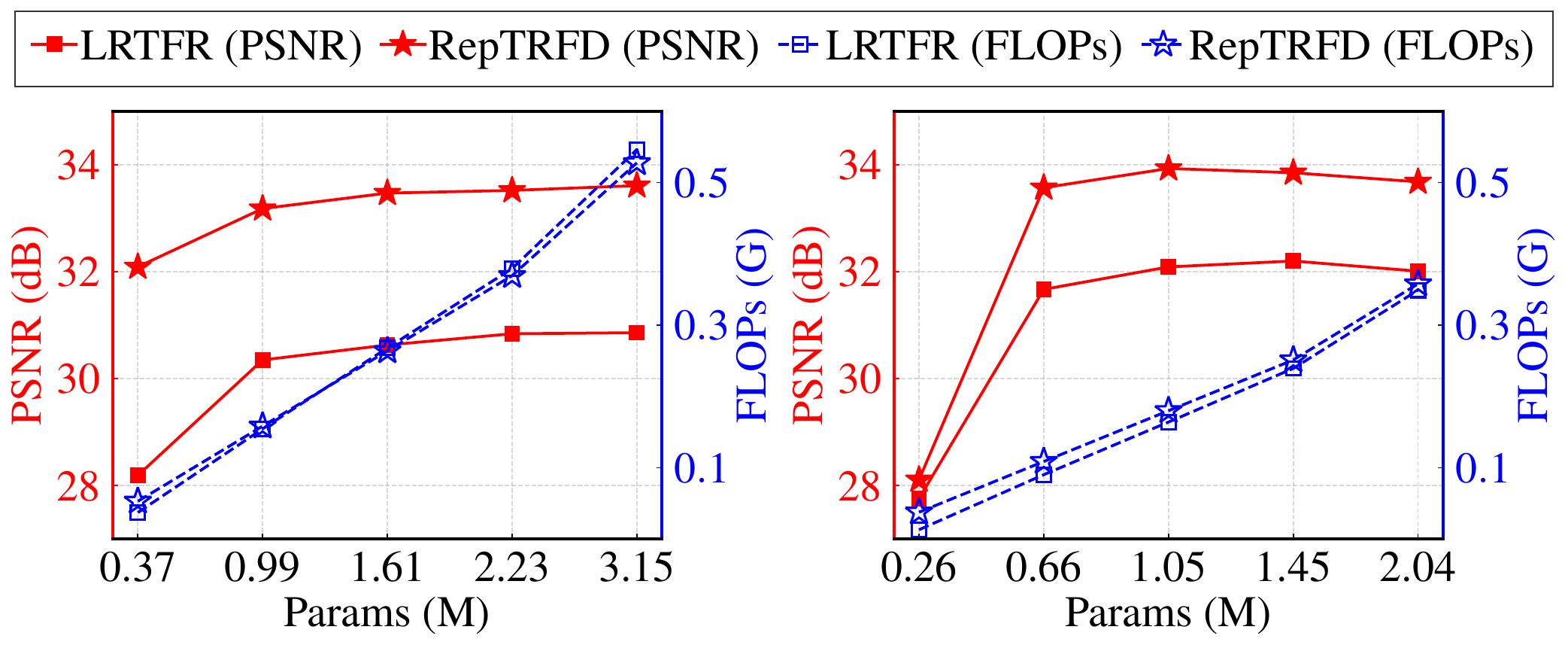} 
			\end{tabular}}
			\vspace{-1em}
			\caption{PSNR and FLOPs versus parameter count on \textit{Airplane} ($\text{SR}=0.3$) (left) and \textit{Toy} ($\text{SD}=0.2$) (right).}
			\label{fig:psnr_flops_params}
			\vspace{-0.5em}
		\end{figure}
		
		\noindent{\bf Scalability to Higher-Order Tensors. }To evaluate the scalability of RepTRFD to higher-order data, 
		we conduct inpainting experiments on color videos. Fig.~\ref{fig:color_video} presents representative results on the \textit{Container} and \textit{Salesman} sequences. 
		RepTRFD achieves consistently higher PSNR and SSIM values while producing visually sharper details. 
		
		\begin{figure}[tbp]
			\centering
			\renewcommand{\arraystretch}{0.5}
			\setlength\tabcolsep{0.5pt}
			\resizebox{\linewidth}{!}{
				\begin{tabular}{ccccc}
					\scriptsize{PSNR/SSIM} & \scriptsize{34.94/0.953} & \scriptsize{34.48/0.950} & \scriptsize{35.55/0.954} & \scriptsize{\textbf{36.39}/\textbf{0.968}}  \\
					\includegraphics[width=18.5mm]{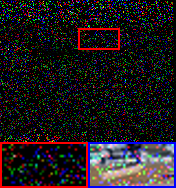} & 
					\includegraphics[width=18.5mm]{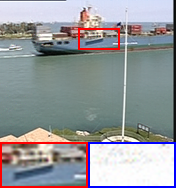}
					& 
					\includegraphics[width=18.5mm]{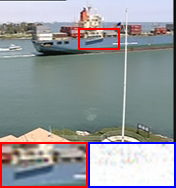} & \includegraphics[width=18.5mm]{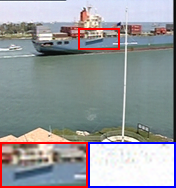} & \includegraphics[width=18.5mm]{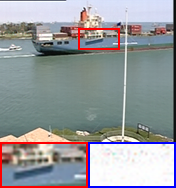} \\
					\scriptsize{PSNR/SSIM} & \scriptsize{34.49/0.973} & \scriptsize{34.09/0.967} & \scriptsize{35.09/0.974} & \scriptsize{\textbf{35.99}/\textbf{0.978}} \\
					\includegraphics[width=18.5mm]{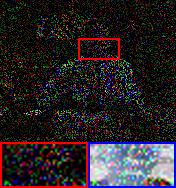} & 
					\includegraphics[width=18.5mm]{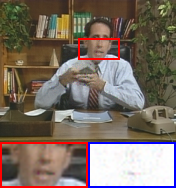}
					& 
					\includegraphics[width=18.5mm]{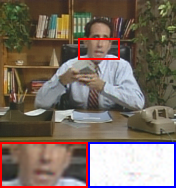} & \includegraphics[width=18.5mm]{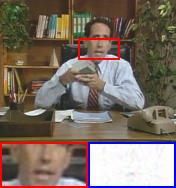}& \includegraphics[width=18.5mm]{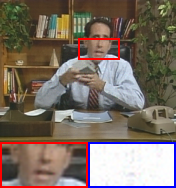} \\
					\scriptsize{Observed} & \scriptsize{HLRTF} & \scriptsize{LRTFR} & \scriptsize{DRO-TFF} & \scriptsize{Ours}
			\end{tabular}}
			\vspace{-0.5em}
			\caption{Visual inpainting results on color videos \textit{Container} ($\text{SR}=0.1$) and \textit{Salesman} ($\text{SR}=0.2$).}
			\label{fig:color_video}
			\vspace{-0.5em}
		\end{figure}

		\section{Conclusion}
		
		In this work, we presented RepTRFD for multi-dimensional data recovery. Specifically, by analyzing the TR's frequency behavior, we identified the critical role of high-frequency components and proposed a structured reparameterization of TR factors that significantly facilitates their learning. We further established Lipschitz continuity and derived a variance-preserving initialization for the basis matrices, ensuring stable and effective training. Extensive experiments in various tasks validated our approach. 
		Future work includes extending the framework for Tucker decomposition and block-term tensor decomposition.
		
		\section*{Acknowledgment}
		This work was partially supported by the National Natural Science Foundation of China under Grants  12571564, 12361089, and the Guangdong Basic and Applied Research Foundation 2024A1515012347. 
		% In this work, we presented RepTRFD for multi-dimensional data recovery, providing both theoretical insights and practical improvements. Specifically, by analyzing the TR's frequency behavior, we identified the critical role of high-frequency components and proposed a structured reparameterization of TR factors that significantly facilitates their learning. We further established Lipschitz continuity and derived a variance-preserving initialization for the basis matrices, ensuring stable and effective training. Extensive experiments on image inpainting, denoising, super-resolution, and point cloud recovery validate the effectiveness of our approach. 
		% Our future work includes exploring reparameterized models for other tensor decompositions, such as Tucker decomposition and block term decomposition.
		
		% We presented a reparameterized TR decomposition for multi-dimensional data recovery. By analyzing the TR's frequency behavior, we identified the importance of high-frequency components and developed a structured reparameterization that facilitated factor learning. Extensive experiments in various tasks validated our approach. Future work includes extending the framework for Tucker and block-term decompositions.
		
		{
			\small
			\bibliographystyle{ieeenat_fullname}
			\bibliography{ref}
		}

		\clearpage
		\setcounter{page}{1}
		\setcounter{theorem}{0}
		\maketitlesupplementary
		\appendix
		
		\renewcommand{\thefigure}{S\arabic{figure}}
		\renewcommand{\thetable}{S\arabic{table}}
		\setcounter{figure}{0}
		\setcounter{table}{0}
		
		This supplementary material provides additional technical details and experimental results.
		In Section~\ref{supp_sec:proof}, we present the complete proofs of all theorems in the main paper.
		Section~\ref{supp_sec:settings} describes the model configurations and parameter settings used for the four tasks.
		Section~\ref{supp_sec:experiments} reports extended experiments, including ablation studies and practical applications.

		\section{Detaied Proofs of Theorems}\label{supp_sec:proof}
		
		\begin{theorem}
			\label{thm:spectral_preservation_supp}
			Let $\cX = \Phi(\cG^{(1)}, \dots, \cG^{(d)})$ be a TR decomposition. 
			Suppose that the mode-2 frequency components of $\cG^{(k)}$ beyond a threshold $\Omega_k$ are negligible for all $k$, i.e.,
			\begin{equation*}
				\|\big(\mathscr{F}_2[\cG^{(k)}]\big)_{:\omega_k:}\|_\infty \le \epsilon, 
				\ \forall |\omega_k| > \Omega_k, k= 1, 2, \dots, d,
			\end{equation*}
			where $\epsilon > 0$ is a small constant. Then the reconstructed tensor $\cX$ also exhibits attenuated high-frequency content along mode $k$:
			\begin{equation*}
				\|\big(\mathscr{F}_k[\cX]\big)_{:\omega_k:}\|_\infty 
				\le c_{k} \, \epsilon, \ \forall |\omega_k| > \Omega_k,  k= 1, 2,\dots, d,
			\end{equation*}
			where $c_{k}$ is a constant depending only on the magnitudes of the remaining cores $\{\cG^{(j)}\}_{j\ne k}$.
		\end{theorem}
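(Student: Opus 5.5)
The proof rests on one structural fact: the TR contraction is linear in each core separately. In particular, it commutes with a mode-$k$ linear transform applied to the $k$-th core along its physical (mode-2) index. Fix a mode $k$. Writing $\cX_{\vb}=\operatorname{trace}(\cG^{(1)}_{:v_1:}\cdots\cG^{(d)}_{:v_d:})$, the index $v_k$ enters only through the slice $\cG^{(k)}_{:v_k:}$. The mode-$k$ DFT of $\cX$ sums over $v_k$ with weights $(\Fb_k)_{\omega_k v_k}$, so by linearity of the trace and of matrix products we can move this sum inside:
\begin{equation*}
\big(\mathscr{F}_k[\cX]\big)_{v_1\cdots\omega_k\cdots v_d}
=\operatorname{trace}\Big(\cG^{(1)}_{:v_1:}\cdots\cG^{(k-1)}_{:v_{k-1}:}\,\big(\mathscr{F}_2[\cG^{(k)}]\big)_{:\omega_k:}\,\cG^{(k+1)}_{:v_{k+1}:}\cdots\cG^{(d)}_{:v_d:}\Big).
\end{equation*}
This is the identity $\mathscr{F}_k[\Phi(\cG^{(1)},\dots,\cG^{(d)})]=\Phi(\cG^{(1)},\dots,\mathscr{F}_2[\cG^{(k)}],\dots,\cG^{(d)})$. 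It requires the mode-$k$ DFT matrix $\Fb_k$ to have size $n_k$, which matches the length of mode 2 of $\cG^{(k)}$. I would verify the identity first, entrywise.

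Next, fix $\omega_k$ with $|\omega_k|>\Omega_k$ and bound each entry of the right-hand side. The middle matrix $\Mb=\big(\mathscr{F}_2[\cG^{(k)}]\big)_{:\omega_k:}\in\mathbb{C}^{r_k\times r_{k+1}}$ has all entries bounded by $\epsilon$ in modulus, by hypothesis. Using cyclicity of the trace, rewrite the entry as $\operatorname{trace}(\Mb\,\Pb)$, where $\Pb=\cG^{(k+1)}_{:v_{k+1}:}\cdots\cG^{(d)}_{:v_d:}\cG^{(1)}_{:v_1:}\cdots\cG^{(k-1)}_{:v_{k-1}:}\in\mathbb{R}^{r_{k+1}\times r_k}$. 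Then
\begin{equation*}
|\operatorname{trace}(\Mb\Pb)|\le\sum_{i,j}|\Mb_{ij}||\Pb_{ji}|\le\epsilon\,\|\Pb\|_{1},
\end{equation*}
where $\|\Pb\|_1$ denotes the entrywise $\ell_1$ norm.

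It remains to bound $\|\Pb\|_1$ uniformly over the free indices $v_j$, $j\neq k$. For this I would use the crude entrywise product bound: a product of matrices with entries bounded by $\|\cG^{(j)}\|_\infty$ has entries bounded by the product of these norms times the intermediate dimensions. Concretely, $|\Pb_{ji}|\le\prod_{j'\neq k}\|\cG^{(j')}\|_\infty\cdot\prod_{m\notin\{k,k+1\}}r_m$. This yields
\begin{equation*}
c_k=\Big(\prod_{m=1}^d r_m\Big)\prod_{j\neq k}\|\cG^{(j)}\|_\infty .
\end{equation*}
Alternatively, one can bound with spectral norms, $|\operatorname{trace}(\Mb\Pb)|\le\|\Mb\|_F\|\Pb\|_F\le\sqrt{r_kr_{k+1}}\,\epsilon\prod_{j\neq k}\max_{v_j}\|\cG^{(j)}_{:v_j:}\|_2\cdot\sqrt{\min(r_k,r_{k+1})}$. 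Either constant depends only on the remaining cores (and the fixed ranks), as the statement requires.

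The bound is uniform in the free indices, so taking the maximum gives $\|(\mathscr{F}_k[\cX])_{:\omega_k:}\|_\infty\le c_k\epsilon$. Since $k$ was arbitrary, this completes the argument.

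The only genuinely delicate step is the commutation identity. It needs bookkeeping that the DFT along mode $k$ of $\cX$ acts exactly as the DFT along mode 2 of $\cG^{(k)}$, with the same frequency indexing convention, so that the ranges of $|\omega_k|$ correspond. The rest is elementary norm bounding. The main choice there is whether to absorb the rank factors into $c_k$, and I would do so explicitly.
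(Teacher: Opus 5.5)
Your proof is correct and follows the same route as the paper. Both isolate the $k$-th core by cyclicity of the trace and pass the mode-$k$ DFT through the contraction onto the mode-2 index of $\cG^{(k)}$. Both then bound the resulting bilinear form entrywise by $\epsilon$ times the entrywise $\ell_1$ norm of $\mathbf{P}$, which is the paper's $\Mb_{(\neg k)}$.

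The only difference is that the paper stops at $c_k := \max_{\{v_j\}_{j\ne k}}\sum_{p,q}|[\Mb_{(\neg k)}]_{q,p}|$. You go further and bound this explicitly by $\big(\prod_{m=1}^d r_m\big)\prod_{j\ne k}\|\cG^{(j)}\|_\infty$, which is a valid, though cruder, closed form.
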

		
		\begin{proof}
			In TR decomposition, each element of $\cX$ is expressed as
			\[
			\cX_{v_1,\dots,v_d} = \operatorname{trace}\!\left(
			\cG^{(1)}_{:v_1:}\cG^{(2)}_{:v_2:}\cdots
			\cG^{(k)}_{:v_k:}\cdots
			\cG^{(d)}_{:v_d:}
			\right).
			\]
			By the cyclic property of the trace, we can isolate the $k$-th factor:
			\[
			\cX_{v_1,\dots,v_d}
			= \operatorname{trace}\!\left(
			\mathbf{M}_{(\neg k)} \cdot \cG^{(k)}_{:v_k:}
			\right),
			\]
			where 
			\[
			\mathbf{M}_{(\neg k)} =
			\cG^{(k+1)}_{:v_{k+1}:}\cdots
			\cG^{(d)}_{:v_d:}
			\cG^{(1)}_{:v_1:}\cdots
			\cG^{(k-1)}_{:v_{k-1}:} \in \mathbb{R}^{r_{k+1}\times r_k}.
			\]
			For fixed indices $\{v_j\}_{j\ne k}$, the matrix $\mathbf{M}_{(\neg k)}$ is constant with respect to $v_k$. Expanding the trace gives
			\begin{align*}
				\cX_{v_1,\dots,v_d}
				&= \sum_{p=1}^{r_k}\sum_{q=1}^{r_{k+1}} [\mathbf{M}_{(\neg k)}]_{q,p}\, \cG^{(k)}_{p,v_k,q}.
			\end{align*}
			Applying the mode-$k$ DFT along the $k$-th dimension, we obtain
			\begin{align*}
				&\big(\mathscr{F}_k[\cX]\big)_{v_1,\dots,\omega_k,\dots,v_d}\\
				&=\sum_{v_k=1}^{n_k} \mathcal{X}_{v_1,\dots,v_k,\dots,v_d} \cdot [\mathbf{F}_k]_{v_k, \omega_k}\\
				&=\sum_{v_k=1}^{n_k} \left( \sum_{p=1}^{r_k} \sum_{q=1}^{r_{k+1}} [\mathbf{M}_{(\neg k)}]_{q,p} \cdot \mathcal{G}^{(k)}_{p,v_k,q} \right) \cdot [\mathbf{F}_k]_{v_k, \omega_k}\\
				&= \sum_{p=1}^{r_k} \sum_{q=1}^{r_{k+1}} [\mathbf{M}_{(\neg k)}]_{q,p} \cdot \left( \sum_{v_k=1}^{n_k} \mathcal{G}^{(k)}_{p,v_k,q} \cdot [\mathbf{F}_k]_{v_k, \omega_k} \right).
			\end{align*}
			Using the definition of mode-2 DFT for the TR factor $\cG^{(k)}$,
			\[
			\big(\mathscr{F}_2[\mathcal{G}^{(k)}]\big)_{p,\omega_k,q} = \sum_{v_k=1}^{n_k} \mathcal{G}^{(k)}_{p,v_k,q} \cdot [\mathbf{F}_k]_{v_k, \omega_k},
			\]
			we can rewrite the expression as
			\begin{align*}
				&\big(\mathscr{F}_k[\cX]\big)_{v_1,\dots,\omega_k,\dots,v_d}\\
				&=
				\sum_{p=1}^{r_k} \sum_{q=1}^{r_{k+1}} [\mathbf{M}_{(\neg k)}]_{q,p} \cdot \big(\mathscr{F}_2[\mathcal{G}^{(k)}]\big)_{p,\omega_k,q}.
			\end{align*}
			We now bound the magnitude of this term. Using the triangle inequality:
			\begin{align*}
				&\left| \big(\mathscr{F}_k[\cX]\big)_{v_1,\dots,\omega_k,\dots,v_d} \right| \\
				&\le \sum_{p=1}^{r_k} \sum_{q=1}^{r_{k+1}} \left| [\mathbf{M}_{(\neg k)}]_{q,p} \right| \cdot \left| \big(\mathscr{F}_2[\mathcal{G}^{(k)}]\big)_{p,\omega_k,q} \right|.
			\end{align*}
			By the assumption on $\cG^{(k)}$, for all $|\omega_k| > \Omega_k$, we have
			\[
			\left| \big(\mathscr{F}_2[\cG^{(k)}]\big)_{p,\omega_k,q} \right| \le \epsilon.
			\]
			Substituting this bound:
			\begin{align*}
				\left| \big(\mathscr{F}_k[\cX]\big)_{v_1,\dots,\omega_k,\dots,v_d} \right| 
				&\le \epsilon \cdot \left( \sum_{p=1}^{r_k} \sum_{q=1}^{r_{k+1}} \left| [\mathbf{M}_{(\neg k)}]_{q,p} \right| \right).
			\end{align*}
			Defining the constant $c_k$ as the maximum possible sum of magnitudes of the product of the remaining cores:
			\[
			c_k := \max_{\{v_j\}_{j\ne k}} \sum_{p=1}^{r_k} \sum_{q=1}^{r_{k+1}} \left| [\mathbf{M}_{(\neg k)}]_{q,p} \right|,
			\]
			we obtain the final bound for the infinity norm over the spatial indices $\{v_j\}_{j\ne k}$:
			\[
			\|\big(\mathscr{F}_k[\cX]\big)_{:\omega_k:}\|_\infty \le c_k \, \epsilon, \quad \forall |\omega_k| > \Omega_k.
			\]
			This completes the proof.
		\end{proof}
		
		% \begin{theorem}
			% \label{thm:highfreq_update}
			% Let $\cX^{t} = \Phi(\cG^{(1)}_t,\dots,\cG^{(d)}_t)$ be the TR reconstruction at iteration $t$, and let $\cG^{(k)}_{t+1} = \cG^{(k)}_t + \Delta_{\cG^{(k)}}$ be a update to the $k$-th factor.  
			% Assume that the update is small in high-frequency components:
			% \[
			% |\hat{\Delta}_{\cG^{(k)}_{p,\omega_k,q}}| \le \delta, \quad |\omega_k| > \Omega_k.
			% \]  
			% Then the high-frequency components of the updated reconstruction satisfy
			% \begin{equation}
				% |\hat{\mathcal{X}}^{t+1}_{i_1,\dots,i_{k-1},\omega_k,i_{k+1},\dots,i_d}| 
				% \le C_{(\neg k)} (\epsilon + \delta), \quad |\omega_k| > \Omega_k,
				% \end{equation}
			% where $\epsilon$ bounds the high-frequency components of $\mathcal{G}^{(k,t)}$ as in Theorem~\ref{thm:spectral_preservation}, and $C_{(\neg k)}$ depends only on the magnitudes of the other cores $\{\mathcal{G}^{(j)}\}_{j\ne k}$.
			% \end{theorem}
		
		% \begin{proof}
			
			% \end{proof}
		
		\begin{theorem}\label{thm:rep_supp}
			Consider a TR factor $\cG$ reparameterized as $\cG = \cC \times_3 \Bb$, where $\cC \in \mathbb{R}^{r_1 \times n \times R_2}$ is a trainable tensor and $\Bb \in \mathbb{R}^{r_2 \times R_2}$ is a fixed
			% for some 
			basis. Let $\mathsf{L}(\omega)$ be the loss associated with frequency $\omega$. For any $\omega_{\rm high} > \omega_{\rm low} > 0$, given any $\epsilon \geq 0$, and fixed indices $p, q$ with $1 \le p \le r_1$ and $1 \le q \le n$, 
			% we have 
			there exists a matrix $\Bb$ such that for all $s=1,\dots,R_2$,
			\begin{equation*}
				\resizebox{\linewidth}{!}{$\displaystyle\left|
					\tfrac{\partial \mathsf{L}(\omega_{\rm high})}{\partial \cC_{p q s}}
					\Big/
					\tfrac{\partial \mathsf{L}(\omega_{\rm low})}{\partial \cC_{p q s}}
					\right|\ge
					\max_{j=1,\dots,r_2}
					\left|
					\tfrac{\partial \mathsf{L}(\omega_{\rm high}))}{\partial \cG_{p q j}}
					\Big/
					\tfrac{\partial \mathsf{L}(\omega_{\rm low})}{\partial \cG_{p q j}}
					\right|
					- \epsilon.$}
			\end{equation*}
		\end{theorem}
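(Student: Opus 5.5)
The plan is to reduce the statement to a chain-rule computation plus an explicit choice of $\Bb$. Since $\cG = \cC\times_3\Bb$, we have entrywise $\cG_{pqj} = \sum_{s=1}^{R_2}\cC_{pqs}\Bb_{js}$. Here the third-mode product contracts the $R_2$ index of $\cC$ with the second index of $\Bb\in\mathbb{R}^{r_2\times R_2}$. Hence for any loss $\mathsf{L}$ depending on $\cC$ only through $\cG$,
\begin{equation*}
\frac{\partial \mathsf{L}}{\partial \cC_{pqs}} = \sum_{j=1}^{r_2} \Bb_{js}\,\frac{\partial \mathsf{L}}{\partial \cG_{pqj}}.
\end{equation*}
Fix $p,q$ and write $a_j = \partial\mathsf{L}(\omega_{\rm high})/\partial\cG_{pqj}$ and $b_j = \partial\mathsf{L}(\omega_{\rm low})/\partial\cG_{pqj}$. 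These are fixed numbers that do not depend on $\Bb$; we treat them as evaluated at the current $\cG$, as the statement implicitly does. The ratio on the left is then $\left|\sum_j \Bb_{js}a_j\right| \big/ \left|\sum_j \Bb_{js}b_j\right|$. The whole problem becomes choosing each column $\Bb_{:s}$ so that this ratio is at least $\max_j |a_j/b_j| - \epsilon$.

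The construction is to concentrate each column on the best index. Let $j^\ast$ attain $\max_j |a_j/b_j|$ over those $j$ with $b_j\neq 0$. Set $\Bb_{j^\ast s} = 1$ for all $s$, and set every other entry to $0$. Then for each $s$ the ratio equals exactly $|a_{j^\ast}/b_{j^\ast}|$, so the inequality holds even with $\epsilon=0$. This explains why the statement allows any $\epsilon\ge 0$.

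The theorem only asks for existence, so this degenerate rank-one basis suffices. If a nondegenerate $\Bb$ is desired, for instance one of full row rank so that $\cG$ can still represent arbitrary factors, we perturb: take $\Bb = \Bb_0 + t\mathbf{E}$ with $\Bb_0$ as above and $\mathbf{E}$ generic. The map $t\mapsto \left|\sum_j \Bb_{js}a_j\right| / \left|\sum_j \Bb_{js}b_j\right|$ is continuous at $t=0$ because the denominator $b_{j^\ast}\neq 0$. Choosing $t$ small enough makes the loss at most $\epsilon$ for all finitely many $s$ simultaneously. This is where $\epsilon>0$ is genuinely used.

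The main obstacle is the handling of degenerate cases and the interpretation of the quotients. If some $b_j=0$ with $a_j\neq 0$, the right-hand maximum is $+\infty$. In that case we can match it by choosing $\Bb_{:s}$ supported on that $j$, which makes the left denominator zero as well; under the same convention $c/0=\infty$ for $c\neq0$, the left side is also $+\infty$. If all $b_j=0$, both sides are undefined or infinite and we adopt that same convention. I would state these conventions explicitly at the start, then present the chain-rule identity, the selector construction, and the optional continuity perturbation, in that order.
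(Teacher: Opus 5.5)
Your proposal is correct and follows essentially the same route as the paper: both use the chain rule $\partial\mathsf{L}/\partial\cC_{pqs}=\sum_t \Bb_{ts}\,\partial\mathsf{L}/\partial\cG_{pqt}$ and concentrate every column of $\Bb$ on the maximizing index $\tau$ by setting $\Bb_{\tau s}=1$. The paper instead allows off-target entries with $|\Bb_{ts}|\le\alpha$ and picks $\alpha$ explicitly through a triangle-inequality bound; your exact selector is the $\alpha=0$ case (the only admissible one when $\epsilon=0$), and your continuity perturbation is a non-quantitative version of the paper's explicit bound on $\alpha$.
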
 
		
		\begin{proof}
			First, from the reparameterization $\cG = \cC \times_3 \Bb$, we have
			\begin{equation*}
				\cG_{p q j} = \sum_{s=1}^{R_2} \cC_{p q s} \Bb_{j s},
			\end{equation*}
			Viewing $\cG_{p q 1}, \ldots, \cG_{p q r_2}$ as latent variables dependent on $\cC_{p q s}$, by the chain rule we obtain
			\begin{equation*}
				\frac{\partial \mathsf{L}(\omega)}{\partial \cC_{p q s}} = \sum_{t=1}^{r_2} \frac{\partial \mathsf{L}(\omega)}{\partial \cG_{p q t}}\Bb_{t s}.
			\end{equation*}
			
			Next, given two frequencies $\omega_{\rm high} > \omega_{\rm low} > 0$ and fixed fiber indices $p, q$, define
			\begin{equation*}
				\tau = \argmax_{j}
				\left|
				\frac{\partial \mathsf{L}(\omega_{\rm high}))}{\partial \cG_{p q j}}
				\Big/
				\frac{\partial \mathsf{L}(\omega_{\rm low})}{\partial \cG_{p q j}}
				\right|.
			\end{equation*}
			By the triangle inequality, we have
			\begin{align*}
				&\left|\frac{\partial \mathsf{L}(\omega_{\rm high})}{\partial \cC_{p q s}}\Big/\frac{\partial \mathsf{L}(\omega_{\rm low})}{\partial \cC_{p q s}}\right| \\
				&= \left|\frac{\frac{\partial \mathsf{L}(\omega_{\rm high})}{\partial \cG_{p q \tau}}\Bb_{\tau s} + \sum_{t\neq \tau} \frac{\partial \mathsf{L}(\omega_{\rm high})}{\partial \cG_{p q t}}\Bb_{t s}}{\frac{\partial \mathsf{L}(\omega_{\rm low})}{\partial \cG_{p q \tau}}\Bb_{\tau s} + \sum_{t\neq \tau} \frac{\partial \mathsf{L}(\omega_{\rm low})}{\partial \cG_{p q t}}\Bb_{t s}}\right| \\
				&\ge \frac{\left|\frac{\partial \mathsf{L}(\omega_{\rm high})}{\partial \cG_{p q \tau}}\Bb_{\tau s}\right| - \left|\sum_{t\neq \tau} \frac{\partial \mathsf{L}(\omega_{\rm high})}{\partial \cG_{p q t}}\Bb_{t s}\right|}{\left|\frac{\partial \mathsf{L}(\omega_{\rm low})}{\partial \cG_{p q \tau}}\Bb_{\tau s}\right| + \left|\sum_{t\neq \tau} \frac{\partial \mathsf{L}(\omega_{\rm low})}{\partial \cG_{p q t}}\Bb_{t s}\right|}.
			\end{align*}
			
			For the elements of $\Bb$, construct $\Bb_{t s}$ such that for $s=1, \ldots, R_2$, $|\Bb_{t s}| \le \alpha$ when $t \neq \tau$ and $\Bb_{\tau s} = 1$, where $\alpha > 0$ is a positive upper bound. Define
			\begin{equation*}
				M_1 = \sum_{t \neq \tau} \left| \frac{\partial\mathsf{L}(\omega_{\rm high})}{\partial\cG_{p q t}}\right|, \quad M_2 = \sum_{t \neq \tau} \left| \frac{\partial\mathsf{L}(\omega_{\rm low})}{\partial\cG_{p q t}}\right|.
			\end{equation*}
			Without loss of generality, for any $\epsilon\ge 0$,
			% \begin{equation*}
				%     0 \leq \epsilon \leq \frac{\left| \frac{\partial \mathsf{L}(\omega_{\rm high})}{\partial \cG_{p q \tau}} \right|}{\left| \frac{\partial \mathsf{L}(\omega_{\rm low})}{\partial \cG_{p q \tau}} \right|},
				% \end{equation*}
			choose
			\begin{equation*}
				\resizebox{\linewidth}{!}{$
					\alpha \le \min \left\{ \displaystyle\frac{ \left| \frac{\partial \mathsf{L}(\omega_{\rm low})}{\partial \cG_{p q \tau}} \right| \epsilon }{ M_1 + M_2 \left| \frac{\partial \mathsf{L}(\omega_{\rm high})}{\partial \cG_{p q \tau}}/ \frac{\partial \mathsf{L}(\omega_{\rm low})}{\partial \cG_{p q \tau}} \right| - M_2 \epsilon }, \frac{1}{M_1}\left| \frac{\partial \mathsf{L}(\omega_{\rm high})}{\partial \cG_{p q \tau}} \right| \right\}.$}
			\end{equation*}
			From the upper bound on $\alpha$, it holds that
			\begin{equation*}
				\left|\sum_{t\neq \tau} \frac{\partial \mathsf{L}(\omega_{\rm high})}{\partial \cG_{p q t}}\Bb_{t s}\right| \leq \alpha M_1 \leq \left|\frac{\partial \mathsf{L}(\omega_{\rm high})}{\partial \cG_{p q \tau}}\right|,
			\end{equation*}
			which implies
			\begin{equation*}
				\left|\frac{\partial \mathsf{L}(\omega_{\rm high})}{\partial \cG_{p q \tau}}\right| - \left|\sum_{t\neq \tau} \frac{\partial \mathsf{L}(\omega_{\rm high})}{\partial \cG_{p q t}}\Bb_{t s}\right| \geq 0.
			\end{equation*}
			Thus, for fixed $p, q$ and for $s=1, \ldots, R_2$, we get
			\begin{align*}
				&\left|\frac{\partial \mathsf{L}(\omega_{\rm high})}{\partial \cC_{p q s}}\Big/\frac{\partial \mathsf{L}(\omega_{\rm low})}{\partial \cC_{p q s}}\right| \\
				&\ge \frac{ \left|\frac{\partial \mathsf{L}(\omega_{\rm high})}{\partial \cG_{p q \tau}}\right| - \alpha M_1 }{ \left|\frac{\partial \mathsf{L}(\omega_{\rm low})}{\partial \cG_{p q \tau}}\right| + \alpha M_2 } \\
				% &\ge \frac{\left(\left|\frac{\partial \mathsf{L}(\omega_{\rm high})}{\partial \cG_{p q \tau}}\right|-\epsilon \left|\frac{\partial \mathsf{L}(\omega_{\rm low})}{\partial \cG_{p q \tau}}\right|\right)\left(M_1\left|\frac{\partial \mathsf{L}(\omega_{\rm low})}{\partial \cG_{p q \tau}}\right|+ M_2\left|\frac{\partial \mathsf{L}(\omega_{\rm high})}{\partial \cG_{p q \tau}}\right|\right)}{\left|\frac{\partial \mathsf{L}(\omega_{\rm low})}{\partial \cG_{p q \tau}}\right|\left(M_1 \left|\frac{\partial \mathsf{L}(\omega_{\rm low})}{\partial \cG_{p q \tau}}\right|+M_2 \left|\frac{\partial \mathsf{L}(\omega_{\rm high})}{\partial \cG_{p q \tau}}\right|\right)} \\
				&\ge \frac{\left|\frac{\partial \mathsf{L}(\omega_{\rm high})}{\partial \cG_{p q \tau}}\right|-\epsilon \left|\frac{\partial \mathsf{L}(\omega_{\rm low})}{\partial \cG_{p q \tau}}\right|}{\left|\frac{\partial \mathsf{L}(\omega_{\rm low})}{\partial \cG_{p q \tau}}\right|} \\
				&= \max_{j=1,\dots,r_2}\left|\frac{\partial \mathsf{L}(\omega_{\rm high}))}{\partial \cG_{p q j}}\Big/\frac{\partial \mathsf{L}(\omega_{\rm low})}{\partial \cG_{p q j}}\right|- \epsilon.
			\end{align*}
			The second inequality is obtained by applying the first upper bound of $\alpha$. This completes the proof.
		\end{proof}
		
		\begin{remark}
			We note that the proof of Theorem~\ref{thm:rep_supp} provides an existence guarantee, demonstrating that a suitable choice of $\Bb$ can in principle amplify high-frequency gradients. However, the specific construction in the proof is not employed in practice, as the optimal index $\tau$ is dynamic and depends on the data, making it infeasible to pre-specify. Instead, we adopt the principled Xavier-style initialization described in Theorem~\ref{thm:basis_init_supp}, which preserves variance in both forward and backward passes. This provides a stable starting point for optimization while allowing the model to learn from a fully expressive random state.
		\end{remark}
		
		\begin{theorem}\label{thm:basis_init_supp}
			% Let $\Bb^{(k)} \in \mathbb{R}^{r_{k+1} \times R_{k+1}}$ be the fixed basis used in the reparameterized TR factor. 
			Suppose the entries of $\Bb^{(k)}$ in the reparameterized TR factor  are sampled independently from a uniform distribution as 
			% , the entries should be drawn from
			\begin{equation*}
				\Bb^{(k)}_{ij} \sim \mathcal{U}\left(-\sqrt{\tfrac{6}{r_{k+1}+R_{k+1}}}, \ \sqrt{\tfrac{6}{r_{k+1}+R_{k+1}}}\right), \forall i,j,k, 
			\end{equation*}
			then the variances in both the forward and backward passes are preserved.  
		\end{theorem}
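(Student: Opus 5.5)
We follow the standard Glorot--Bengio variance argument, applied to the linear map $\cG^{(k)} = \cC^{(k)} \times_3 \Bb^{(k)}$. Entrywise this map is
\begin{equation*}
\cG^{(k)}_{p v j} = \sum_{s=1}^{R_{k+1}} \cC^{(k)}_{p v s}\, \Bb^{(k)}_{j s}.
\end{equation*}
For fixed $(p,v)$ it is therefore a fully connected layer with fan-in $R_{k+1}$ (latent dimension) and fan-out $r_{k+1}$ (TR rank). We make the usual initialization-time assumptions. The entries $\cC^{(k)}_{pvs}$ are i.i.d. with zero mean and common variance $\mathrm{Var}(\cC)$, and they are independent of $\Bb^{(k)}$. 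The entries of $\Bb^{(k)}$ are i.i.d. with zero mean and variance $\mathrm{Var}(\Bb)$.

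\textbf{Forward pass.} Each summand $\cC_{pvs}\Bb_{js}$ has zero mean, and the summands are uncorrelated across $s$ because of independence and zero means. For a product of independent zero-mean variables we have $\mathrm{Var}(XY) = \mathrm{Var}(X)\mathrm{Var}(Y)$. Hence
\begin{equation*}
\mathrm{Var}(\cG_{pvj}) = R_{k+1}\,\mathrm{Var}(\Bb)\,\mathrm{Var}(\cC),
\end{equation*}
and forward variance is preserved exactly when $R_{k+1}\mathrm{Var}(\Bb) = 1$.

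\textbf{Backward pass.} By the chain rule already used in the proof of Theorem~\ref{thm:rep_supp},
\begin{equation*}
\frac{\partial \mathsf{L}}{\partial \cC_{pvs}} = \sum_{t=1}^{r_{k+1}} \frac{\partial \mathsf{L}}{\partial \cG_{pvt}}\,\Bb_{ts}.
\end{equation*}
We treat the upstream gradients as zero-mean and independent of $\Bb$, which is the standard Glorot approximation at initialization. The same computation then gives
\begin{equation*}
\mathrm{Var}\!\left(\frac{\partial \mathsf{L}}{\partial \cC}\right) = r_{k+1}\,\mathrm{Var}(\Bb)\,\mathrm{Var}\!\left(\frac{\partial \mathsf{L}}{\partial \cG}\right),
\end{equation*}
so backward preservation requires $r_{k+1}\mathrm{Var}(\Bb) = 1$.

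\textbf{Choosing the variance.} The two conditions cannot both hold unless $\beta = 1$. Following Glorot, we take the harmonic compromise
\begin{equation*}
\mathrm{Var}(\Bb) = \frac{2}{r_{k+1} + R_{k+1}}.
\end{equation*}
This balances the two requirements, and reduces to both exactly when $r_{k+1} = R_{k+1}$. Finally, $\mathcal{U}(-a,a)$ has variance $a^2/3$. Setting $a^2/3 = 2/(r_{k+1}+R_{k+1})$ gives $a = \sqrt{6/(r_{k+1}+R_{k+1})}$, which matches the statement.

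\textbf{Main obstacle.} The difficulty is interpretive rather than technical. "Preserved in both passes" can only hold in the compromise sense when $\beta > 1$, so the proof must state explicitly that preservation is achieved on average between the forward and backward passes. It must also justify the independence and zero-mean assumptions on $\cC$ and on the backpropagated gradients $\partial\mathsf{L}/\partial\cG$ at initialization. For $\cC$, these follow from symmetric initialization of the sinusoidal MLP. For the gradients, we simply adopt the Glorot hypothesis.
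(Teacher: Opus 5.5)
Your proposal is correct and follows the paper's proof essentially step for step. Both arguments use the forward variance $R_{k+1}\,\mathrm{Var}(\Bb)\,\mathrm{Var}(\cC)$, the chain-rule backward variance, the Glorot harmonic-mean compromise $\mathrm{Var}(\Bb)=2/(r_{k+1}+R_{k+1})$, and the uniform variance $a^2/3$.

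Your backward fan-out $r_{k+1}$ is the right one. The paper writes $R_{i+1}$ there, which is a typo, since otherwise the harmonic mean would not give $2/(r_{k+1}+R_{k+1})$. Your caveat that ``preservation'' holds only in the compromise sense when $\beta>1$ is accurate and is left implicit in the paper.
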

		
		\begin{proof}
			We derive the variance of the initialization parameter $\sigma_{\Bb}^2$ by enforcing variance preservation in both the forward and backward passes.
			
			\noindent\textbf{Forward pass.} Each element of $\cG^{(k)}$ is computed as
			\begin{equation*}
				\cG_{pqj}^{(k)}=\sum_{s=1}^{R_{k+1}}\cC^{(k)}_{pqs} \Bb_{js}^{(k)}.
			\end{equation*}
			Since both $\cC^{(k)}$ and $\Bb^{(k)}$ are zero-mean and independent, the variance of $\cG^{(k)}_{pqj}$ is
			\begin{equation*}
				\operatorname{Var}(\cG_{pgj}^{(k)})=R_{k+1} \sigma_\cC^2 \sigma_\Bb^2.
			\end{equation*}
			To maintain signal variance, we require that
			\begin{equation*}
				\mathrm{Var}(\cG_{pgj}^{(k)}) = \mathrm{Var}(\cC_{pqj}^{(k)})=\sigma_{\cC}^2,
			\end{equation*}
			which gives
			\begin{equation*}
				\sigma_\Bb^2 = \frac{1}{\hat{R}_{i+1}}.
			\end{equation*}
			
			\noindent\textbf{Backward pass.} During backpropagation, the gradient of the loss function with respect to $\cC^{(k)}_{pqj}$ is
			\begin{equation*}
				\frac{\partial \mathsf{L}}{\partial \cC_{pqj}^{(k)}} = \sum_{t=1}^{r_{k+1}} \frac{\partial \mathsf{L}}{\partial \cG_{pqt}^{(k)}} \frac{\partial \cG_{pqt}^{(k)}}{\partial \cC_{pqj}^{(k)}} = \sum_{t=1}^{r_{k+1}} \frac{\partial \mathsf{L}}{\partial \cG_{pqt}^{(k)}} \Bb_{ts}^{(k)}.
			\end{equation*}
			Assuming i.i.d. gradients with $\operatorname{Var}\big(\frac{\partial \mathsf{L}}{\partial \cG_{pqt}^{(k)}}\big)=\sigma_{\nabla \cG}^2$, we obtain
			\begin{equation*}
				\operatorname{Var}\left(\frac{\partial \mathsf{L}}{\partial \cC_{pqj}^{(k)}}\right)=R_{i+1} \sigma_{\nabla \cG}^2 \sigma_\Bb^2.
			\end{equation*}
			To preserve the variance of the backpropagated gradients, we require
			\begin{equation*}
				\sigma_\Bb^2=\frac{1}{R_{i+1}}.
			\end{equation*}
			Following the principle of Xavier Glorot initialization~\cite{glorot2010understanding}, which balances the forward and backward variance constraints, we take the harmonic mean of the two bounds, yielding
			\begin{equation*}
				\sigma_{\Bb}^2=\frac{2}{r_{k+1}+R_{k+1}}.
			\end{equation*}
			Since a zero-mean uniform distribution $\cU(-a,a)$ has variance $\operatorname{Var} = a^2/3$, setting $a = \sqrt{6/(r_{k+1}+R_{k+1})}$ makes its variance exactly equal to $\sigma_{\Bb}^2$, yielding the final form
			\begin{equation*}
				\Bb^{(k)}_{ij} \sim \mathcal{U}\left(-\sqrt{\tfrac{6}{r_{k+1}+R_{k+1}}},\ \sqrt{\tfrac{6}{r_{k+1}+R_{k+1}}}\right), \quad \forall i,j,k,
			\end{equation*}
			which completes the proof.
		\end{proof}

		\begin{remark}
			The analysis in Theorem~\ref{thm:basis_init_supp} is not restricted to Xavier initialization and naturally extends to other variance-preserving schemes. In particular, enforcing forward-pass variance preservation under Kaiming initialization yields $\mathbf{B}^{(k)}_{ij} \sim \mathcal{U}\left(-\sqrt{\tfrac{6}{R_{k+1}}},\ \sqrt{\tfrac{6}{R_{k+1}}}\right)$. Numerical results are provided in Section~\ref{supp_sec:experiments}.
		\end{remark}

		\begin{theorem}\label{thm:lipschitz_supp}
			Let $g_{\phi}(\vb): \mathbb{R}^d \to \mathbb{R}$ be defined as 
			$$g_{\phi}(\vb) = \Phi(g_{\phi_1}(v_1)\times_3 \Bb^{(1)}, \dots, g_{\phi_d}(v_d)\times_3 \Bb^{(d)}),$$
			where $\vb = [v_1, \dots, v_d]^{\mathrm{T}} \in \mathbb{R}^d$ denotes the coordinate vector, and each $\Bb^{(k)} \in \mathbb{R}^{r_{k+1} \times R_{k+1}}$ is a fixed basis.
			Assume for each mode $k = 1, \dots, d$, the following conditions hold:
			\begin{itemize}
				\item $g_{\phi_k}(\cdot)$ is an $L_k$-layer MLP with activation function $\sigma(\cdot)$ that is $\kappa$-Lipschitz continuous;
				\item the spectral norm of each weight matrix in $g_{\phi_k}(\cdot)$ is bounded by $\eta$;
				\item the output is bounded: $\sup_{v_k} \|g_{\phi_k}(v_k)\|_F \le C_k < \infty$.
			\end{itemize}
			Then, $g_{\phi}(\vb)$ is globally Lipschitz continuous, i.e.,
			\begin{equation*}
				|g_{\phi}(\vb) - g_{\phi}(\vb')| \le \delta \|\vb - \vb'\|_2,
			\end{equation*}
			where the global Lipschitz constant $\delta = \sqrt{\sum_{k=1}^d \delta_k^2}$, and $\delta_k = (\kappa \eta)^{L_k} \cdot \|\mathbf{B}_k\|_2 \cdot \left( \prod_{j \ne k} C_j \right)$.
		\end{theorem}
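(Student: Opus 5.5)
The plan is a telescoping argument over the cores, combined with a per-mode Lipschitz bound for each branch. Write $\Gb_k(v_k) := g_{\phi_k}(v_k)\times_3 \Bb^{(k)}$, regarded as the $r_k\times r_{k+1}$ slice $g_{\phi_k}(v_k)\,\Bb^{(k)\mathrm{T}}$. Then
\[
g_\phi(\vb)=\operatorname{trace}\bigl(\Gb_1(v_1)\cdots\Gb_d(v_d)\bigr).
\]
To compare $\vb$ and $\vb'$, change one coordinate at a time. Set $\vb^{(0)}=\vb'$ and let $\vb^{(k)}$ agree with $\vb$ in its first $k$ entries and with $\vb'$ in the rest. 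Then
\[
g_\phi(\vb)-g_\phi(\vb')=\sum_{k=1}^d \bigl(g_\phi(\vb^{(k)})-g_\phi(\vb^{(k-1)})\bigr).
\]
Only the $k$-th core differs inside the $k$-th summand. By the cyclic property of the trace, as in the proof of Theorem~\ref{thm:spectral_preservation_supp}, that summand equals
\[
\operatorname{trace}\bigl(\Mb_{(\neg k)}\,[\Gb_k(v_k)-\Gb_k(v_k')]\bigr).
\]

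Each summand is bounded in three steps.
\begin{itemize}
\item Apply $|\operatorname{trace}(\mathbf{A}\mathbf{D})|\le\|\mathbf{A}\|_F\|\mathbf{D}\|_F$ and submultiplicativity of the Frobenius norm, which give $\|\Mb_{(\neg k)}\|_F\le\prod_{j\ne k}\|\Gb_j\|_F$.
\item For the difference, $\|\Gb_k(v_k)-\Gb_k(v_k')\|_F\le\|\Bb^{(k)}\|_2\,\|g_{\phi_k}(v_k)-g_{\phi_k}(v_k')\|_F$.
\item The standard layerwise composition bound for an $L_k$-layer MLP with $\kappa$-Lipschitz activations and weight spectral norms at most $\eta$ gives $\|g_{\phi_k}(v_k)-g_{\phi_k}(v_k')\|_F\le(\kappa\eta)^{L_k}|v_k-v_k'|$. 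If a shared sinusoidal embedding is present, it is treated as one of the layers.
\end{itemize}
Putting these together yields $|\text{summand}_k|\le\delta_k|v_k-v_k'|$. Summing over $k$ and applying Cauchy--Schwarz then gives $\sum_k\delta_k|v_k-v_k'|\le\sqrt{\sum_k\delta_k^2}\,\|\vb-\vb'\|_2$.

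The main obstacle is the middle factor $\prod_{j\ne k}\|\Gb_j\|_F$. The hypothesis bounds $\|g_{\phi_j}\|_F\le C_j$, not $\|\Gb_j\|_F$. Strictly, one has $\|\Gb_j\|_F\le\|\Bb^{(j)}\|_2 C_j$, which would add extra factors $\|\Bb^{(j)}\|_2$ to the stated $\delta_k$. I would first try to reconcile this with the stated constant by one of two routes. The first is to read $C_j$ as a bound on the reparameterized output $\sup_{v_j}\|g_{\phi_j}(v_j)\times_3\Bb^{(j)}\|_F$. The second is to absorb the $\|\Bb^{(j)}\|_2$ factors into the constants, since the bases are fixed. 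If neither reading is intended, I would note the corrected constant $\delta_k=(\kappa\eta)^{L_k}\|\Bb^{(k)}\|_2\prod_{j\ne k}\|\Bb^{(j)}\|_2C_j$.

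A second subtlety is also bookkeeping. The intermediate points $\vb^{(k)}$ mix coordinates of $\vb$ and $\vb'$, so the bound on the other cores must hold uniformly, which the supremum in the hypothesis provides.
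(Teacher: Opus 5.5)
Your proposal is correct and follows the paper's proof step for step: isolating the $k$-th core via the cyclic trace, $|\operatorname{trace}(\mathbf{A}\mathbf{D})|\le\|\mathbf{A}\|_F\|\mathbf{D}\|_F$ plus submultiplicativity, the $\|\Bb^{(k)}\|_2$ and $(\kappa\eta)^{L_k}$ bounds, then Cauchy--Schwarz. The obstacle you flag is real, since the paper silently takes your first reading and writes $\|\Gb^{(j)}_{v_j}\|_F\le C_j$ for the reparameterized slice, whereas the hypothesis only bounds $g_{\phi_j}$; under the literal hypothesis only your corrected constant is justified. Indeed the stated constant fails for $d\ge 2$: scaling every $\Bb^{(k)}$ by $\lambda$ leaves all hypotheses unchanged and multiplies $g_\phi$ by $\lambda^d$, but multiplies the stated $\delta$ only by $\lambda$. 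Your hybrid-point telescoping is also more careful than the paper's aggregation step $|g_\phi(\vb)-g_\phi(\vb')|\le\sum_k|g_\phi(\vb)-g_\phi(\vb'_k)|$ with each $\vb'_k$ differing from $\vb$ in one coordinate only, which is not valid as written (try $v_1v_2$ between $(0,0)$ and $(1,1)$) and is repaired by your telescoping combined with the uniform supremum bounds.
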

		
		\begin{proof}
			For the $i$-th layer of the $k$-th MLP, defined as 
			\begin{equation*}
				g_i^{(k)}(\mathbf{x}) = \sigma(\mathbf{W}_i \mathbf{x} + \mathbf{b}_i),
			\end{equation*}
			the Lipschitz constant is bounded by the product of the activation’s constant and the affine transformation’s constant:
			$$L(g_i^{(k)}) \le L(\sigma) \cdot L(\mathbf{W}_i \mathbf{x} + \mathbf{b}_i) \le \kappa \cdot \|\mathbf{W}_i\|_2 \le \kappa \eta.$$
			Hence, for an $L_k$-layer MLP $g_{\phi_k}$, the overall Lipschitz constant satisfies
			$$L_g^{(k)} \le \prod_{i=1}^{L_k} L(g_i^{(k)}) \le (\kappa \eta)^{L_k}.$$
			We then have 
			\begin{equation*}
				\begin{aligned}
					&\norm{\cG_{:v_k:}^{(k)}-\cG_{:v'_k:}^{(k)}}_F\\
					&=\|g_{\phi_k}(v_k)\times_3 \Bb^{(k)} - g_{\phi_k}(v'_k)\times_3 \Bb^{(k)})\|_F\\
					&= \|(g_{\phi_k}(v_k) - g_{\phi_k}(v'_k)) \times_3 \mathbf{B}_k\|_F \\
					&\le \|g_{\phi_k}(v_k) - g_{\phi_k}(v'_k)\|_F \cdot \|\mathbf{B}_k\|_2 \\
					&\le (\kappa \eta)^{L_k} \|\mathbf{B}_k\|_2 \cdot |v_k-v'_k|.
				\end{aligned}
			\end{equation*}
			Let $\Gb^{(k)}_{v_k}=\cG^{(k)}_{:v_k:}\in\mathbb{R}^{r_k\times r_{k+1}}$, $\Mb_{(\neg k)}=\Gb^{(k+1)}_{v_{k+1}}\cdots\Gb^{(d)}_{v_d}\Gb^{(1)}_{v_1}\cdots\Gb^{(k-1)}_{v_{k-1}}\in \mathbb{R}^{r_{k+1}\times r_k}$, and denote by $\vb_k' = (v_1, \dots, v_{k-1}, v_k', v_{k+1}, \dots, v_d)$ the coordinate vector differing from $\vb$ only at the $k$-th entry. Then,
			\begin{equation*}
				\begin{aligned}
					&|g_{\phi}(\vb) - g_{\phi}(\vb'_k)|\\
					&= \left|\operatorname{trace}\left( \mathbf{M}_{(\neg k)} \cdot \Gb^{(k)}_{v_k} \right)- \operatorname{trace}\left( \mathbf{M}_{(\neg k)} \cdot \Gb^{(k)}_{v'_k}\right)\right|\\
					&\le \|\Mb\|_F \cdot \norm{\Gb^{(k)}_{v_k}-\Gb^{(k)}_{v'_k}}_F \\
					&\le \prod_{j\ne k}\norm{\Gb_{v_j}^{(j)}}_F\cdot \norm{\cG_{:v_k:}^{(k)}-\cG_{:v'_k:}^{(k)}}_F \\
					&\le (\kappa \eta)^{L_k}\cdot\|\mathbf{B}_k\|_2\cdot \prod_{j\ne k}C_j \cdot |v_k-v'_k|.
				\end{aligned}
			\end{equation*}
			Aggregating across all dimensions yields
			\begin{equation*}
				\begin{aligned}
					&|g_{\phi}(\vb) - g_{\phi}(\vb')| \\
					&\le \sum_{k=1}^d |g_{\phi}(\vb) - g_{\phi}(\vb'_k)|\\
					&\le \sum_{k=1}^d (\kappa \eta)^{L_k}\cdot\|\mathbf{B}_k\|_2\cdot \prod_{j\ne k}C_j\cdot|v_k - v'_k|.
				\end{aligned}
			\end{equation*}
			Finally, applying the Cauchy–Schwarz inequality gives
			\begin{equation*}
				\begin{aligned}
					&|g_{\phi}(\vb) - g_{\phi}(\vb')| \\
					&\le \sqrt{\sum_{k=1}^d\left((\kappa \eta)^{L_k} \cdot \|\mathbf{B}_k\|_2 \cdot \prod_{j \ne k} C_j\right)^2} \cdot \norm{\vb-\vb'}_2,
				\end{aligned}
			\end{equation*}
			which completes the proof.
		\end{proof}
		
		\section{Task-Specific Losses and Settings}\label{supp_sec:settings}
		
		In this section, we provide the loss formulations and hyperparameter settings for the four tasks studied in our experiments. 
		Recall that
		\begin{equation*}
			g_{\phi}(\{\vb_k\}_{k=1}^d) = \Phi(g_{\phi_1}(\vb_1)\times_3 \Bb^{(1)}, \dots, g_{\phi_d}(\vb_d)\times_3 \Bb^{(d)})
		\end{equation*}
		denotes the RepTRFD reconstruction with learnable parameters $\phi$ and fixed bases $\{\Bb^{(k)}\}_{k=1}^d$. The main hyperparameters in our model include the TR rank $[r_1, \dots, r_d]$, the latent tensor expansion factor $\beta$, the frequency parameter $\omega_0$ for the sinusoidal mapping, and the number of layers in each branch network. Each task is formulated as a single optimization problem minimizing a combination of a data fidelity term and an optional regularization term.
		
		\noindent{\bf Image and Video Inpainting, }which aims to reconstruct missing regions from partially observed visual data:
		\begin{equation*}
			\begin{aligned}
				\min_{\phi} \| \mathscr{P}_{\Omega}(g_{\phi}(\{\vb_k\}_{k=1}^d)-\cO)\|_F^2 + \gamma_1 \|g_{\phi}(\{\vb_k\}_{k=1}^d)\|_{\rm TV} \\
				+ \gamma_2 \|g_{\phi}(\{\vb_k\}_{k=1}^d)\|_{\rm SSTV},
			\end{aligned}
		\end{equation*}
		where $\cO$ is the observed incomplete data, and $\mathscr{P}_{\Omega}(\cdot)$ denotes projection onto observed entries. The total variation (TV) is defined along spatial dimensions as $\|\cX\|_{\rm TV} = \|\nabla_{v_1} \cX\|_1 + \|\nabla_{v_2} \cX\|_1$, and the spatial-spectral TV (SSTV) involves the spectral dimension, defined as $\|\cX\|_{\rm SSTV} = \|\nabla_{v_1} \nabla_{v_3} \cX\|_1 + \|\nabla_{v_2} \nabla_{v_3} \cX\|_1$. For all modes $k$, $r_k=20$ and $\beta=10$. Frequency parameter $\omega_0$ is set to 90 for color images and videos, and 120 for MSIs and HSIs. Branch layers $[L_1,L_2,L_3]$ are set to $[1,1,2]$ for color images and videos, and $[1,1,1]$ for MSIs and HSIs. Regularization parameters are set as follows: $\gamma_1=\gamma_2=5\times 10^{-5}$ for color images; $\gamma_1=5\times 10^{-5}$ and $\gamma_2=5\times 10^{-4}$ for MSIs and videos; $\gamma_1=5\times 10^{-6}$ and $\gamma_2=5\times 10^{-5}$ for HSIs.
		
		\noindent{\bf Image Denoising, }which aims to recover clean images from noisy observations:
		\begin{equation*}
			\begin{aligned}
				\min_{\phi} \| g_{\phi}(\{\vb_k\}_{k=1}^d)-\cO\|_F^2 + \gamma_1 \|g_{\phi}(\{\vb_k\}_{k=1}^d)\|_{\rm TV} \\
				+ \gamma_2 \|g_{\phi}(\{\vb_k\}_{k=1}^d)\|_{\rm SSTV},
			\end{aligned}
		\end{equation*}
		where $\cO$ is the observed noisy data. For all modes $k$, $r_k=16$ and $\beta=5$. Frequency parameter $\omega_0$ is set to 120, and branch layers $[L_1,L_2,L_3]$ are $[1,1,2]$. Regularization weights are set as $\gamma_1=\gamma_2 = 10^{-4}$ for MSIs and $\gamma_1=\gamma_2 =10^{-5}$ for HSIs.

		\noindent{\bf Image Super-Resolution, }which seeks to reconstruct high-resolution images from their low-resolution counterparts:
		\begin{equation*}
			\min_{\phi} \| \operatorname{Down}(g_{\phi}(\{\vb_k\}_{k=1}^d)) - \cO\|_F^2 + \gamma_1 \|g_{\phi}(\{\vb_k\}_{k=1}^d)\|_{\rm TV},
		\end{equation*}
		where $\operatorname{Down}(\cdot)$ is the downsampling operator and $\cO$ is the low-resolution observation. For all modes $k$, $r_k=20$ and $\beta=10$. Frequency parameter $\omega_0$ is set to 90, branch layers $[L_1,L_2,L_3]$ are $[1,1,2]$, and $\gamma_1=5\times 10^{-5}$.
		
		\noindent{\bf Point Cloud Recovery, }which aims to infer continuous point-wise attributes from partially observed point cloud data:
		\begin{equation*}
			\min_{\phi} \sum_{k=1}^{N} (g_{\phi}(\vb_k)-\Ob_{k,5})^2,
		\end{equation*}
		where the observed data $\Ob \in \mathbb{R}^{N \times 5}$ contains $N$ points, with each row encoding spatial coordinates $(x, y, z)$, color information $c$, and a scalar value $s$.  
		Here, $\vb_k = (x_k, y_k, z_k, c_k)$ is the input coordinate-color vector of the $k$-th point, and $\Ob_{k,5}$ is the corresponding observed scalar value. For all modes $k$, $r_k=20$ and $\beta=3$. Frequency parameter $\omega_0$ is set to 240, and all branches use a single layer.

		\section{Supplementary Experimental Results}\label{supp_sec:experiments}
		
		\subsection{Additional Ablation Studies}
		
		\noindent{\bf Effectiveness of the Fixed Basis Strategy. }To validate the necessity of a fixed basis, we compare our proposed strategy with a variant where the basis matrices are treated as learnable parameters. 
		As illustrated in Fig.~\ref{fig:basis_comparison}, although both methods exhibit similar convergence speeds in the early stage, they diverge significantly in later iterations. 
		The learnable basis model reaches a lower peak PSNR and subsequently suffers from performance degradation. 
		This phenomenon suggests that allowing basis matrices to vary introduces excessive degrees of freedom, causing the model to overfit high-frequency noise or drift away from the variance-preserving initialization. 
		In contrast, our fixed basis strategy acts as a structural regularizer, maintaining improved stability and consistently higher reconstruction quality throughout the training process.
		
		\begin{figure}[htbp]
			\centering
			\resizebox{\linewidth}{!}{
				\begin{tabular}{cc}
					\includegraphics[width=0.5\linewidth]{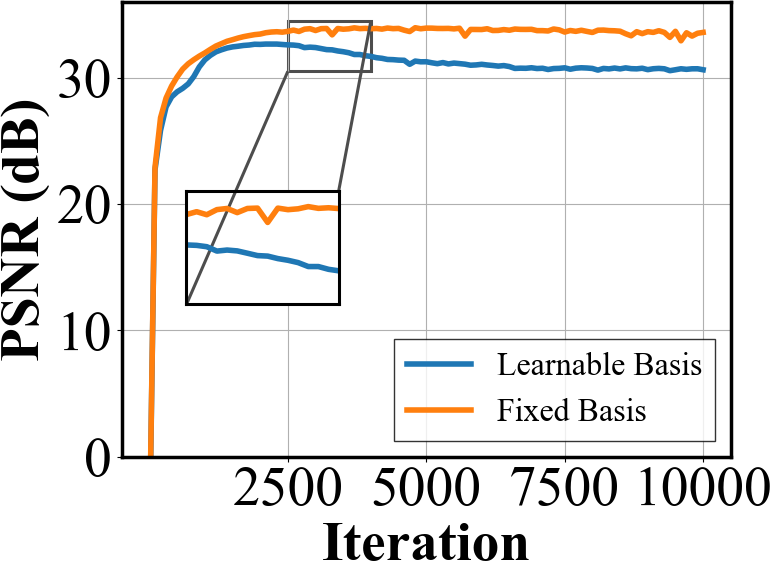} & \includegraphics[width=0.5\linewidth]{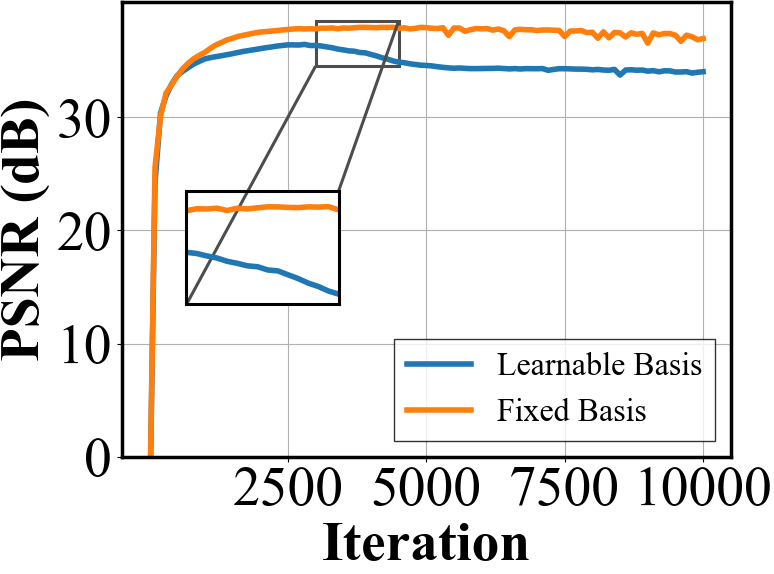} 
			\end{tabular}}
			% \vspace{-4mm}
			\caption{Impact of the fixed basis strategy on denoising performance for the MSIs \textit{Toy} (left) and \textit{Face} (right) with $\text{SD}=0.2$.}
			\label{fig:basis_comparison}
			\vspace{-1em}
		\end{figure}
		
		\noindent{\bf Hyperparameter Sensitivity Analysis. }We investigate the sensitivity of two key hyperparameters in our framework, the TR rank and the frequency scaling parameter $\omega_0$. Experiments are conducted on the MSI datasets \textit{Toy} and \textit{Face} for denoising with $\text{SD} = 0.2$. Fig.~\ref{fig:hyperparameter_sensitivity} presents the variation of PSNR with different values of these hyperparameters. The results indicate that our method is relatively robust to the choice of both the TR rank and $\omega_0$, maintaining high reconstruction quality across a wide range of settings. In addition, we examine the interaction between the regularization parameter $\beta$ and the frequency scaling parameter $\omega_0$. 
		As shown in Fig.~\ref{fig:sensitivity_beta_omega}, RepTRFD maintains stable performance over a broad range of parameter combinations.
		
		\begin{figure}[htbp]
			\centering
			\resizebox{\linewidth}{!}{
				\begin{tabular}{cc}
					% TR rank
					\begin{tikzpicture}
						\begin{axis}[
							width=0.48\textwidth,
							height=6cm,
							xlabel={TR rank},
							ylabel={PSNR},
							xlabel near ticks, 
							ylabel near ticks, 
							label style={font=\LARGE}, 
							ticklabel style={font=\Large},
							grid=both,
							ymin=25, ymax=40,
							xtick={4,8,12,16,20,24,28},
							ytick={26,28,...,40},
							legend style={at={(0.95,0.05)}, anchor=south east, font=\Large}
							]
							\addplot[color=blue, mark=* , thick] coordinates {
								(4,27.51)(8,32.13)(12,33.23)(16,33.93)(20,33.81)(24,33.40)(28,33.11)
							};
							\addlegendentry{Toy}
							\addplot[color=red, mark=square*, thick] coordinates {
								(4,32.81)(8,35.81)(12,37.42)(16,37.89)(20,37.58)(24,37.19)(28,36.82)
							};
							\addlegendentry{Face}
						\end{axis}
					\end{tikzpicture} &
					% omega_0
					\begin{tikzpicture}
						\begin{axis}[
							width=0.48\textwidth,
							height=6cm,
							xlabel={$\omega_0$},
							ylabel={PSNR},
							xlabel near ticks, 
							ylabel near ticks, 
							label style={font=\LARGE}, 
							ticklabel style={font=\Large},
							grid=both,
							ymin=25, ymax=40,
							xtick={30,60,90,120,150,180,210},
							ytick={26,28,...,40},
							legend style={at={(0.95,0.05)}, anchor=south east, font=\Large}
							]
							\addplot[color=blue, mark=* , thick] coordinates {
								(30,29.47)(60,32.03)(90,33.31)(120,33.93)(150,33.82)(180,34.01)(210,33.93)
							};
							\addlegendentry{Toy}
							\addplot[color=red, mark=square*, thick] coordinates {
								(30,34.17)(60,36.72)(90,37.15)(120,37.89)(150,37.86)(180,37.82)(210,37.83)
							};
							\addlegendentry{Face}
						\end{axis}
					\end{tikzpicture} \\
			\end{tabular}}
			\caption{PSNR variation with TR rank and frequency scaling $\omega_0$ for MSI denoising on \textit{Toy} (left) and \textit{Face} (right) datasets with $\text{SD}=0.2$.}
			\label{fig:hyperparameter_sensitivity}
			\vspace{-1em}
		\end{figure}
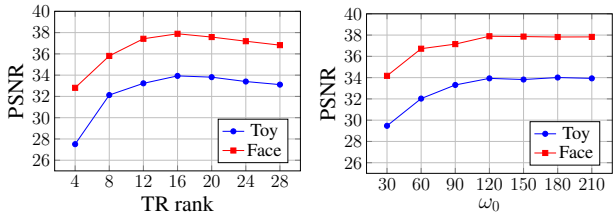

		\begin{figure}[htbp]
			\centering
			\setlength{\tabcolsep}{0pt}
			\resizebox{\linewidth}{!}{
				\begin{tabular}{cc}
					\begin{tikzpicture}
						\begin{axis}[
							width=0.75\linewidth,
							height=0.75\linewidth,
							scale only axis,
							xlabel={$\beta$},
							ylabel={$\omega_0$},
							zlabel={PSNR},
							zlabel style={yshift=-3pt, xshift=-8pt,font=\Large},
							label style={font=\Large},
							ticklabel style={font=\large},
							grid=major,
							view={-35}{45},
							colormap/viridis,
							colorbar,
							colorbar style={
								width=2mm,
								height=0.4\linewidth,
								yticklabel style={font=\large},
								at={(1.1,0.5)}, anchor=west,
							},
							xtick={0,1,2,3,4,5,6},
							xticklabels={1,3,5,7,10,15,20},
							ytick={0,1,2,3,4,5,6},
							yticklabels={30,60,90,120,150,180,210},
							zmin=28,
							zmax=34,
							ztick={28,30,32,34},
							mesh/rows=7,
							mesh/cols=7,
							xmin=-0.3, xmax=6.3,
							ymin=-0.3, ymax=6.3,
							]
							\addplot3[
							surf,
							shader=faceted,
							faceted color=black
							] coordinates {
								% Row 1: beta_k = 1
								(0,0,30.42)(0,1,32.07)(0,2,32.09)(0,3,31.38)(0,4,30.51)(0,5,29.16)(0,6,28.21)
								% Row 2: beta_k = 3
								(1,0,31.54)(1,1,32.94)(1,2,33.18)(1,3,33.10)(1,4,32.50)(1,5,31.68)(1,6,31.06)
								% Row 3: beta_k = 5
								(2,0,31.61)(2,1,33.03)(2,2,33.47)(2,3,33.48)(2,4,32.69)(2,5,31.74)(2,6,31.39)
								% Row 4: beta_k = 7
								(3,0,31.70)(3,1,33.03)(3,2,33.52)(3,3,33.64)(3,4,32.81)(3,5,31.83)(3,6,31.49)
								% Row 5: beta_k = 10
								(4,0,31.72)(4,1,32.98)(4,2,33.61)(4,3,33.22)(4,4,32.86)(4,5,31.89)(4,6,31.52)
								% Row 5: beta_k = 15
								(5,0,31.55)(5,1,33.14)(5,2,33.60)(5,3,33.20)(5,4,32.83)(5,5,32.06)(5,6,31.57)
								% Row 5: beta_k = 20
								(6,0,31.35)(6,1,32.87)(6,2,33.17)(6,3,33.10)(6,4,32.59)(6,5,32.09)(6,6,31.58)
							};
						\end{axis}
					\end{tikzpicture}
					& \begin{tikzpicture}
						\begin{axis}[
							width=0.75\linewidth,
							height=0.75\linewidth,
							scale only axis,
							xlabel={$\beta$},
							ylabel={$\omega_0$},
							zlabel={PSNR},
							zlabel style={yshift=-3pt, xshift=-8pt,font=\Large},
							label style={font=\Large},
							ticklabel style={font=\large},
							grid=major,
							view={-35}{45},
							colormap/viridis,
							colorbar,
							colorbar style={
								width=2mm,
								height=0.4\linewidth,
								yticklabel style={font=\large},
								at={(1.1,0.5)}, anchor=west,
							},
							xtick={0,1,2,3,4,5,6},
							xticklabels={1,3,5,7,10,15,20},
							ytick={0,1,2,3,4,5,6},
							yticklabels={30,60,90,120,150,180,210},
							zmin=28,
							zmax=34,
							ztick={28,30,32,34},
							mesh/rows=7,
							mesh/cols=7,
							xmin=-0.3, xmax=6.3,
							ymin=-0.3, ymax=6.3,
							]
							\addplot3[
							surf,
							shader=faceted,
							faceted color=black
							] coordinates {
								% Row 1: beta_k = 1
								(0,0,30.06)(0,1,31.31)(0,2,31.47)(0,3,31.61)(0,4,31.53)(0,5,31.05)(0,6,30.09)
								% Row 2: beta_k = 3
								(1,0,30.64)(1,1,32.82)(1,2,33.25)(1,3,33.67)(1,4,33.58)(1,5,33.56)(1,6,33.18)
								% Row 3: beta_k = 5
								(2,0,30.94)(2,1,33.03)(2,2,33.64)(2,3,33.93)(2,4,33.90)(2,5,33.90)(2,6,33.57)
								% Row 4: beta_k = 7
								(3,0,30.70)(3,1,32.88)(3,2,33.57)(3,3,33.80)(3,4,33.78)(3,5,33.70)(3,6,33.50)
								% Row 5: beta_k = 10
								(4,0,30.24)(4,1,32.99)(4,2,33.63)(4,3,33.80)(4,4,33.97)(4,5,33.87)(4,6,33.91)
								% Row 5: beta_k = 15
								(5,0,29.54)(5,1,32.88)(5,2,33.43)(5,3,33.79)(5,4,33.90)(5,5,33.84)(5,6,33.83)
								% Row 5: beta_k = 20
								(6,0,28.25)(6,1,32.29)(6,2,33.10)(6,3,33.65)(6,4,33.74)(6,5,33.75)(6,6,33.47)
							};
						\end{axis}
					\end{tikzpicture}
			\end{tabular}}
			\caption{PSNR sensitivity to hyperparameters $\beta$ and $\omega_0$ for inpainting on \textit{Airplane} ($\text{SR}=0.3$) (left) and denoising on \textit{Toy} ($\text{SD}=0.2$) (right).}
			\label{fig:sensitivity_beta_omega}
			\vspace{-0.5em}
		\end{figure}
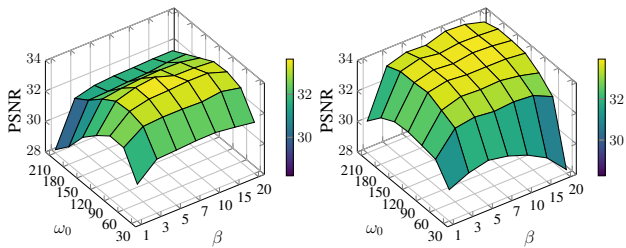
		
		\noindent{\bf Impact of Explicit Regularization. }To disentangle the effect of our reparameterized architecture from that of explicit smoothness priors, we further evaluate a variant of RepTRFD trained solely with a data fidelity loss, i.e., without incorporating TV or SSTV regularization. We conduct experiments on both inpainting and super-resolution tasks, contrasting performance against the representative functional tensor baseline, LRTFR~\cite{luo2023low}. As reported in Tables~\ref{tab:ablation_reg_inpainting} and \ref{tab:ablation_reg_sr}, even without explicit regularization, our method yields significant PSNR gains over LRTFR across all datasets. This demonstrates the inherent advantage of the proposed reparameterization scheme in capturing latent structures. When regularization is incorporated, performance is further boosted, confirming its complementary role in enhancing spatial smoothness and mitigating artifacts. These results collectively verify that the primary performance improvement stems from the RepTRFD architecture itself, while explicit regularization serves as an effective refinement mechanism.
		
		\begin{table}[htbp]
			\centering
			\caption{PSNR comparison on image and video inpainting at $\text{SR}=0.2$ under different regularization settings.}
			\small
			\setlength{\tabcolsep}{4pt}
			\resizebox{\linewidth}{!}{
				\begin{tabular}{lccc}
					\toprule
					Data & LRTFR (Baseline) & RepTRFD (w/o TV) & RepTRFD (w/ TV) \\
					\midrule
					\textit{Airplane} & 27.21 & 29.37 & \textbf{30.45} \\
					\textit{Flowers} & 44.27 & 48.53 & \textbf{50.13} \\
					\textit{Botswana} & 41.86 & 44.94 & \textbf{45.27} \\
					\textit{Carphone} & 30.18 & 31.86 & \textbf{32.58} \\
					\bottomrule
				\end{tabular}
			}
			\label{tab:ablation_reg_inpainting}
			\vspace{-1em}
		\end{table}
		
		\begin{table}[htbp]
			\centering
			\caption{PSNR comparison on $\times 4$ image super-resolution under different regularization settings.}
			\small
			\setlength{\tabcolsep}{6pt}
			\resizebox{\linewidth}{!}{
				\begin{tabular}{lccc}
					\toprule
					Data & LRTFR (Baseline) & RepTRFD (w/o TV) & RepTRFD (w/ TV) \\
					\midrule
					\textit{Lion} & 28.10 & 30.37 & \textbf{31.01} \\
					\textit{Parrot} & 27.81 & 29.79 & \textbf{30.39} \\
					\bottomrule
				\end{tabular}
			}
			\label{tab:ablation_reg_sr}
			\vspace{-0.5em}
		\end{table}
		
		\noindent{\bf Convergence Behavior. }To evaluate the effect of the proposed reparameterization on optimization efficiency, we compare the training loss curves of TRFD, LRTFR, and RepTRFD under matched parameter budgets. Fig.~\ref{fig:ablation_convergence} shows the training loss curves of TRFD, LRTFR, and RepTRFD under matched parameter settings. RepTRFD converges faster and reaches a lower loss, demonstrating improved convergence behavior.
		
		\begin{figure}[htbp]
			\centering
			\resizebox{0.5\linewidth}{!}{
				\begin{tabular}{c}
					\includegraphics[width=\linewidth]{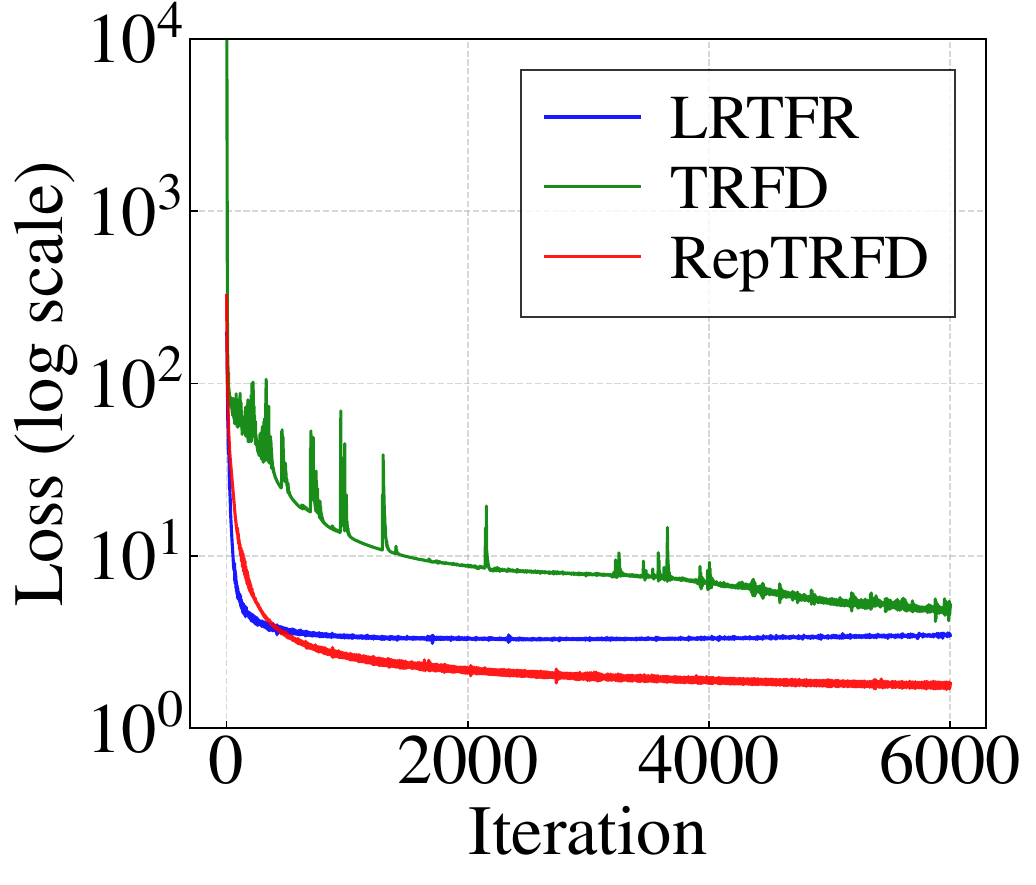} 
			\end{tabular}}
			\vspace{-1em}
			\caption{Loss curve on the color image \textit{Airplane} ($\text{SR}=0.3$).}
			\label{fig:ablation_convergence}
		\end{figure}
		
		\noindent{\bf Initialization Scheme. }To examine whether the theoretical analysis depends on a specific variance-preserving strategy, we further compare Xavier and Kaiming initialization schemes. 
		Under Kaiming initialization with forward-pass variance preservation, the basis entries follow $\mathbf{B}^{(k)}_{ij} \sim \mathcal{U}\!\left(-\sqrt{\tfrac{6}{R_{k+1}}},\ \sqrt{\tfrac{6}{R_{k+1}}}\right)$. Table~\ref{tab:ablation_kaiming} reports the reconstruction performance on four datasets with $\text{SR}=0.2$. 
		We observe that the two initialization strategies yield highly comparable results across all datasets, with only marginal differences. 
		This indicates that our framework is robust to the specific choice of variance-preserving initialization and that the theoretical analysis naturally extends beyond Xavier initialization.
		
		\begin{table}[htbp]
			\centering
			\caption{PSNR results of different variance-preserving initialization schemes on multiple datasets with $\text{SR}=0.2$.}
			\small
			\setlength{\tabcolsep}{3pt}
			\resizebox{0.7\linewidth}{!}{
				\begin{tabular}{lcccc}
					\toprule
					Scheme & \textit{Airplane} & \textit{Toy} & \textit{Botswana} & \textit{News} \\
					\midrule
					Kaiming & 30.23 & 48.59 & \textbf{45.30} & 34.75 \\
					Xavier & \textbf{30.45} & \textbf{48.67} & 45.27 & \textbf{34.90} \\
					\bottomrule
				\end{tabular}
			}
			\label{tab:ablation_kaiming}
		\end{table}
		
		\subsection{Extended Experimental Results}
		
		\noindent{\bf Extended Inpainting Results. }To provide a more comprehensive evaluation, we further include extensive qualitative and quantitative comparisons under various SRs and across diverse data modalities. As shown in Fig.~\ref{fig:inpainting_supp}, we visualize representative inpainting results on RGB, MSI, HSI, and video datasets, covering a wide range of missing patterns and SRs. The visual comparisons consistently demonstrate that our method yields clearer structural details and fewer artifacts compared to existing approaches. For quantitative evaluation, we report full numerical results of color image inpainting in Table~\ref{tab:inpainting_color_image_supp}, and summarize results on MSI, HSI, and video sequences in Table~\ref{tab:inpainting_others_supp}. These results further confirm the robustness and superiority of our method.
		
		\begin{figure*}[tb]
			\renewcommand{\arraystretch}{0.5}
			\setlength\tabcolsep{0.5pt}
			\centering
			\resizebox{\linewidth}{!}{
				\begin{tabular}{ccccccccc}
					\scriptsize{PSNR/SSIM} & \scriptsize{22.25/0.617} & \scriptsize{20.85/0.619} & \scriptsize{25.40/0.781} & \scriptsize{26.26/0.796} & \scriptsize{26.89/0.828} &  \scriptsize{26.99/0.856} & \scriptsize{\textbf{28.60}/\textbf{0.900}} & \scriptsize{\textbf{Inf/1.000}} \\
					\includegraphics[width=18.5mm]{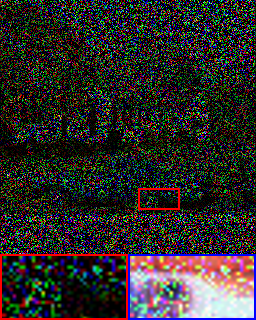} & \includegraphics[width=18.5mm]{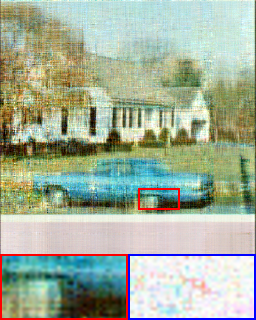} &
					\includegraphics[width=18.5mm]{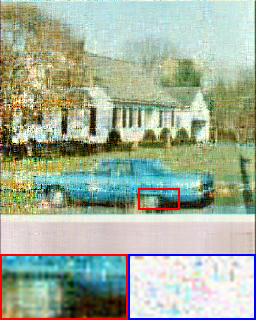} &
					\includegraphics[width=18.5mm]{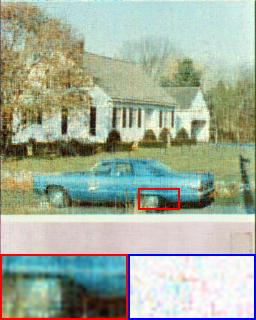} &
					\includegraphics[width=18.5mm]{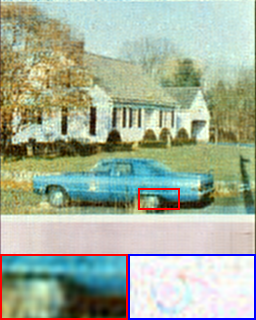} &
					\includegraphics[width=18.5mm]{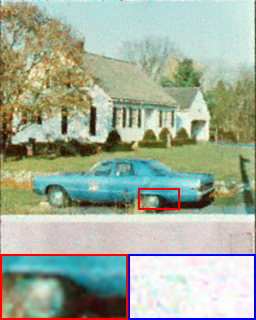} &
					\includegraphics[width=18.5mm]{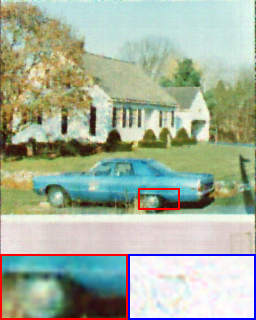} &
					\includegraphics[width=18.5mm]{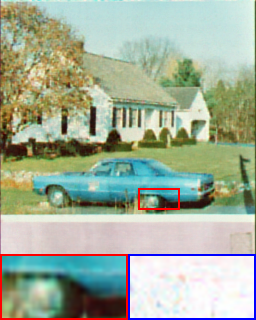} &
					\includegraphics[width=18.5mm]{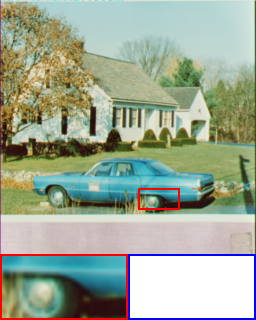} \\
					\scriptsize{PSNR/SSIM} & \scriptsize{20.99/0.493} & \scriptsize{20.90/0.501} & \scriptsize{24.03/0.695} & \scriptsize{24.97/0.720} & \scriptsize{28.52/0.871} &  \scriptsize{28.73/0.909} & \scriptsize{\textbf{29.82}/\textbf{0.919}} & \scriptsize{\textbf{Inf/1.000}} \\
					\includegraphics[width=18.5mm]{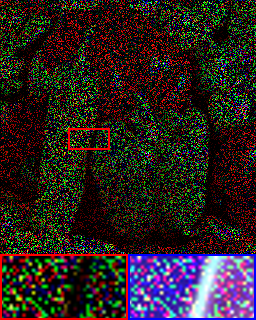} & \includegraphics[width=18.5mm]{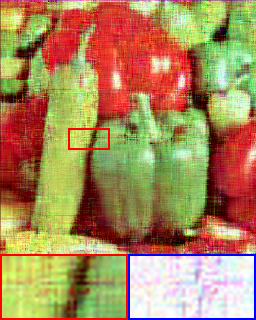} &
					\includegraphics[width=18.5mm]{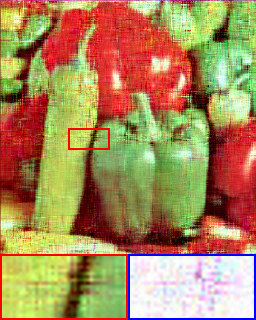} &
					\includegraphics[width=18.5mm]{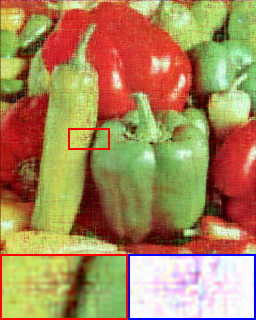} &
					\includegraphics[width=18.5mm]{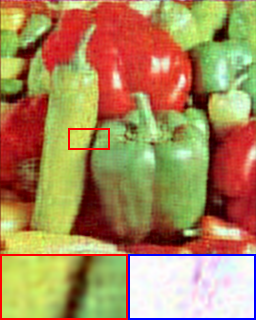} &
					\includegraphics[width=18.5mm]{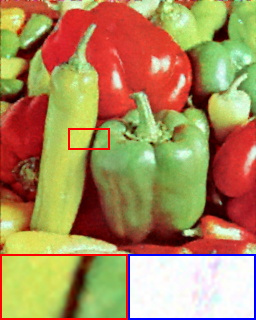} &
					\includegraphics[width=18.5mm]{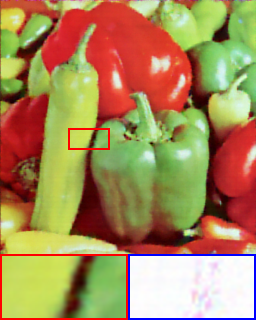} &
					\includegraphics[width=18.5mm]{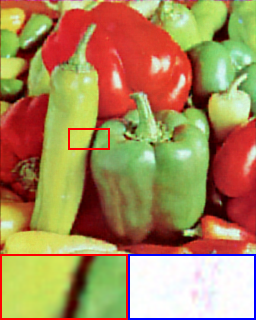} &
					\includegraphics[width=18.5mm]{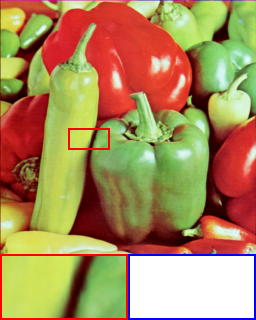} \\
					\scriptsize{PSNR/SSIM} & \scriptsize{21.53/0.547} & \scriptsize{21.07/0.545} & \scriptsize{24.86/0.728} & \scriptsize{25.00/0.744} & \scriptsize{27.01/0.839} &  \scriptsize{27.29/0.865} & \scriptsize{\textbf{28.62}/\textbf{0.907}} & \scriptsize{\textbf{Inf/1.000}} \\
					\includegraphics[width=18.5mm]{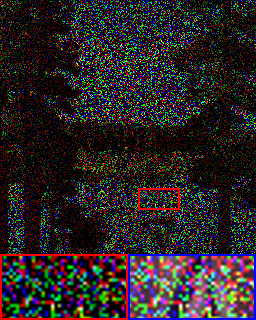} & \includegraphics[width=18.5mm]{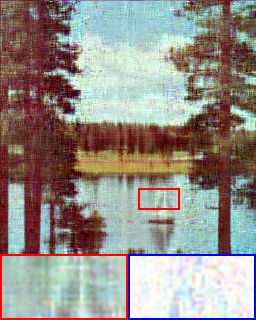} &
					\includegraphics[width=18.5mm]{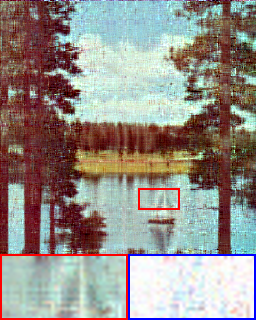} &
					\includegraphics[width=18.5mm]{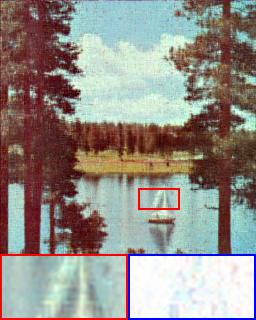} &
					\includegraphics[width=18.5mm]{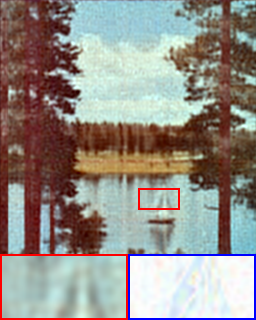} &
					\includegraphics[width=18.5mm]{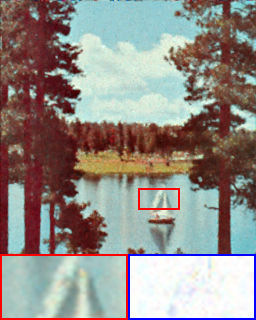} &
					\includegraphics[width=18.5mm]{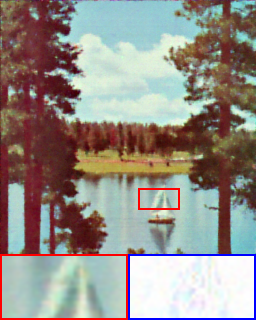} &
					\includegraphics[width=18.5mm]{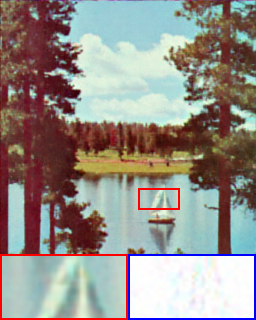} &
					\includegraphics[width=18.5mm]{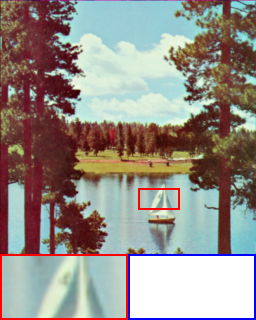} \\
					\scriptsize{PSNR/SSIM} & \scriptsize{29.32/0.791} & \scriptsize{32.24/0.882} & \scriptsize{33.85/0.929} & \scriptsize{34.92/0.948} & \scriptsize{37.19/0.982} &  \scriptsize{36.31/0.962} & \scriptsize{\textbf{38.39}/\textbf{0.985}} & \scriptsize{\textbf{Inf/1.000}} \\
					\includegraphics[width=18.5mm]{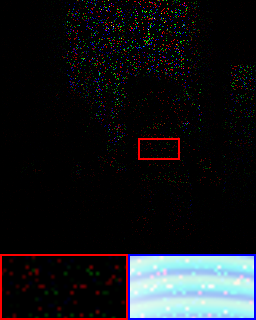} & \includegraphics[width=18.5mm]{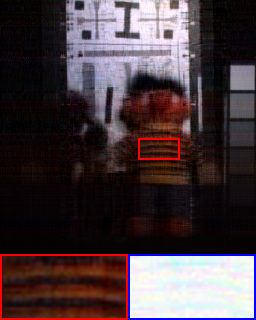} &
					\includegraphics[width=18.5mm]{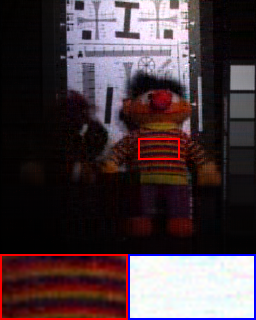} &
					\includegraphics[width=18.5mm]{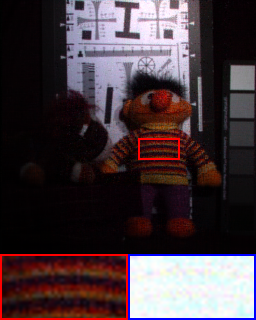} &
					\includegraphics[width=18.5mm]{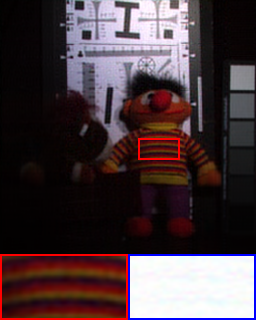} &
					\includegraphics[width=18.5mm]{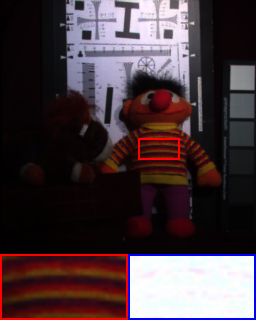} &
					\includegraphics[width=18.5mm]{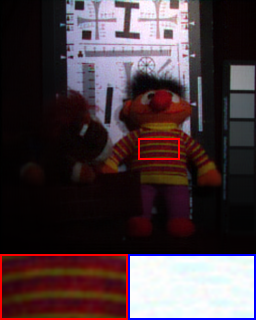} &
					\includegraphics[width=18.5mm]{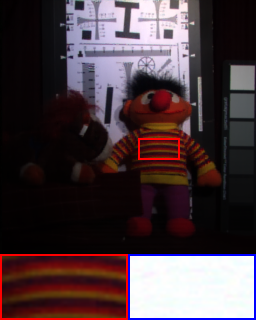} &
					\includegraphics[width=18.5mm]{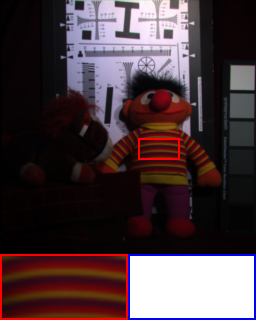} \\
					\scriptsize{PSNR/SSIM} & \scriptsize{33.13/0.836} & \scriptsize{36.07/0.904} & \scriptsize{41.58/0.979} & \scriptsize{41.47/0.978} & \scriptsize{43.94/0.991} &  \scriptsize{42.90/0.985} & \scriptsize{\textbf{45.29}/\textbf{0.992}} & \scriptsize{\textbf{Inf/1.000}} \\
					\includegraphics[width=18.5mm]{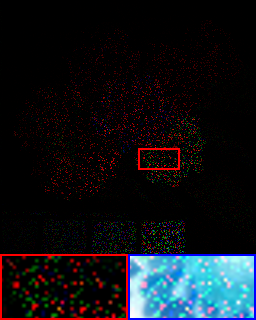} & \includegraphics[width=18.5mm]{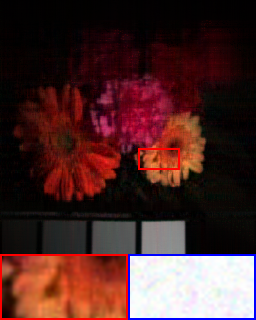} &
					\includegraphics[width=18.5mm]{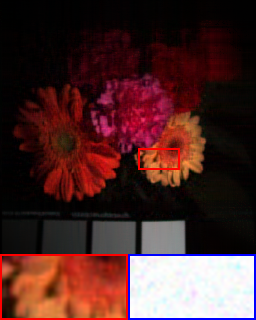} &
					\includegraphics[width=18.5mm]{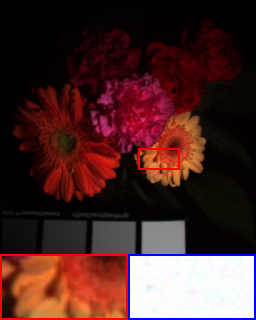} &
					\includegraphics[width=18.5mm]{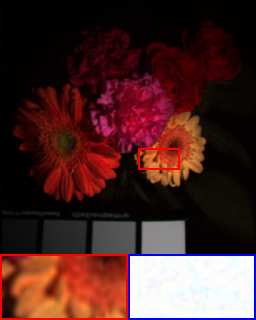} &
					\includegraphics[width=18.5mm]{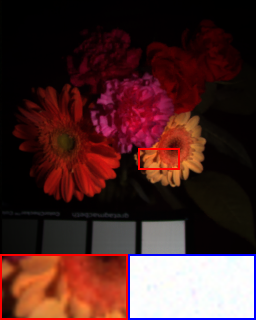} &
					\includegraphics[width=18.5mm]{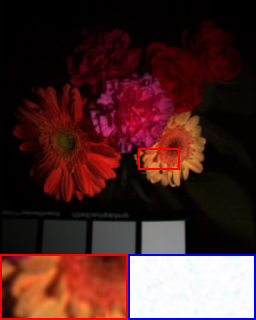} &
					\includegraphics[width=18.5mm]{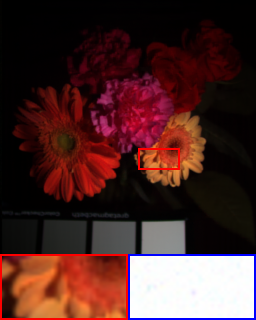} &
					\includegraphics[width=18.5mm]{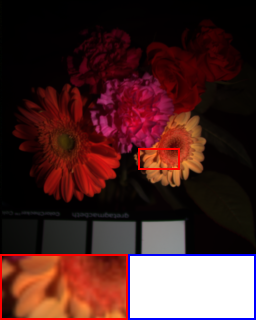} \\
					\scriptsize{PSNR/SSIM} & \scriptsize{30.07/0.781} & \scriptsize{32.74/0.897} & \scriptsize{36.43/0.963} & \scriptsize{35.81/0.949} & \scriptsize{37.96/0.968} &  \scriptsize{36.60/0.967} & \scriptsize{\textbf{39.52}/\textbf{0.977}} & \scriptsize{\textbf{Inf/1.000}} \\
					\includegraphics[width=18.5mm]{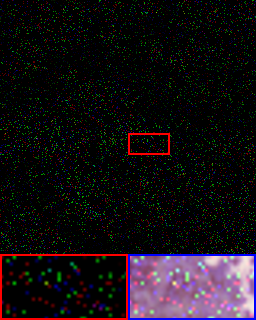} & \includegraphics[width=18.5mm]{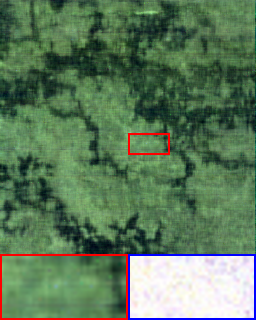} &
					\includegraphics[width=18.5mm]{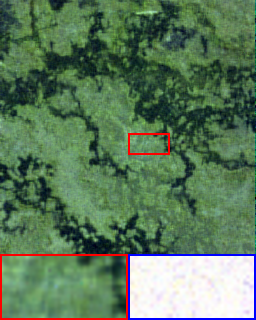} &
					\includegraphics[width=18.5mm]{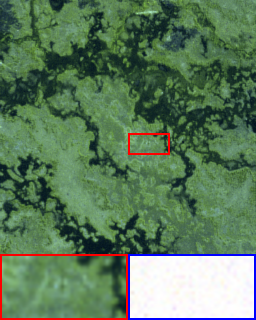} &
					\includegraphics[width=18.5mm]{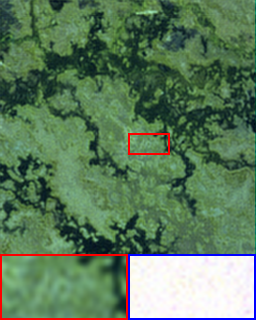} &
					\includegraphics[width=18.5mm]{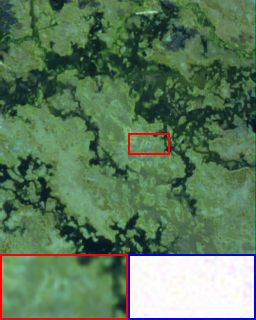} &
					\includegraphics[width=18.5mm]{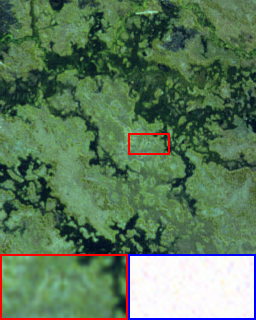} &
					\includegraphics[width=18.5mm]{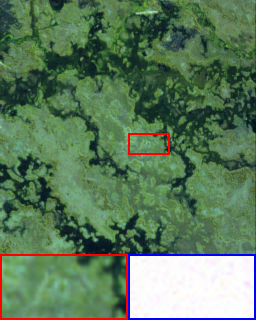} &
					\includegraphics[width=18.5mm]{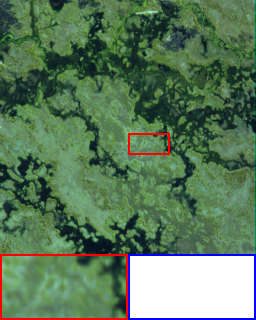} \\
					\scriptsize{PSNR/SSIM} & \scriptsize{31.36/0.886} & \scriptsize{32.38/0.911} & \scriptsize{35.24/0.962} & \scriptsize{34.40/0.943} & \scriptsize{35.78/0.963} &  \scriptsize{35.45/0.962} & \scriptsize{\textbf{37.05}/\textbf{0.973}} & \scriptsize{\textbf{Inf/1.000}} \\
					\includegraphics[width=18.5mm]{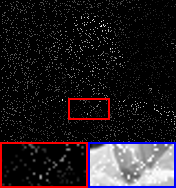} & \includegraphics[width=18.5mm]{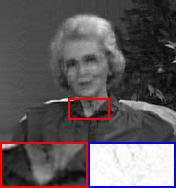} &
					\includegraphics[width=18.5mm]{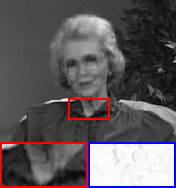} &
					\includegraphics[width=18.5mm]{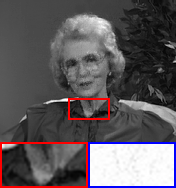} &
					\includegraphics[width=18.5mm]{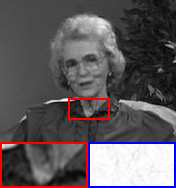} &
					\includegraphics[width=18.5mm]{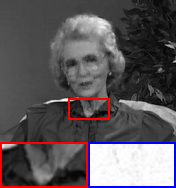} &
					\includegraphics[width=18.5mm]{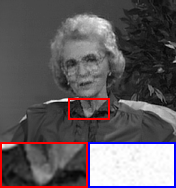} &
					\includegraphics[width=18.5mm]{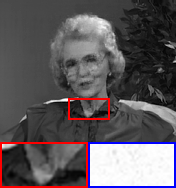} &
					\includegraphics[width=18.5mm]{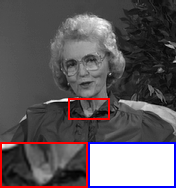} \\
					\scriptsize{PSNR/SSIM} & \scriptsize{28.71/0.844} & \scriptsize{29.38/0.861} & \scriptsize{30.74/0.885} & \scriptsize{30.18/0.890} & \scriptsize{31.12/0.918} &  \scriptsize{30.62/0.903} & \scriptsize{\textbf{32.58}/\textbf{0.943}} & \scriptsize{\textbf{Inf/1.000}} \\
					\includegraphics[width=18.5mm]{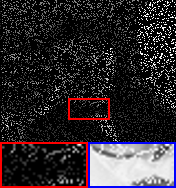} & \includegraphics[width=18.5mm]{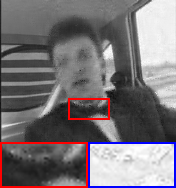} &
					\includegraphics[width=18.5mm]{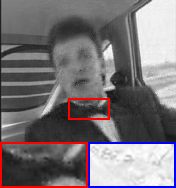} &
					\includegraphics[width=18.5mm]{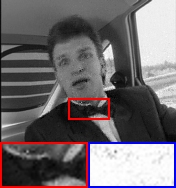} &
					\includegraphics[width=18.5mm]{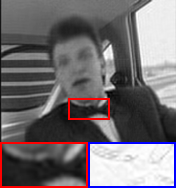} &
					\includegraphics[width=18.5mm]{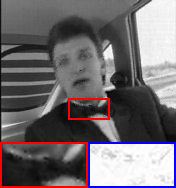} &
					\includegraphics[width=18.5mm]{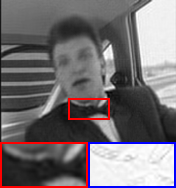} &
					\includegraphics[width=18.5mm]{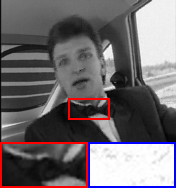} &
					\includegraphics[width=18.5mm]{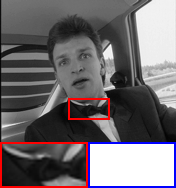} \\
					\scriptsize{Observed} & \scriptsize TRLRF & \scriptsize FCTN & \scriptsize HLRTF & \scriptsize LRTFR & \scriptsize DRO-TFF & \scriptsize NeurTV & \scriptsize Ours & \scriptsize{Ground truth}
			\end{tabular}}
			\caption{Visual inpainting results on various datasets. From top to bottom: color images (\textit{House}, \textit{Peppers}, \textit{Sailboat}) with $\text{SR}=0.2$; MSIs (\textit{Toy}, \textit{Flowers}) with $\text{SR}=0.05$ and $0.1$, respectively; HSI (\textit{Botswana}) with $\text{SR}=0.05$; and video sequences (\textit{Grandma}, \textit{Carphone}) with $\text{SR}=0.1$ and $0.2$, respectively.}
			\label{fig:inpainting_supp}
		\end{figure*}
		
		\begin{table*}[tb]
			\renewcommand{\arraystretch}{1.15}
			\setlength\tabcolsep{2.5pt}
			\small
			\caption{Quantitative comparison on color images. PSNR, SSIM, and NRMSE are reported under different SRs.}
			\label{tab:inpainting_color_image_supp}
			\centering
			\resizebox{\linewidth}{!}{
				\begin{tabular}{c|l|ccc|ccc|ccc|ccc|ccc|ccc}
					\hline
					\multirow{2}{*}{Data} & \multirow{2}{*}{Method} 
					& \multicolumn{3}{c|}{$\text{SR}=0.05$} & \multicolumn{3}{c|}{$\text{SR}=0.1$} & \multicolumn{3}{c|}{$\text{SR}=0.15$} & \multicolumn{3}{c|}{$\text{SR}=0.2$} & \multicolumn{3}{c|}{$\text{SR}=0.25$} & \multicolumn{3}{c}{$\text{SR}=0.3$} \\
					\cline{3-20}
					& & PSNR & SSIM & NRMSE & PSNR & SSIM & NRMSE & PSNR & SSIM & NRMSE & PSNR & SSIM & NRMSE & PSNR & SSIM & NRMSE & PSNR & SSIM & NRMSE \\
					\hline
					\multirow{7}{*}{\textit{Airplane}} 
					& TRLRF & 15.08 & 0.172 & 0.218 & 17.98 & 0.347 & 0.156 & 19.90 & 0.444 & 0.125 & 21.50 & 0.529 & 0.104 & 23.35 & 0.623 & 0.084 & 25.58 & 0.712 & 0.065 \\
					& FCTN & 14.64 & 0.219 & 0.229 & 17.79 & 0.370 & 0.159 & 20.05 & 0.504 & 0.123 & 22.08 & 0.600 & 0.097 & 24.11 & 0.689 & 0.077 & 25.70 & 0.747 & 0.064 \\
					& HLRTF & 18.46 & 0.396 & 0.148 & 21.89 & 0.598 & 0.099 & 24.82 & 0.746 & 0.071 & 26.77 & 0.811 & 0.057 & 27.61 & 0.795 & 0.051 & 29.40 & 0.852 & 0.042 \\
					& LRTFR & 19.66 & 0.499 & 0.129 & 22.09 & 0.655 & 0.097 & 25.57 & 0.765 & 0.065 & 27.21 & 0.822 & 0.054 & 28.60 & 0.858 & 0.046 & 29.38 & 0.878 & 0.042 \\
					& DRO-TFF & 21.08 & \underline{0.741} & 0.109 & 22.71 & 0.763 & 0.090 & 26.28 & 0.843 & 0.060 & 27.64 & 0.853 & 0.051 & 29.17 & 0.888 & 0.043 & 30.37 & 0.906 & 0.037 \\
					& NeurTV & \underline{22.00} & 0.729 & \underline{0.098} & \underline{24.86} & \underline{0.827} & \underline{0.071} & \underline{26.55} & \underline{0.871} & \underline{0.058} & \underline{28.21} & \underline{0.905} & \underline{0.048} & \underline{29.80} & \underline{0.927} & \underline{0.040} & \underline{31.02} & \underline{0.932} & \underline{0.035} \\
					& Ours & \textbf{23.44} & \textbf{0.784} & \textbf{0.083} & \textbf{26.33} & \textbf{0.865} & \textbf{0.060} & \textbf{28.53} & \textbf{0.907} & \textbf{0.046} & \textbf{30.45} & \textbf{0.932} & \textbf{0.037} & \textbf{32.05} & \textbf{0.946} & \textbf{0.031} & \textbf{33.61} & \textbf{0.958} & \textbf{0.026} \\
					\hline
					\multirow{7}{*}{\textit{House}} 
					& TRLRF & 15.69 & 0.208 & 0.240 & 17.92 & 0.360 & 0.186 & 20.58 & 0.520 & 0.137 & 22.25 & 0.617 & 0.113 & 23.75 & 0.694 & 0.095 & 25.12 & 0.751 & 0.081 \\
					& FCTN & 15.47 & 0.248 & 0.246 & 18.13 & 0.424 & 0.181 & 19.76 & 0.545 & 0.150 & 20.85 & 0.619 & 0.133 & 22.20 & 0.681 & 0.113 & 24.44 & 0.756 & 0.088 \\
					& HLRTF & 17.93 & 0.378 & 0.186 & 21.48 & 0.589 & 0.123 & 23.93 & 0.723 & 0.093 & 25.40 & 0.781 & 0.078 & 26.76 & 0.828 & 0.067 & 28.19 & 0.871 & 0.057 \\
					& LRTFR & 19.51 & 0.460 & 0.155 & 21.72 & 0.604 & 0.120 & 24.77 & 0.743 & 0.084 & 26.26 & 0.796 & 0.071 & 26.91 & 0.834 & 0.066 & 28.58 & 0.879 & 0.054 \\
					& DRO-TFF & 19.17 & 0.582 & 0.161 & 23.06 & 0.696 & 0.103 & 25.04 & 0.759 & 0.082 & 26.89 & 0.828 & 0.066 & 27.97 & 0.848 & 0.058 & 28.99 & 0.875 & 0.052 \\
					& NeurTV & \underline{21.06} & \underline{0.651} & \underline{0.129} & \underline{23.28} & \underline{0.761} & \underline{0.100} & \underline{25.56} & \underline{0.820} & \underline{0.077} & \underline{26.99} & \underline{0.856} & \underline{0.065} & \underline{28.26} & \underline{0.881} & \underline{0.056} & \underline{29.02} & \underline{0.910} & \underline{0.052} \\
					& Ours & \textbf{22.68} & \textbf{0.715} & \textbf{0.107} & \textbf{25.36} & \textbf{0.821} & \textbf{0.079} & \textbf{27.22} & \textbf{0.870} & \textbf{0.064} & \textbf{28.60} & \textbf{0.900} & \textbf{0.054} & \textbf{29.99} & \textbf{0.923} & \textbf{0.046} & \textbf{30.98} & \textbf{0.934} & \textbf{0.041} \\
					\hline
					\multirow{7}{*}{\textit{Pepper}} 
					& TRLRF & 12.97 & 0.139 & 0.409 & 16.83 & 0.273 & 0.262 & 18.77 & 0.384 & 0.210 & 20.99 & 0.493 & 0.162 & 22.92 & 0.580 & 0.130 & 24.23 & 0.646 & 0.112 \\
					& FCTN & 13.86 & 0.166 & 0.369 & 16.24 & 0.282 & 0.281 & 18.82 & 0.403 & 0.209 & 20.90 & 0.501 & 0.164 & 23.02 & 0.594 & 0.129 & 24.37 & 0.650 & 0.110 \\
					& HLRTF & 15.66 & 0.249 & 0.300 & 19.19 & 0.451 & 0.200 & 21.68 & 0.579 & 0.150 & 24.03 & 0.695 & 0.114 & 25.62 & 0.765 & 0.095 & 26.84 & 0.801 & 0.083 \\
					& LRTFR & 15.06 & 0.289 & 0.321  & 20.76 & 0.537 & 0.167 & 22.95 & 0.655 & 0.130 & 24.97 & 0.720 & 0.103 & 26.15 & 0.767 & 0.090 & 27.98 & 0.796 & 0.073 \\
					& DRO-TFF & 18.84 & 0.502 & 0.208 & 23.73 & 0.730 & 0.118 & 26.55 & 0.824 & 0.086 & 28.52 & 0.871 & 0.068 & 30.22 & 0.906 & 0.056 & 31.38 & 0.923 & 0.049 \\
					& NeurTV & \underline{22.14} & \underline{0.729} & \underline{0.142} & \underline{24.69} & \underline{0.822} & \underline{0.106} & \underline{26.99} & \underline{0.876} & \underline{0.081} & \underline{28.73} & \underline{0.909} & \underline{0.067} & \underline{30.37} & \underline{0.929} & \underline{0.055} & \underline{31.46} & \underline{0.938} & \underline{0.049} \\
					& Ours & \textbf{23.14} & \textbf{0.755} & \textbf{0.127} & \textbf{26.10} & \textbf{0.854} & \textbf{0.090} & \textbf{28.02} & \textbf{0.878} & \textbf{0.072} & \textbf{29.82} & \textbf{0.919} & \textbf{0.059} & \textbf{31.29} & \textbf{0.935} & \textbf{0.050} & \textbf{32.46} & \textbf{0.945} & \textbf{0.043} \\
					\hline
					\multirow{7}{*}{\textit{Sailboat}} 
					& TRLRF & 15.17 & 0.197 & 0.306 & 17.35 & 0.324 & 0.238 & 19.44 & 0.438 & 0.187 & 21.53 & 0.547 & 0.147 & 22.66 & 0.606 & 0.129 & 24.45 & 0.683 & 0.105 \\
					& FCTN & 14.69 & 0.209 & 0.323 & 17.37 & 0.362 & 0.238 & 18.61 & 0.434 & 0.206 & 21.07 & 0.545 & 0.155 & 22.79 & 0.631 & 0.127 & 24.16 & 0.687 & 0.109 \\
					& HLRTF & 18.12 & 0.366 & 0.218 & 20.98 & 0.539 & 0.157 & 23.07 & 0.648 & 0.123 & 24.86 & 0.728 & 0.100 & 26.35 & 0.794 & 0.084 & 27.66 & 0.838 & 0.073 \\
					& LRTFR & 18.88 & 0.412 & 0.200 & 20.96 & 0.552 & 0.157 & 23.95 & 0.693 & 0.111 & 25.00 & 0.744 & 0.099 & 27.04 & 0.806 & 0.078 & 29.03 & 0.871 & 0.062 \\
					& DRO-TFF & 21.07 & 0.658 & 0.155 & 23.37 & 0.747 & 0.119 & 25.47 & 0.800 & \underline{0.093} & 27.01 & 0.839 & 0.078 & 28.15 & 0.862 & 0.069 & 29.40 & 0.890 & 0.059 \\
					& NeurTV & \underline{21.55} & \underline{0.663} & \underline{0.147} & \underline{23.79} & \underline{0.778} & \underline{0.113} & \underline{25.53} & \underline{0.819} & \underline{0.093} & \underline{27.29} & \underline{0.865} & \underline{0.076} & \underline{28.47} & \underline{0.888} & \underline{0.066} & \underline{29.60} & \underline{0.916} & \underline{0.058} \\
					& Ours & \textbf{22.43} &\textbf{ 0.722} & \textbf{0.133} & \textbf{25.00} & \textbf{0.823} & \textbf{0.099} & \textbf{27.11} & \textbf{0.876} & \textbf{0.077} & \textbf{28.62} & \textbf{0.907} & \textbf{0.065} & \textbf{29.80} & \textbf{0.923} & \textbf{0.057} & \textbf{31.00} & \textbf{0.938} & \textbf{0.049} \\
					\hline
			\end{tabular}}
		\end{table*}
		
		\begin{table*}[tb]
			\renewcommand{\arraystretch}{1.15}
			\setlength\tabcolsep{2.5pt}
			\small
			\caption{Quantitative comparison on MSIs, HSIs and videos. PSNR, SSIM, and NRMSE are reported under different SRs.}
			\label{tab:inpainting_others_supp}
			\centering
			\resizebox{\linewidth}{!}{
				\begin{tabular}{c|l|ccc|ccc|ccc|ccc|ccc}
					\hline
					\multirow{2}{*}{Data} & \multirow{2}{*}{Method} 
					& \multicolumn{3}{c|}{$\text{SR}=0.03$} & \multicolumn{3}{c|}{$\text{SR}=0.05$} & \multicolumn{3}{c|}{$\text{SR}=0.1$} & \multicolumn{3}{c|}{$\text{SR}=0.15$} & \multicolumn{3}{c}{$\text{SR}=0.2$} \\
					\cline{3-17}
					& & PSNR & SSIM & NRMSE & PSNR & SSIM & NRMSE & PSNR & SSIM & NRMSE & PSNR & SSIM & NRMSE & PSNR & SSIM & NRMSE \\
					\hline
					\multirow{8}{*}{\makecell[c]{\textit{Toys}\\$(256\times 256\times 31)$}}
					& TRLRF & 25.13 & 0.625 & 0.189 & 29.32 & 0.791 & 0.117 & 33.67 & 0.901 & 0.071 & 36.56 & 0.943 & 0.051 & 38.67 & 0.964 & 0.040 \\
					& FCTN & 28.22 & 0.788 & 0.132 & 32.24 & 0.882 & 0.083 & 37.46 & 0.955 & 0.046 & 40.04 & 0.972 & 0.034 & 42.04 & 0.982 & 0.027 \\
					& HLRTF & 30.73 & 0.880 & 0.099 & 33.85 & 0.929 & 0.069 & 40.15 & 0.982 & 0.033 & 42.64 & 0.988 & 0.025 & 45.56 & 0.993 & 0.018 \\
					& LRTFR & 32.95 & 0.924 & 0.077 & 34.92 & 0.948 & 0.061 & 39.95 & 0.982 & 0.034 & 41.87 & 0.987 & 0.027 & 43.64 & 0.988 & 0.022 \\
					& DRO-TFF & \underline{33.88} & \underline{0.966} & \underline{0.069} & \underline{37.19} & \underline{0.982} & \underline{0.047} & \underline{40.62} & \underline{0.992} & \underline{0.032} & \underline{43.73} & \underline{0.994} & \underline{0.022} & \underline{45.75} & \underline{0.995} & \underline{0.017} \\
					& NeurTV & 33.34 & 0.927 & 0.073 & 36.31 & 0.962 & 0.052 & 40.43 & 0.984 & \underline{0.032} & 42.31 & 0.987 & 0.026 & 43.09 & 0.987 & 0.024 \\
					& Ours & \textbf{35.16} & \textbf{0.972} & \textbf{0.059} & \textbf{38.39} & \textbf{0.984} & \textbf{0.041} & \textbf{44.04} & \textbf{0.993} & \textbf{0.021} & \textbf{47.16} & \textbf{0.995} & \textbf{0.015} & \textbf{48.67} & \textbf{0.996} & \textbf{0.011} \\
					\hline
					\multirow{8}{*}{\makecell[c]{\textit{Flowers}\\$(256\times 256\times 31)$}} 
					& TRLRF & 25.38 & 0.604 & 0.318 & 28.57 & 0.705 & 0.220 & 33.13 & 0.836 & 0.130 & 35.79 & 0.902 & 0.096 & 38.81 & 0.944 & 0.068 \\
					& FCTN & 28.31 & 0.718 & 0.227 & 31.17 & 0.793 & 0.163 & 36.07 & 0.904 & 0.093 & 38.61 & 0.939 & 0.069 & 42.22 & 0.970 & 0.046 \\
					& HLRTF & 30.42 & 0.830 & 0.178 & 35.92 & 0.945 & 0.095 & 41.58 & 0.979 & 0.049 & 44.32 & 0.988 & 0.036 & 46.84 & 0.992 & 0.027 \\
					& LRTFR & 32.64 & 0.884 & 0.138 & 37.84 & 0.956 & 0.076 & 41.47 & 0.978 & 0.050 & 43.10 & 0.982 & 0.041 & 44.27 & 0.986 & 0.036 \\
					& DRO-TFF & \underline{37.09} & \textbf{0.970} & \textbf{0.082} & \underline{39.71} & \underline{0.982} & \underline{0.061} & \underline{43.94} & \underline{0.991} & \underline{0.038} & \underline{46.28} & \underline{0.994} & \underline{0.029} & \underline{47.51} & \underline{0.995} & \underline{0.025} \\
					& NeurTV & 35.61 & \underline{0.934} & \underline{0.098} & 38.67 & 0.966 & 0.069 & 42.90 & 0.985 & 0.042 & 45.12 & 0.990 & 0.033 & 46.54 & 0.992 & 0.028 \\
					& Ours & \textbf{37.11} & \textbf{0.970} & \textbf{0.082} & \textbf{40.30} & \textbf{0.983} & \textbf{0.057} & \textbf{45.29} & \textbf{0.992} & \textbf{0.032} & \textbf{48.32} & \textbf{0.995} & \textbf{0.023} & \textbf{50.13} & \textbf{0.996} & \textbf{0.018} \\
					\hline
					\multirow{8}{*}{\makecell[c]{\textit{Washinton DC}\\$(256\times 256\times 191)$}} 
					& TRLRF & 26.83 & 0.635 & 0.258 & 29.22 & 0.771 & 0.196 & 32.40 & 0.914 & 0.136 & 35.31 & 0.954 & 0.097 & 37.98 & 0.973 & 0.071 \\
					& FCTN & 28.76 & 0.830 & 0.207 & 32.09 & 0.912 & 0.141 & 34.52 & 0.946 & 0.106 & 36.38 & 0.963 & 0.086 & 41.62 & 0.987 & 0.047 \\
					& HLRTF & 35.31 & 0.957 & 0.097 & 38.29 & 0.976 & 0.069 & 41.68 & 0.988 & 0.047 & 45.04 & \underline{0.994} & 0.032 & 46.66 & \underline{0.995} & 0.026 \\
					& LRTFR & 34.08 & 0.943 & 0.112 & 37.42 & 0.975 & 0.076 & 40.72 & 0.988 & 0.052 & 45.08 & 0.993 & 0.032 & 46.56 & 0.994 & 0.027 \\
					& DRO-TFF & \underline{36.83} & \underline{0.961} & \underline{0.082} & \underline{39.22} & 0.975 & \underline{0.062} & 41.62 & 0.988 & 0.047 & 42.84 & 0.990 & 0.041 & 44.13 & 0.993 & 0.035 \\
					& NeurTV & 35.03 & 0.950 & 0.100 & 38.98 & \underline{0.977} & 0.063 & \underline{44.66} & \underline{0.992} & \underline{0.033} & \underline{46.40} & 0.993 & \underline{0.027} & \underline{47.20} & \underline{0.995} & \underline{0.025} \\
					& Ours & \textbf{38.44} & \textbf{0.979} & \textbf{0.068} & \textbf{41.97} & \textbf{0.990} & \textbf{0.045} & \textbf{45.62} & \textbf{0.994} & \textbf{0.030} & \textbf{47.26} & \textbf{0.995} & \textbf{0.025} & \textbf{47.96} & \textbf{0.996} & \textbf{0.023} \\
					\hline
					\multirow{8}{*}{\makecell[c]{\textit{Botswana}\\$(256\times 256\times 145)$}} 
					& TRLRF & 26.89 & 0.620 & 0.186 & 30.07 & 0.781 & 0.129 & 32.08 & 0.883 & 0.102 & 34.12 & 0.922 & 0.081 & 35.49 & 0.941 & 0.069 \\
					& FCTN & 29.32 & 0.804 & 0.141 & 32.74 & 0.897 & 0.095 & 35.36 & 0.939 & 0.070 & 36.22 & 0.949 & 0.063 & 37.70 & 0.962 & 0.054 \\
					& HLRTF & 33.04 & 0.918 & 0.092 & 36.43 & 0.963 & 0.062 & 38.60 & 0.979 & 0.048 & 39.13 & 0.981 & 0.045 & 40.39 & 0.989 & 0.039 \\
					& LRTFR & 32.87 & 0.906 & 0.093 & 35.81 & 0.949 & 0.067 & 37.49 & 0.964 & 0.055 & 40.05 & 0.980 & 0.041 & 41.86 & 0.986 & \underline{0.033} \\
					& DRO-TFF & \underline{34.74} & \underline{0.945} & \underline{0.075} & \underline{37.96} & \underline{0.968} & \underline{0.052} & \underline{40.84} & 0.981 & \underline{0.037} & \underline{41.38} & 0.982 & \underline{0.035} & \underline{42.01} & 0.984 & \underline{0.033} \\
					& NeurTV & 34.23 & 0.930 & 0.080 & 36.60 & 0.967 & 0.061 & 39.32 & \underline{0.982} & 0.044 & 40.25 & \underline{0.984} & 0.040 & 40.74 & \underline{0.990} & 0.038 \\
					& Ours & \textbf{37.11} & \textbf{0.965} & \textbf{0.057} & \textbf{39.52} & \textbf{0.977} & \textbf{0.043} & \textbf{42.81} & \textbf{0.987} & \textbf{0.030} & \textbf{44.40} & \textbf{0.990} & \textbf{0.025} & \textbf{45.27} & \textbf{0.992} & \textbf{0.022} \\
					\hline
					\multirow{8}{*}{\makecell[c]{\textit{News}\\$(144\times 176\times 100)$}} 
					& TRLRF & 23.66 & 0.687 & 0.180 & 25.05 & 0.751 & 0.153 & 26.76 & 0.812 & 0.126 & 28.18 & 0.857 & 0.107 & 30.00 & 0.895 & 0.087 \\
					& FCTN & 24.28 & 0.710 & 0.168 & 25.81 & 0.776 & 0.141 & 27.74 & 0.837 & 0.113 & 29.24 & 0.877 & 0.095 & 31.00 & 0.915 & 0.077 \\
					& HLRTF & 25.28 & 0.811 & 0.149 & 26.61 & 0.842 & 0.128 & 29.64 & 0.914 & 0.090 & 31.28 & 0.937 & 0.075 & 32.37 & 0.949 & 0.066 \\
					& LRTFR & 26.15 & 0.815 & 0.135 & 26.63 & 0.811 & 0.128 & 29.62 & 0.893 & 0.091 & 30.58 & 0.910 & 0.081 & 31.86 & 0.931 & 0.070 \\
					& DRO-TFF & \underline{27.91} & \underline{0.894} & \underline{0.110} & \underline{28.89} & \underline{0.909} & \underline{0.099} & \underline{30.66} & \underline{0.937} & \underline{0.080} & \underline{32.28} & \underline{0.953} & \underline{0.067} & \underline{33.02} & \underline{0.960} & \underline{0.061} \\
					& NeurTV & 26.31 & 0.811 & 0.133 & 27.67 & 0.842 & 0.113 & 30.05 & 0.902 & 0.086 & 31.42 & 0.919 & 0.074 & 32.58 & 0.939 & 0.064 \\
					& Ours & \textbf{28.74} & \textbf{0.917} & \textbf{0.100} & \textbf{30.35} & \textbf{0.938} & \textbf{0.083} & \textbf{32.55} & \textbf{0.957} & \textbf{0.065} & \textbf{33.90} & \textbf{0.967} & \textbf{0.055} & \textbf{34.90} & \textbf{0.972} & \textbf{0.049} \\
					\hline
					\multirow{8}{*}{\makecell[c]{\textit{Carphone}\\$(144\times 176\times 100)$}} 
					& TRLRF & 22.51 & 0.607 & 0.156 & 24.13 & 0.680 & 0.129 & 26.39 & 0.769 & 0.100 & 27.73 & 0.816 & 0.085 & 28.71 & 0.844 & 0.076 \\
					& FCTN & 23.89 & 0.663 & 0.133 & 25.45 & 0.729 & 0.111 & 27.09 & 0.792 & 0.092 & 28.22 & 0.831 & 0.081 & 29.38 & 0.861 & 0.071 \\
					& HLRTF & 24.28 & 0.661 & 0.127 & 25.71 & 0.732 & 0.108 & 28.12 & 0.817 & 0.082 & 29.63 & 0.859 & 0.069 & 30.74 & 0.885 & 0.060 \\
					& LRTFR & 24.23 & 0.702 & 0.128 & 26.33 & 0.786 & 0.100 & 27.77 & 0.837 & 0.085 & 29.32 & 0.867 & 0.071 & 30.18 & 0.890 & 0.064 \\
					& DRO-TFF & \underline{27.80} & \underline{0.856} & \underline{0.085} & \underline{28.69} & \underline{0.875} & \underline{0.076} & \underline{29.61} & \underline{0.892} & \underline{0.069} & \underline{30.67} & \underline{0.911} & \underline{0.061} & \underline{31.12} & \underline{0.918} & \underline{0.058} \\
					& NeurTV & 25.36 & 0.709 & 0.112 & 27.29 & 0.793 & 0.090 & 28.82 & 0.847 & 0.075 & 29.69 & 0.894 & 0.068 & 30.62 & 0.903 & 0.061 \\
					& Ours & \textbf{28.24} & \textbf{0.879} & \textbf{0.080} & \textbf{29.19} & \textbf{0.896} & \textbf{0.072} & \textbf{30.74} & \textbf{0.921} & \textbf{0.060} & \textbf{31.81} & \textbf{0.934} & \textbf{0.053} & \textbf{32.58} & \textbf{0.943} & \textbf{0.049} \\
					\hline
			\end{tabular}}
		\end{table*}
		
		\noindent{\bf Extended Denoising Results. }We further conduct denoising experiments on MSI and HSI data under different noise levels, where Gaussian noise with SD ranging from 0.1 to 0.3 is added independently to all pixels and spectral bands. Fig.~\ref{fig:denoising_supp} provides visual comparisons across MSI (\textit{Toys}, \textit{Balloons}) and HSI scenes (\textit{KSC}, \textit{Indian Pines}), showing that our method restores cleaner structural details and yields fewer spectral distortions. Corresponding quantitative results are summarized in Table~\ref{tab:denoising_supp}, where PSNR, SSIM, and NRMSE are reported across different SD settings.
		
		\begin{figure*}[tb]
			\renewcommand{\arraystretch}{0.5}
			\setlength\tabcolsep{0.5pt}
			\centering
			\resizebox{\linewidth}{!}{
				\begin{tabular}{ccccccccc}
					\scriptsize{PSNR/SSIM} & \scriptsize{30.29/0.734} & \scriptsize{30.91/0.807} & \scriptsize{31.89/0.821} & \scriptsize{32.36/0.870} & \scriptsize{33.11/0.902} &  \scriptsize{33.19/0.904} & \scriptsize{\textbf{33.93}/\textbf{0.921}} & \scriptsize{\textbf{Inf/1.000}} \\
					\includegraphics[width=18.5mm]{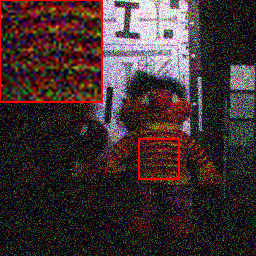} & \includegraphics[width=18.5mm]{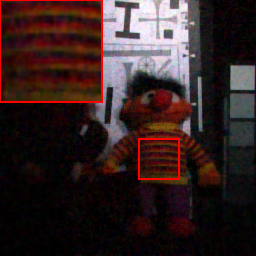} &
					\includegraphics[width=18.5mm]{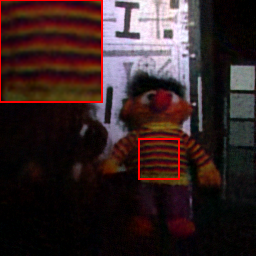} &
					\includegraphics[width=18.5mm]{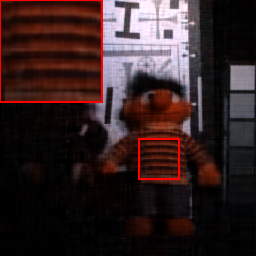} &
					\includegraphics[width=18.5mm]{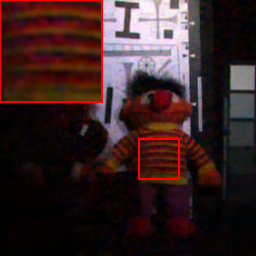} &
					\includegraphics[width=18.5mm]{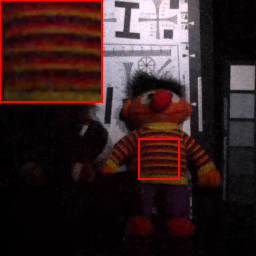} &
					\includegraphics[width=18.5mm]{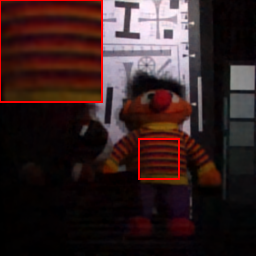} &
					\includegraphics[width=18.5mm]{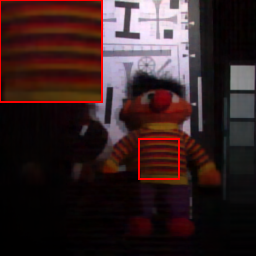} &
					\includegraphics[width=18.5mm]{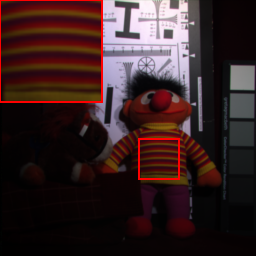} \\
					\scriptsize{PSNR/SSIM} & \scriptsize{31.54/0.712} & \scriptsize{34.73/0.888} & \scriptsize{35.18/0.918} & \scriptsize{36.03/0.910} & \scriptsize{37.21/0.945} &  \scriptsize{37.30/0.947} & \scriptsize{\textbf{37.83}/\textbf{0.955}} & \scriptsize{\textbf{Inf/1.000}} \\
					\includegraphics[width=18.5mm]{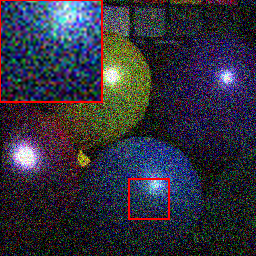} & \includegraphics[width=18.5mm]{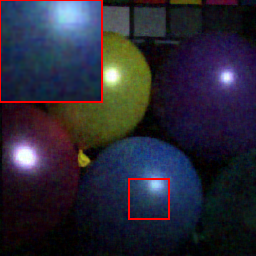} &
					\includegraphics[width=18.5mm]{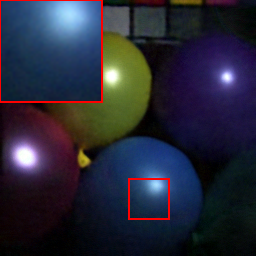} &
					\includegraphics[width=18.5mm]{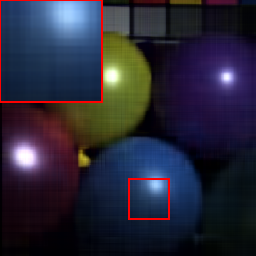} &
					\includegraphics[width=18.5mm]{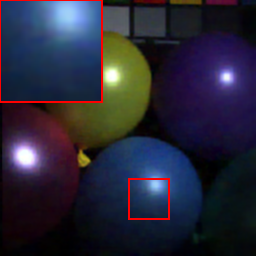} &
					\includegraphics[width=18.5mm]{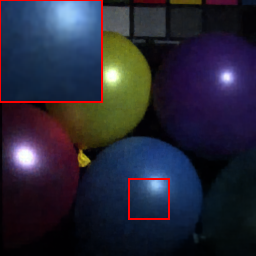} &
					\includegraphics[width=18.5mm]{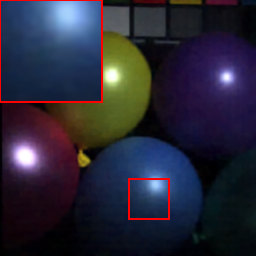} &
					\includegraphics[width=18.5mm]{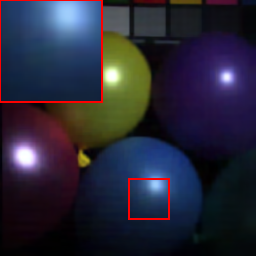} &
					\includegraphics[width=18.5mm]{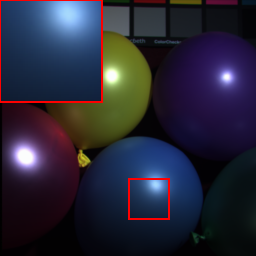} \\
					\scriptsize{PSNR/SSIM} & \scriptsize{32.58/0.887} & \scriptsize{33.32/0.902} & \scriptsize{33.25/0.907} & \scriptsize{33.52/0.889} & \scriptsize{34.79/0.930} &  \scriptsize{34.45/0.909} & \scriptsize{\textbf{35.56}/\textbf{0.932}} & \scriptsize{\textbf{Inf/1.000}} \\
					\includegraphics[width=18.5mm]{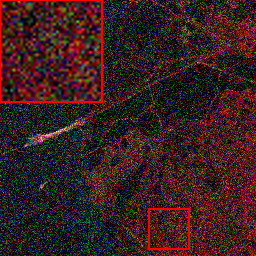} & \includegraphics[width=18.5mm]{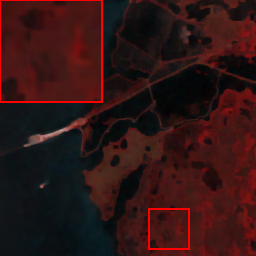} &
					\includegraphics[width=18.5mm]{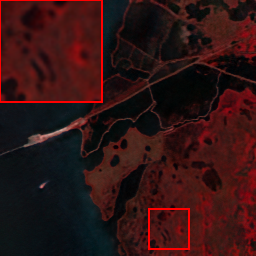} &
					\includegraphics[width=18.5mm]{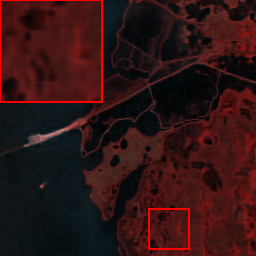} &
					\includegraphics[width=18.5mm]{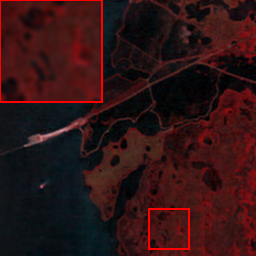} &
					\includegraphics[width=18.5mm]{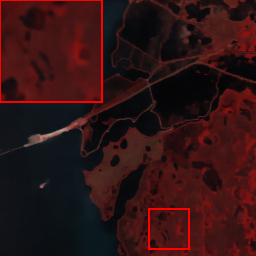} &
					\includegraphics[width=18.5mm]{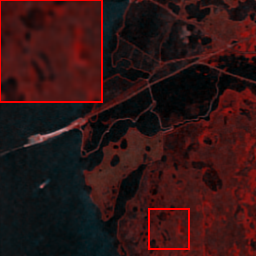} &
					\includegraphics[width=18.5mm]{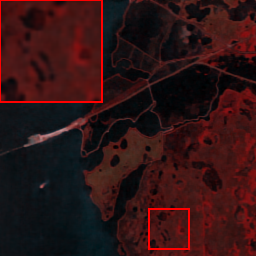} &
					\includegraphics[width=18.5mm]{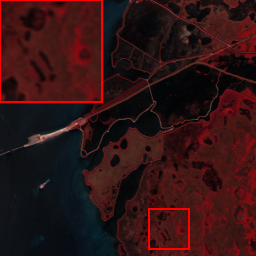} \\
					\scriptsize{PSNR/SSIM} & \scriptsize{33.77/0.894} & \scriptsize{33.31/0.858} & \scriptsize{34.24/0.890} & \scriptsize{33.91/0.873} & \scriptsize{34.44/0.911} &  \scriptsize{35.16/0.909} & \scriptsize{\textbf{35.67}/\textbf{0.914}} & \scriptsize{\textbf{Inf/1.000}} \\
					\includegraphics[width=18.5mm]{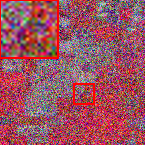} & \includegraphics[width=18.5mm]{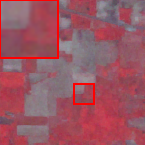} &
					\includegraphics[width=18.5mm]{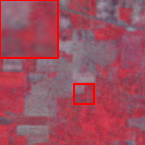} &
					\includegraphics[width=18.5mm]{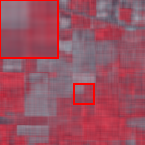} &
					\includegraphics[width=18.5mm]{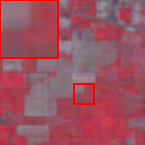} &
					\includegraphics[width=18.5mm]{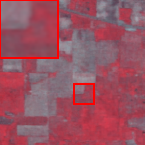} &
					\includegraphics[width=18.5mm]{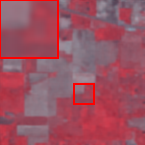} &
					\includegraphics[width=18.5mm]{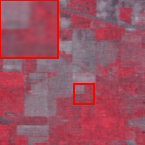} &
					\includegraphics[width=18.5mm]{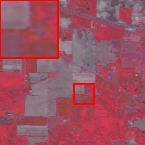} \\
					\scriptsize{Observed} & \scriptsize LRTDTV & \scriptsize DeepTensor & \scriptsize HLRTF & \scriptsize LRTFR & \scriptsize DRO-TFF & \scriptsize NeurTV & \scriptsize Ours & \scriptsize{Ground truth}
			\end{tabular}}
			\caption{Visual denoising results on MSI \textit{Toy} and \textit{Balloons}, and HSI \textit{KSC} and \textit{Indian Pines} with $\text{SD}=0.2$.}
			\label{fig:denoising_supp}
		\end{figure*}

		\begin{table*}[tb]
			\renewcommand{\arraystretch}{1.15}
			\setlength\tabcolsep{2.5pt}
			\small
			\caption{Quantitative comparison for denoising under different noise levels. PSNR, SSIM, and NRMSE are reported.}
			\label{tab:denoising_supp}
			\centering
			\resizebox{\linewidth}{!}{
				\begin{tabular}{c|l|ccc|ccc|ccc|ccc|ccc}
					\hline
					\multirow{2}{*}{Data} & \multirow{2}{*}{Method} 
					& \multicolumn{3}{c|}{$\text{SD}=0.1$} & \multicolumn{3}{c|}{$\text{SD}=0.15$} & \multicolumn{3}{c|}{$\text{SD}=0.2$} & \multicolumn{3}{c|}{$\text{SD}=0.25$} & \multicolumn{3}{c}{$\text{SD}=0.3$} \\
					\cline{3-17}
					& & PSNR & SSIM & NRMSE & PSNR & SSIM & NRMSE & PSNR & SSIM & NRMSE & PSNR & SSIM & NRMSE & PSNR & SSIM & NRMSE \\
					\hline
					\multirow{7}{*}{\makecell[c]{\textit{Toys}\\$(256\times 256\times 31)$}} 
					& LRTDTV & 34.85 & 0.906 & 0.062 & 32.17 & 0.824 & 0.084 & 30.29 & 0.734 & 0.104 & 28.79 & 0.653 & 0.124 & 27.65 & 0.581 & 0.141 \\
					& DeepTensor & 35.11 & 0.902 & 0.060 & 32.50 & 0.875 & 0.081 & 30.91 & 0.807 & 0.097 & 29.50 & 0.760 & 0.114 & 28.62 & 0.672 & 0.126 \\
					& HLRTF & 35.70 & 0.912 & 0.056 & 33.30 & 0.883 & 0.074 & 31.89 & 0.821 & 0.087 & 30.78 & 0.768 & 0.098 & 29.46 & 0.698 & 0.115 \\
					& LRTFR & 36.01 & 0.936 & 0.054 & 34.01 & 0.918 & 0.068 & 32.36 & 0.870 & 0.082 & 31.36 & 0.869 & 0.092 & 30.60 & 0.829 & 0.101 \\
					& DRO-TFF & \textbf{37.34} & \underline{0.954} & \textbf{0.046} & 35.07 & \underline{0.933} & \underline{0.060} & 33.11 & 0.902 & \underline{0.075} & \underline{32.24} & \underline{0.887} & \underline{0.083} & 31.10 & \underline{0.875} & 0.095 \\
					& NeurTV & 36.71 & 0.951 & 0.050 & \underline{35.23} & 0.920 & \textbf{0.059} & \underline{33.19} & \underline{0.904} & \underline{0.075} & 32.17 & 0.876 & 0.084 & \underline{31.51} & 0.863 & \underline{0.091} \\
					& Ours & \underline{37.25} & \textbf{0.958} & \underline{0.047} & \textbf{35.30} & \textbf{0.941} & \textbf{0.059} & \textbf{33.93} & \textbf{0.921} & \textbf{0.069} & \textbf{32.89} & \textbf{0.902} & \textbf{0.077} & \textbf{32.04} & \textbf{0.889} & \textbf{0.085} \\
					\hline
					\multirow{7}{*}{\makecell[c]{\textit{Face}\\$(256\times 256\times 31)$}} 
					& LRTDTV & 36.85 & 0.883 & 0.090 & 34.11 & 0.788 & 0.123 & 32.26 & 0.701 & 0.152 & 30.55 & 0.615 & 0.185 & 29.24 & 0.528 & 0.216 \\
					& DeepTensor & 37.36 & 0.926 & 0.014 & 35.51 & 0.904 & 0.017 & 34.06 & 0.859 & 0.020 & 32.62 & 0.814 & 0.023 & 31.87 & 0.732 & 0.026 \\
					& HLRTF & 38.49 & 0.922 & 0.074 & 36.40 & 0.882 & 0.095 & 34.67 & 0.835 & 0.115 & 33.37 & 0.792 & 0.134 & 32.30 & 0.754 & 0.152 \\
					& LRTFR & 39.09 & 0.935 & 0.069 & 36.81 & 0.896 & 0.090 & 35.13 & 0.858 & 0.109 & 33.91 & 0.821 & 0.126 & 33.01 & 0.808 & 0.140 \\
					& DRO-TFF & 40.00 & 0.964 & 0.062 & 37.94 & 0.937 & 0.079 & 36.37 & 0.920 & 0.095 & 35.20 & 0.900 & 0.109 & 34.71 & 0.887 & 0.115 \\
					& NeurTV & \underline{40.51} & \underline{0.968} & \underline{0.059} & \underline{38.46} & \underline{0.949} & \underline{0.075} & \underline{36.78} & \underline{0.927} & \underline{0.091} & \underline{35.77} & \underline{0.914} & \underline{0.102} & \underline{35.22} & \underline{0.902} & \underline{0.108} \\
					& Ours & \textbf{40.80} & \textbf{0.970} & \textbf{0.057} & \textbf{39.16} & \textbf{0.957} & \textbf{0.069} & \textbf{37.89} & \textbf{0.945} & \textbf{0.080} & \textbf{36.92} & \textbf{0.933} & \textbf{0.089} & \textbf{36.12} & \textbf{0.921} & \textbf{0.098} \\
					\hline
					\multirow{7}{*}{\makecell[c]{\textit{Washington DC}\\$(256\times 256\times 191)$}} 
					& LRTDTV & 34.32 & 0.924 & 0.109 & 32.83 & 0.896 & 0.129 & 31.71 & 0.869 & 0.147 & 30.84 & 0.845 & 0.163 & 30.12 & 0.823 & 0.177 \\
					& DeepTensor & 36.01 & 0.955 & 0.090 & 33.80 & 0.917 & 0.116 & 32.32 & 0.891 & 0.137 & 30.92 & 0.849 & 0.161 & 30.16 & 0.816 & 0.176 \\
					& HLRTF & 36.27 & 0.959 & 0.087 & 34.17 & 0.927 & 0.111 & 32.75 & 0.913 & 0.131 & 31.60 & 0.892 & 0.149 & 30.85 & 0.879 & 0.162 \\
					& LRTFR & 35.78 & 0.953 & 0.092 & 34.22 & 0.934 & 0.110 & 33.02 & 0.914 & 0.127 & 32.09 & 0.895 & 0.141 & 31.34 & 0.880 & 0.154 \\
					& DRO-TFF & 36.65 & 0.948 & 0.083 & 35.26 & \underline{0.934} & 0.098 & \underline{34.26} & \underline{0.919} & \underline{0.110} & \underline{33.35} & \underline{0.904} & \underline{0.122} & \underline{32.57} & \underline{0.885} & \underline{0.133} \\
					& NeurTV & \underline{37.06} & \underline{0.960} & \underline{0.079} & \underline{35.29} & 0.929 & \underline{0.097} & 33.86 & 0.910 & 0.115 & 32.79 & 0.891 & 0.130 & 31.92 & 0.869 & 0.144 \\
					& Ours & \textbf{37.96} & \textbf{0.971} & \textbf{0.072} & \textbf{36.32} & \textbf{0.957} & \textbf{0.087} & \textbf{35.08} & \textbf{0.945} & \textbf{0.100} & \textbf{34.16} & \textbf{0.932} & \textbf{0.111} & \textbf{33.37} & \textbf{0.918} & \textbf{0.122} \\
					\hline
					\multirow{7}{*}{\makecell[c]{\textit{Salinas}\\$(217\times 217\times 224)$}} 
					& LRTDTV & 40.13 & 0.957 & 0.053 & 38.19 & 0.938 & 0.066 & 36.82 & 0.920 & 0.077 & 35.83 & 0.907 & 0.087 & 35.01 & 0.891 & 0.095 \\
					& DeepTensor & 40.25 & 0.952 & 0.052 & 38.69 & 0.938 & 0.062 & 37.05 & 0.921 & 0.075 & 36.07 & 0.902 & 0.084 & 35.47 & 0.896 & 0.090 \\
					& HLRTF & 40.20 & 0.958 & 0.052 & 38.85 & 0.943 & 0.061 & 37.50 & 0.928 & 0.071 & 36.52 & 0.918 & 0.080 & 36.06 & 0.907 & 0.084 \\
					& LRTFR & 40.50 & 0.963 & 0.051 & 39.16 & 0.953 & 0.059 & 37.98 & 0.943 & 0.068 & 36.98 & 0.932 & 0.076 & 36.18 & 0.921 & 0.083 \\
					& DRO-TFF & 40.39 & 0.960 & 0.051 & 39.36 & 0.955 & 0.058 & 38.44 & 0.949 & 0.064 & 37.79 & 0.936 & 0.069 & 37.32 & \underline{0.937} & 0.073 \\
					& NeurTV & \underline{41.66} & \underline{0.966} & \underline{0.044} & \underline{40.25} & \underline{0.961} & \underline{0.052} & \underline{39.64} & \underline{0.954} & \underline{0.056} & \underline{38.61} & \underline{0.939} & \underline{0.063} & \underline{37.65} & 0.930 & \underline{0.070} \\
					& Ours & \textbf{42.77} & \textbf{0.973} & \textbf{0.040} & \textbf{41.23} & \textbf{0.965} & \textbf{0.046} & \textbf{40.19} & \textbf{0.955} & \textbf{0.052} & \textbf{39.37} & \textbf{0.947} & \textbf{0.058} & \textbf{38.60} & \textbf{0.938} & \textbf{0.063} \\
					\hline
			\end{tabular}}
		\end{table*}
		
		\noindent{\bf Extended Super-Resolution Results. }To further demonstrate the generalization ability of RepTRFD in recovering fine structural details under high upscaling factors, we provide additional $\times 4$ visual super-resolution results on the DIV2K dataset, as shown in Fig.~\ref{fig:super_resolution_supp}. Compared with both INR (PEMLP~\cite{mildenhall2021nerf}, SIREN~\cite{sitzmann2020implicit}, Gauss~\cite{ramasinghe2022beyond}, WIRE~\cite{saragadam2023wire}, and FINER~\cite{liu2024finer}) and tensor functional representation methods (LRTFR~\cite{luo2023low}), our approach produces sharper edges, cleaner texture patterns, and fewer artifacts. The quantitative results (PSNR/SSIM) shown above each image further verify the visual comparisons, where RepTRFD consistently achieves the best performance across all samples.
		
		\begin{figure*}[tb]
			\renewcommand{\arraystretch}{0.5}
			\setlength\tabcolsep{0.5pt}
			\centering
			\resizebox{\linewidth}{!}{
				\begin{tabular}{ccccccccc}
					\scriptsize{21.57/0.585} & \scriptsize{26.25/0.743} & \scriptsize{26.15/0.723} & \scriptsize{27.05/0.808} & \scriptsize{26.80/0.751} &  \scriptsize{25.15/0.684} & \scriptsize{\textbf{27.38}/\textbf{0.852}} & \scriptsize{PSNR/SSIM} \\
					\includegraphics[width=18.5mm]{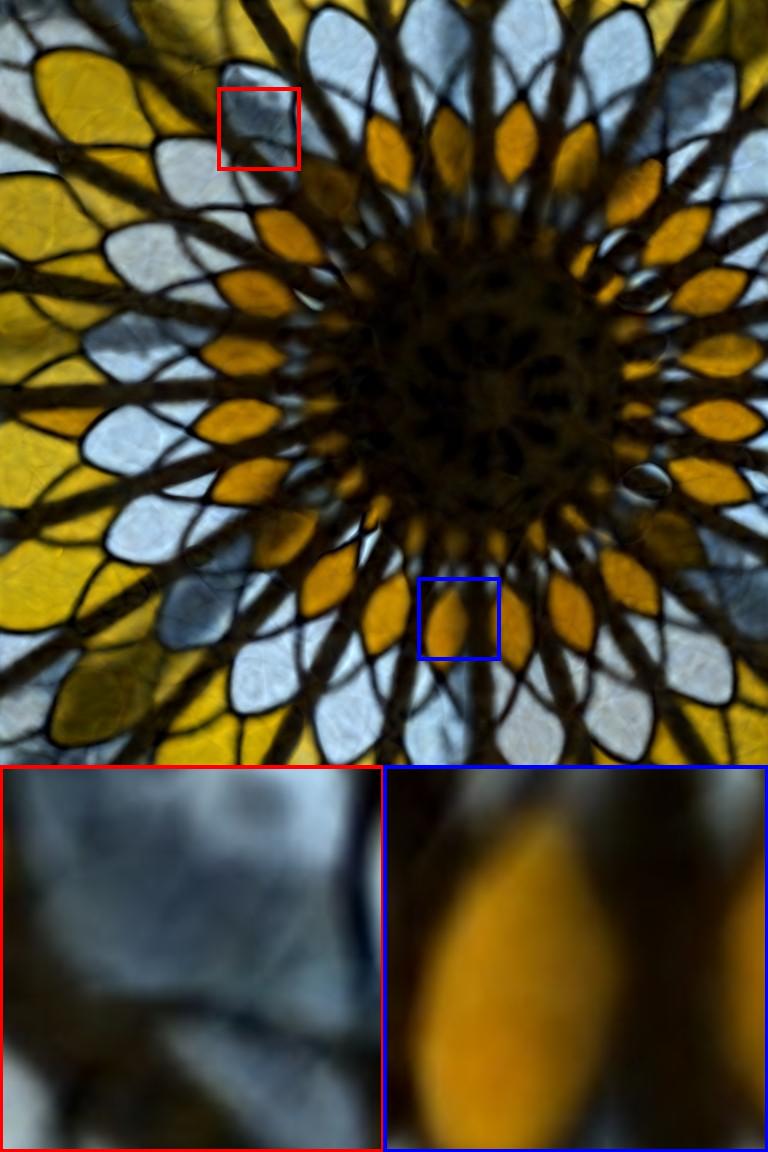} &
					\includegraphics[width=18.5mm]{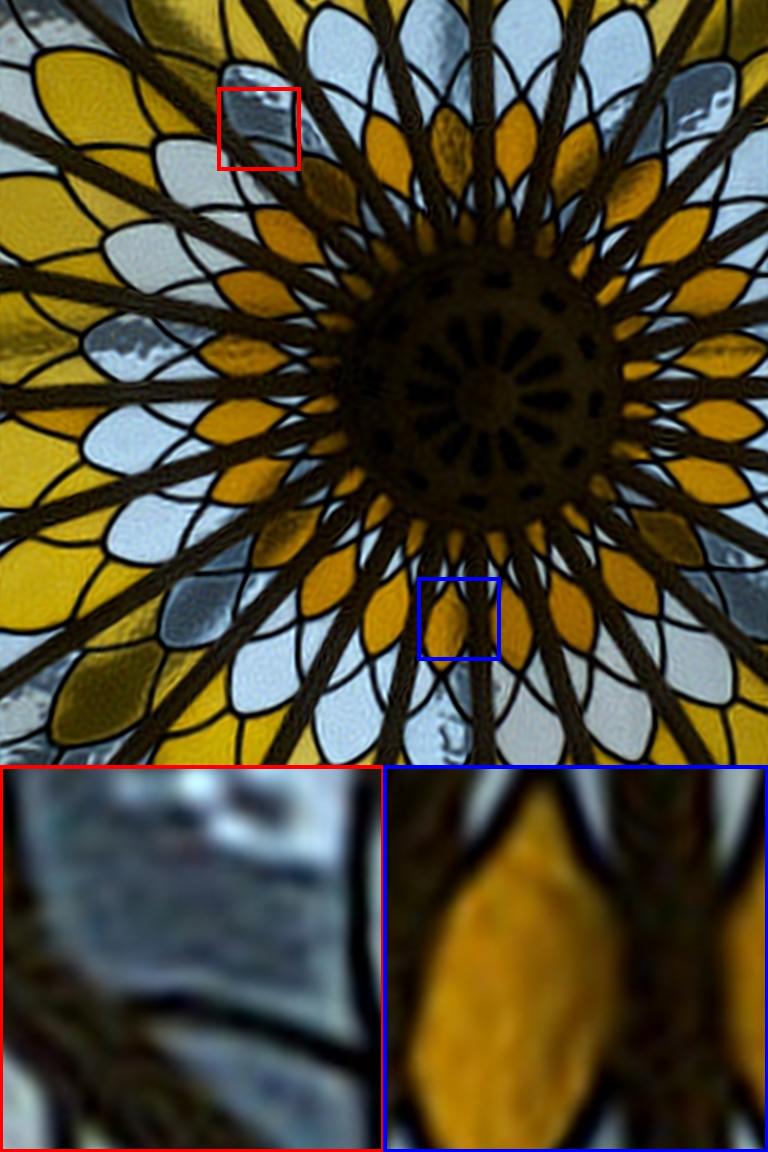} &
					\includegraphics[width=18.5mm]{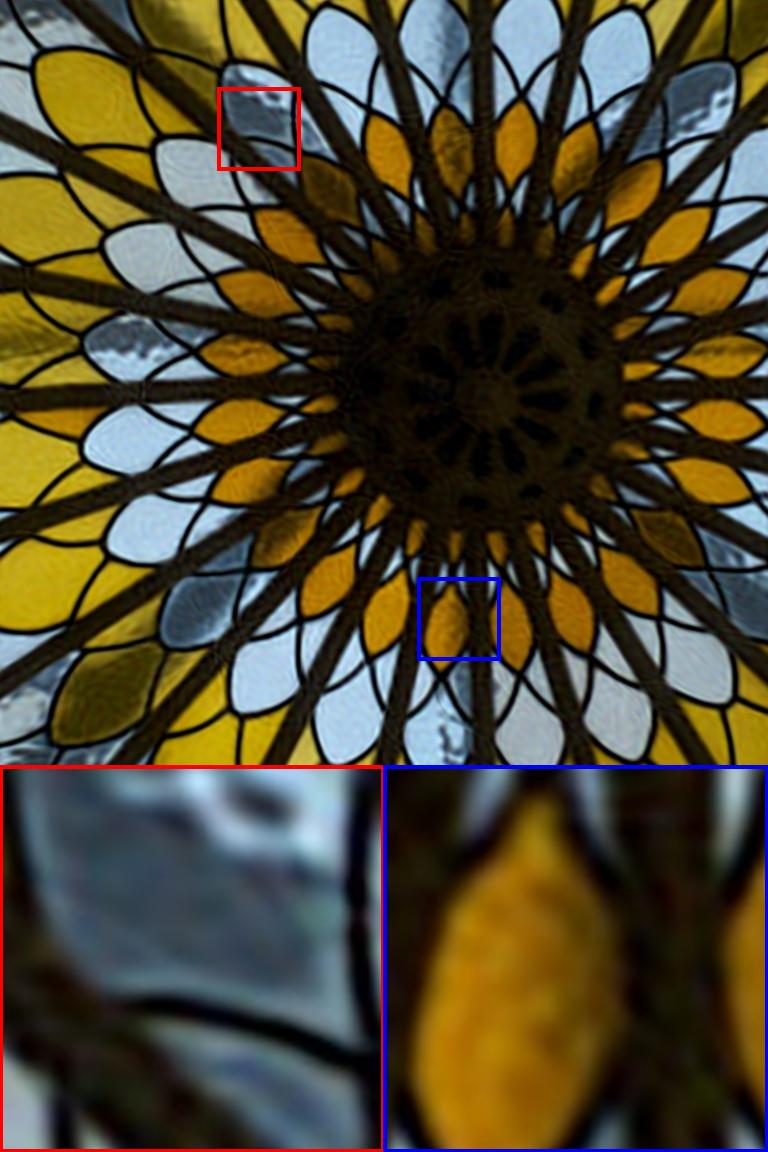} &
					\includegraphics[width=18.5mm]{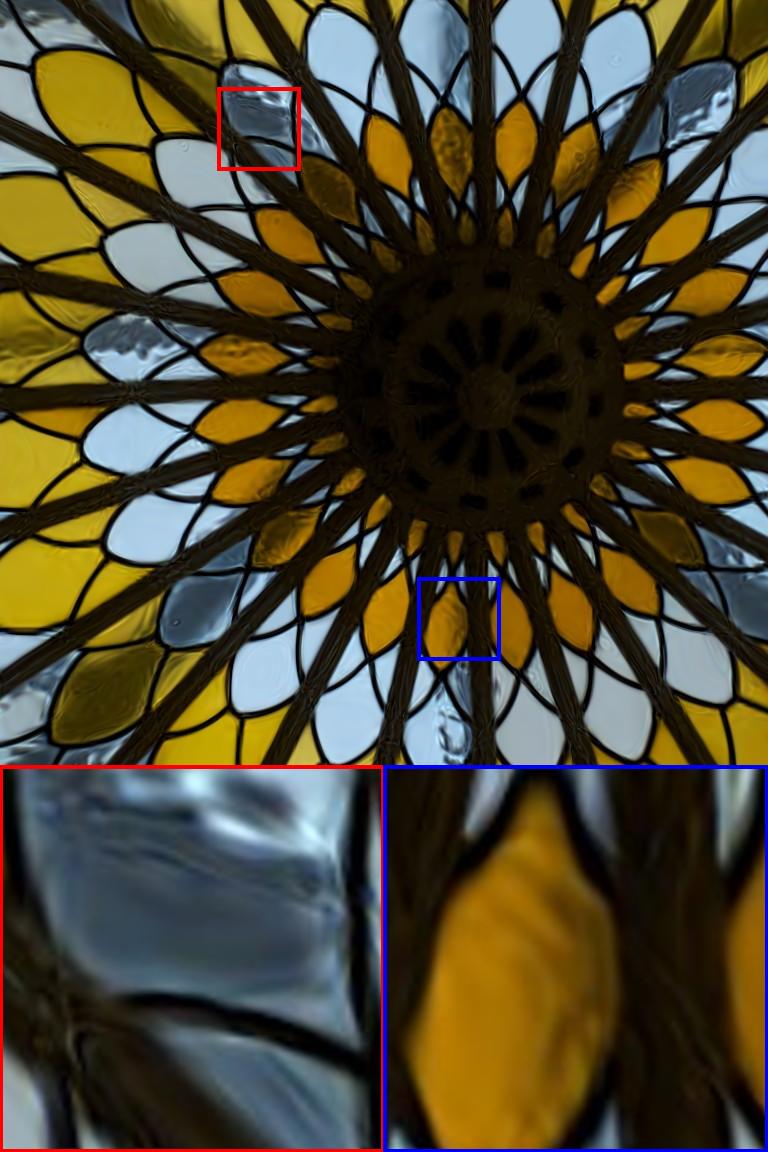} &
					\includegraphics[width=18.5mm]{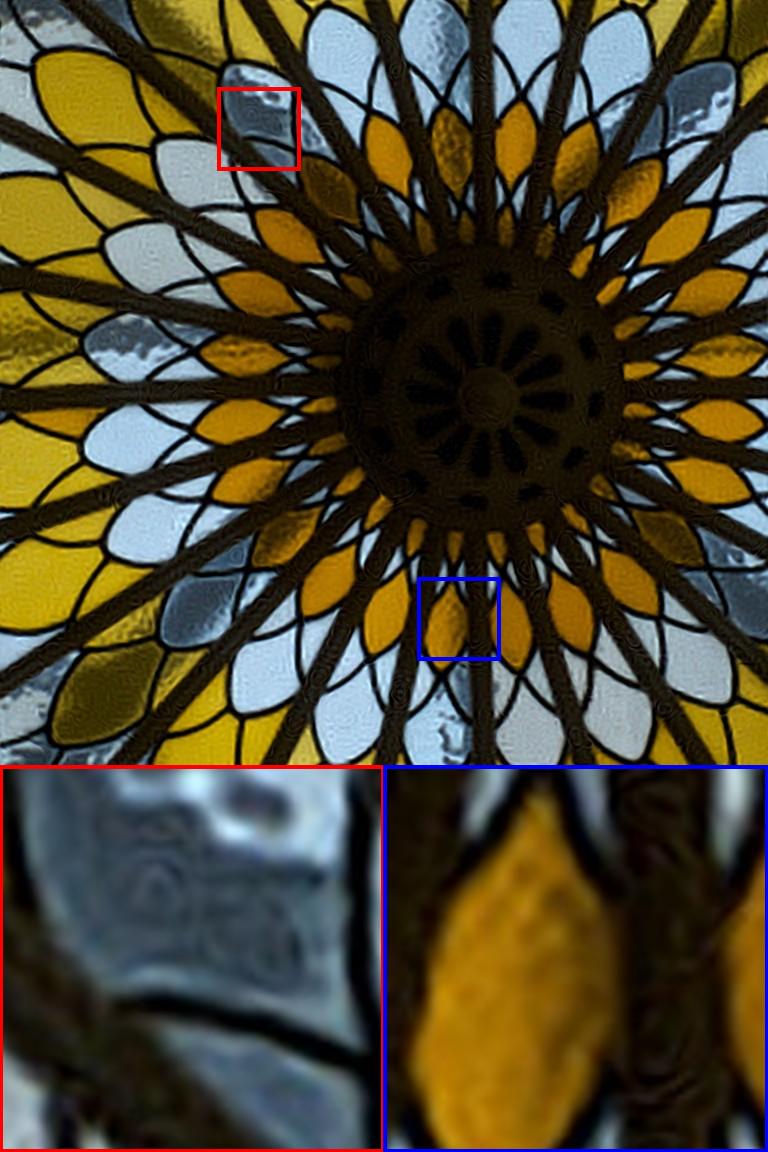} &
					\includegraphics[width=18.5mm]{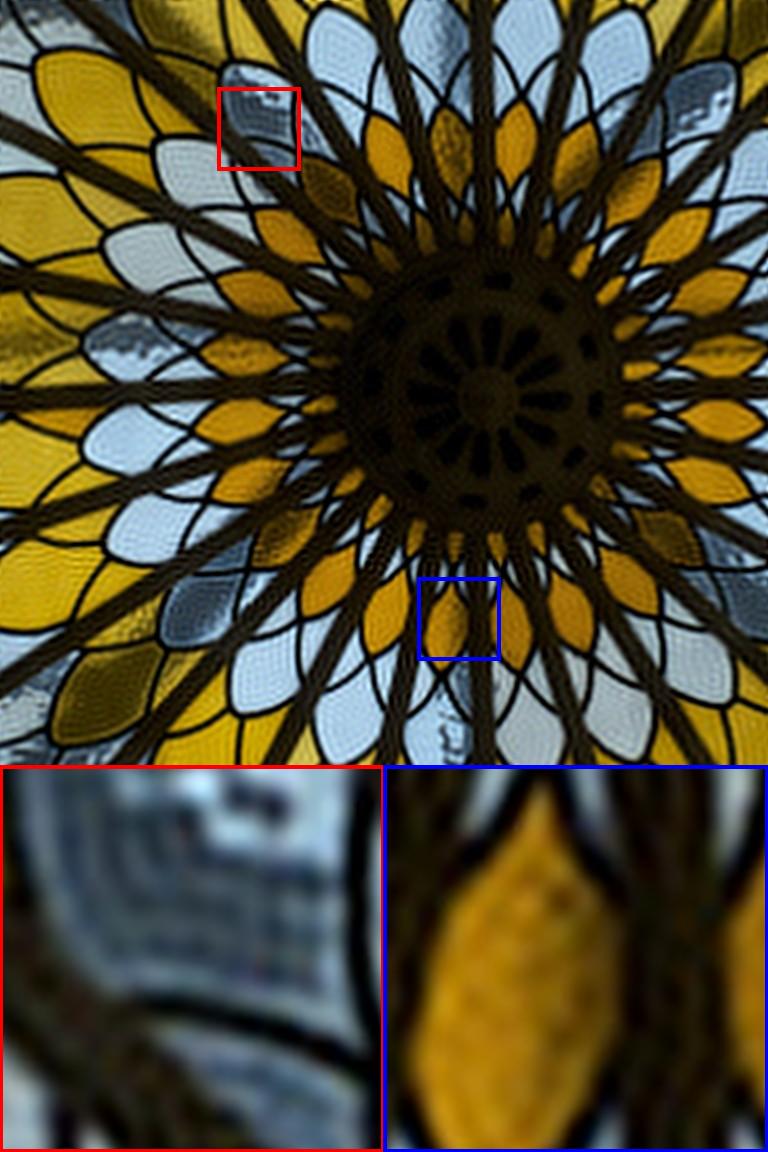} &
					\includegraphics[width=18.5mm]{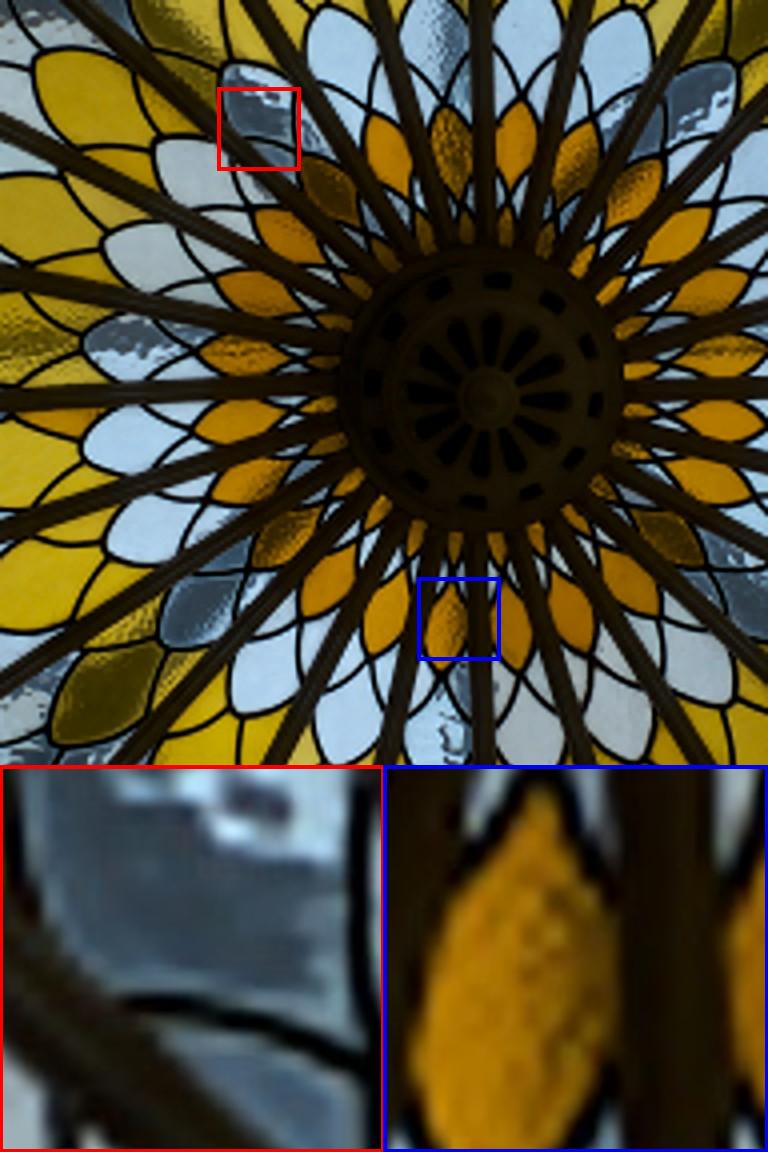} &
					\includegraphics[width=18.5mm]{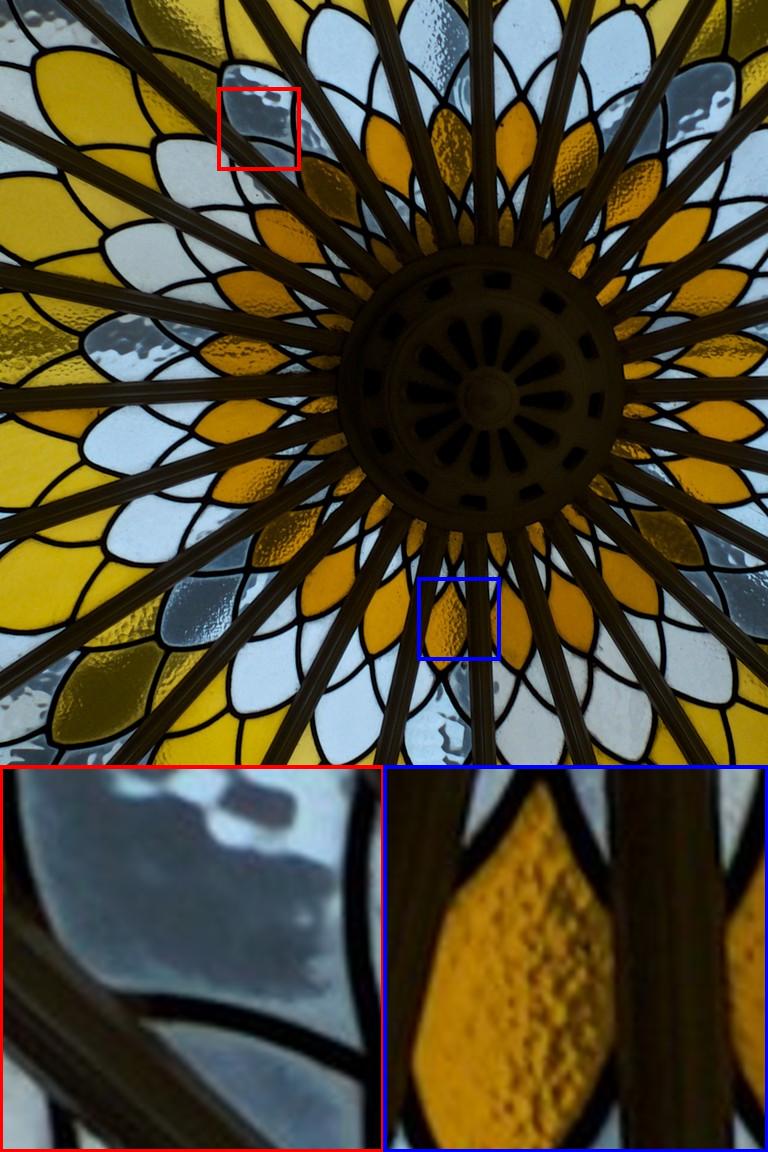} \\
					\scriptsize{26.99/0.724} & \scriptsize{29.69/0.804} & \scriptsize{29.21/0.789} & \scriptsize{30.42/0.830} & \scriptsize{30.19/0.800} &  \scriptsize{29.14/0.795} & \scriptsize{\textbf{30.88}/\textbf{0.864}} & \scriptsize{PSNR/SSIM} \\
					\includegraphics[width=18.5mm]{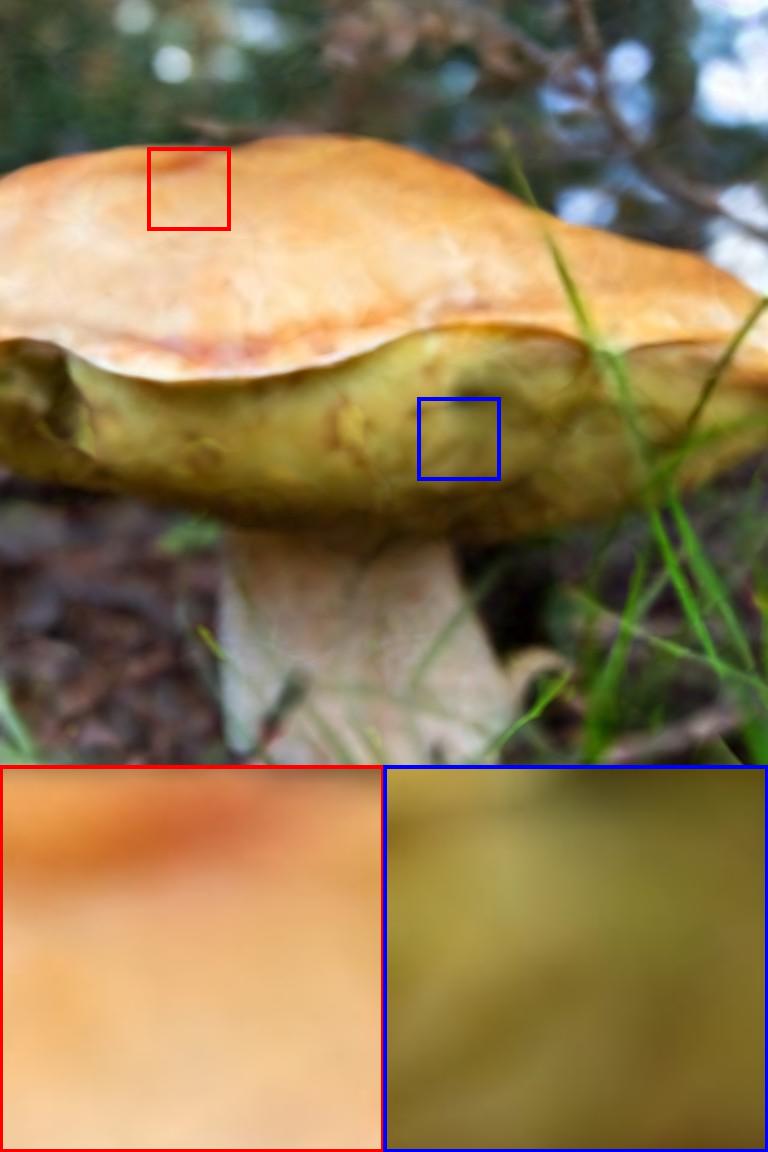} &
					\includegraphics[width=18.5mm]{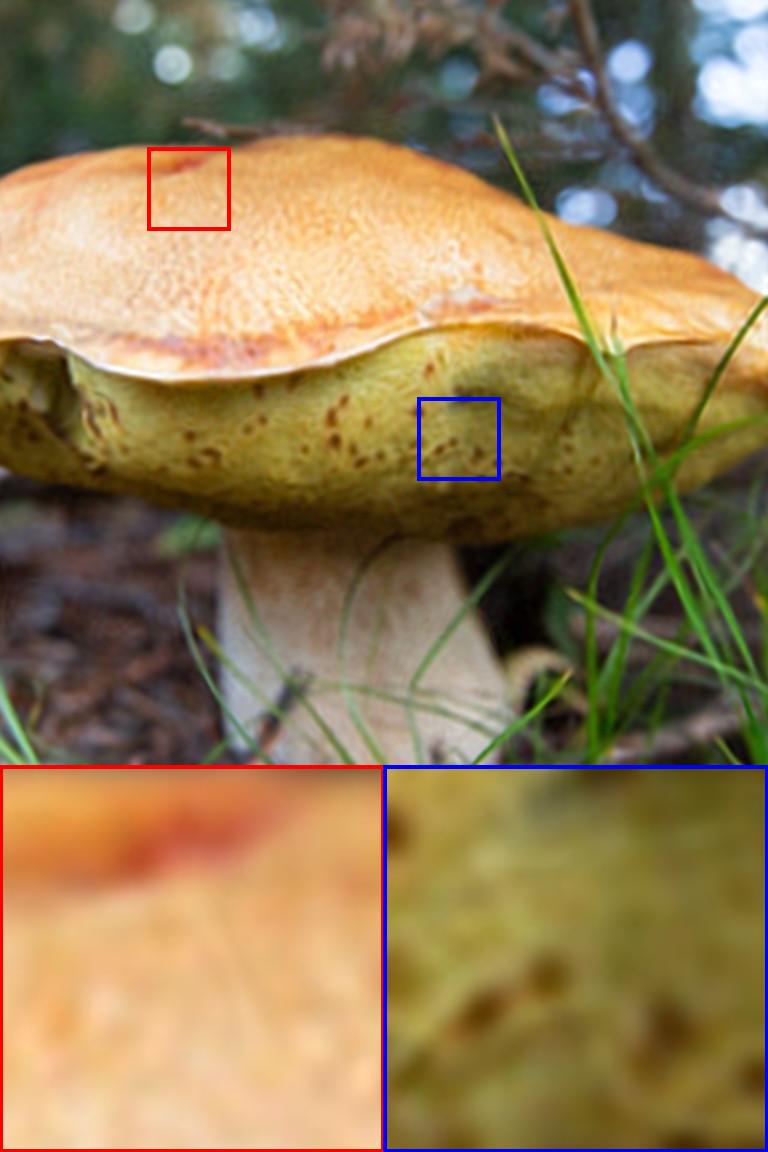} &
					\includegraphics[width=18.5mm]{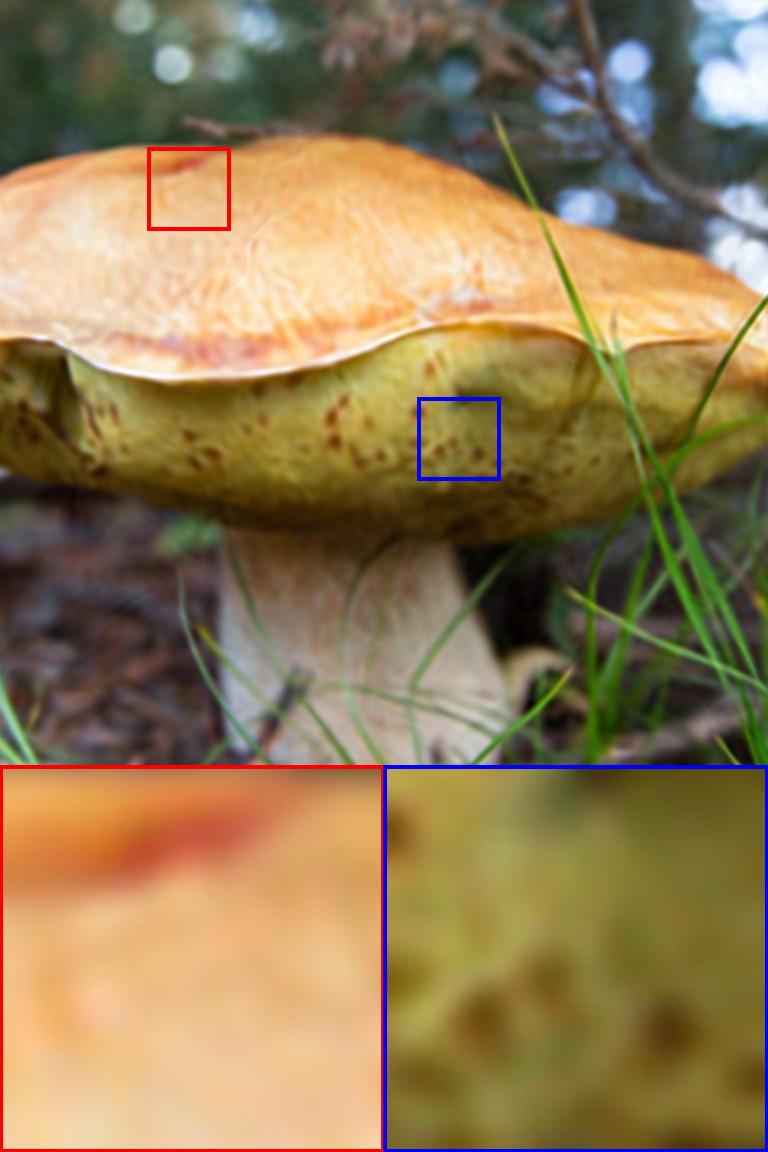} &
					\includegraphics[width=18.5mm]{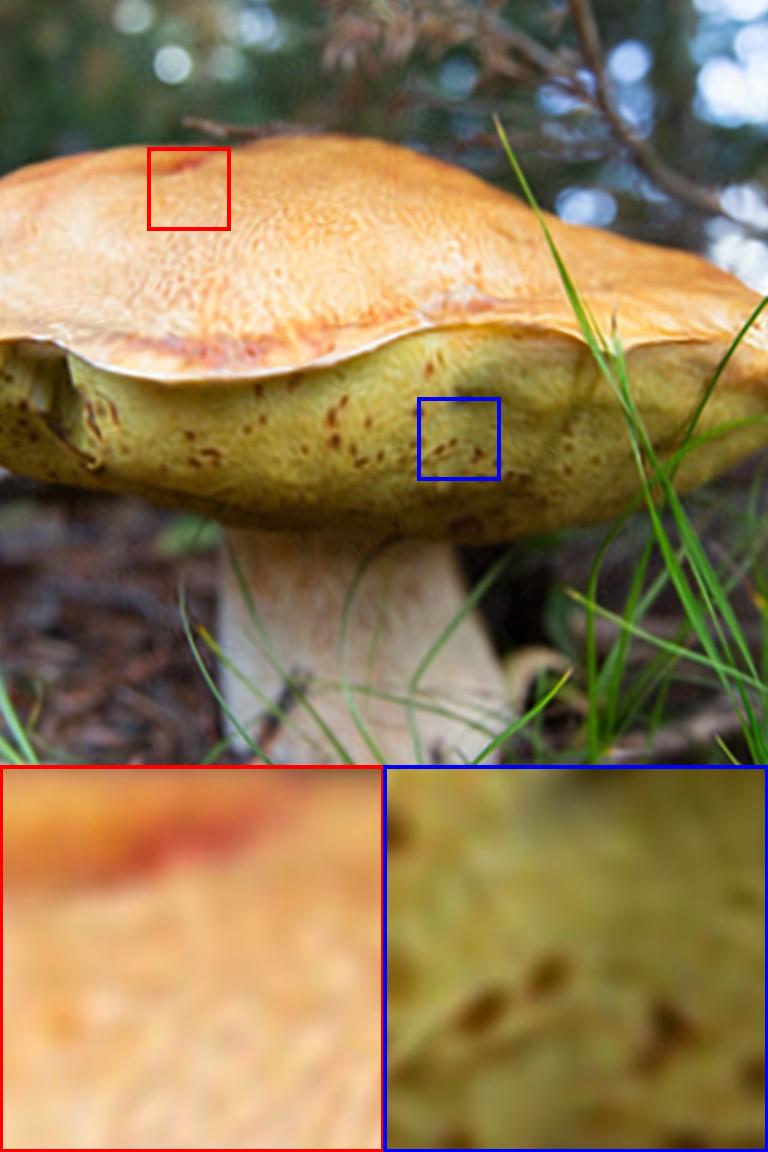} &
					\includegraphics[width=18.5mm]{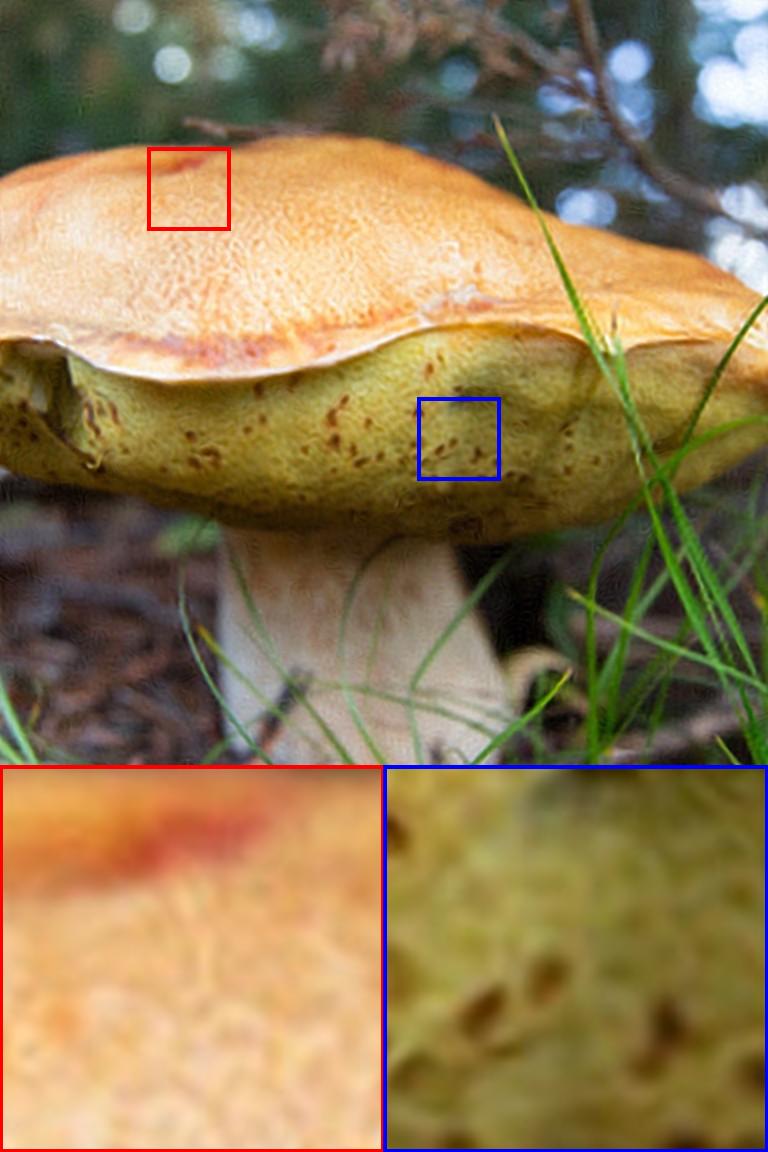} &
					\includegraphics[width=18.5mm]{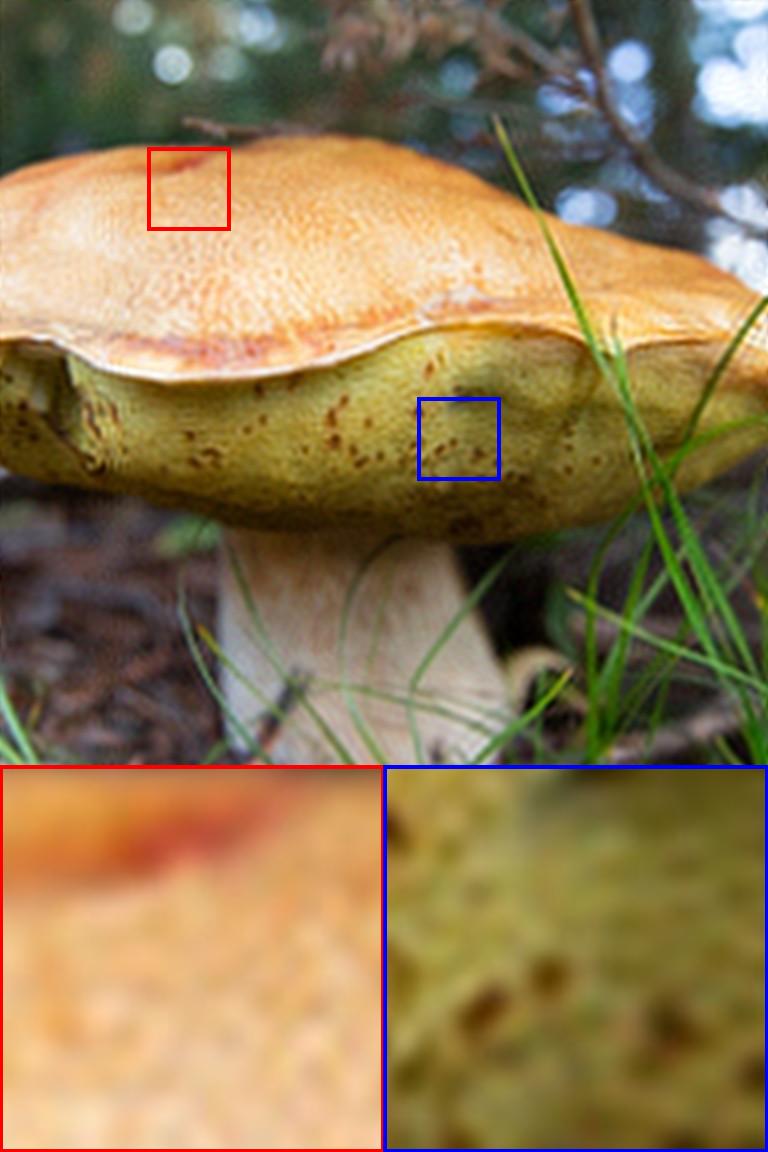} &
					\includegraphics[width=18.5mm]{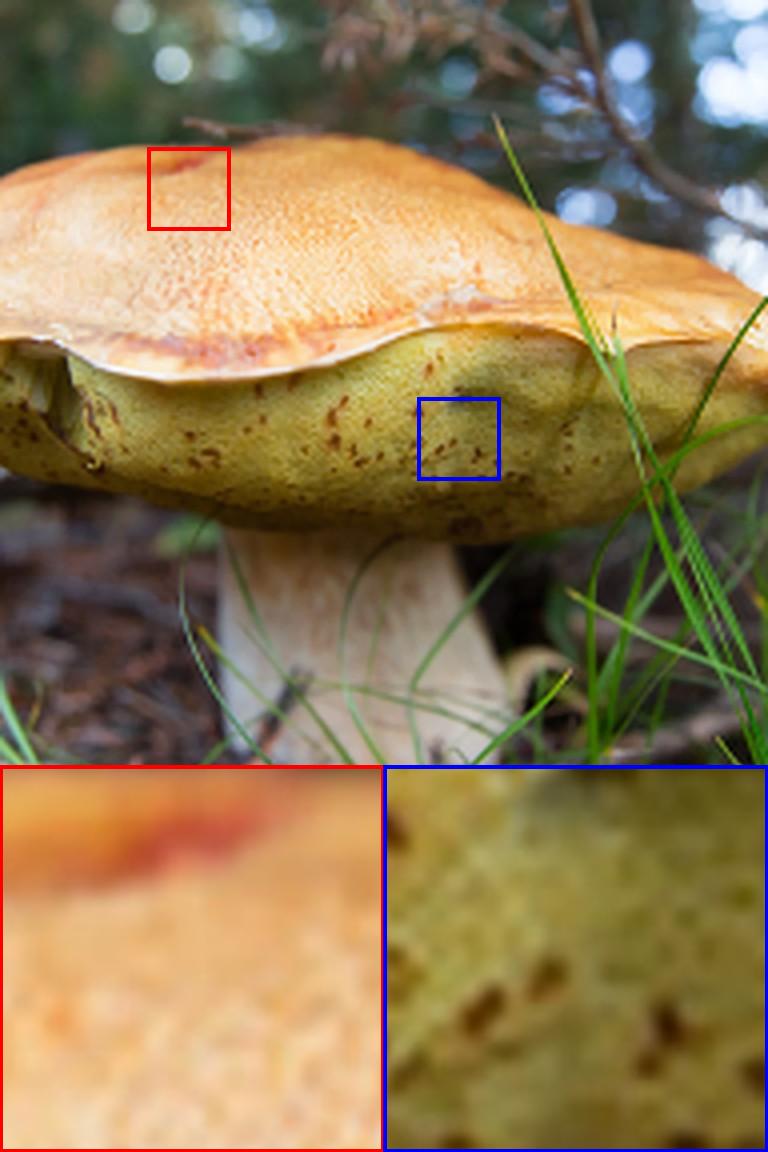} &
					\includegraphics[width=18.5mm]{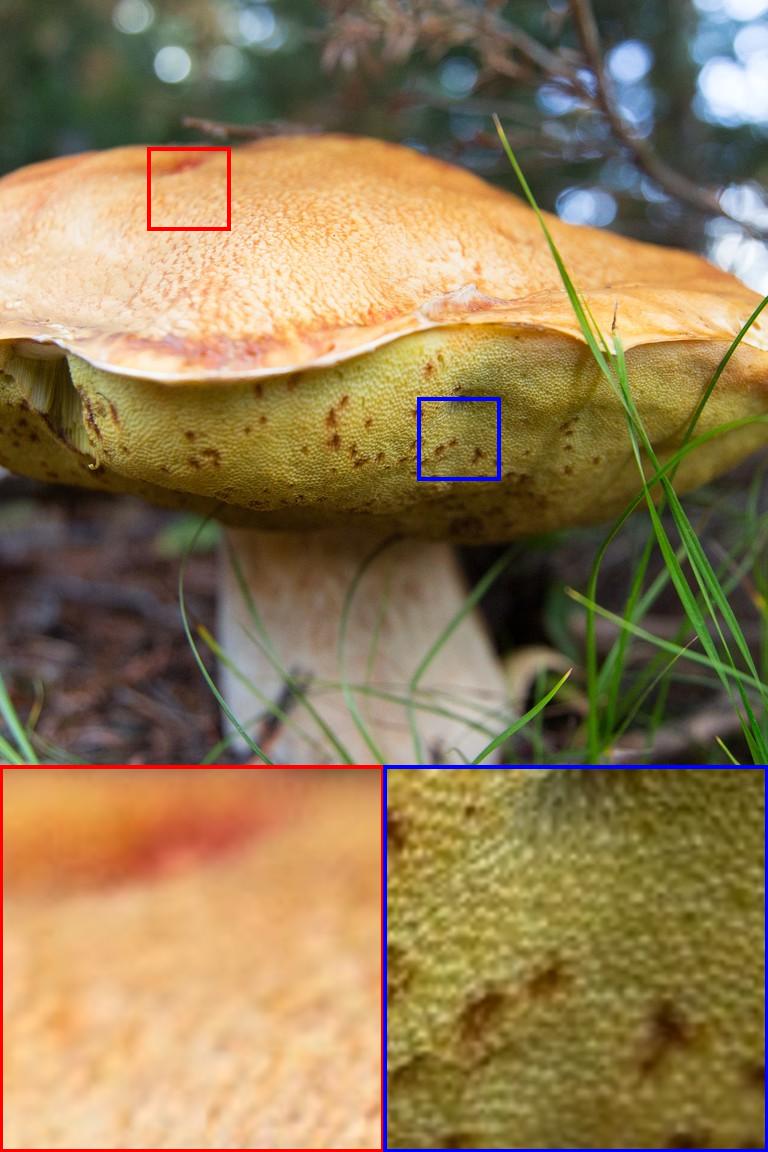} \\
					\scriptsize{22.30/0.588} & \scriptsize{26.79/0.793} & \scriptsize{26.14/0.763} & \scriptsize{27.71/0.828} & \scriptsize{27.12/0.788} &  \scriptsize{26.64/0.786} & \scriptsize{\textbf{28.09}/\textbf{0.868}} & \scriptsize{PSNR/SSIM} \\
					\includegraphics[width=18.5mm]{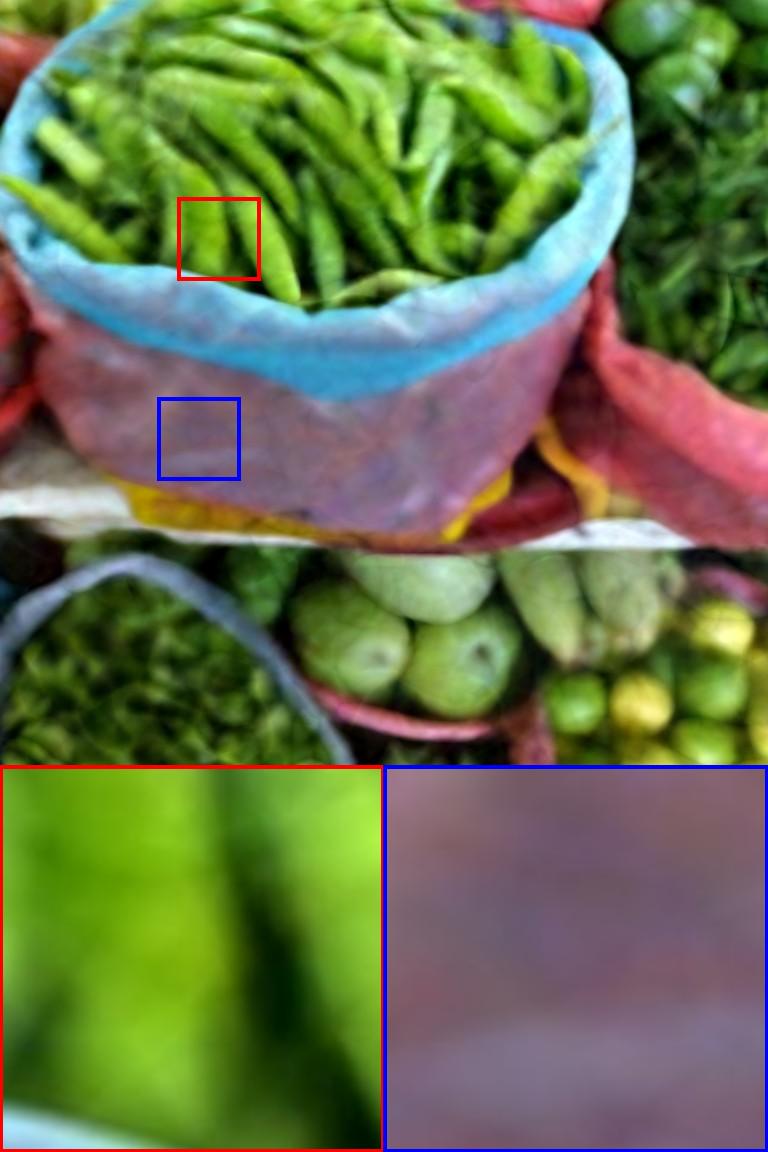} &
					\includegraphics[width=18.5mm]{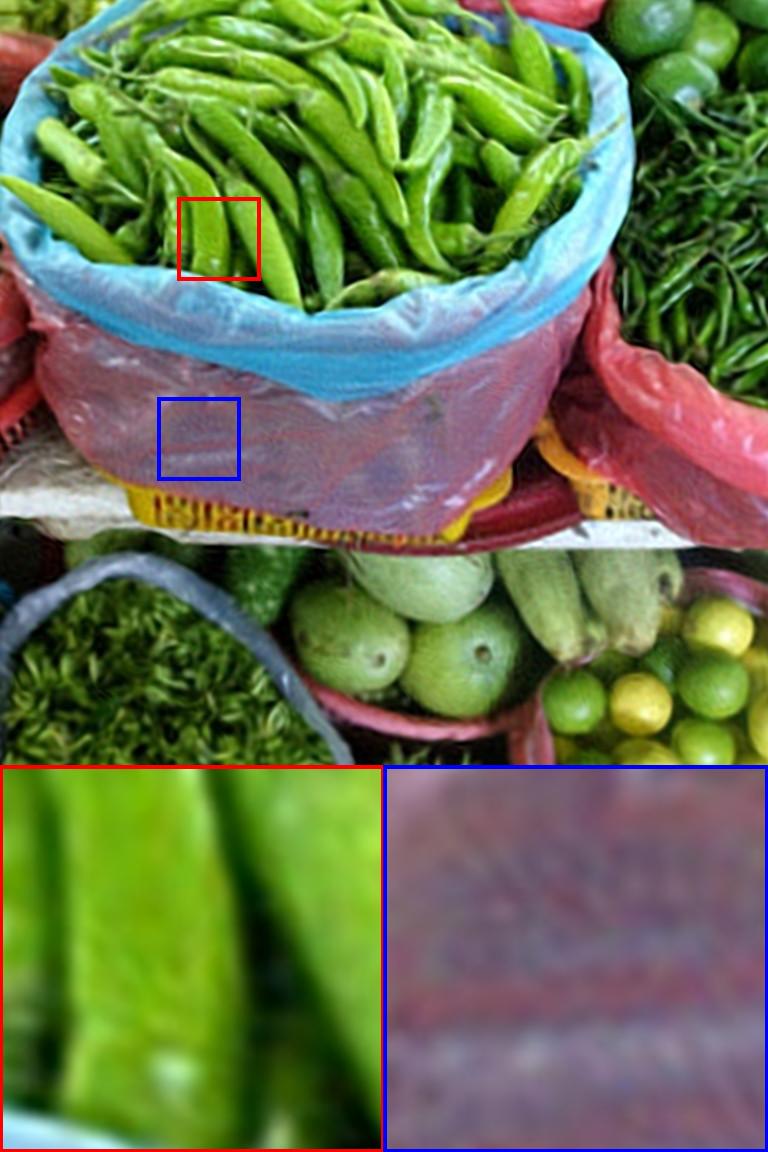} &
					\includegraphics[width=18.5mm]{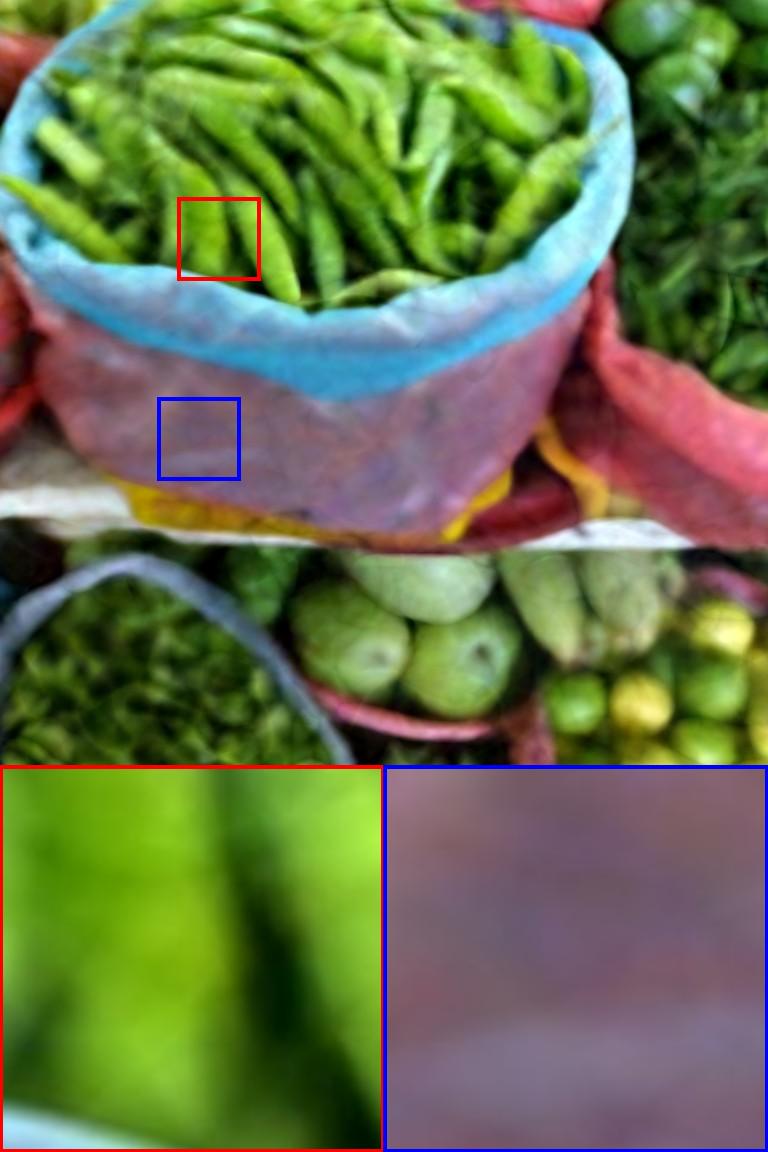} &
					\includegraphics[width=18.5mm]{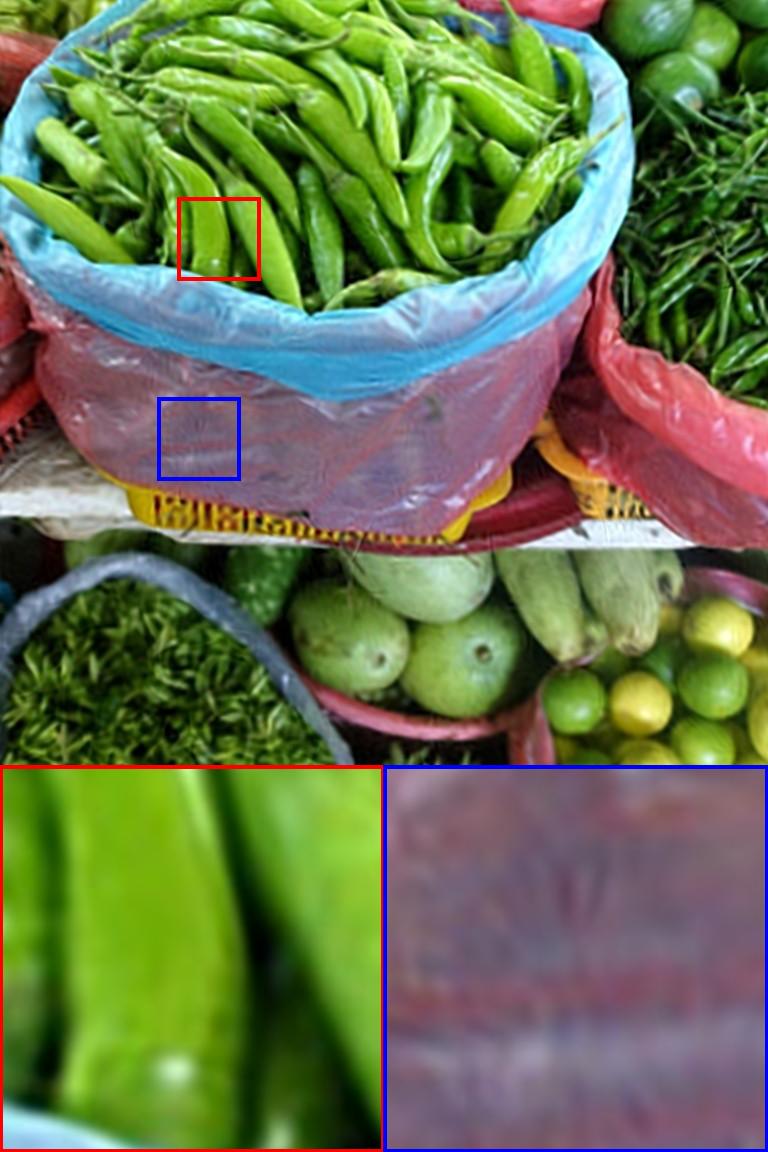} &
					\includegraphics[width=18.5mm]{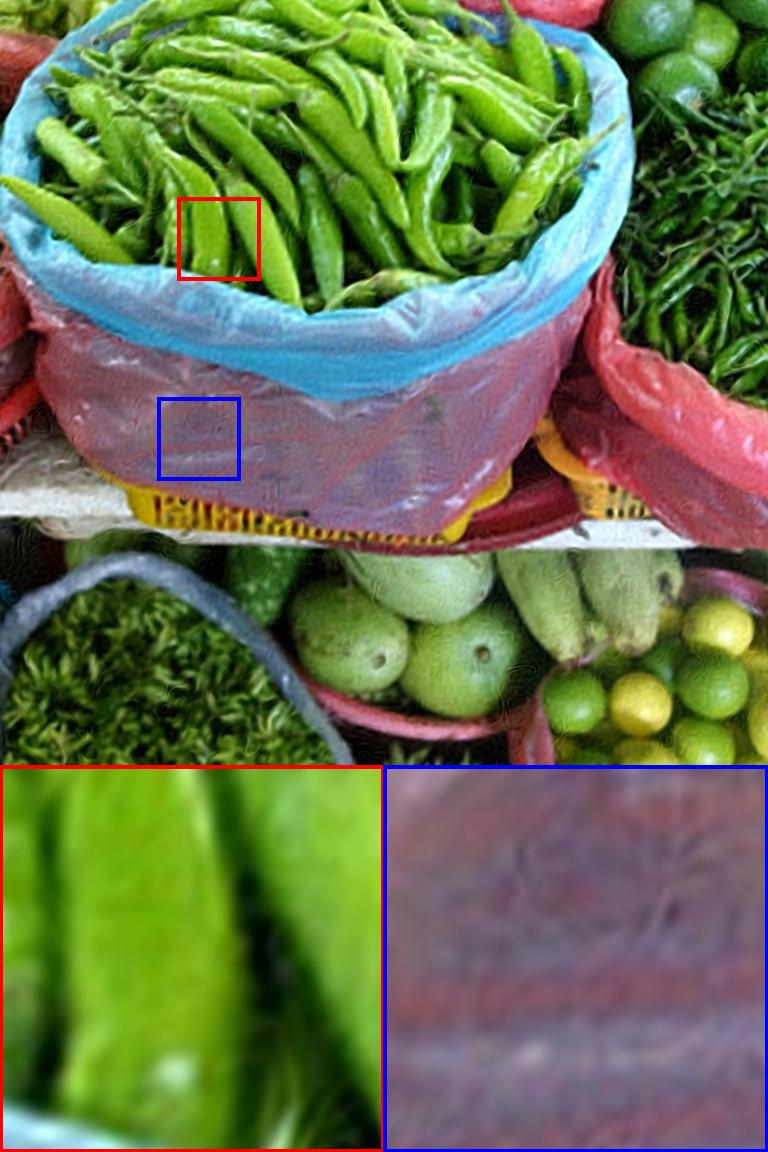} &
					\includegraphics[width=18.5mm]{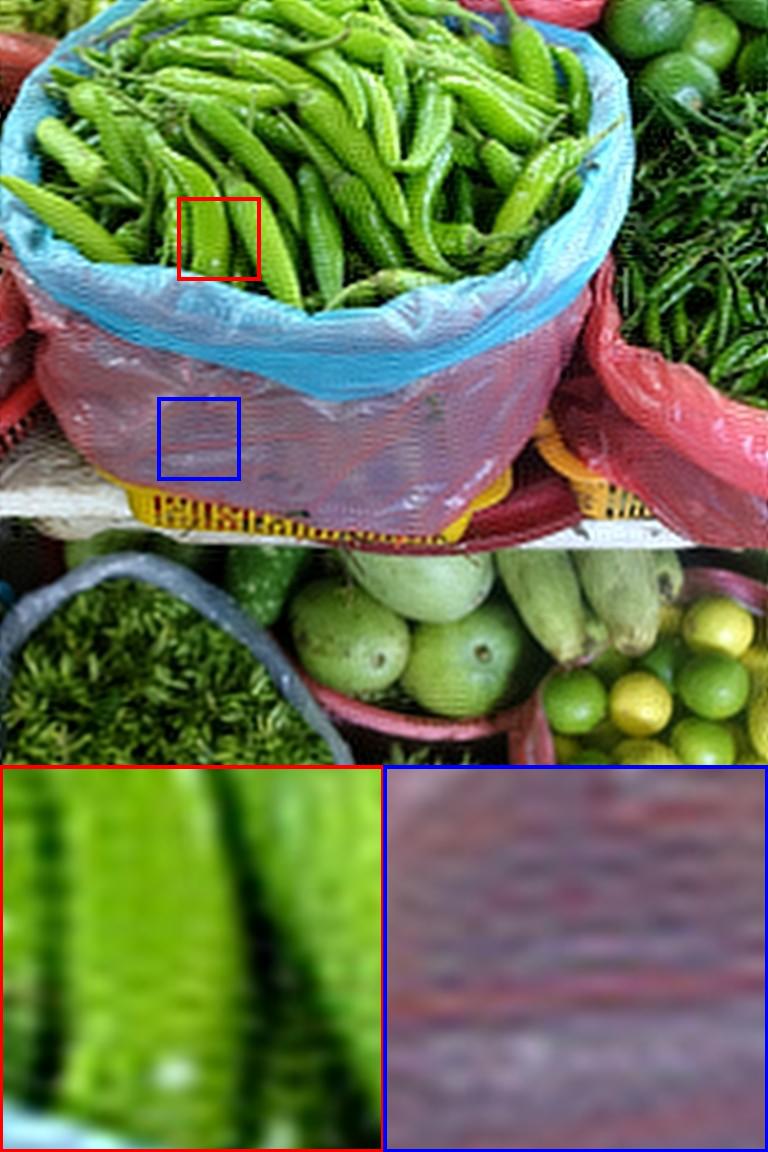} &
					\includegraphics[width=18.5mm]{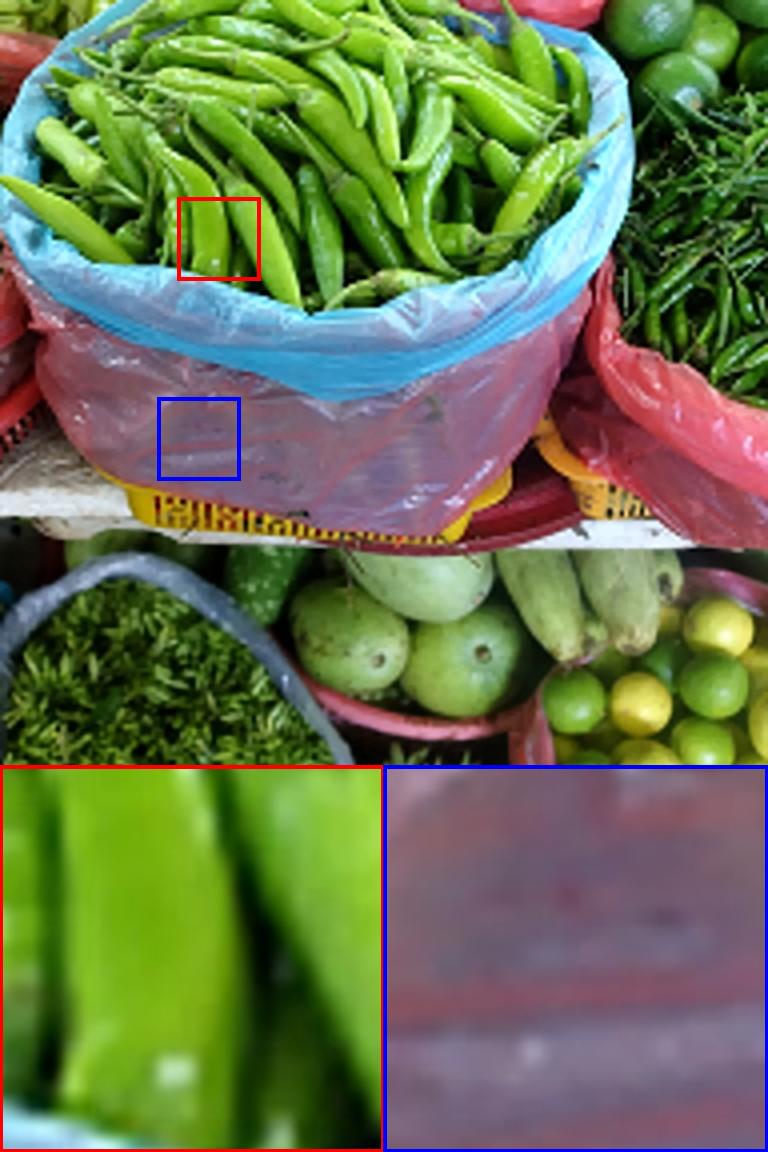} &
					\includegraphics[width=18.5mm]{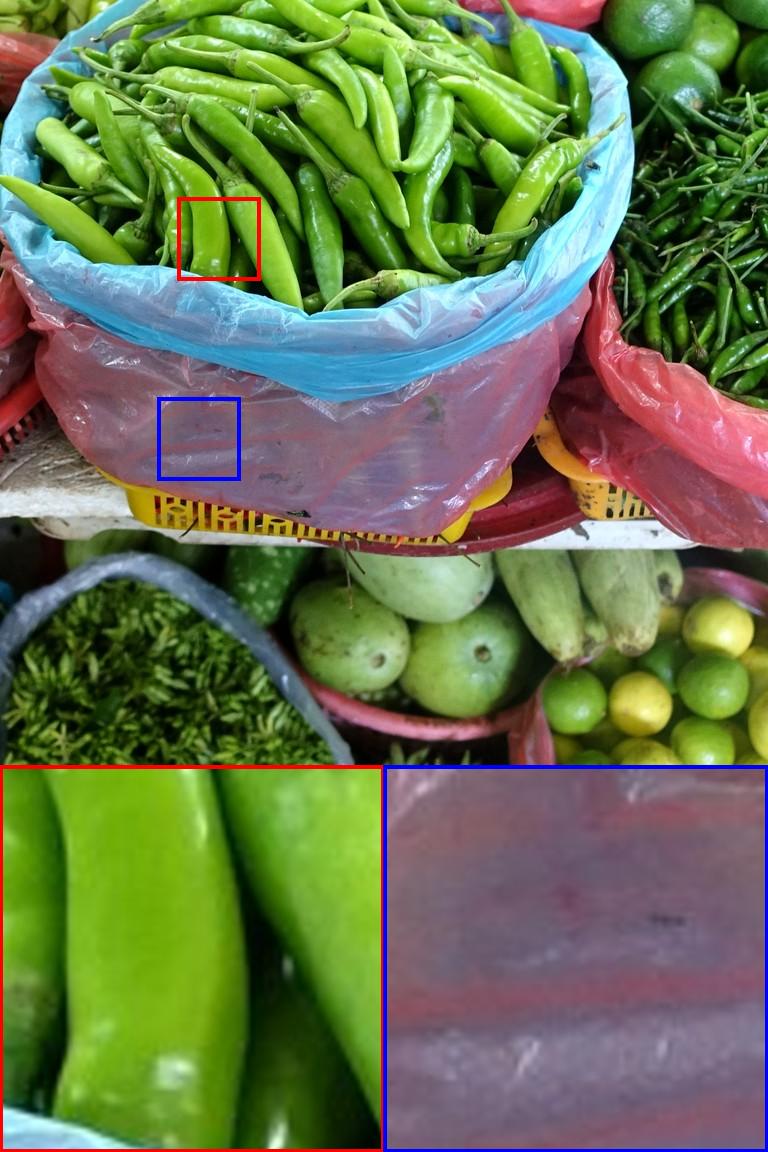} \\
					\scriptsize{24.78/0.575} & \scriptsize{25.79/0.609} & \scriptsize{25.40/0.591} & \scriptsize{26.59/0.643} & \scriptsize{26.35/0.616} &  \scriptsize{25.54/0.600} & \scriptsize{\textbf{26.87}/\textbf{0.673}} & \scriptsize{PSNR/SSIM} \\
					\includegraphics[width=18.5mm]{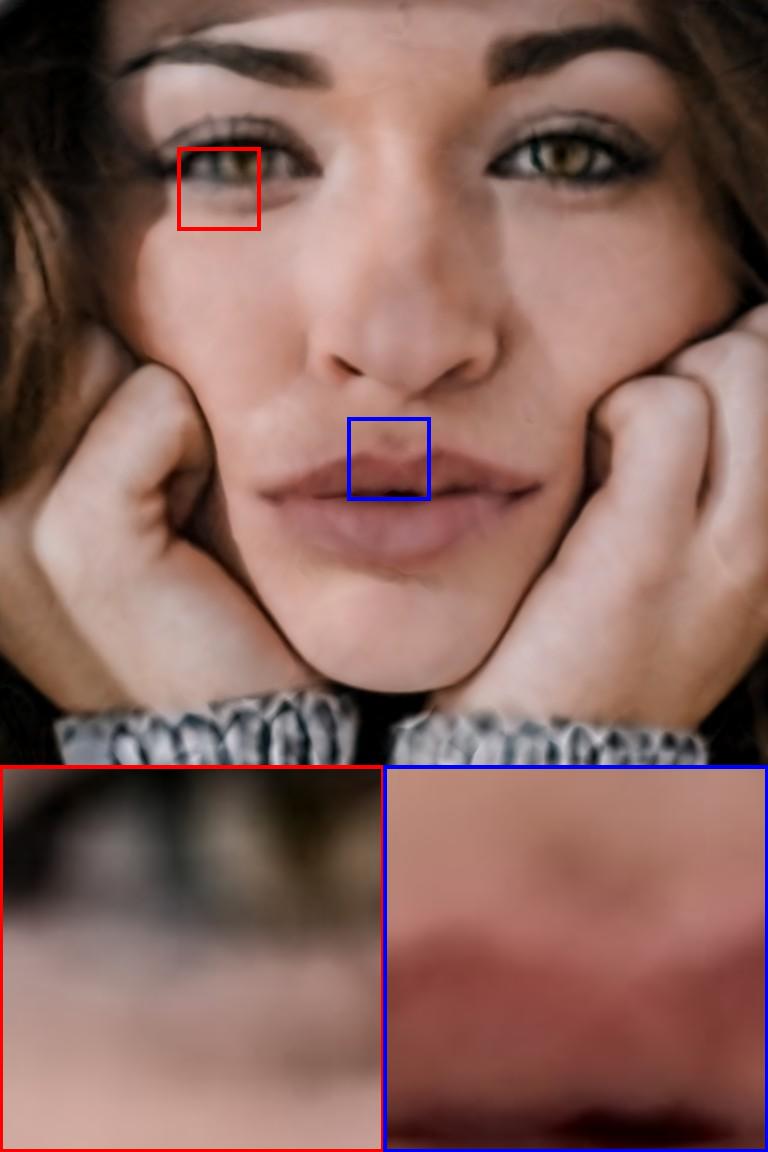} &
					\includegraphics[width=18.5mm]{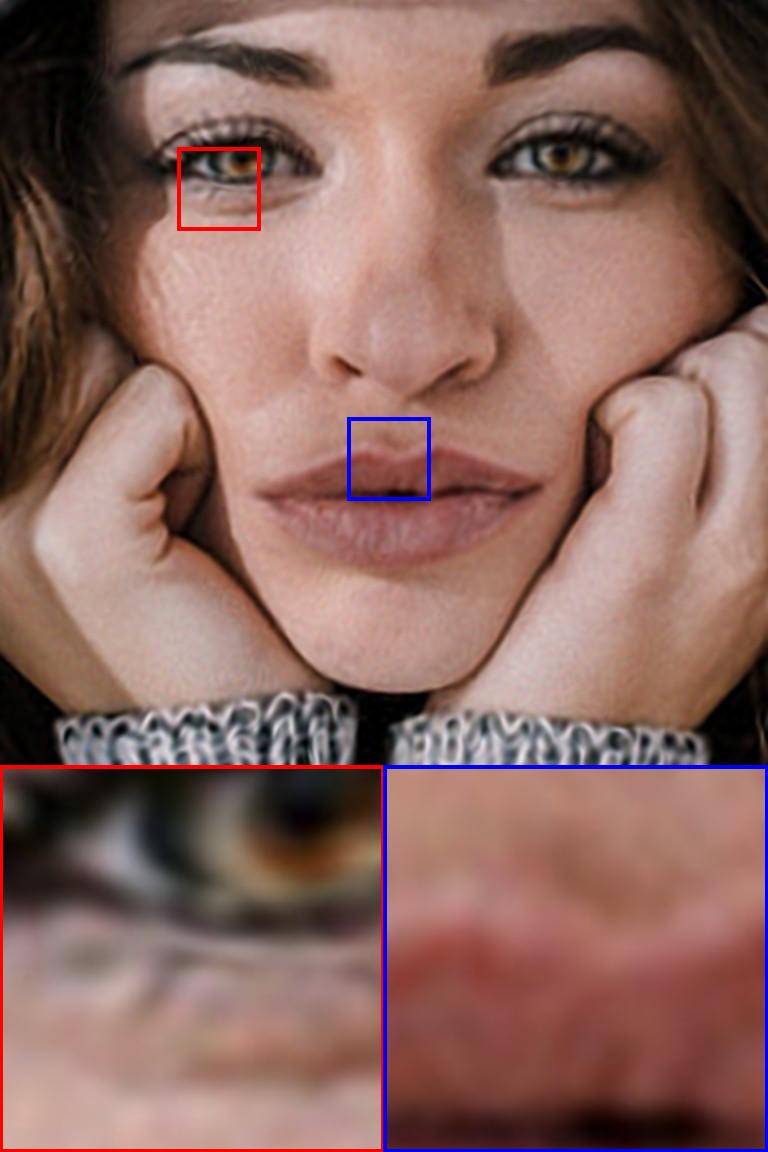} &
					\includegraphics[width=18.5mm]{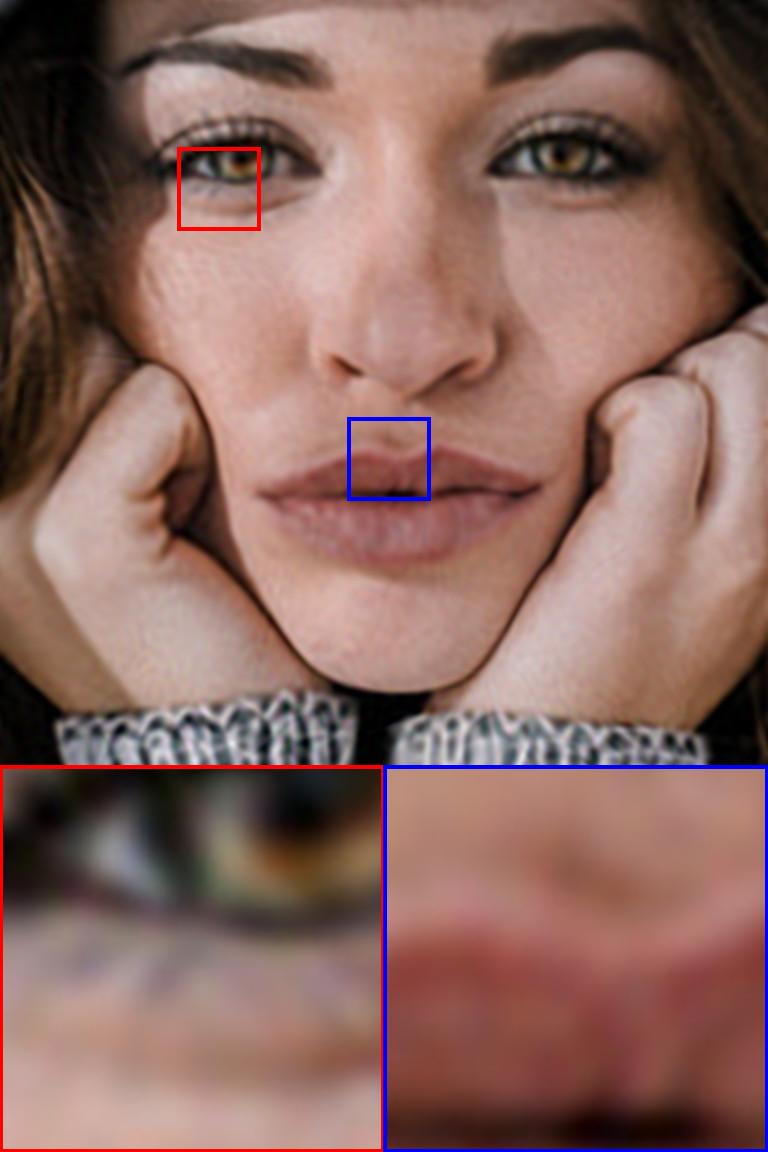} &
					\includegraphics[width=18.5mm]{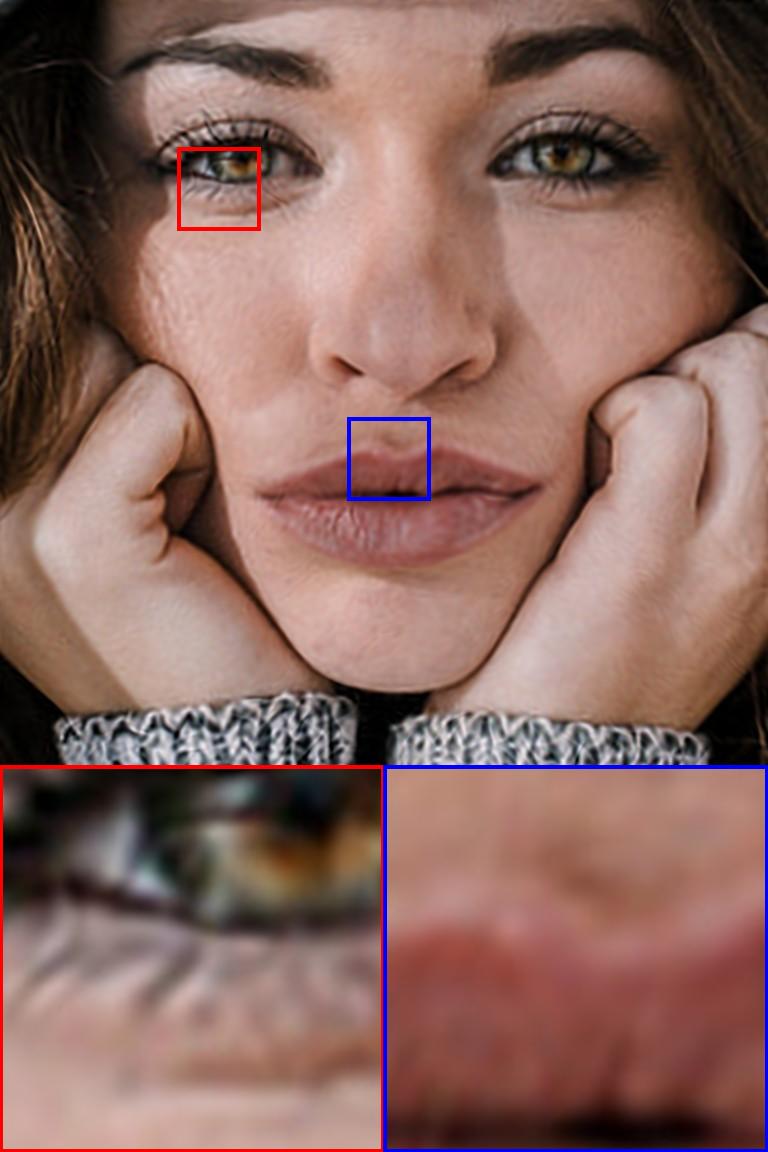} &
					\includegraphics[width=18.5mm]{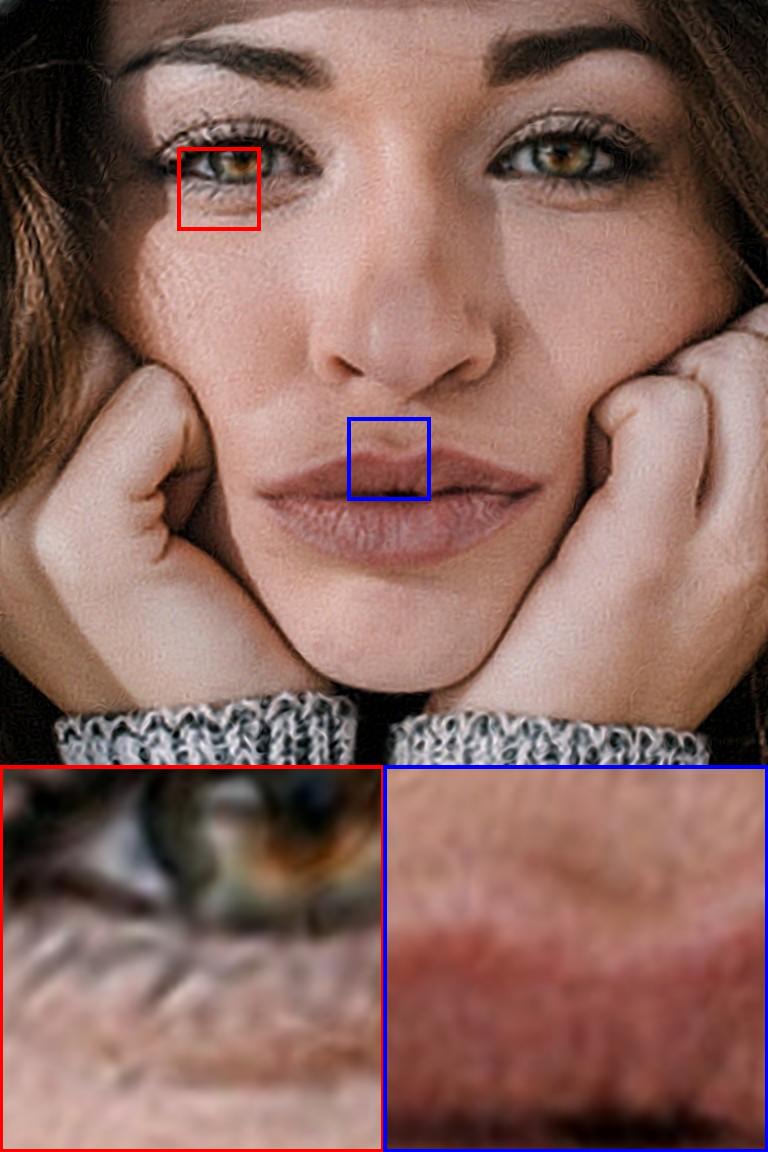} &
					\includegraphics[width=18.5mm]{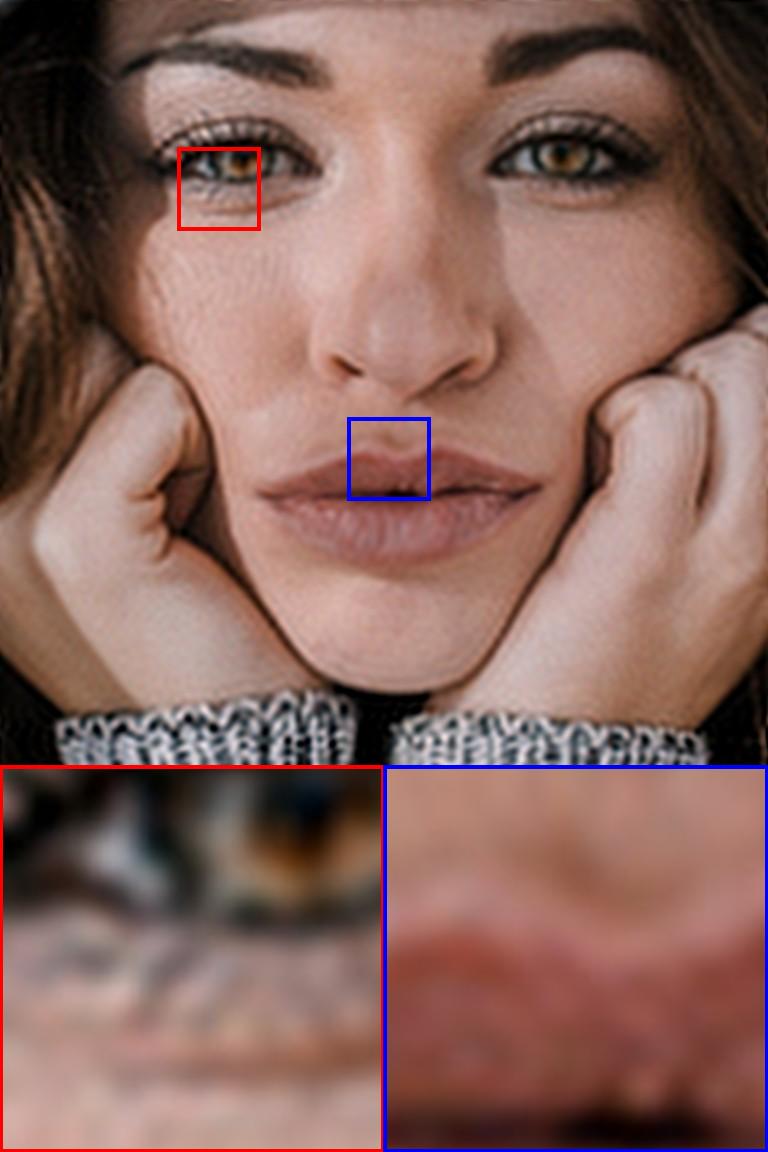} &
					\includegraphics[width=18.5mm]{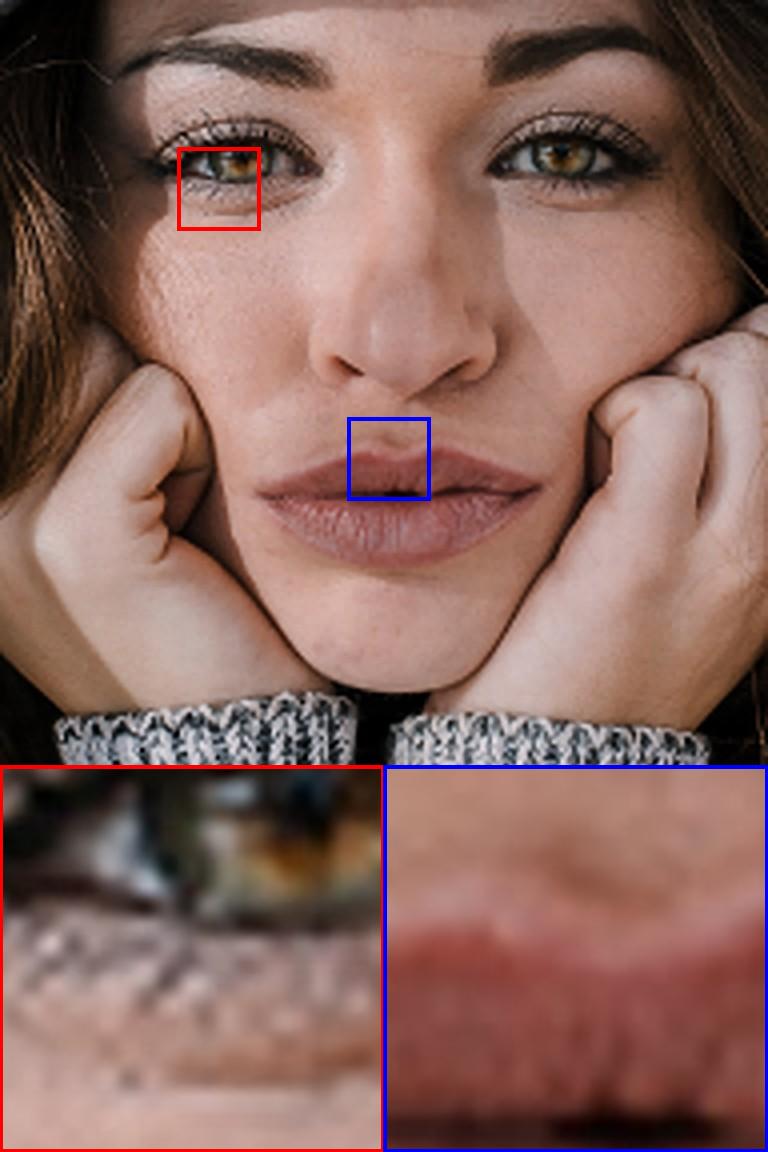} &
					\includegraphics[width=18.5mm]{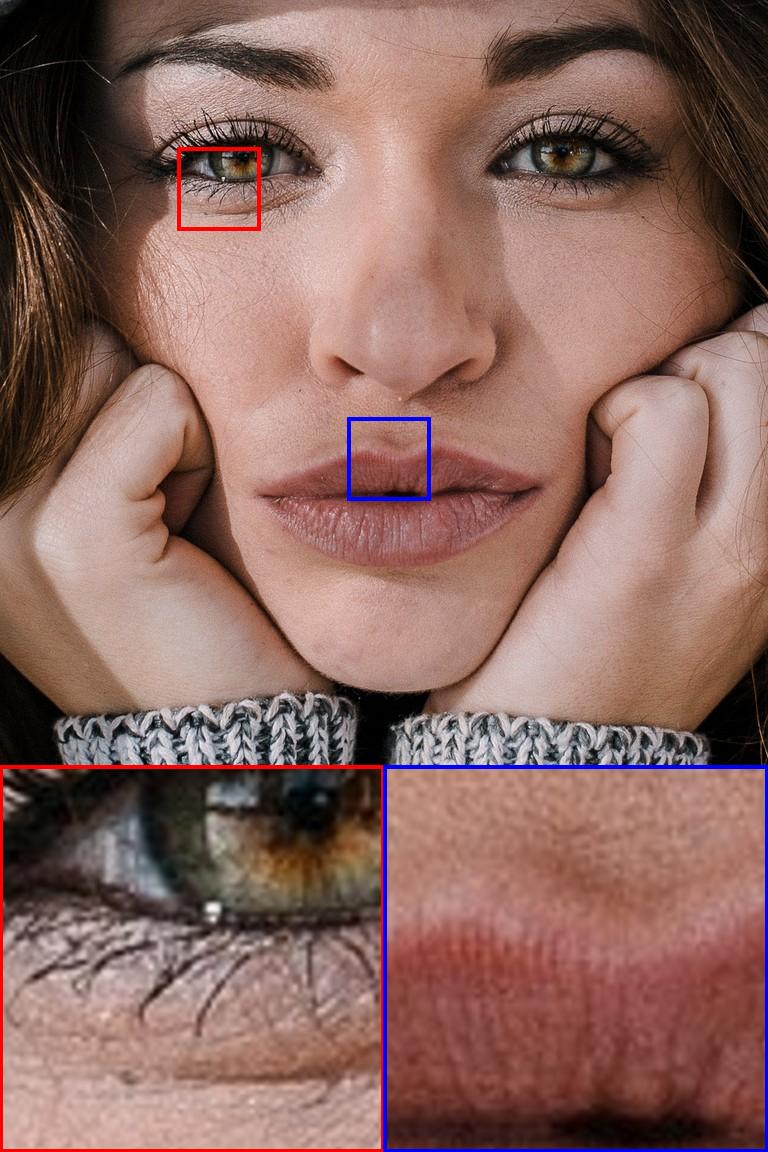} \\
					\scriptsize{23.19/0.633} & \scriptsize{24.85/0.740} & \scriptsize{23.88/0.726} & \scriptsize{25.75/0.790} & \scriptsize{25.29/0.749} &  \scriptsize{24.54/0.743} & \scriptsize{\textbf{26.15}/\textbf{0.843}} & \scriptsize{PSNR/SSIM} \\
					\includegraphics[width=18.5mm]{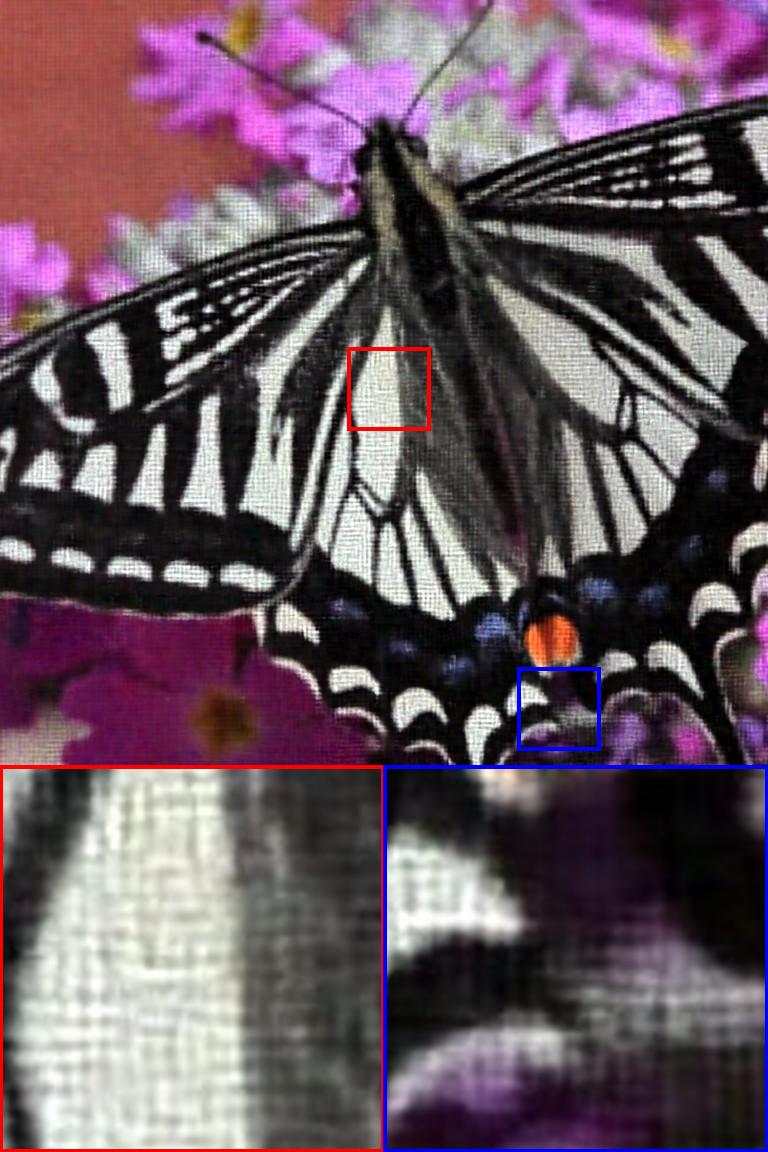} &
					\includegraphics[width=18.5mm]{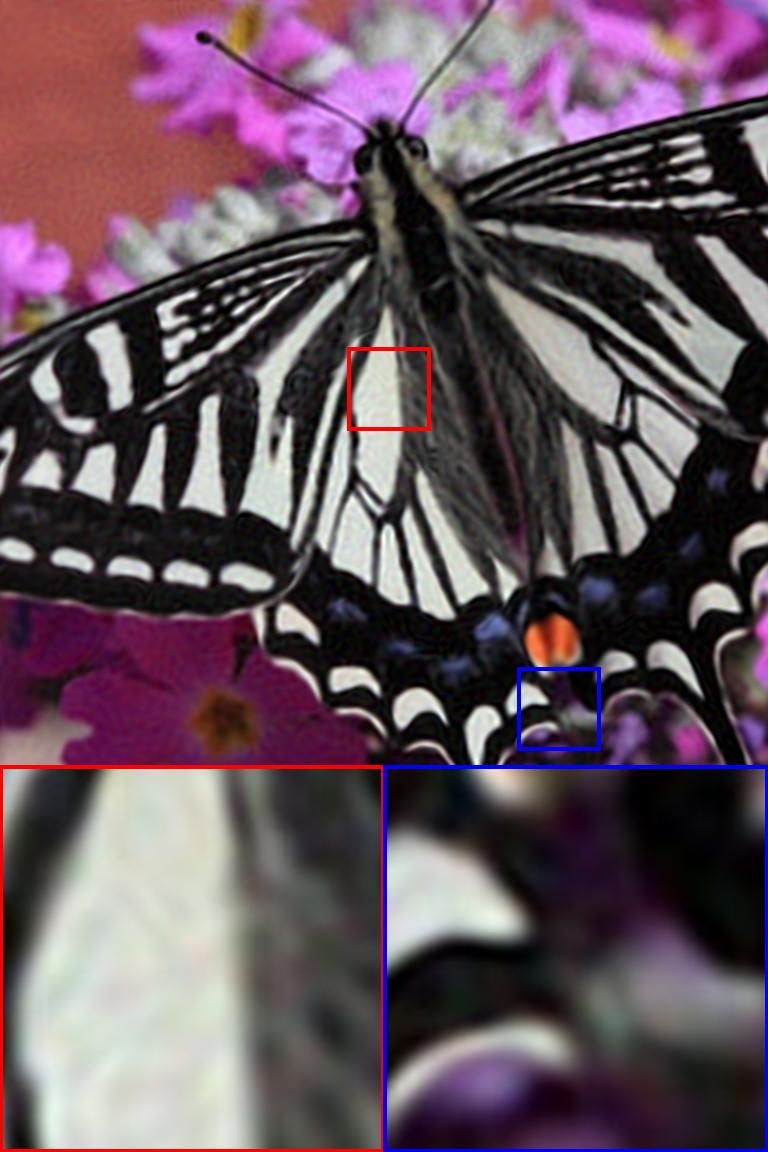} &
					\includegraphics[width=18.5mm]{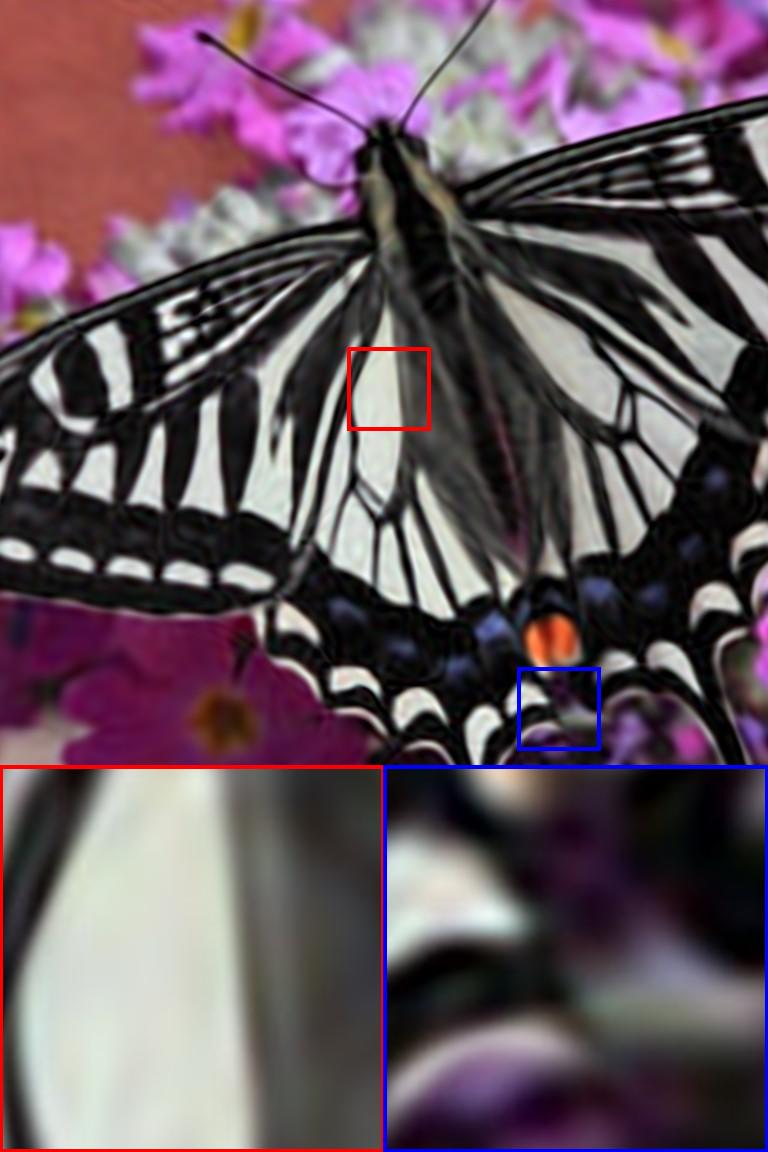} &
					\includegraphics[width=18.5mm]{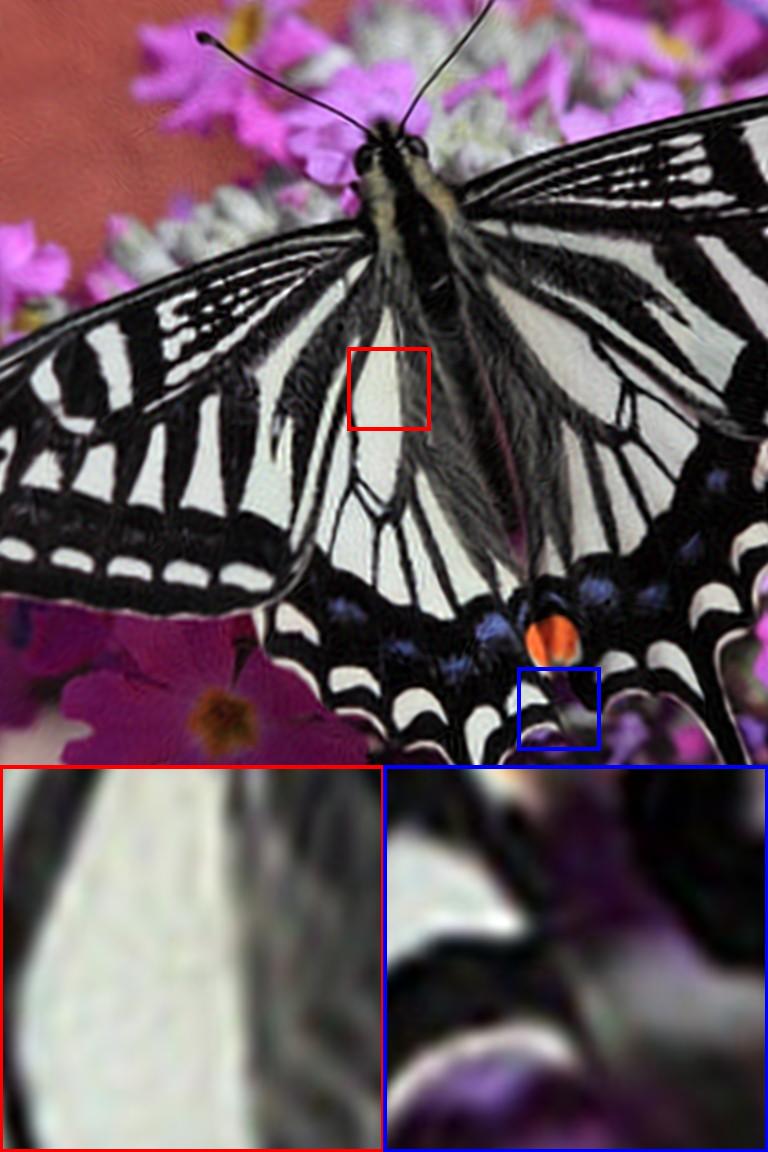} &
					\includegraphics[width=18.5mm]{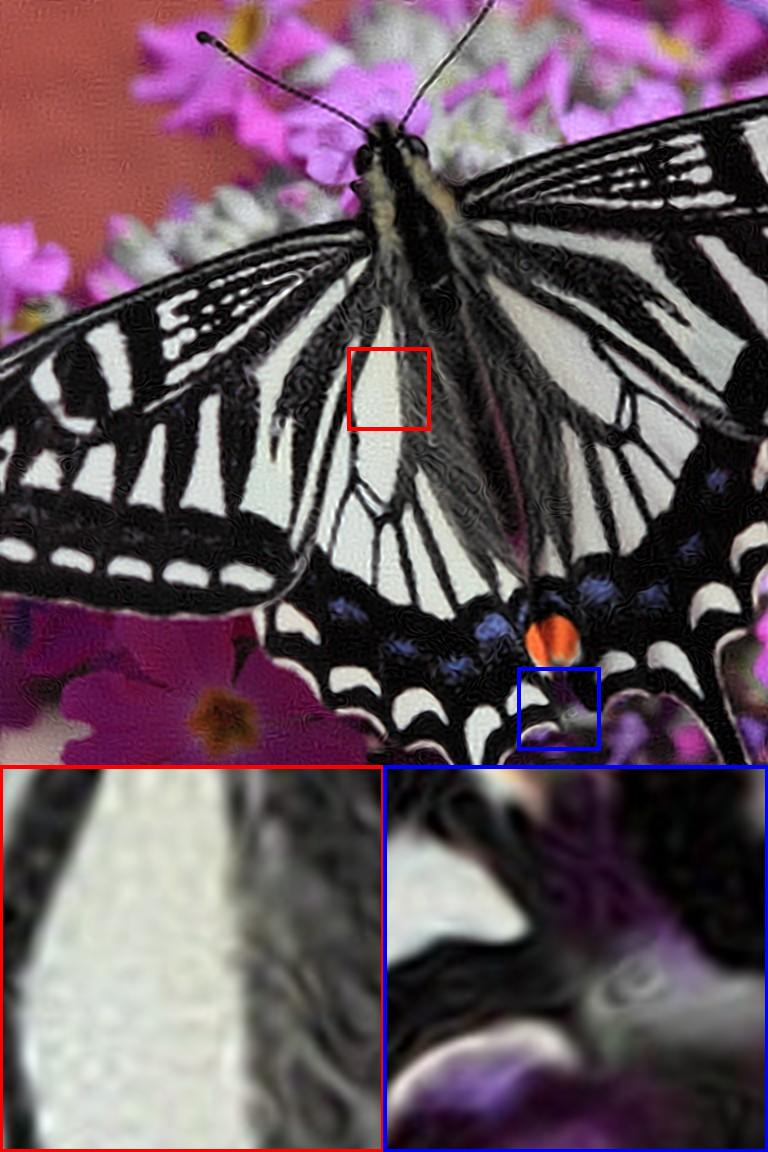} &
					\includegraphics[width=18.5mm]{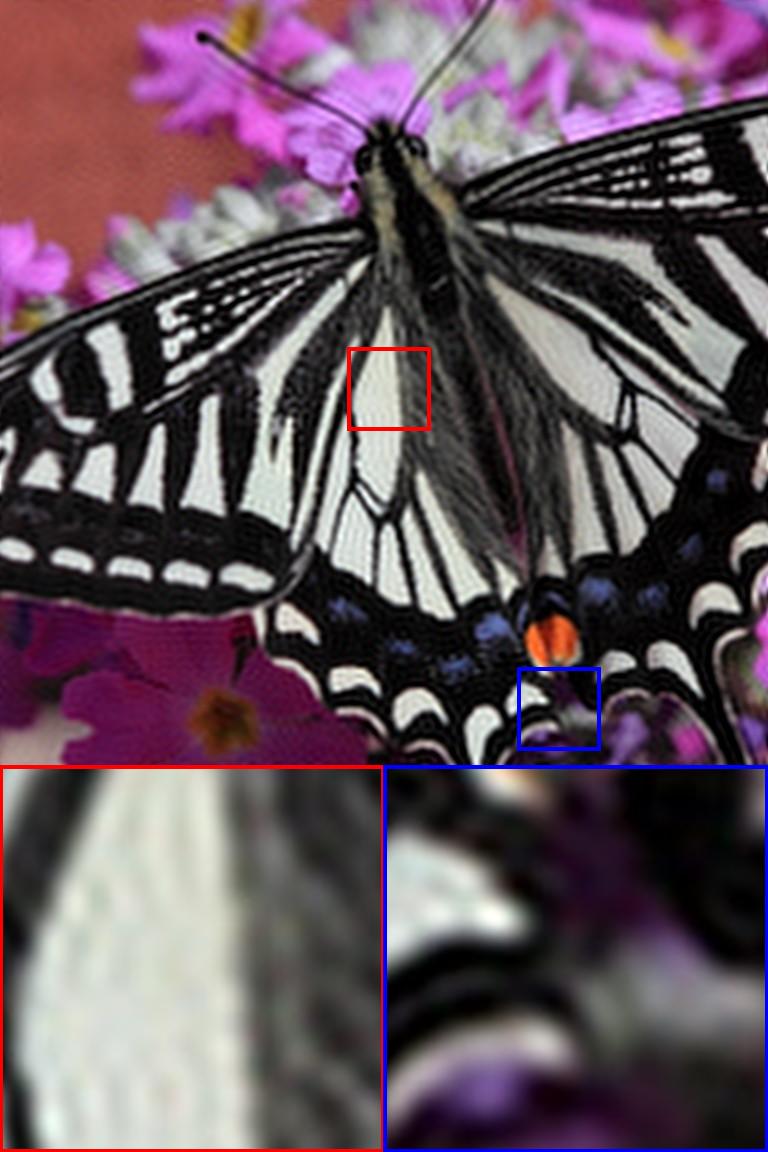} &
					\includegraphics[width=18.5mm]{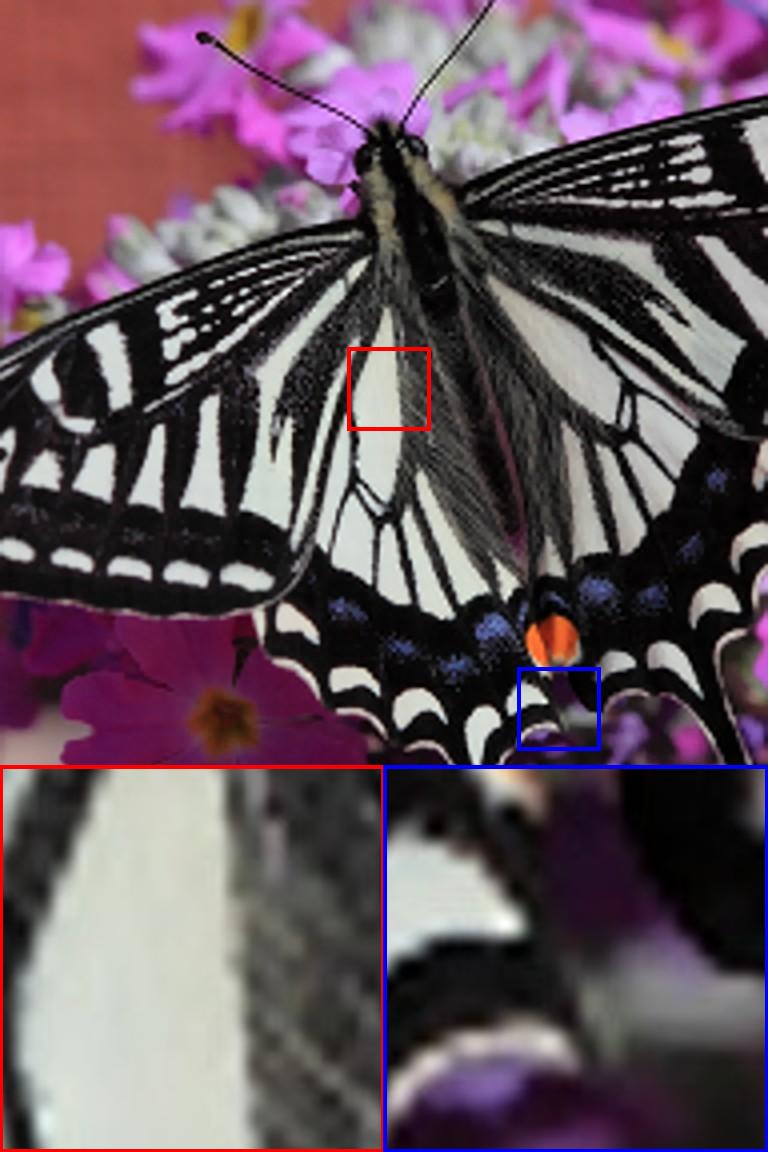} &
					\includegraphics[width=18.5mm]{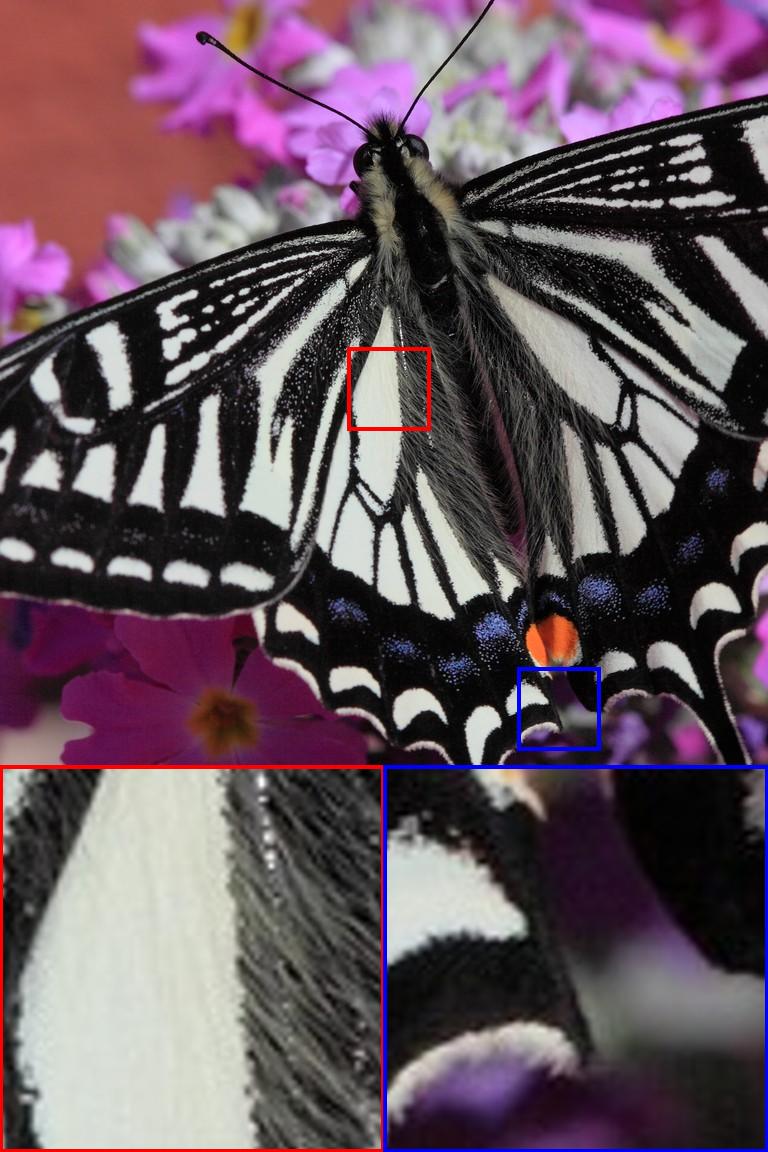} \\
					\scriptsize PEMLP & \scriptsize SIREN & \scriptsize Gauss & \scriptsize WIRE & \scriptsize FINER & \scriptsize LRTFR & \scriptsize Ours & \scriptsize{Ground truth}
			\end{tabular}}
			\caption{Visual super-resolution results at $\times4$ scaling on the DIV2K dataset.}
			\label{fig:super_resolution_supp}
		\end{figure*}
		
		\noindent{\bf Extended Point Cloud Recovery Results. }Fig.~\ref{fig:point_cloud_supp} presents extended visual results for point cloud recovery under different SRs. We compare the proposed RepTRFD with several INR (PEMLP~\cite{mildenhall2021nerf}, SIREN~\cite{sitzmann2020implicit}, Gauss~\cite{ramasinghe2022beyond}, WIRE~\cite{saragadam2023wire}, and FINER~\cite{liu2024finer}) and tensor functional representation methods (LRTFR~\cite{luo2023low}). Across all tested SRs, our method consistently reconstructs more accurate surface geometry and fine structures, as reflected by the lower NRMSE values shown above each reconstruction. These results demonstrate the superior performance of RepTRFD in point cloud recovery.
		
		\begin{figure*}[tb]
			\renewcommand{\arraystretch}{1}
			\setlength\tabcolsep{0.5pt}
			\centering
			\resizebox{\linewidth}{!}{
				\begin{tabular}{ccccccccc}
					\scriptsize{NRMSE} & \scriptsize{NRMSE 0.079} & \scriptsize{NRMSE 0.077} & \scriptsize{NRMSE 0.080} & \scriptsize{NRMSE 0.076} & \scriptsize{NRMSE 0.076} &  \scriptsize{NRMSE 0.074} & \scriptsize{NRMSE \textbf{0.066}} & \scriptsize{NRMSE \textbf{0.000}} \\
					\includegraphics[width=18.5mm]{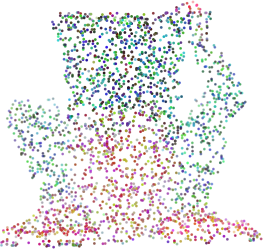} & \includegraphics[width=18.5mm]{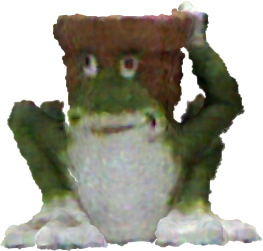} &
					\includegraphics[width=18.5mm]{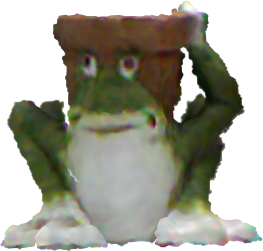} &
					\includegraphics[width=18.5mm]{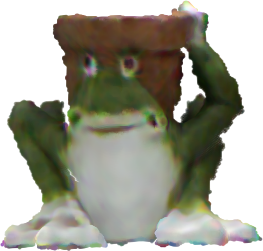} &
					\includegraphics[width=18.5mm]{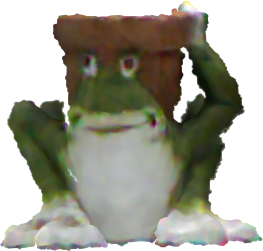} &
					\includegraphics[width=18.5mm]{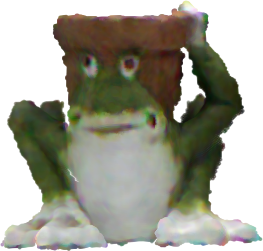} &
					\includegraphics[width=18.5mm]{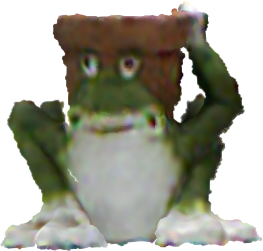} &
					\includegraphics[width=18.5mm]{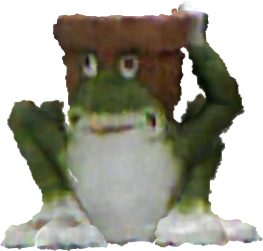} &
					\includegraphics[width=18.5mm]{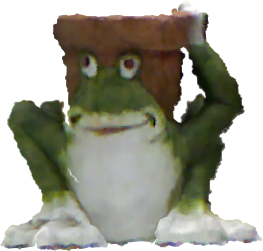} \\
					\scriptsize{NRMSE} & \scriptsize{NRMSE 0.086} & \scriptsize{NRMSE 0.085} & \scriptsize{NRMSE 0.086} & \scriptsize{NRMSE 0.084} & \scriptsize{NRMSE 0.085} &  \scriptsize{NRMSE 0.088} & \scriptsize{NRMSE \textbf{0.072}} & \scriptsize{NRMSE \textbf{0.000}} \\
					\includegraphics[width=18.5mm]{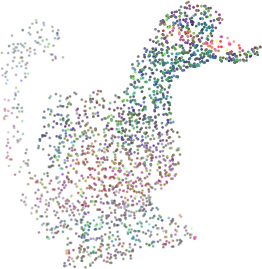} & \includegraphics[width=18.5mm]{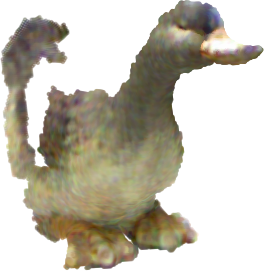} &
					\includegraphics[width=18.5mm]{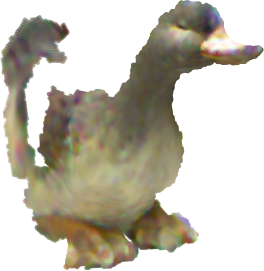} &
					\includegraphics[width=18.5mm]{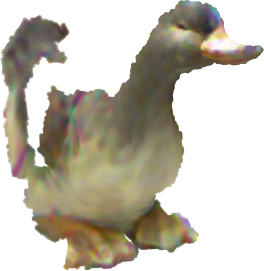} &
					\includegraphics[width=18.5mm]{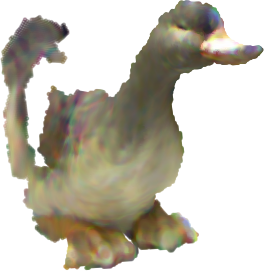} &
					\includegraphics[width=18.5mm]{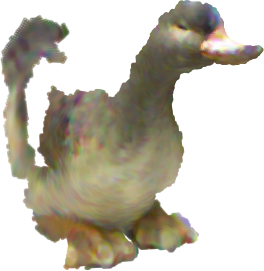} &
					\includegraphics[width=18.5mm]{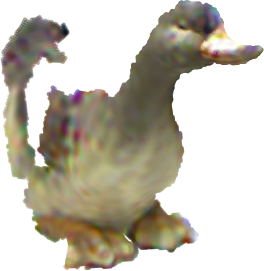} &
					\includegraphics[width=18.5mm]{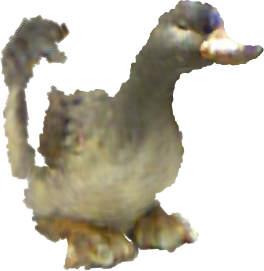} &
					\includegraphics[width=18.5mm]{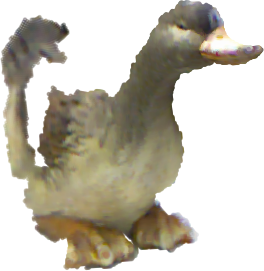} \\
					\scriptsize{NRMSE} & \scriptsize{NRMSE 0.094} & \scriptsize{NRMSE 0.085} & \scriptsize{NRMSE 0.083} & \scriptsize{NRMSE 0.080} & \scriptsize{NRMSE 0.082} &  \scriptsize{NRMSE 0.093} & \scriptsize{NRMSE \textbf{0.079}} & \scriptsize{NRMSE \textbf{0.000}} \\
					\includegraphics[width=18.5mm]{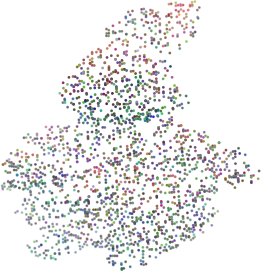} & \includegraphics[width=18.5mm]{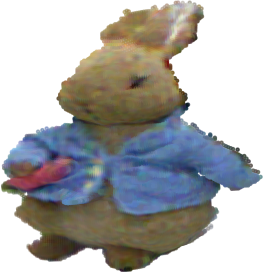} &
					\includegraphics[width=18.5mm]{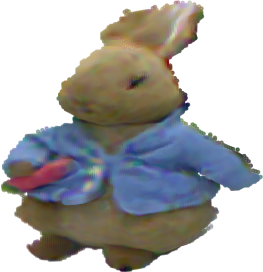} &
					\includegraphics[width=18.5mm]{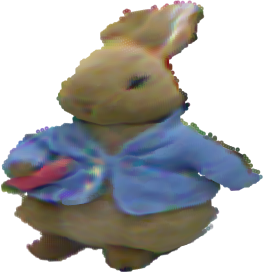} &
					\includegraphics[width=18.5mm]{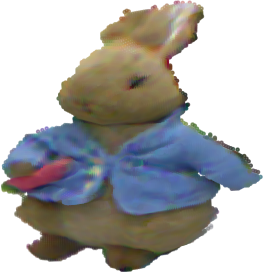} &
					\includegraphics[width=18.5mm]{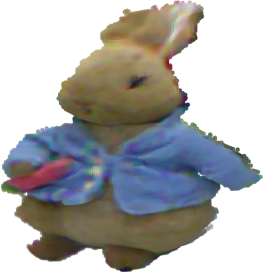} &
					\includegraphics[width=18.5mm]{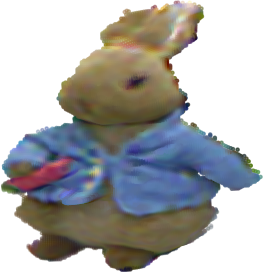} &
					\includegraphics[width=18.5mm]{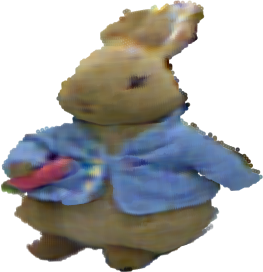} &
					\includegraphics[width=18.5mm]{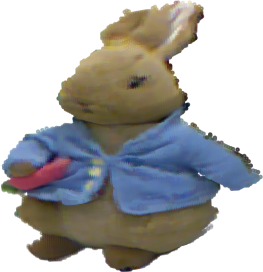} \\
					\scriptsize{NRMSE} & \scriptsize{NRMSE 0.114} & \scriptsize{NRMSE 0.104} & \scriptsize{NRMSE 0.109} & \scriptsize{NRMSE 0.100} & \scriptsize{NRMSE 0.102} &  \scriptsize{NRMSE 0.112} & \scriptsize{NRMSE \textbf{0.092}} & \scriptsize{NRMSE \textbf{0.000}} \\
					\includegraphics[width=18.5mm]{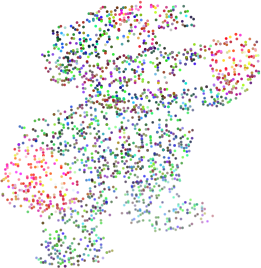} & \includegraphics[width=18.5mm]{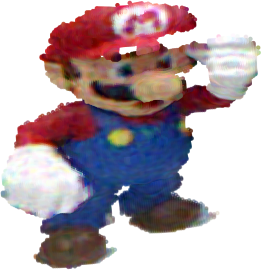} &
					\includegraphics[width=18.5mm]{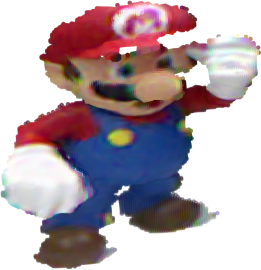} &
					\includegraphics[width=18.5mm]{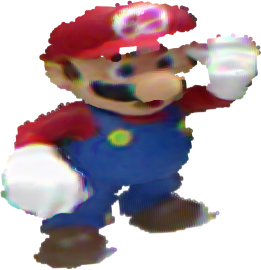} &
					\includegraphics[width=18.5mm]{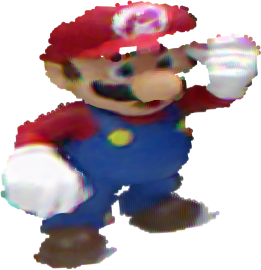} &
					\includegraphics[width=18.5mm]{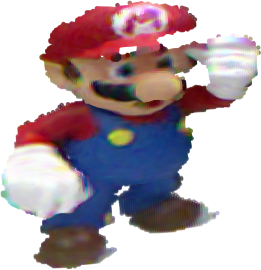} &
					\includegraphics[width=18.5mm]{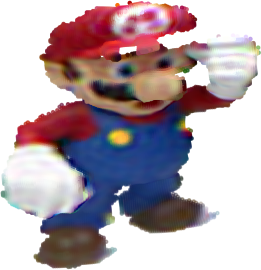} &
					\includegraphics[width=18.5mm]{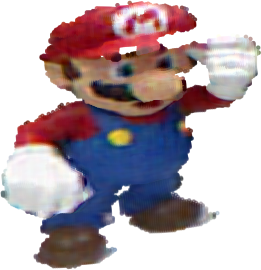} &
					\includegraphics[width=18.5mm]{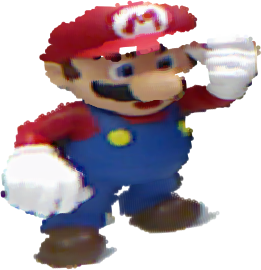} \\
					\scriptsize{NRMSE} & \scriptsize{NRMSE 0.149} & \scriptsize{NRMSE 0.142} & \scriptsize{NRMSE 0.149} & \scriptsize{NRMSE 0.127} & \scriptsize{NRMSE 0.147} &  \scriptsize{NRMSE 0.149} & \scriptsize{NRMSE \textbf{0.112}} & \scriptsize{NRMSE \textbf{0.000}} \\
					\includegraphics[width=18.5mm]{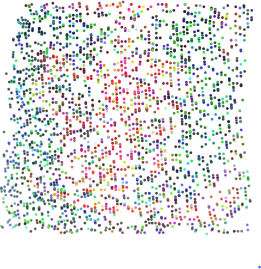} & \includegraphics[width=18.5mm]{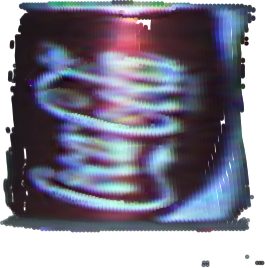} &
					\includegraphics[width=18.5mm]{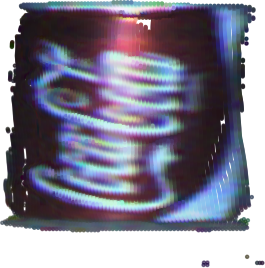} &
					\includegraphics[width=18.5mm]{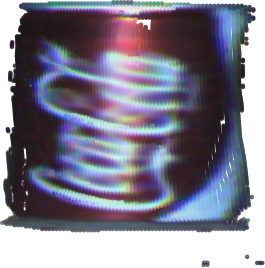} &
					\includegraphics[width=18.5mm]{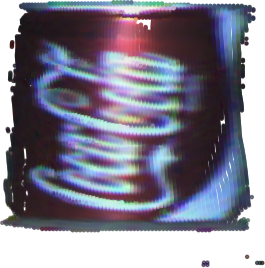} &
					\includegraphics[width=18.5mm]{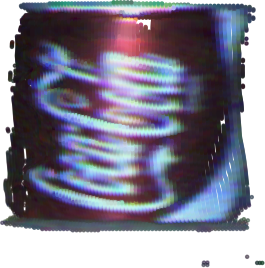} &
					\includegraphics[width=18.5mm]{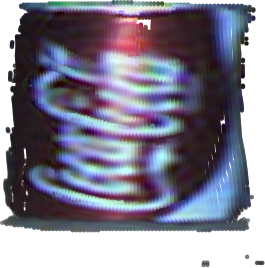} &
					\includegraphics[width=18.5mm]{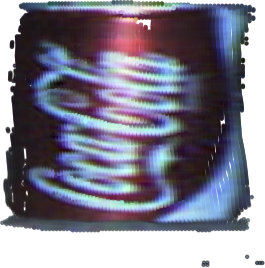} &
					\includegraphics[width=18.5mm]{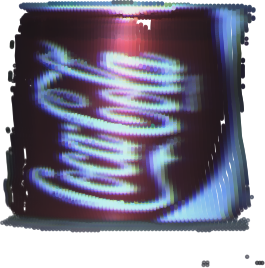} \\
					\scriptsize{NRMSE} & \scriptsize{NRMSE 0.112} & \scriptsize{NRMSE 0.111} & \scriptsize{NRMSE 0.123} & \scriptsize{NRMSE 0.106} & \scriptsize{NRMSE 0.110} &  \scriptsize{NRMSE 0.114} & \scriptsize{NRMSE \textbf{0.093}} & \scriptsize{NRMSE \textbf{0.000}} \\
					\includegraphics[width=18.5mm]{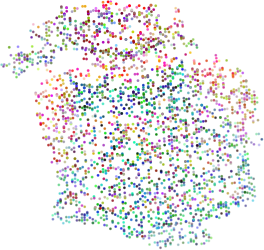} & \includegraphics[width=18.5mm]{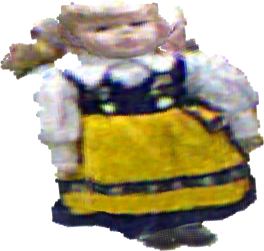} &
					\includegraphics[width=18.5mm]{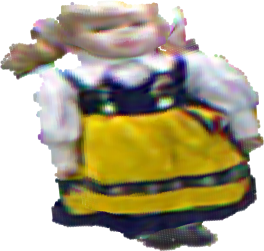} &
					\includegraphics[width=18.5mm]{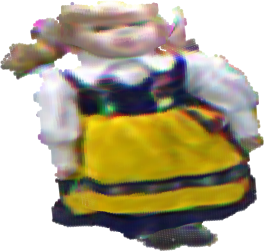} &
					\includegraphics[width=18.5mm]{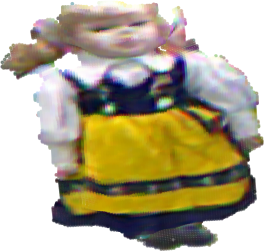} &
					\includegraphics[width=18.5mm]{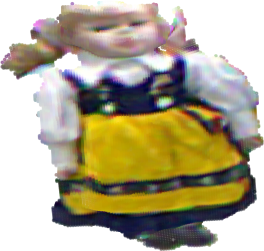} &
					\includegraphics[width=18.5mm]{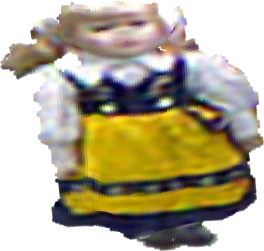} &
					\includegraphics[width=18.5mm]{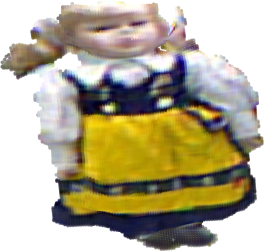} &
					\includegraphics[width=18.5mm]{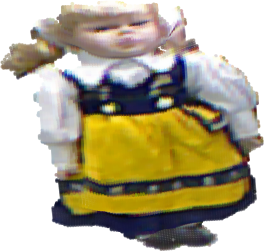} \\
					\scriptsize{Observed} & \scriptsize PEMLP & \scriptsize SIREN & \scriptsize Gauss & \scriptsize WIRE & \scriptsize FINER & \scriptsize LRTFR & \scriptsize Ours & \scriptsize{Ground truth}
			\end{tabular}}
			\caption{Visual comparisons on point cloud recovery over six objects at three SRs: $\text{SR}=0.1$ (\textit{Frog}, \textit{Duck}), $\text{SR}=0.15$ (\textit{PetterRabbit}, \textit{Mario}), and $\text{SR}=0.2$ (\textit{Doll}, \textit{Cola}).}
			\label{fig:point_cloud_supp}
		\end{figure*}

		% WARNING: do not forget to delete the supplementary pages from your submission 
		% \input{sec/X_suppl}

	\end{document}